\documentclass[twoside]{article}

\usepackage{ifthen}
\newboolean{showcomments}
\setboolean{showcomments}{true}
\usepackage{color}
\usepackage{todonotes}

\definecolor{bleudefrance}{rgb}{0.19, 0.55, 0.91}
\definecolor{ao(english)}{rgb}{0.0, 0.5, 0.0}

%ADD CITES
\newcommand{\addcite}[0]{\ifthenelse{\boolean{showcomments}}
{\textcolor{purple}{(add cite(s)) }}{}}%

\newcommand{\addref}[0]{\ifthenelse{\boolean{showcomments}}
{\textcolor{purple}{(add ref) }}{}}%

\newcommand{\myparagraph}[1]{\vspace{1mm}\noindent\textbf{#1.}}

% Comments
\newcommand{\enrique}[1]{  \ifthenelse{\boolean{showcomments}}
{\todo[inline,color=bleudefrance]{Enrique: #1}}{}}
\newcommand{\rene}[1]{  \ifthenelse{\boolean{showcomments}}
{\todo[inline,color=cyan]{Ren\'e: #1}}{}}
\newcommand{\emmargin}[1]{\ifthenelse{\boolean{showcomments}}{\marginpar{\color{bleudefrance}\tiny EM: #1}}{}}
\newcommand{\hancheng}[1]{  \ifthenelse{\boolean{showcomments}}
{\todo[inline,color=orange]{Hancheng: #1}}{}}
\newcommand{\ziqing}[1]{  \ifthenelse{\boolean{showcomments}}
{\todo[inline,color=red]{Ziqing: #1}}{}}
\newcommand{\salma}[1]{  \ifthenelse{\boolean{showcomments}}
{\todo[inline,color=yellow]{Salma: #1}}{}}
\newcommand{\zxmargin}[1]{\ifthenelse{\boolean{showcomments}}{\marginpar{\color{purple}\tiny ZX: #1}}{}}
\newcommand{\stmargin}[1]{\ifthenelse{\boolean{showcomments}}{\marginpar{\color{red}\tiny ST: #1}}{}}

\newcommand{\hl}[1]{\ifthenelse{\boolean{showcomments}}
{\textcolor{red}{#1}}{#1}}

%Edits to be highlighted
\newboolean{showedits}
\setboolean{showedits}{false}
\usepackage[markup=underlined]{changes}
\definechangesauthor[color=bleudefrance]{EM}
\newcommand{\aem}[1]{
\ifthenelse{\boolean{showedits}}
{\added[id=EM]{#1}}
{\!#1\hspace{-4.75pt}}
}
\newcommand{\repem}[2]{
\ifthenelse{\boolean{showedits}}
{\replaced[id=EM]{#1}{#2}}
{\!#1\hspace{-4.75pt}}
}
\newcommand{\dem}[1]{
\ifthenelse{\boolean{showedits}}
{\deleted[id=EM]{#1}}
{}
}
%\listoftodos[{\color{red}\textbf{List of comments/suggestions}}]

%%%%% NEW MATH DEFINITIONS %%%%%

\usepackage{amsmath,amsfonts,bm}
\usepackage{amsthm}

% Mark sections of captions for referring to divisions of figures

% Highlight a newly defined term

% Figure reference, lower-case.

% Figure reference, capital. For start of sentence

% Section reference, lower-case.

% Section reference, capital.

% Reference to two sections.

% Reference to three sections.

% Reference to an equation, lower-case.
% \def\eqref#1{equation~\ref{#1}}
% Reference to an equation, upper case
% \def\Eqref#1{Equation~\ref{#1}}
% A raw reference to an equation---avoid using if possible

% Reference to a chapter, lower-case.

% Reference to an equation, upper case.

% Reference to a range of chapters

% Reference to an algorithm, lower-case.

% Reference to an algorithm, upper case.

% Reference to a part, lower case

% Reference to a part, upper case

\def\1{\bm{1}}

% Random variables

% rm is already a command, just don't name any random variables m

% Random vectors

% Elements of random vectors

% Random matrices

% Elements of random matrices

% Vectors

% Elements of vectors

% Matrix

% Tensor
\DeclareMathAlphabet{\mathsfit}{\encodingdefault}{\sfdefault}{m}{sl}
\SetMathAlphabet{\mathsfit}{bold}{\encodingdefault}{\sfdefault}{bx}{n}

% Graph

% Sets

% Don't use a set called E, because this would be the same as our symbol
% for expectation.

% Entries of a matrix

% entries of a tensor
% Same font as tensor, without \bm wrapper

% The true underlying data generating distribution

% The empirical distribution defined by the training set

% The model distribution

% Stochastic autoencoder distributions

 % Laplace distribution

\newcommand{\R}{\mathbb{R}}

% Wolfram Mathworld says $L^2$ is for function spaces and $\ell^2$ is for vectors
% But then they seem to use $L^2$ for vectors throughout the site, and so does
% wikipedia.

 % See usage in notation.tex. Chosen to match Daphne's book.

% theorem, lemma, prop, claim, corollary

\newtheorem{thm}{Theorem}[section]
\newtheorem{lem}{Lemma}[section]

\newtheorem{claim}{Claim}[section]
\newtheorem{asmp}{Assumption}[section]

\newtheorem{rem}{Remark}[section]

\usepackage{amssymb}
\usepackage{enumitem}
\usepackage{hyperref}
\usepackage{url}
% \usepackage{natbib}
% If your paper is accepted, change the options for the package
% aistats2025 as follows:
%
% \usepackage{aistats2025}
\usepackage[accepted]{aistats2025}
%
% This option will print headings for the title of your paper and
% headings for the authors names, plus a copyright note at the end of
% the first column of the first page.

% If you set papersize explicitly, activate the following three lines:
%\special{papersize = 8.5in, 11in}
%\setlength{\pdfpageheight}{11in}
%\setlength{\pdfpagewidth}{8.5in}

% If you use natbib package, activate the following three lines:
\usepackage[round]{natbib}

% If you use BibTeX in apalike style, activate the following line:
%\bibliographystyle{apalike}

\begin{document}

% If your paper is accepted and the title of your paper is very long,
% the style will print as headings an error message. Use the following
% command to supply a shorter title of your paper so that it can be
% used as headings.
%
\runningtitle{Understanding the Learning Dynamics of LoRA in Matrix Factorization}

% If your paper is accepted and the number of authors is large, the
% style will print as headings an error message. Use the following
% command to supply a shorter version of the authors names so that
% they can be used as headings (for example, use only the surnames)
%
\runningauthor{Xu, Min, MacDonald, Tarmoun, Luo, Mallada, Vidal}

\twocolumn[

\aistatstitle{Understanding the Learning Dynamics of LoRA: A Gradient Flow Perspective on Low-Rank Adaptation in Matrix Factorization}

\aistatsauthor{ Ziqing Xu$^\dagger$ \And Hancheng Min$^\dagger$ \And Lachlan Ewen MacDonald$^\dagger$ \And Jinqi Luo$^\dagger$}
\vspace{0.5cm}
\aistatsauthor{Salma Tarmoun$^\dagger$ \And Enrique Mallada$^*$ \And Ren\'e Vidal$^\dagger$ }
\vspace{0.5cm}
\aistatsaddress{ $^\dagger$University of Pennsylvania \And $^*$Johns Hopkins University } 
]

\begin{abstract}
Despite the empirical success of Low-Rank Adaptation (LoRA) in fine-tuning pre-trained models, there is little theoretical understanding of how first-order methods with carefully crafted initialization adapt models to new tasks. 
In this work, we take the first step towards bridging this gap by theoretically analyzing the learning dynamics of LoRA for matrix factorization (MF) under gradient flow (GF), emphasizing the crucial role of initialization. 
For small initialization, we theoretically show that GF converges to a neighborhood of the optimal solution, with smaller initialization leading to lower final error. 
Our analysis shows that the final error is affected by the misalignment between the singular spaces of the pre-trained model and the target matrix, and reducing the initialization scale improves alignment. 
To address this misalignment, we propose a spectral initialization for LoRA in MF and theoretically prove that GF with small spectral initialization converges to the fine-tuning task with arbitrary precision. 
Numerical experiments from MF and image classification validate our findings.
\end{abstract}

\section{INTRODUCTION}\label{sec:intro}
Low-Rank Adaptation~\citep{hu2022lora} (LoRA) has proven to be a highly effective and parameter-efficient method for fine-tuning pre-trained models,
showing significant empirical success in both natural language processing~\citep{meng2020text, liang2022holistic, zhang2023sentiment, yang2024advancing} and computer vision tasks~\citep{zhai2022scalingvit,filatov2023lora}.
This method can be broadly described as follows: given a pre-trained model $f(x; W_1, \dots, W_L)$ parameterized by $W_1, \dots, W_L$, LoRA modifies it to $f(x; W_1 \!+\! B_1 A_1, \dots, W_L \!+\! B_L A_L)$, where $W_i \!\in\! \R^{n_{i+1} \times n_i}, B_i \!\in\! \R^{n_{i+1} \times r_i}, A_i \!\in\! \R^{r_i \times n_i},$ and $r_i \!\ll\! \min(n_i, n_{i+1})$. In the fine-tuning stage, $A_i$ and $B_i$ are the trainable parameters, while $W_i$ remains fixed, with $B_i$ initialized as zero matrices and $A_i$ initialized randomly.

Despite LoRA's empirical success, its theoretical underpinnings remain poorly understood. A key question is why pre-trained models can be efficiently fine-tuned with LoRA using gradient-based methods,
despite the non-convex objective. Moreover, given LoRA’s fine-tuning nature and specific initialization, it is essential to explore how the pre-trained model and initialization affect its learning dynamics. 
Existing theoretical work primarily focuses on LoRA's expressiveness~\citep{zeng2023expressive} or characterizing the optimization landscape and generalization in the Neural Tangent Kernel (NTK) regime~\citep{malladi2023kernel,jang2024lora}. Other studies~\citep{hayou2024impact, hayou2024lora+} suggest different learning rate scales for $A_i$ and $B_i$, and initializing $A_i$ to zero while initializing $B_i$ randomly yields better performance on average compared to the reverse. However, to the best of our knowledge, no prior work has provided a thorough theoretical analysis of LoRA’s learning dynamics with explicit convergence rates or guarantees on the accuracy of the solution.

\myparagraph{Contributions} In this paper, we study LoRA for fine-tuning a matrix factorization (MF) task via gradient flow (GF). Our analysis represents an initial step towards understanding the learning dynamics of LoRA. Our contributions are as follows:

\begin{enumerate}[leftmargin=0.45cm]
    \item We first theoretically analyze the learning dynamics of LoRA under GF with LoRA-based small initialization, focusing on the case where the difference between the target matrices of the pre-training and fine-tuning tasks is {rank-one}, and assuming the pre-training MF task is perfectly solved by the pre-trained weights. Our analysis reveals two key phases: 
    % \textit{alignment} and \textit{local convergence}. 
    An \textit{alignment} phase, where GF orients the singular vectors of the LoRA weights to correct the misalignment between the model and the fine-tuning task, with smaller initialization scale implying greater alignment.
    A \textit{local convergence} phase, where the loss decreases linearly for a finite time until reaching a threshold determined by the initialization scale, with smaller initialization scale implying smaller final loss. 
    \item 
    Motivated by the dependence of the final loss on model misalignment, we propose a spectral initialization that incorporates information from both the fine-tuning task and the pre-trained model. We theoretically prove that GF with small spectral initialization can converge to the target matrix with arbitrary precision for general MF fine-tuning tasks.
    %Our analysis characterizes the learning dynamics, showing the absence of an alignment phase, and provides explicit convergence rates.
    \item We validate our theoretical findings through extensive experiments on MF and several image classification tasks. In both settings, we observe that smaller scales of LoRA-based initialization lead to better alignment and lower final training error under GD. Additionally, in certain computer vision tasks, we observe improved test performance as the initialization scale decreases. While our focus is on LoRA's optimization process, these results suggest that the initialization scale may also impact generalization performance. Finally, GF with small spectral initialization in MF converges to the target matrix with arbitrary precision.
\end{enumerate}

\subsection{Related Work}
\myparagraph{Theory of LoRA} ~\cite{zeng2023expressive} analyze the expressiveness of LoRA, and show that under certain assumptions, LoRA can approximate any deep linear network, multi-layer feed-forward network and transformer network. However, it is unclear whether LoRA, optimized via gradient-based algorithms, can learn these weights efficiently. Another line of work~\citep{malladi2023kernel, jang2024lora} studies LoRA in the NTK regime. Specifically, ~\cite{malladi2023kernel} characterize the conditions under which one can study LoRA in the NTK regimes, and ~\cite{jang2024lora} show that when the LoRA rank is $r_i\gtrsim\sqrt{N}$ where $N$ is the number of samples, the optimization landscape of LoRA has no spurious local minima, and GD can find $\mathcal{O}(\sqrt{N})$-rank solutions that generalize well. Additional research \citep{hayou2024impact, hayou2024lora+} experimentally explores the effects of learning rates and initialization, recommending different learning rates for $A_i$ and $B_i$, and showing that initializing $B_i$ as zero matrices and $A_i$ randomly improves performance compared to the reverse. However, none of these studies provide explicit convergence rates or consider the influence of pre-trained models in the optimization process, leaving gaps in the understanding of LoRA’s learning dynamics.

\myparagraph{Learning Dynamics of Low-Rank MF with Small Initialization} Our analysis of LoRA builds upon techniques developed for low-rank MF with small initialization, which falls outside the NTK regime. Specifically, \cite{stoger2021small, jin2023understanding, pmlr-v195-soltanolkotabi23a} analyze GD under small initialization, demonstrating the convergence and learning dynamics of GD. However, applying these techniques to LoRA in the context of MF introduces several key differences, such as distinct learning dynamics and initialization methods. We refer readers to \S\ref{subsec:challenge} for a detailed discussion of these differences and the additional challenges they present.

\myparagraph{Spectral Initialization for LoRA} 
Several spectral initialization methods for LoRA~\citep{balazy2024lora, lin2024nora, meng2024pissa, wang2024milora} have been proposed, based on the singular value decomposition (SVD) of pre-trained weights. 
Specifically,~\cite{balazy2024lora, lin2024nora, meng2024pissa} suggest initializing the left (and right) singular spaces of LoRA weights $B_i$ (and $A_i$) using the top-$r_i$ singular spaces of the pre-trained weights, ensuring that during training, $B_i A_i$ aligns with the principal components of $W_i$. 
Conversely,~\cite{wang2024milora} initialize $B_i$ and $A_i$ using the bottom-$r_i$ singular spaces of the pre-trained weights. 
Despite their differences, both approaches report better performance than standard LoRA.
However, these methods overlook the fine-tuning task. 
In Appendix~\ref{app:example}, we provide examples where previous methods fail to fine-tune pre-trained models for MF, underscoring the importance of considering the fine-tuning data when designing spectral initialization for LoRA. 
In this work, we propose a spectral initialization that leverages both the pre-trained weights and the fine-tuning target matrix, demonstrating superior performance compared to standard LoRA-based initialization (see \S\ref{sec:rank-r} for details).

\subsection{Notation}
We use lower case letters $a$ to denote a scalar, and capital letters $A$ and $A^\top$ to denote a matrix and its transpose. We use $\|A\|_F$ and $\|A\|$ to denote the Frobenius and spectral norms of $A$, and $A[i, j]$ to denote its $(i,j)$-th element.  We use $\boldsymbol{a}$ to denote a vector. For a function $f(t)$, we use $\dot f(t):=\frac{d}{dt}f(t)$ to denote its time derivative.

%%%%%%%%%%%%%%%%%%%%%%%%%%%%%%%%%%%%%%%%%%%%%%%%%%%%%%%%%%%%%%
\section{PRELIMINARIES}\label{sec:prelim}
In this section, we first introduce the problem of applying LoRA to fine-tune MF tasks. Then we outline the key assumptions that underlie our analysis, addressing their implications. 
Furthermore, we highlight the critical distinctions between LoRA and traditional MF, and discuss the associated challenges in the analysis.

% \subsection{Problem Setup}\label{subsec:setup}
%In this paper, 
We consider applying LoRA to the classic MF. Specifically, we assume that we have a solution $(W_1,W_2)$ to a pre-training task of factorizing a target matrix $Y_{\mathrm{pre}}$:
\begin{align}
    (W_2, W_1) \in \arg\min_{U, V} \frac{1}{2}\lVert Y_{\mathrm{pre}}-UV\rVert_F^2 \label{eqn:obj_pretrain}
\end{align}
where $W_2\in\R^{m\times h}, W_1\in\R^{h\times n}$ are the \emph{pre-trained weights}. Problem~\ref{eqn:obj_pretrain} covers low-rank MF~\citep{chi2019nonconvex,koren2009matrix} ($h\!\leq\!\min(m, n)$), and overparametrized MF~\citep{ li2018algorithmic,tarmoun2021understanding, min2021explicit,pmlr-v108-geyer20a,9665358,xu2023linear} ($h>\min(m, n)$).

We are interested in solving a fine-tuning task using LoRA by minimizing the following objective
\begin{align}
    \min_{A_1, A_2, B_1, B_2} \underbrace{\frac{1}{2}\lVert Y_{\mathrm{ft}}\!-\!(W_2\!+\!B_2A_2)(W_1\!+\!B_1A_1)\rVert_F^2}_{\!:=\!L(A_1,A_2,B_1,B_2)}\label{eqn:obj_lora} 
\end{align}
where $Y_{\mathrm{ft}}\in\R^{m\times n}$ is the target data matrix for this fine-tuning task, $A_1\in\R^{r\times n},B_1\in\R^{h\times r}, A_2\in\R^{r\times h}, B_2\in\R^{m\times r}$ are \emph{LoRA weight matrices}, and $r$ is the \emph{LoRA rank}. 
In practice, the chosen \emph{LoRA rank} is typically much smaller than the dimensions of the pre-trained weights, satisfying $r\ll \min(m, h, n)$.
%For convenience, we use $L(A_1,A_2,B_1,B_2)$ to denote the objective in \eqref{eqn:obj_lora}.

\myparagraph{Gradient Flow as Training Algorithm} We consider solving Problem~\ref{eqn:obj_lora} using GF
\begin{align}\label{eqn:gf}
    \dot A_i(t) \!=\!-\frac{\partial L(t)}{\partial A_i(t)}, \ \dot B_i(t) \!=\!-\frac{\partial L(t)}{\partial B_i(t)}, \ i=1,2
\end{align}
where we use $t$ to denote the time index in the gradient flow, and use $L(t)$ as a shorthand for $L(A_1(t), A_2(t), B_1(t), B_2(t))$. Throughout the paper, we will use $A_i, B_i$ to denote $A_i(t), B_i(t)$  when no confusion is caused by dropping $t$. 

\myparagraph{LoRA-based Initialization} We initialize $A_1(0), A_2(0), B_1(0), B_2(0)$ entry-wise i.i.d. as follows: $A_1(0)[i,j],A_2(0)[i,j]\!\sim\! \mathcal{N}(0, \alpha^2), B_1(0)[i,j]\!=\!B_2(0)[i,j]\!=\!0$.
% \footnote{Initializing $A_i$ as zero and $B_i$ randomly has been proposed in~\cite{hu2021lora}, and is widely used in the application of LoRA.}. 
We are particularly interested in the training dynamics when $\alpha$ is small, or equivalently when $A_1$ and $A_2$ are initialized near zero matrices. The learning dynamics of GF/GD with weights initialized close to zero have been studied in the context of MF~\citep{stoger2021small, jin2023understanding, pmlr-v195-soltanolkotabi23a} and two-layer neural networks~\citep{bui2021inductive, min2024earlyneuronalignmenttwolayer}. Throughout this paper, we use the term \textit{small initialization} to refer to LoRA-based initialization with a small $\alpha$, unless otherwise specified.

% \subsection{Key Assumptions and Their Validity}
We make the following assumptions in the paper:
\begin{asmp}\label{asmp:pre-trained_w}
We assume that $W_2,W_1$ factorize $Y_{\mathrm{pre}}$ perfectly, i.e., $Y_{\mathrm{pre}}\!=\!W_2W_1$. Moreover, we assume that $W_2, W_1$ satisfy: $\ W_2\!=\!U_Y\Sigma_{W_2}G^\top, W_1\!=\!G\Sigma_{W_1}V_Y^\top$, where $G$ is an orthogonal matrix.
\end{asmp}
\begin{asmp}\label{asmp:target_Y}
Consider the SVD of $\ Y_{\mathrm{pre}}\!=\!U_Y\Sigma_{\mathrm{pre}}V_Y^\top$. Then, we assume the SVD of $\Delta Y\!:=\!Y_{\mathrm{ft}}\!-\!Y_{\mathrm{pre}}$ has the same left and right singular matrices as $Y_{\mathrm{pre}}$, i.e., $\Delta Y\!=\!U_Y \Sigma_{\Delta Y}V_Y^\top$.
\end{asmp}
Assumption~\ref{asmp:pre-trained_w} ensures that the pre-training task is solved perfectly, and that the left and right singular matrices of $W_1$ and $W_2$, respectively, are perfectly aligned with the singular matrices of $Y_{\mathrm{pre}}$. This assumption is satisfied if the pre-trained task is solved using GF with either spectral initialization~\citep{li2019nonconvex, luo2019optimal, lu2020phase} or balanced initialization~\citep{bui2021inductive, min2024earlyneuronalignmenttwolayer}.

Assumption~\ref{asmp:target_Y} ensures certain similarities between pre-trained and fine-tuning tasks. Empirical evidence from several studies \citep{yosinski2014transferable,peters2019tune, matsoukas2022makes} demonstrates that fine-tuning yields optimal results when there exists a substantial correlation between the pre-training and fine-tuning tasks. In the context of MF, we formally characterize this similarity as a $\mathrm{rank}(\Delta Y)$ modification of the pre-training target matrix to the fine-tuning target matrix. Specifically, this modification occurs along the subspace spanned by a subset of singular vectors of the pre-trained target matrix, $Y_{\mathrm{pre}}$, and the magnitude of this modification along the subspaces is quantified by the parameter $\Sigma_{\Delta Y}$.

\subsection{Challenges in the Analysis of Learning Dynamics of Problem~\ref{eqn:obj_lora} Optimized via GF}\label{subsec:challenge}
In this section, we discuss the difference between Problem~\ref{eqn:obj_lora} and MF, and how this difference leads to additional challenges in analyzing the learning dynamics of Problem~\ref{eqn:obj_lora} optimized via GF.

% We begin by rewriting the objective in \eqref{eqn:obj_lora} as
% %
% \begin{align}
%     \frac{1}{2}\lVert \Delta Y\!-\!W_2B_1A_1\!-\!B_2A_2W_1\!-\!B_2A_2B_1A_1\rVert_F^2\,. \label{eqn:lora_obj_rewrite}
% \end{align}
% %
\myparagraph{Initialization of LoRA Weights Near Saddle Points} In LoRA, the weights $A_1$ and $A_2$ are randomly initialized from a Gaussian distribution, while $B_1$ and $B_2$ are initialized as zero matrices. This contrasts with MF, where both factor matrices are typically initialized randomly. Upon inspection, it is evident that when both $A_i$ and $B_i$ are zero matrices, they correspond to a saddle point in the objective.
Thus, the LoRA weights are initialized near these saddle points when the initialization scale $\alpha$ is small. It remains unclear whether GF will converge to a global optimum or become trapped in a local minimum or saddle point for Problem~\ref{eqn:obj_lora}.

\myparagraph{Complex Learning Dynamics} In low-rank MF, the objective is to approximate a target matrix by the product of two low-rank matrices. 
% % as expressed in \eqref{eqn:lora_obj_rewrite}. 
However, in LoRA, the model adopts a distinct architecture due to the influence of the pre-trained weights, as shown below:
\begin{align}
    &(W_2\!+\!B_2A_2)(W_1\!+\!B_1A_1)\nonumber\\
    =&W_2B_1A_1\!+\!B_2A_2W_1\!+\!B_2A_2B_1A_1
\end{align}
As one can see, the products of the low-rank matrices are influenced by the pre-trained weights $W_2$ and $W_1$, resulting in terms $W_2B_1A_1 \!+\! B_2A_2W_1$. 
Moreover, the fine-tuning model includes a fourth-order term, $B_2A_2B_1A_1$. 
These adjustments create dependencies between the low-rank factors and the fixed pre-trained weights and introduce higher-order terms of the LoRA weights, which complicates the learning dynamics and poses new challenges for analyzing the optimization process.

The aforementioned challenges indicate that Problem~\ref{eqn:obj_lora} is more complex than classical MF. In the following sections, we address these challenges and derive convergence results for LoRA.
% that when Problem~\ref{eqn:obj_lora} is trained via GF with small initialization, the learning process has two distinct phases: \textit{alignment} and \textit{local convergence}. 
% In the alignment phase, the singular vectors of LoRA compensating for the singular spaces of $\Delta Y$, allowing the LoRA weights to escape certain saddle points, followed by a second phase characterized by linear convergence. 
% Furthermore, the pre-trained weights exhibit an unexpected effect on the LoRA weights, selectively accelerating the growth of one pair of low-rank matrices, $A_iB_i$ (for $i=1$ or $2$), significantly faster the other. This effect simplifies the overall learning dynamics and helps us derive a convergence result for Problem~\ref{eqn:obj_lora}.}
%%%%%%%%%%%%%%%%%%%%%%%%%%%%%%%%%%%%%%%%%%%%%%%%%%%%%%%%%%%%%%%%%%%%%%%%%
\section{LEARNING DYNAMICS UNDER SMALL INITIALIZATION}\label{sec:rank-one}
In this section, we present our main theorem on the learning dynamics of Problem~\ref{eqn:obj_lora} trained via GF for the case where $\mathrm{rank}(\Delta Y) \!=\! 1$. In addition, we provide an overview of the proof strategy by formally characterizing the \textit{alignment} and \textit{local convergence} phases, and examine how key parameters—such as pre-trained weights, initialization scale, and the target matrix—affect the learning behavior.

% We show that the learning dynamics can be  approximately decomposed into two distinct phases: 
% %
% \begin{enumerate} 
%     \item Alignment Phase (See \S\ref{subsec:alignment} for details): In this phase, we demonstrate that GF is akin to spectral initialization, with the singular vectors of LoRA compensating for the singular spaces of $\Delta Y$. Simultaneously, due to the influence of the pre-trained weights and $\Delta Y$, a significant imbalance in the norms of $B_2A_2$ and $B_1A_1$ emerges, and the LoRA weights move away from certain degenerate saddle points (See left image in Figure~\ref{fig:dynamics_mf}). 
%     \item Local Convergence Phase (See \S\ref{subsec:convergence} for details): In this phase, we show that although the alignment of the top singular vectors and the imbalance in the norms of $B_2A_2$ and $B_1A_1$ may not further improve, they stabilize at an effective level. Leveraging this property, we show that the loss decreases at a linear rate until a certain threshold determined by the initialization scale. 
% \end{enumerate}
%
\begin{figure*}[!t]
    \centering% <---
    \includegraphics[height=0.3125\textwidth]{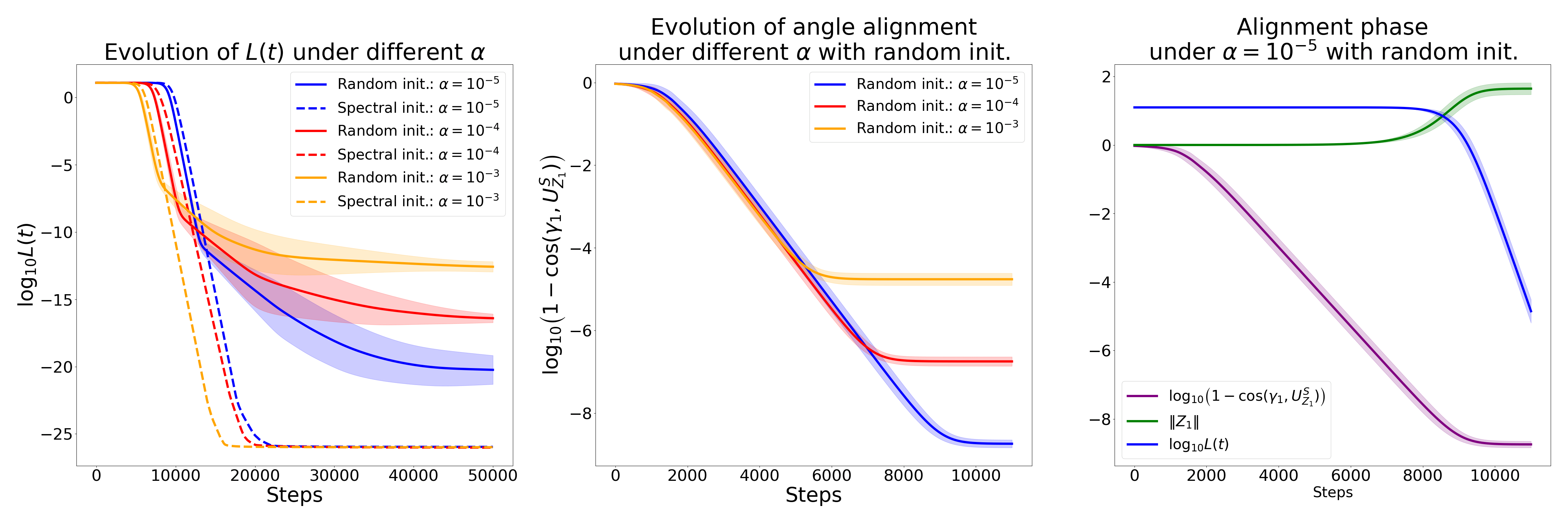}
    \caption{We simulate Problem~\ref{eqn:obj_lora} in the context of $\delta_w\!<\!1$ using both small initialization (see \S\ref{sec:prelim}) and small spectral initialization (see \S\ref{sec:rank-r}). Each simulation is repeated thirty times, with shaded regions representing one standard deviation above and below the mean (see \S\ref{subsec:simulation_mf} for details). The left panel shows the evolution of the loss for different initialization scales $\alpha$ with small and spectral initialization. The middle panel tracks the alignment quality between $\boldsymbol{U_{Z_1}^S}$ and $\boldsymbol{\gamma_1}$, measured by $\log_{10}(1 - \cos(\gamma_1, \boldsymbol{U_{Z_1}^S}(t)))$, where smaller values indicate better alignment. The right panel focuses on small initialization with $\alpha = 10^{-5}$, illustrating how the reconstruction loss, alignment between $\boldsymbol{U_{Z_1}^S}$ and $\boldsymbol{\gamma_1}$, and $\lVert Z_1 \rVert$ evolve during the \textit{alignment} phase.
    }
    \label{fig:dynamics_mf}
\end{figure*}

\subsection{Main Theorem}
In this section, we first introduce some notation that will be used later. Then, we present our main results in the context of $\mathrm{rank}(\Delta Y) = 1$. 

\myparagraph{Notation} Let $Z_1 \!=\! \begin{pmatrix} B_1 \\ A_1^\top \end{pmatrix}$ and $Z_2 \!=\! \begin{pmatrix} B_2 \\ A_2^\top \end{pmatrix}$ denote the concatenation of the LoRA weights. 
We use $\boldsymbol{U_{Z_1}^S}$ and $\boldsymbol{U_{Z_2}^S}$ to represent the top left singular vectors of $Z_1$ and $Z_2$, respectively. 
When $\mathrm{rank}(\Delta Y) \!=\! 1$, let $\Delta Y \!=\! \sigma \boldsymbol{u}\boldsymbol{v}^\top$ where $\boldsymbol{u}$ and $\boldsymbol{v}$ are a pair of singular vectors of $Y_{\mathrm{pre}}$ (Assumption~\ref{asmp:target_Y}). 
From Assumption~\ref{asmp:pre-trained_w}, we can infer that $u$ and $v$ are left and right singular vectors of $W_2$ and $W_1$, respectively. %Based on these observations, we assume 
Let $(\sigma_{W_1}, g, v)$ and $(\sigma_{W_2}, u, g)$ be the pairs of singular values and their corresponding singular vectors for $W_1$ and $W_2$. Finally, we define $\delta_w \!=\! \frac{\sigma_{W_1}}{\sigma_{W_2}}$.

%Now, we state our main theorem.
%
\begin{thm}\label{thm:rank-one}
Assume $\delta_w\!\not=\!1$; without loss of generality take $\delta_w<1$. Then, 
for any LoRA rank $r$, there exists constants $c_1, c_2\!=\!\mathrm{polylog}(\frac{1}{\lvert1\!-\!\delta_w\rvert}, \sigma_{W_2}, \sigma_{W_1}),$ and $ c_3,  \alpha^*\!=\!\mathrm{polylog}(\lvert1\!-\!\delta_w\rvert, \sigma_{W_2}, \sigma_{W_1})$ 
% only depending on $\Delta Y, \sigma_{W_2}, \sigma_{W_1}$ and $A_i(0),B_i(0), i=1,2$ 
such that for any $0<\alpha\leq \alpha^*$, after time $T\!=\!\frac{c_1\log\bigl(\frac{1}{\alpha}\bigl)}{\sigma\sigma_{W_2}} \!+\! \frac{c_2\log\bigl(\frac{L(0)}{\alpha}\bigl)}{\sigma\sigma_{W_2}}$,
we have $L(T) \leq 2\alpha^{c_3}$. 
\end{thm}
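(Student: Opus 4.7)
The plan is to split the gradient flow into an \emph{alignment} phase $[0,T_1]$ and a \emph{local convergence} phase $[T_1,T_1+T_2]$, and to reduce each to an approximately scalar problem by linearizing around an appropriate reference together with a bootstrap on the size of the LoRA iterates.

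First I would compute the gradient flow. Writing $M(t) := Y_{\mathrm{ft}} - (W_2+B_2A_2)(W_1+B_1A_1)$, one obtains
\begin{align*}
\dot B_1 &= (W_2+B_2A_2)^\top M A_1^\top, & \dot A_1 &= B_1^\top(W_2+B_2A_2)^\top M,\\
\dot B_2 &= M(W_1+B_1A_1)^\top A_2^\top, & \dot A_2 &= B_2^\top M(W_1+B_1A_1)^\top.
\end{align*}
At $t=0$, $B_i=0$ and $M=\Delta Y=\sigma u v^\top$, so $\dot A_i(0)=0$ while Assumption~\ref{asmp:pre-trained_w} (which gives $W_2^\top u=\sigma_{W_2}g$ and $W_1 v=\sigma_{W_1} g$) yields
\begin{align*}
\dot B_1(0)=\sigma\sigma_{W_2}\,g(A_1(0)v)^\top,\qquad \dot B_2(0)=\sigma\sigma_{W_1}\,u(A_2(0)g)^\top.
\end{align*}
This identifies $\gamma_1\propto(g;v)$ as the direction into which $Z_1=(B_1;A_1^\top)$ is pushed at exponential rate $\sigma\sigma_{W_2}$, and $\gamma_2\propto(u;g)$ as the corresponding direction for $Z_2$ at the strictly smaller rate $\sigma\sigma_{W_1}$; this strict gap is exactly the $\delta_w<1$ hypothesis. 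Because $W_2B_1A_1$ alone can produce the required rank-one update, my strategy uses $Z_1$ as the dominant mechanism and shows that $Z_2$ stays negligible throughout.

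For the alignment phase I would run a bootstrap on $[0,T_1]$ with $T_1=\tfrac{c_1}{\sigma\sigma_{W_2}}\log(1/\alpha)$, maintaining that $\lVert Z_1(t)\rVert\lesssim \alpha e^{\sigma\sigma_{W_2}t}$ and $\lVert Z_2(t)\rVert\lesssim \alpha e^{\sigma\sigma_{W_1}t}$. Under this envelope the quartic term $B_2A_2B_1A_1$ and the cross term $B_2A_2W_1$ contribute only subdominant perturbations to $M$, so $M\approx \Delta Y$ and the linearized ODE for $Z_1$ decouples into a $\gamma_1$-mode driven by $A_1(0)v$ that grows like $e^{\sigma\sigma_{W_2}t}$ together with orthogonal modes that receive no rank-one drive. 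A standard Gaussian anticoncentration estimate shows $\lVert A_1(0)v\rVert=\Theta(\alpha\sqrt{r})$ with high probability over initialization, so the on-target mode reaches $\Theta(\sqrt{\sigma/\sigma_{W_2}})$ at $t=T_1$ while every off-target mode of $Z_1$ and the entirety of $Z_2$ stays of size $O(\alpha^{\kappa})$ for an exponent $\kappa=\kappa(\delta_w)>0$. Consequently $\boldsymbol{U_{Z_1}^S}(T_1)$ is $O(\alpha^{\kappa'})$-close to $\gamma_1$ and $B_1A_1(T_1)\approx(\sigma/\sigma_{W_2})gv^\top+O(\alpha^{\kappa'})$.

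In the local convergence phase, writing $B_1A_1=s(t)\,gv^\top+E_1(t)$ and $B_2A_2=E_2(t)$ with residuals $E_i$ of size $O(\alpha^{\kappa'})$ at $t=T_1$, the loss becomes
\begin{align*}
L=\tfrac12\bigl\lVert(\sigma-\sigma_{W_2}s)\,uv^\top - W_2E_1 - E_2W_1 - E_2(sgv^\top+E_1)\bigr\rVert_F^2,
\end{align*}
and I would show that, restricted to the effective tangent directions at any nearby global minimum, the Hessian of $L$ is positive with smallest eigenvalue $\gtrsim \sigma\sigma_{W_2}$, yielding a local PL-type inequality. This drives $L(t)$ down exponentially at rate $\sigma\sigma_{W_2}$ until it is capped by the residual error carried from phase~1, so an additional time $T_2=\tfrac{c_2}{\sigma\sigma_{W_2}}\log(L(0)/\alpha)$ suffices to reach $L\leq 2\alpha^{c_3}$. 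The hardest step is closing the phase-1 bootstrap: the quartic coupling $B_2A_2B_1A_1$ and the cross feedback $B_2A_2W_1$ inject the slowly growing $Z_2$ modes into the residual seen by $Z_1$, so one must quantify how $\delta_w<1$ prevents these contributions from catching up with the dominant $\sigma\sigma_{W_2}$-mode throughout a logarithmically long window. This separation is exactly what manifests in the $\mathrm{polylog}(1/|1-\delta_w|,\ldots)$ dependence of $c_1,c_2$ and the $\mathrm{polylog}(|1-\delta_w|,\ldots)$ dependence of $\alpha^*,c_3$ stated in the theorem.
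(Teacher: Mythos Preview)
Your two-phase decomposition and the identification of $\sigma\sigma_{W_2}H_1$ as the dominant linear operator on $Z_1$ are exactly what the paper uses. However, there is a real gap in your phase-1 argument. You write that under your envelope ``the quartic term $B_2A_2B_1A_1$ and the cross term $B_2A_2W_1$ contribute only subdominant perturbations to $M$, so $M\approx\Delta Y$,'' but you have dropped the term $W_2B_1A_1$, which is the \emph{largest} contribution to $F$ once $Z_1$ grows. Your own endpoint claim, $B_1A_1(T_1)\approx(\sigma/\sigma_{W_2})gv^\top$, gives $W_2B_1A_1\approx\sigma uv^\top=\Delta Y$ and hence $M\approx 0$, directly contradicting $M\approx\Delta Y$. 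In other words, the exponential rate $\sigma\sigma_{W_2}$ is valid only while $\lVert Z_1\rVert^2\ll\sigma/\sigma_{W_2}$; the last $O(1)$ stretch of growth is fully nonlinear, and your bootstrap does not cover it.

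The paper closes this gap by splitting phase 1 into two subphases: an \emph{early alignment} subphase where your $M\approx\Delta Y$ argument is valid, norms stay $o(1)$, and each column of $Z_1$ aligns with $\gamma_1$ to accuracy $1-O(\alpha^\kappa)$; and an \emph{escape saddle} subphase where the perturbation $D_1=A_2^\top B_2^\top E-W_2^\top F$ is no longer small, but the already-achieved alignment is shown to persist by tracking $g^\top B_1A_1 v$ directly and allowing $\lVert D_i\rVert$ only up to $\tfrac{(1-\delta_w)\sigma\sigma_{W_2}}{2}$. At the end of phase 1 one has $g^\top B_1A_1 v\approx\tfrac{(1-\delta_w)\sigma}{4\sigma_{W_2}}$, i.e., a constant \emph{fraction} of the target, not the full $\sigma/\sigma_{W_2}$; the remaining descent is phase 2. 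For phase 2, your Hessian/PL sketch is in the right direction, but you do not say how the alignment and the imbalance $\lVert Z_1\rVert/\lVert Z_2\rVert$ are \emph{maintained} once the nonlinear dynamics dominate; the paper handles this by decomposing $L=L_S+L_N$ (signal along $uv^\top$ versus orthogonal noise), proving a PL inequality for $L_S$ with constant $\gtrsim\sigma_{W_2}^2\lVert\gamma_1^\top Z_1\rVert_F^2$, and showing $L_N=O(\alpha^{c_3})$ via the maintained alignment and imbalance bounds.
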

The proof of Theorem can be found in Appendix~\ref{app:thm-rank-one}. 
In Theorem~\ref{thm:rank-one}, we assume without loss of generality that $\delta_w < 1$. This assumption introduces a dependency of the training time $T$ on $\sigma_{W_2}$.  
Symmetrically, if one assumes $\delta_w > 1$, the training time would instead be given by:  
\begin{align}
    T \!=\! \frac{c_1\log\bigl(\frac{1}{\alpha}\bigr)}{\sigma\sigma_{W_1}} \!+\! \frac{c_2\log\bigl(\frac{L(0)}{\alpha}\bigr)}{\sigma\sigma_{W_1}}\,.
\end{align}  
For consistency and ease of presentation, we will assume $\delta_w < 1$ throughout the paper.

Theorem~\ref{thm:rank-one} states that after training for time $T$, the training loss $L(T)$ decreases to a value depending on the initialization scale $\alpha$. Notice that $L(T)$ can be made arbitrarily small by selecting a sufficiently small $\alpha$, and the dependence of $T$ on $\alpha$ is logarithmic. 
Thus, reducing $\alpha$ leads to only a mild increase in the required training time to achieve a low training loss. 
Furthermore, in Figure~\ref{fig:dynamics_mf}, we numerically validate that, for different initialization scales $\alpha$, the final loss to which GF converges decreases as $\alpha$ decreases.

\myparagraph{Interpretation of Training Time $T$} Recall in \S\ref{sec:intro}, the learning dynamics can be decomposed into two phases: alignment and local convergence. As will become clear in \S\ref{subsec:alignment} and \S\ref{subsec:convergence}, the sum decomposition of the training time $T$ in Theorem~\ref{thm:rank-one} in fact derives from this two-phase structure:
% the training time required (see \eqref{eqn:training_time_rank_one}) can be decomposed as follows
%
% \vspace{-2.7mm}
\begin{align}\label{eqn:require_training_time}
    T = \underbrace{\frac{c_1\log\bigl(\frac{1}{\alpha}\bigl)}{\sigma\sigma_{W_2}}}_{\text{Phase I: alignment}} 
    + \underbrace{\frac{c_2\log\bigl(\frac{L(0)}{\alpha}\bigl)}{\sigma\sigma_{W_2}}}_{\text{Phase II: local convergence}}\,.
\end{align}
As one can observe, the training time required for each phase scales inversely with $\sigma \sigma_{W_2}$. In the subsequent sections (\S\ref{subsec:alignment} and \S\ref{subsec:convergence}), we demonstrate that this is because, during the \textit{alignment }phase, $\boldsymbol{U_{Z_1}^S}(t)$ aligns with $\boldsymbol{\gamma_1} \!=\! \begin{pmatrix} \boldsymbol{g} \\ \boldsymbol{v} \end{pmatrix}$, and the speed of this alignment is governed by $\sigma \sigma_{W_2}$.
Moreover, smaller initialization leads to longer \textit{alignment} phase and better quality of the alignment as illustrated in the middle panel in Figure~\ref{fig:dynamics_mf}. In the \textit{local convergence} phase, we demonstrate that Problem~\ref{eqn:obj_lora} satisfies a local Polyak-Łojasiewicz (PL) condition, with the PL constant being proportional to $\sigma \sigma_{W_2}$. Consequently, the training time for each phase is inversely proportional to $\sigma \sigma_{W_2}$.

In the following sections, we present a proof sketch for Theorem~\ref{thm:rank-one} by formally characterizing the \textit{alignment} phase and \textit{local convergence} phase. 
%For ease of presentation, we assume without loss of generality $\delta_w\!<\!1$.

\subsection{Proof Strategy of Alignment Phase}\label{subsec:alignment}
In this section, we argue that during the initial phase of training, GF implicitly performs spectral initialization by aligning $\boldsymbol{U_{Z_1}^S}(t)$ with $\boldsymbol{\gamma_1}$. Simultaneously, $\lVert  Z_1\rVert$ continues to grow, and the difference between $\sigma_{W_1}$ and $\sigma_{W_2}$ leads to a progressive increase in the imbalance between $\lVert Z_1 \rVert$ and $\lVert Z_2 \rVert$. Moreover, the initialization scale affects the quality of alignment, with smaller initialization leading to better alignment.

% Please refer to the right panel in Figure~\ref{fig:dynamics_mf} for a visualization of these phenomena.

.
%throughout this phase.

We now present the main intuition behind the alignment phase. 
Our starting point is the following observation that the learning dynamics can be decomposed as follows
\begin{align}\label{eqn:early_stage_dynamics}
    \dot Z_1 &\!=\! \biggl(\sigma\sigma_{W_{2}}H_1\!+\!\begin{pmatrix}
        0&D_1\\D_1^\top&0
    \end{pmatrix}\biggl) Z_1\,,\nonumber\\
    \dot Z_2 \!&=\! \biggl(\sigma\sigma_{W_1}H_2\!+\!\begin{pmatrix}
        0&D_2\\D_2^\top&0
    \end{pmatrix}\biggl) Z_2 \,,
\end{align}

where $H_1, H_2, D_1,D_2$ are defined as follows
\begin{align}   
    &H_1 \!=\! \begin{pmatrix}
        0 & \boldsymbol{g}\boldsymbol{v}^\top \\ \boldsymbol{v}\boldsymbol{g}^\top & 0
    \end{pmatrix}, \quad H_2 \!=\!\begin{pmatrix}
        0 & \boldsymbol{u}\boldsymbol{g}^\top \\ \boldsymbol{g}\boldsymbol{u}^\top & 0
    \end{pmatrix}\nonumber\\
    &F \!=\! W_2B_1A_1\!+\!B_2A_2W_1\!+\!B_2A_2B_1A_1, \quad E\!=\!\Delta Y\!-\!F \nonumber\\
    &D_1\!=\!A_2^\top B_2^\top E\!-\!W_2^\top F, \quad D_2 \!=\! EA_1^\top B_1^\top\!-\! FW_1^\top \,.
\end{align}
%
% \enrique{In your notation section you say you'll use capital letters for matrices. You may want to use a different notation for $\epsilon_i$.}
% \enrique{Also, $E$ and $F$ are not defined.}
At initialization, the magnitude of $\sigma\sigma_{W_2}H_1, \sigma\sigma_{W_1}H_2$ are significantly larger than those of $D_1, D_2$, since $\lVert D_1\rVert_F, \lVert D_2\rVert_F \!\sim\! \mathcal{O}(\alpha^2)$.  Therefore, when $\alpha$ is sufficiently small, the learning dynamics shown in \eqref{eqn:early_stage_dynamics} can be approximated as follows
\begin{align}\label{eqn:early_stage_dynamics_approximation}
    \dot Z_1 \approx \sigma\sigma_{W_2}H_1 Z_1, \quad 
    \dot Z_2 \approx \sigma\sigma_{W_1}H_2 Z_2 \,.
\end{align}
Our analysis of the alignment phase centers on interpreting \eqref{eqn:early_stage_dynamics} as a \emph{perturbation} of the approximate learning dynamics in \eqref{eqn:early_stage_dynamics_approximation}. To build intuition, we summarize the simplified dynamics \eqref{eqn:early_stage_dynamics_approximation} in the following claim, deferring the perturbation analysis to Appendix \ref{app:proof_alignment}.

\begin{claim}\label{lem:ode_alignment}
Consider the following ODE system: $\dot Z_1 \!=\! \sigma\sigma_{W_2}H_1 Z_1$, $\dot Z_2 \!=\! \sigma\sigma_{W_1}H_2 Z_2$. 
Taking $Z_1$ as an example, one can first show that the eigenvalue decomposition of $H_1$ is $H_1 \!=\! \boldsymbol{\gamma_1}\boldsymbol{\gamma_1}^\top \!-\! \boldsymbol{\bar\gamma_1}\boldsymbol{\bar\gamma_1}^\top$, where $\boldsymbol{\bar\gamma_1} \!=\! \begin{pmatrix} \boldsymbol{g} \\ - \boldsymbol{v} \end{pmatrix}$. Based on this, the following conditions hold:

\begin{enumerate}[leftmargin=0.45cm]
    \item Closed-form Solution for $Z_1$:
    \begin{align}\label{eqn:sol_early_stage_dynamics_ode}
        Z_1(t) =& \exp(\sigma\sigma_{W_2}t)\boldsymbol{\gamma_1}\boldsymbol{\gamma_1}^\top Z_1(0)\nonumber\\
        &+ \exp(-\sigma\sigma_{W_2}t)\boldsymbol{\bar\gamma_1}\boldsymbol{\bar\gamma_1}^\top Z_1(0)\,.
    \end{align}
    
    \item Growth of $\lVert \boldsymbol{\gamma_1}\boldsymbol{\gamma_1}^\top Z_1\rVert$: $\frac{d}{dt}\log\lVert \boldsymbol{\gamma_1}\boldsymbol{\gamma_1}^\top Z_1\rVert^2 \!=\! 2 \sigma\sigma_{W_2}\,.$
    % \begin{align}\label{eqn:norm_growth_ode}
    %     \frac{d}{dt}\log\lVert \gamma_1\gamma_1^\top Z_1\rVert^2 = 2 \sigma\sigma_{W_2}\,.
    % \end{align}
    
    \item Imbalance Between $\lVert \boldsymbol{\gamma_1}\boldsymbol{\gamma_1}^\top Z_1\rVert$ and $\lVert \boldsymbol{\gamma_2}\boldsymbol{\gamma_2}^\top Z_2\rVert$:
    \begin{align}\label{eqn:imbalance_growth_ode}
        \frac{d}{dt}\log\left(\frac{\lVert \boldsymbol{\gamma_1}\boldsymbol{\gamma_1}^\top Z_1(t)\rVert^2}{\lVert \boldsymbol{\gamma_2}\boldsymbol{\gamma_2}^\top Z_2(t)\rVert^2}\right) = 2 \sigma(\sigma_{W_2} \!-\! \sigma_{W_1})\,,
    \end{align}
    where $\boldsymbol{\gamma_2}\!=\!\begin{pmatrix} \boldsymbol{u} \\ \boldsymbol{g} \end{pmatrix}$.
\end{enumerate}
\end{claim}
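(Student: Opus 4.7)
The plan is to analyze the two decoupled linear ODEs $\dot Z_1 = \sigma\sigma_{W_2}H_1 Z_1$ and $\dot Z_2 = \sigma\sigma_{W_1}H_2 Z_2$ separately via the spectral decomposition of $H_1$ and $H_2$, then derive the three stated conclusions by direct computation.

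First I would verify the eigendecomposition of $H_1$ by direct multiplication. Because $\boldsymbol{g}$ and $\boldsymbol{v}$ are unit vectors, $H_1\boldsymbol{\gamma_1} = \boldsymbol{\gamma_1}$ and $H_1\boldsymbol{\bar\gamma_1} = -\boldsymbol{\bar\gamma_1}$, while $\boldsymbol{\gamma_1}^\top\boldsymbol{\bar\gamma_1} = \|\boldsymbol{g}\|^2 - \|\boldsymbol{v}\|^2 = 0$, so these eigenvectors are orthogonal. Since the off-diagonal block $\boldsymbol{g}\boldsymbol{v}^\top$ has rank one, $H_1$ has rank two, and the stated decomposition $H_1 = \boldsymbol{\gamma_1}\boldsymbol{\gamma_1}^\top - \boldsymbol{\bar\gamma_1}\boldsymbol{\bar\gamma_1}^\top$ accounts for every nonzero eigenvalue (modulo the normalization absorbed into the rank-one projectors, since $\|\boldsymbol{\gamma_1}\| = \sqrt{2}$). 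The same argument handles $H_2$ with eigenvectors $\boldsymbol{\gamma_2}, \boldsymbol{\bar\gamma_2}$.

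For Part 1, I would write $Z_1(t) = \exp(\sigma\sigma_{W_2}t\,H_1)Z_1(0)$ and expand this matrix exponential in the eigenbasis of $H_1$: the component of $Z_1(0)$ along $\boldsymbol{\gamma_1}$ is amplified by $e^{\sigma\sigma_{W_2}t}$, the component along $\boldsymbol{\bar\gamma_1}$ decays as $e^{-\sigma\sigma_{W_2}t}$, and the component in $\ker H_1$ is preserved (and absent from the formula \eqref{eqn:sol_early_stage_dynamics_ode}, which tracks only the dynamic modes). For Part 2, projecting the ODE onto $\boldsymbol{\gamma_1}$ and using $H_1\boldsymbol{\gamma_1}=\boldsymbol{\gamma_1}$ gives $\tfrac{d}{dt}(\boldsymbol{\gamma_1}\boldsymbol{\gamma_1}^\top Z_1) = \sigma\sigma_{W_2}\,\boldsymbol{\gamma_1}\boldsymbol{\gamma_1}^\top Z_1$, so $\|\boldsymbol{\gamma_1}\boldsymbol{\gamma_1}^\top Z_1(t)\|^2 = e^{2\sigma\sigma_{W_2}t}\|\boldsymbol{\gamma_1}\boldsymbol{\gamma_1}^\top Z_1(0)\|^2$ and a logarithmic derivative yields the claimed $2\sigma\sigma_{W_2}$. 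For Part 3, the same argument applied to $Z_2$ with $H_2$ yields $\tfrac{d}{dt}\log\|\boldsymbol{\gamma_2}\boldsymbol{\gamma_2}^\top Z_2\|^2 = 2\sigma\sigma_{W_1}$, and subtracting the two identities gives equation~\eqref{eqn:imbalance_growth_ode} with coefficient $2\sigma(\sigma_{W_2}-\sigma_{W_1})$.

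Because the two ODEs are linear and decoupled and the eigenstructure of $H_1, H_2$ follows by direct multiplication, I do not anticipate any serious obstacle in this claim; the only mild subtlety is the $\|\boldsymbol{\gamma_i}\| = \sqrt{2}$ normalization, which affects constant prefactors but not any of the three stated identities. The real difficulty in establishing Theorem~\ref{thm:rank-one} lies not here but in the subsequent perturbation analysis that justifies approximating \eqref{eqn:early_stage_dynamics} by \eqref{eqn:early_stage_dynamics_approximation} throughout the alignment phase---controlling how long the $\mathcal{O}(\alpha^2)$ terms $D_1, D_2$ remain negligible compared to $\sigma\sigma_{W_2}H_1, \sigma\sigma_{W_1}H_2$---which the paper defers to Appendix~\ref{app:proof_alignment}.
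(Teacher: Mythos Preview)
Your proposal is correct and is precisely the natural argument for this claim. The paper does not provide a separate detailed proof of Claim~\ref{lem:ode_alignment}; it is presented as an intuition-building summary of the simplified linear dynamics, with the spectral structure of $H_1$ implicitly supported by Lemma~\ref{applem:hermitian_A} in the appendix, and the real work deferred (as you note) to the perturbation analysis in Appendix~\ref{app:proof_alignment}. Your observation about the $\|\boldsymbol{\gamma_1}\|=\sqrt{2}$ normalization is apt: as written, $\boldsymbol{\gamma_1}\boldsymbol{\gamma_1}^\top - \boldsymbol{\bar\gamma_1}\boldsymbol{\bar\gamma_1}^\top = 2H_1$, so the eigendecomposition and the closed-form \eqref{eqn:sol_early_stage_dynamics_ode} are off by a factor of $\tfrac{1}{2}$ in the projectors, but as you correctly point out this has no effect on the logarithmic-derivative identities in Parts~2 and~3.
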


Claim~\ref{lem:ode_alignment} demonstrates that the simplified dynamics of $Z_1$ and $Z_2$ lead to the alignment of $\boldsymbol{U_{Z_1}^S}$ with $\boldsymbol{\gamma_1}$, the growth of the spectrum of $Z_1$ projected onto the space spanned by $\boldsymbol{\gamma_1}\boldsymbol{\gamma_1}^\top$, and the increasing imbalance between $\lVert Z_1 \rVert$ and $\lVert Z_2 \rVert$. Moreover, as training progresses, $\boldsymbol{U_{Z_1}^S}$ will be sufficiently aligned with $\boldsymbol{\gamma_1}$, leading to $\lVert Z_1\rVert\!\approx\!\lVert \boldsymbol{\gamma_1}\boldsymbol{\gamma_1}^\top Z_1\rVert$ (respectively $Z_2$).
% Moreover, as training progresses, the spectral norm of $Z_1$ (and similarly $Z_2$) will eventually converge to the spectrum of $Z_1$ projected onto the space spanned by $\gamma_1\gamma_1^\top$.

% Finally, the imbalance between $\lVert Z_1 \rVert$ and $\lVert Z_2 \rVert$ progressively increases, with the growth rate depending on the difference between $\sigma_{W_2}$ and $\sigma_{W_1}$.

\myparagraph{LoRA Weights Escape Saddle Points} In \S\ref{subsec:challenge}, we discussed that at initialization, the LoRA weights are close to a saddle point where both $A_i$ and $B_i$, for $i \!= 1 \text{ or } 2$, are zero matrices. Claim~\ref{lem:ode_alignment} provides insights into how the LoRA weights gradually escape this saddle point during the \textit{alignment} phase, through the growth of $\lVert Z_1\rVert$. Consequently, GF will not be trapped in this saddle point, which ensures convergence. The right panel in Figure~\ref{fig:dynamics_mf} numerically demonstrates that $\lVert Z_1 \rVert$ moves away from zero at the end of the \textit{alignment} phase, following the actual dynamics of LoRA.

The simplified dynamics summarized in Claim~\ref{lem:ode_alignment} accurately approximate the true LoRA dynamics only up to a finite time $T_1$, due to the growth of perturbation terms $D_1$ and $D_2$ (see again Appendix~\ref{app:proof_alignment}). The time $T_1$ is the end of the alignment phase, at which point the alignment and imbalance of the true dynamics are characterized by the following claim.

\begin{claim}\label{cor:end_of_alignment}
Under the same setting in Theorem~\ref{thm:rank-one}, there exists constants $c_4, c_5$ such that at $T_1$, one has
\begin{enumerate}[leftmargin=0.45cm]
    \item Good Alignment: $\cos^2{(\boldsymbol{U_{Z_1}^S}(T_1), \boldsymbol{\gamma_1})} \!\geq\! 1\!-\!c_4 \alpha^{\frac{1+\delta_w}{3+\delta_w}}$.
    \item Sufficient Imbalance: $\frac{\lVert Z_1(T_1)\rVert_2^2}{\lVert Z_2(T_1)\rVert_2^2} \!\geq\! c_{5}\alpha^{-\frac{4(1-\delta_w)}{(5-\delta_w)}}\,.$
    % %
    % \begin{align}
    %     \frac{\lVert Z_1(T_1)\rVert_2^2}{\lVert Z_2(T_1)\rVert_2^2} \!\geq\! c_{5}\alpha^{-\frac{4(1-\delta_w)}{(5-\delta_w)}}\,.
    % \end{align}
    % %
\end{enumerate}
\end{claim}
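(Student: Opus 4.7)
The plan is to prove both parts of Claim~\ref{cor:end_of_alignment} by a bootstrap that promotes the simplified-dynamics analysis of Claim~\ref{lem:ode_alignment} to the true dynamics in \eqref{eqn:early_stage_dynamics}, tracking for each $Z_i(t)$ its decomposition into a signal along $\boldsymbol{\gamma_i}$, a decaying component along $\boldsymbol{\bar\gamma_i}$, and an orthogonal residual. First, I would use Gaussian concentration to control the initialization: with high probability $\|\boldsymbol{\gamma_i}^\top Z_i(0)\|_F=\Theta(\alpha)$ and $\|Z_i(0)\|=O(\alpha\sqrt{h+n})$, so the signal-to-noise ratio at $t=0$ is $\Theta(1)$. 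Under the simplified ODE of Claim~\ref{lem:ode_alignment} (applied symmetrically to $Z_2$ with rate $\sigma\sigma_{W_1}$), the signal grows as $\Theta(\alpha\exp(\sigma\sigma_{W_{3-i}}t))$ while the orthogonal residual stays at $O(\alpha)$, so a Wedin/Davis--Kahan estimate gives $\cos^2(\boldsymbol{U_{Z_1}^S}(t),\boldsymbol{\gamma_1})\geq 1-O(\alpha^2/\|\boldsymbol{\gamma_1}^\top Z_1(t)\|_F^2)$, and the imbalance identity \eqref{eqn:imbalance_growth_ode} yields $\|Z_1(t)\|^2/\|Z_2(t)\|^2 = \Theta(\exp(2\sigma(\sigma_{W_2}-\sigma_{W_1})t))$ for the simplified system.

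Second, I would extend this to the true dynamics via a Gr\"onwall-style bootstrap. Assuming inductively that the true trajectory stays within a small tube of the simplified solution, the perturbation matrices $D_1,D_2$ in \eqref{eqn:early_stage_dynamics} admit explicit bounds of the form $\|D_i\|\lesssim \sigma_{W_{3-i}}\|F\|+\|Z_{3-i}\|^2\|F\|$, with $\|F\|$ controlled by $\|Z_1\|^2$ and $\|Z_2\|^2$; plugging the predicted growth back in keeps the tube closed until the first time $T_1$ at which a chosen stopping threshold is reached. Defining $T_1$ as the first time $\|Z_1(t)\|^2$ reaches a specific power $\alpha^\beta$ of the initialization scale, both bounds follow by evaluation: the alignment bound from the Wedin estimate with $\|\boldsymbol{\gamma_1}^\top Z_1(T_1)\|_F=\Theta(\alpha^{\beta/2})$, and the imbalance bound from substituting $\exp(\sigma\sigma_{W_2}T_1)=\Theta(\alpha^{\beta/2-1})$ into the ratio above. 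The fractional exponents $\tfrac{1+\delta_w}{3+\delta_w}$ and $\tfrac{4(1-\delta_w)}{5-\delta_w}$ then emerge from the precise form of the perturbation bound on $\|D_i\|$, which couples $Z_1$ and $Z_2$ and depends on $\delta_w$ through the different exponential rates $\sigma_{W_1}$ and $\sigma_{W_2}$.

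The main obstacle is the perturbation analysis rather than the ODE itself. The matrices $D_1,D_2$ couple $Z_1$ and $Z_2$ through the residual $E=\Delta Y - F$ and through the fourth-order term $B_2A_2B_1A_1$, so both trajectories must be propagated jointly inside a single bootstrap, with the tube condition on $Z_1$ and on $Z_2$ verified a posteriori. A secondary subtlety is that the two bounds in Claim~\ref{cor:end_of_alignment} share a \emph{single} stopping time $T_1$: the threshold $\alpha^\beta$ must be tuned so that the alignment is sharp enough to enter the local-convergence phase, while simultaneously producing the imbalance required by that next phase, and this coupled optimization against the differential growth rates $\sigma_{W_1}$ vs.\ $\sigma_{W_2}$ is precisely what encodes the $\delta_w$-dependence in the stated exponents.
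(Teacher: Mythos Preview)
Your high-level plan—simplified ODE plus a perturbation argument—matches the paper's intuition, but the specific mechanism you propose does not close, and the paper takes a genuinely different technical route.

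The gap is in the Gr\"onwall/tube step. Tracking the deviation from $Z_1^{\mathrm{simp}}$, or equivalently the orthogonal mass $N_1=(I-\boldsymbol{\gamma_1}\boldsymbol{\gamma_1}^\top)Z_1$, gives only $\tfrac{d}{dt}\|N_1\|\le \|D_1\|\,\|Z_1\|$. The dominant piece of $D_1$ is $W_2^\top W_2 B_1A_1$, so $\|D_1\|\sim \|W_2\|^2\|Z_1\|_F^2$; integrating yields $\|N_1(t)\|\lesssim \alpha+\|Z_1(t)\|^3/(\sigma\sigma_{W_2})$. At the $T_1$ that Claim~\ref{cor:end_of_alignment} refers to, $\|Z_1(T_1)\|$ must be of \emph{constant} order (this is what makes the PL-type lower bound in Theorem~\ref{thm:local_convergence_rank-one} nonvanishing), and then your Wedin input is $O(1)$ and yields no alignment. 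Stopping earlier at $\|Z_1\|^2=\alpha^\beta$ with $\beta\in(0,2)$ would rescue the Wedin step but produces a $T_1$ at which $\|Z_1\|$ vanishes with $\alpha$, so it cannot be the handoff time to local convergence; your single threshold thus cannot satisfy both roles simultaneously, and the hoped-for ``tuning of $\beta$'' does not resolve this.

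The paper never compares to the simplified trajectory. It works column-by-column on $w_{1j}=Z_1e_j$ and derives a \emph{multiplicative} inequality on angle ratios,
\[
\frac{d}{dt}\log\frac{\cos(\boldsymbol{\gamma_1},\,w_{1j}/\|w_{1j}\|)}{\cos(\hat{\boldsymbol{\gamma}}_1,\,w_{1j}/\|w_{1j}\|)}\;\ge\;\sigma\sigma_{W_2}-2\|D_1\|,
\]
which stays nonnegative whenever $\|D_1\|<\sigma\sigma_{W_2}/2$, independently of the size of $\|Z_1\|$. This is the device your plan is missing. The argument then splits into two subphases: an \emph{early alignment} subphase ending at $\hat T_1$ (the first time $\|D_i\|=\alpha$, with $\|Z_i\|$ still $O(\alpha)$), where integrating the ratio inequality over $[0,\hat T_1]$ produces the $\alpha$-dependent exponents; and an \emph{escape-saddle} subphase ending at $T_1$ (the first time either $g^\top B_1A_1v$ or $\|D_i\|$ reaches a constant threshold), during which the same inequality certifies that alignment and imbalance only improve while $\|Z_1\|$ grows to constant order. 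The exponents are not obtained by optimizing a free $\beta$; they come from computing $\hat T_1$ via the bound $\|D_1\|+\|D_2\|\lesssim \alpha^2 e^{2(\sigma\sigma_{W_2}+\delta)t}+\alpha^4 e^{4(\sigma\sigma_{W_2}+\delta)t}$ and then reading off the resulting powers of $\alpha$ from the growth-rate ratios.
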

The Proof of Claim~\ref{cor:end_of_alignment} can be found in Appendix~\ref{app:proof_alignment}.
% In the next section, we show how the good alignment between $\boldsymbol{U_{Z_1}^S}$ with $\gamma_1$ and sufficient imbalance between $\lVert Z_1\rVert, \lVert Z_2\rVert$ presented in Claim~\ref{cor:end_of_alignment} leads to linear convergence of GF.

\subsection{Proof Strategy of Local Convergence Phase}\label{subsec:convergence}
At the end of the \textit{alignment} phase, we established sufficient alignment between $\boldsymbol{U_{Z_1}^S}(T_1)$ and $\gamma_1$, as well as a significant imbalance between the norms of $\lVert Z_1(T_1) \rVert$ and $\lVert Z_2(T_1) \rVert$ (see Claim~\ref{cor:end_of_alignment}). However, it remains unclear whether these properties persist into the \textit{local convergence} phase, or how the quality of alignment and imbalance affect the convergence results.

In this section, we first demonstrate how the large imbalance between $\lVert Z_1 \rVert$ and $\lVert Z_2 \rVert$ leads to simplified learning dynamics for gradient flow (GF). Specifically, with $\lVert Z_1 \rVert \gg \lVert Z_2 \rVert$, the objective function can be approximated as:
\begin{align}\label{eqn:obj_approx}
L(t) &= \frac{1}{2} \lVert \Delta Y - W_2 A_1 B_1 - A_2 B_2 W_1 - A_2 B_2 A_1 B_1 \rVert_F^2 \nonumber \\
&\approx \frac{1}{2} \lVert \Delta Y - W_2 A_1 B_1 \rVert_F^2.
\end{align}
This approximation reduces the objective to a form similar to MF, where the factors are $A_1$ and $B_1$. Consequently, we can leverage the techniques developed in~\cite{stoger2021small,jin2023understanding, pmlr-v195-soltanolkotabi23a} to derive the linear convergence of GF in this setting.

The following theorem formally characterizes the persistence of the alignment and the imbalance throughout the \textit{local convergence} phase, and establishes the linear convergence rate of GF.
\begin{thm}[Local convergence]\label{thm:local_convergence_rank-one}
Under the same setting as Theorem~\ref{thm:rank-one}, let $T_2\!=\!\frac{c_2\log\bigl(\frac{\sqrt{2L(0)}}{\alpha}\bigl)}{\sigma\sigma_{W_2}}$. Then, there exists constants $c_6, c_7$ such that for $\forall t\in[T_1,T_1\!+\!T_2]$, the following holds
\begin{enumerate}[leftmargin=0.45cm]
    \item Good Alignment of $\boldsymbol{U_{Z_1}^S}(t)$ with $\boldsymbol{\gamma_1}$:
    \begin{align}\label{eqn:alignment_lower_bdd}
        \cos^2{(\boldsymbol{U_{Z_1}^S}(t), \boldsymbol{\gamma_1})} \!\geq \!\cos^{2c_6}{(\boldsymbol{U_{Z_1}^S}(T_1), \boldsymbol{\gamma_1})}\,.
    \end{align}
    \item Imbalance Between $\lVert Z_1(t)\rVert$ and $\lVert Z_2(t)\rVert$ Persists:
    \begin{align}\label{eqn:imbalance_lower_bdd}
        \frac{\lVert Z_1(t)\rVert}{\lVert Z_2(t)\rVert} \!\geq\!\biggl( \frac{\lVert Z_1(T_1)\rVert}{\lVert Z_2(T_1)\rVert}\biggl)^{c_7}\,.
    \end{align}
    \item Loss Converges Linearly:
    \begin{align}\label{eqn:convergence_dependency}
        &L(t)\nonumber\\
        \leq&2\exp\biggl(-\frac{\bigl(1\!-\!\frac{\lVert Z_2(T_1)\rVert}{\lVert Z_1(T_2)\rVert}\bigr)(1\!-\!\delta_w)\sigma\sigma_{W_2}(t\!-\!T_1)}{16}\biggl)L(T_1)\nonumber\\
        &\!+\!c_8\bigl(1\!-\!\cos^2{(\boldsymbol{U_{Z_1}^S}(T_1), \boldsymbol{\gamma_1})}\bigl)\,.
    \end{align}
\end{enumerate}
Moreover, by substituting the alignment and imbalance results from Claim~\ref{cor:end_of_alignment} and assuming $\alpha\!\leq\!\alpha^*$, we can simplify convergence rate in~\eqref{eqn:convergence_dependency} as follows:
\begin{align}\label{eqn:convergence_simplified}
    L(t)\!\leq\! 2\exp\biggl(\frac{\!-\!(1\!-\!\delta_w)\sigma\sigma_{W_2}(t\!-\!T_1)}{32}\biggl)L(T_1) \!+\! 2\alpha^{c_3}\,.
\end{align}
\end{thm}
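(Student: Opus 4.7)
The plan is a bootstrap (continuity) argument on $[T_1, T_1+T_2]$, using Claim~\ref{cor:end_of_alignment} as initial data. I would define three running invariants on $[T_1,t]$: (i) the alignment lower bound \eqref{eqn:alignment_lower_bdd}; (ii) the imbalance lower bound \eqref{eqn:imbalance_lower_bdd}; (iii) a tail bound on $L(s)$ of the form \eqref{eqn:convergence_dependency}. Let $T^\star$ be the supremum of times on which all three hold with strict slack, and suppose for contradiction $T^\star<T_1+T_2$. The goal is to show that on $[T_1,T^\star]$ the flow forces each invariant to hold strictly, which drives $T^\star$ past $T_1+T_2$ and proves the theorem.

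The key structural input from hypothesis (ii) is that throughout the window $\lVert Z_2\rVert \ll \lVert Z_1\rVert$, so that in $F = W_2B_1A_1+B_2A_2W_1+B_2A_2B_1A_1$ the last two summands are dominated by $W_2B_1A_1$; consequently $L \approx \tfrac12\lVert\Delta Y - W_2B_1A_1\rVert_F^2$, as in \eqref{eqn:obj_approx}. This reduces the effective dynamics to an asymmetric low-rank factorization of $\Delta Y$ with fixed left factor $W_2$, to which the techniques of \cite{stoger2021small,jin2023understanding,pmlr-v195-soltanolkotabi23a} can be adapted. To prove (i), I would revisit the decomposition \eqref{eqn:early_stage_dynamics}, split $Z_1 = \boldsymbol{\gamma_1}\boldsymbol{\gamma_1}^\top Z_1 + \boldsymbol{\bar\gamma_1}\boldsymbol{\bar\gamma_1}^\top Z_1 + Z_1^\perp$ using the eigendecomposition of $H_1$ from Claim~\ref{lem:ode_alignment}, and track each component: the $\boldsymbol{\gamma_1}$ mode continues to be amplified at rate $\sigma\sigma_{W_2}$ (up to a factor reflecting the shrinking residual), the $\boldsymbol{\bar\gamma_1}$ mode continues to contract, and the $Z_1^\perp$ mode is driven only by the perturbation $D_1$, whose Frobenius norm is controlled by $\lVert\Delta Y-F\rVert_F = \sqrt{2L}$. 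Dividing the $\boldsymbol{\gamma_1}$-component growth by the total norm and applying Gr\"onwall yields \eqref{eqn:alignment_lower_bdd}, with $c_6$ emerging as the ratio of growth to contamination rates. For (ii), the clean identity \eqref{eqn:imbalance_growth_ode} generalizes to the true dynamics up to a $D_1,D_2$-perturbation that is again $\mathcal{O}(\sqrt{L})$; integrating gives \eqref{eqn:imbalance_lower_bdd} with exponent $c_7$ quantifying the multiplicative slippage from the ideal rate $2\sigma(\sigma_{W_2}-\sigma_{W_1})$.

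For (iii), I would establish a local Polyak-\L{}ojasiewicz inequality. Computing $\dot L = -\sum_{i}(\lVert\partial_{A_i}L\rVert_F^2 + \lVert\partial_{B_i}L\rVert_F^2)$ and using (i)–(ii), the dominant contribution $W_2^\top(\Delta Y - F)B_1$ and $A_1(\Delta Y-F)^\top W_2$ can be lower bounded: under good alignment, $B_1A_1$ acts like a rank-one projector onto $\boldsymbol{u}\boldsymbol{v}^\top$ with effective norm $\gtrsim\lVert Z_1\rVert^2$; under the imbalance, the corruption from $B_2A_2$-dependent terms is attenuated by the factor $1-\lVert Z_2\rVert/\lVert Z_1\rVert$. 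Combining yields $\dot L \le -\mu(t) L + \eta$ with $\mu(t) \gtrsim (1-\lVert Z_2\rVert/\lVert Z_1\rVert)(1-\delta_w)\sigma\sigma_{W_2}/8$ and residual $\eta = \mathcal{O}(1-\cos^2(\boldsymbol{U_{Z_1}^S}(T_1),\boldsymbol{\gamma_1}))$ from frozen misalignment. Gr\"onwall then produces \eqref{eqn:convergence_dependency}. Substituting the estimates from Claim~\ref{cor:end_of_alignment} for $\alpha\le\alpha^\star$ makes the prefactor inside the exponent at least $1/32$ and the additive term at most $2\alpha^{c_3}$, yielding \eqref{eqn:convergence_simplified}.

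The main obstacle is closing the bootstrap for the alignment invariant. The perturbation $D_1$ that can rotate $\boldsymbol{U_{Z_1}^S}$ away from $\boldsymbol{\gamma_1}$ has size proportional to $\sqrt{L(t)}$, yet the decay of $L(t)$ itself is contingent on alignment remaining good—a circularity that must be broken quantitatively. Resolving it requires exploiting that $T_2$ is only $\mathcal{O}(\log(1/\alpha)/(\sigma\sigma_{W_2}))$, so even a worst-case linear accumulation of misalignment drift stays below the $\alpha^{c_3}$ floor. A second subtlety is self-consistency of the imbalance with the PL rate: the rate in \eqref{eqn:convergence_dependency} explicitly carries the factor $1-\lVert Z_2\rVert/\lVert Z_1\rVert$, so the imbalance invariant (ii) must be shown strong enough to keep this factor bounded away from zero for the entire window, which is where the polynomial-in-$\alpha$ lower bound in Claim~\ref{cor:end_of_alignment} is indispensable.
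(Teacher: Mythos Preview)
Your bootstrap-plus-PL outline matches the paper's strategy at the architectural level, but two points differ in substance.

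First, the paper organizes the convergence argument around an explicit \emph{signal/noise decomposition} $L = L_S + L_N$, where $L_S = \tfrac12\lvert u^\top E v\rvert^2$ is the residual projected onto $uv^\top$ and $L_N$ collects the off-diagonal pieces $u_\perp u_\perp^\top F$ and $uu^\top F v_\perp v_\perp^\top$. The noise $L_N$ is bounded outright by terms scaling with $1-\cos^2$ and with $\lVert Z_2\rVert_F^2/\lVert Z_1\rVert_F^2$, while $L_S$ obeys $\dot L_S \le -c\,L_S + f_1\sqrt{2L_S}$, the $\sqrt{L_S}$ arising from the cross-terms $\langle\partial_{\cdot}L_S,\,\partial_{\cdot}L_N\rangle$. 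This differs from your $\dot L \le -\mu L + \eta$ with constant $\eta$; both integrate to the shape \eqref{eqn:convergence_dependency}, but the paper's route makes the floor's origin explicit as $L_N + (32f_1/c)^2$. The paper also tracks alignment \emph{column-wise} via $\cos(\boldsymbol{\gamma_1}, w_{1j}/\lVert w_{1j}\rVert)$ for each $j\in[r]$ (Lemma~\ref{applem:grow_norm_imbalance}) rather than through the singular-subspace projection you propose, transferring to $\boldsymbol{U_{Z_1}^S}$ via a Wedin bound only at the end.

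Second, and more importantly, your claim that $\lVert D_1\rVert$ is controlled by $\sqrt{2L}$ is wrong in this phase. Since $D_1 = A_2^\top B_2^\top E - W_2^\top F$ and $F$ now contains the order-one piece $W_2 B_1A_1$, one has $\lVert D_1\rVert = O(1)$ throughout the window; the instantaneous alignment rate $\sigma\sigma_{W_2} - 2\lVert D_1\rVert$ is therefore generically \emph{negative}, and alignment can degrade. Your proposed resolution---``worst-case linear accumulation of drift stays below the $\alpha^{c_3}$ floor''---fails for the same reason: $O(1)\cdot T_2 \sim \log(1/\alpha)$, not $\alpha^{c_3}$. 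The paper's actual mechanism is quantitative budgeting in the log-alignment ratio. The reserve from Claim~\ref{cor:end_of_alignment} is of order $\tfrac{3+\delta_w}{5-\delta_w}\log(1/\alpha)$; the spend over $[T_1,T_1+T_2]$ is of order $c_3\log(1/\alpha)$ because $T_2 \propto c_3\log(1/\alpha)/(\sigma\sigma_{W_2})$; and one \emph{chooses} $c_3$ (equivalently the auxiliary exponents $d_6,d_7$) small enough that reserve dominates spend. This is why the theorem only asserts $\cos^2 \ge \cos^{2c_6}$ rather than non-degradation, and why $c_3$ is left as a constant depending on $\delta_w,\sigma_{W_1},\sigma_{W_2}$ rather than being universal.
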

The proof of Theorem~\ref{thm:local_convergence_rank-one} can be found in Appendix~\ref{app:proof_local_convergence}.
Theorem~\ref{thm:local_convergence_rank-one} establishes that the alignment and imbalance achieved during the \textit{alignment} phase persist throughout the \textit{local convergence} phase, and shows how these properties influence the rate of convergence. Specifically, \eqref{eqn:convergence_dependency} demonstrates that the upper bound on $L(t)$ consists of two components: the first part converges to zero at a linear rate, while the second part is a constant that depends on the quality of the alignment between $\boldsymbol{U_{Z_1}^S}(T_1)$ and $\boldsymbol{\gamma_1}$ at the end of the \textit{alignment} phase, with better alignment leading to smaller final loss. By applying the result from Claim~\ref{cor:end_of_alignment}, we conclude that the rate of convergence is determined by the pre-trained weights and the difference between the target matrices of the pre-trained and fine-tuning tasks, i.e., $\sigma \sigma_{W_2}$. Moreover, this rate is independent of the initialization scale $\alpha$. 

In \S\ref{sec:rank-one}, we analyzed GF with small initialization for $\mathrm{rank}(\Delta Y) \!=\! 1$, showing that smaller initialization improves alignment between $\boldsymbol{U_{Z_1}^S}$ and $\boldsymbol{\gamma_1}$, leading to a lower final loss. This raises the question: if $\boldsymbol{U_{Z_1}^S}$ is perfectly aligned with $\boldsymbol{\gamma_1}$ at initialization, can GF achieve zero loss? The following section confirms this by carefully designing spectral initialization for LoRA in MF.

\section{LEARNING DYNAMICS UNDER SPECTRAL INITIALIZATION}\label{sec:rank-r}
% In this section, we propose spectral initialization for Problem~\ref{eqn:obj_lora} when $\mathrm{rank}(\Delta Y)\!>\! 1$, and LoRA rank $r \!\geq\! \mathrm{rank}(\Delta Y)$.
%We begin by introducing 
In this section, we introduce the spectral initialization for LoRA applied to Problem~\ref{eqn:obj_lora} when 
%$\mathrm{rank}(\Delta Y)\!>\! 1$, and 
LoRA rank $r \!\geq\! \mathrm{rank}(\Delta Y)\!>\! 1$. Then, we present the main theorem on its learning dynamics, and %discuss the similarities and differences compared 
compare them to the rank-one case analyzed in \S\ref{sec:rank-one}.

\myparagraph{Spectral Initialization} We initialize $B_1, B_2$ as zero matrices. 
For $A_1, A_2$, we initialize them using the singular vector matrices of $\Delta Y, W_2$. 
In Assumptions~\ref{asmp:target_Y} and~\ref{asmp:pre-trained_w}, we assume the full SVD of $\Delta Y, W_2$ are $U_Y\Sigma_{\Delta Y}V_Y^\top, U_Y\Sigma_{W_2}G^\top$ respectively. 
Let $U_Y^S, V_Y^S$ be the singular matrices of $\Delta Y$ corresponding to the non-zero singular values. 
Then, we define $G^S$ to be the left singular vectors of $W_2$ corresponding to the subcolumns $U_Y^S$ of $U_Y$.
Based on the definition above, we initialize $A_1, A_2$ as follows: $A_1\!=\!G_{11}\Sigma_{A_1}G_{12}^\top, A_2\!=\!G_{21}\Sigma_{A_2}G_{22}^\top$,
% %
% \begin{align}
%     B_1\!=\!G_{11}\Sigma_{B_1}G_{12}^\top, \quad B_2\!=\!G_{21}\Sigma_{B_2}G_{22}^\top\,,
% \end{align}
% %
where $\Sigma_{A_1}, \Sigma_{A_2}$ are diagonal matrices and initialized entry-wise i.i.d. as $\Sigma_{A_1}[i,i],\Sigma_{A_2}[i,i]\!\sim\!\mathcal{N}(0, \alpha^2)$, and $G_{11},G_{21}\in\R^{r\times r}$ are arbitrary orthogonal matrices. 
The matrices $G_{12}\!=\![V_Y^S, V_{Y,\perp}^S], G_{22}\!=\![G^S, G^S_{\perp}]$ are constructed from the orthogonal matrices $V_{Y,\perp}^S\in\R^{r\times(n-r)}, G^S_{\perp}\in\R^{r\times(h-r)}$ where  $V_Y^S\!\perp\! V_{Y,\perp}^S, G^S\!\perp\! G^S_{\perp}$.

\begin{rem}
\cite{balazy2024lora, lin2024nora, meng2024pissa, wang2024milora} propose spectral initialization for LoRA based solely on the pre-trained weights. In contrast, our spectral initialization depends on both the pre-trained weights and the fine-tuning target matrix. In Appendix~\ref{app:example}, we provide examples where methods built purely on pre-trained weights fail to fine-tune pre-trained models for MF, highlighting the importance of incorporating the fine-tuning target matrix when designing spectral initialization for LoRA.
\end{rem}
\begin{figure*}[!t]
    \centering% <---
    \includegraphics[height=0.31\textwidth]{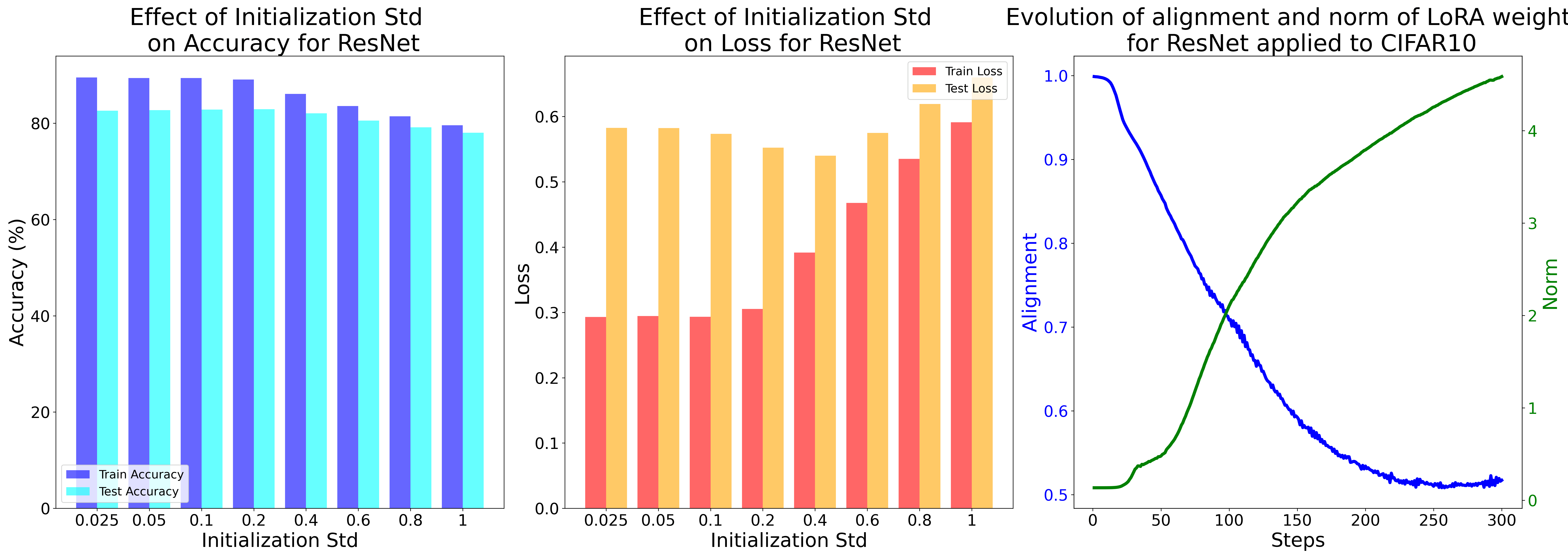}
    \caption{The left and middle panels report the loss and accuracy evaluated on the training and evaluation datasets for ResNet on the CIFAR-10 dataset. The right panel shows the evolution of the alignment between the singular matrices of the LoRA weights and the target directions (with smaller values indicating better alignment), as well as the norm of the LoRA weights. We repeat the simulation thirty times, with the shaded regions representing one standard deviation above and below the mean.
    }
    \label{fig:image-classification}
\end{figure*}

\myparagraph{Learning Dynamics of LoRA Under Spectral Initialization} Under spectral initialization, the learning dynamics of GF can be decoupled into several scalar dynamics. 
Specifically, let $\bar B_1\!=\!G^\top B_1G_{11}^\top, \bar A_1 \!=\! G_{11}^\top A_1G_{12}, \bar B_2\!=\! U_Y^\top B_2G_{21}^\top, \bar A_2\!=\!G_{21}^\top A_2 G_{22}$, then one can write the learning dynamics of $\bar A_1, \bar B_1$ as follows
\begin{align}\label{eqn:ode_diagonal}
\begin{split}
    \frac{d}{dt}\bar B_1=(\Sigma_{W_2}\!+\!\bar B_2\bar A_2)(\Sigma_{\Delta Y}\!-\!\bar F)\bar A_1^\top\,, \nonumber\\
    \frac{d}{dt}\bar A_1=\bar B_1^\top(\Sigma_{W_2}\!+\!\bar B_2\bar A_2)^\top(\Sigma_{\Delta Y}\!-\!\bar F) \,.
\end{split}
\end{align}
where $\bar F\!:=\!\Sigma_{W_2}\bar B_1\bar A_1\!+\!\bar B_2\bar A_2\Sigma_{W_1}\!+\!\bar B_2\bar A_2\bar B_1\bar A_1$. 
At initialization, $\bar B_i, \bar A_i, i\!=\!1,2$ are diagonal matrices due to spectral initialization, then they remain diagonal during the training based on~\eqref{eqn:ode_diagonal}. 
Thus, the learning dynamics of LoRA weights decouple into the learning dynamics of the singular values of LoRA weights. 
For ease of presentation in the following theorem, we will use $\sigma_{A_1}^{(i)}$ to represent the ith diagonal element of $\bar A_1$ (similar definition for $\sigma_{A_2}^{(i)},\sigma_{B_1}^{(i)},\sigma_{B_2}^{(i)}, \sigma_{\Delta Y}^{(i)},\sigma_{W_2}^{(i)},\sigma_{W_1}^{(i)}, \sigma_{\bar F}^{(i)}$).
\begin{thm}\label{thm:rank-r}
Let $\delta_{w}^{(i)}\!=\!\frac{\sigma_{W_2}^{(i)}}{\sigma_{W_1}^{(i)}}, \ell^{(i)}(t)\!=\!\frac{1}{2}\bigl(\sigma_{\Delta Y}^{(i)}\!-\!\sigma_{\bar F}^{(i)}\bigl)^2, z_1^{(i)}\!=\!(\sigma_{A_1}^{(i)})^2\!+\!(\sigma_{B_1}^{(i)})^2$ (respectively $z_2^{(i)}$). In the case where $\delta_w^{(i)}\!\not=\!1$, we assume $\delta_w^{(i)}\!<\!1$ WLOG, then the learning dynamics has two phases which can be separated by $T_1^{(i)}\!=\!\frac{2}{(3\!+\!\delta_w^{(i)})\sigma_{\Delta Y}^{(i)}\sigma_{W_2}^{(i)}}\log\bigl(\frac{\sigma_{\Delta Y}^{(i)}}{16z_1^{(i)}(0)}\bigl)$
\begin{enumerate}[leftmargin=0.45cm]
    \item \textit{Growth of Norm and Imbalance}: $\forall t\!\leq\! T_1^{(i)}$
    \begin{align}
        \frac{d}{dt} \log z_1^{(i)}\!&\geq\!\frac{3\!+\!\delta_w^{(i)}}{2}\sigma_{\Delta Y}^{(i)}\sigma_{W_2}^{(i)}\,,\nonumber\\
        \frac{d}{dt}\log\biggl(\frac{z_1^{(i)}}{z_2^{(i)}}\biggl)\!&\geq\!\frac{3(1\!-\!\delta_w^{(i)})}{2}\sigma_{\Delta Y}^{(i)}\sigma_{W_2}^{(i)}\,.
    \end{align}
    % \vspace{-1mm}
    \item \textit{Local Convergence}: for $\forall t\!\geq\! T_1^{(i)}$, the loss converges linearly
    \begin{align}
        \ell^{(i)}(t)\!\leq\!\exp\biggl(-\frac{(1\!-\!\delta_w^{(i)})\sigma_{\Delta Y}^{(i)} \sigma_{W_2}^{(i)}(t\!-\!T_1)}{8}\biggl) \ell^{(i)}(T_1)\,.\nonumber
    \end{align}
\end{enumerate}
\end{thm}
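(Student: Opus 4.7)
The plan is to exploit the diagonal structure forced by spectral initialization to reduce Theorem~\ref{thm:rank-r} to a family of decoupled scalar ODE systems, one per index $i$, and then analyze each system by imitating the two-phase program of Theorems~\ref{thm:rank-one} and~\ref{thm:local_convergence_rank-one} in a much simpler one-dimensional setting. First, I would verify that under the spectral initialization the rotated weights $\bar A_j(t), \bar B_j(t)$ remain diagonal for all $t$: since $\Sigma_{W_2}$, $\Sigma_{W_1}$, and $\Sigma_{\Delta Y}$ are diagonal in the chosen basis, a direct inspection of the right-hand sides in~\eqref{eqn:ode_diagonal} shows that if $\bar A_j, \bar B_j$ are diagonal at time $t$ then so are their time derivatives; combined with the diagonal initialization of $\Sigma_{A_1}, \Sigma_{A_2}$ and the zero initialization of $B_1, B_2$, this is enough by uniqueness of the ODE solution. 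Consequently the joint gradient flow collapses onto the independent scalar ODEs
\begin{align*}
    \dot\sigma_{B_1}^{(i)} &= (\sigma_{W_2}^{(i)}\!+\!\sigma_{B_2}^{(i)}\sigma_{A_2}^{(i)})(\sigma_{\Delta Y}^{(i)}\!-\!\sigma_{\bar F}^{(i)})\sigma_{A_1}^{(i)}, \\
    \dot\sigma_{A_1}^{(i)} &= \sigma_{B_1}^{(i)}(\sigma_{W_2}^{(i)}\!+\!\sigma_{B_2}^{(i)}\sigma_{A_2}^{(i)})(\sigma_{\Delta Y}^{(i)}\!-\!\sigma_{\bar F}^{(i)}),
\end{align*}
together with the analogous pair for $(\sigma_{A_2}^{(i)}, \sigma_{B_2}^{(i)})$, which I can treat one index $i$ at a time.

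Next, I would establish the conservation laws $\frac{d}{dt}\bigl((\sigma_{A_j}^{(i)})^2 - (\sigma_{B_j}^{(i)})^2\bigr) = 0$ for $j=1,2$, which follow instantly from the symmetry between the $\dot\sigma_{A_j}^{(i)}$ and $\dot\sigma_{B_j}^{(i)}$ equations. Together with $\sigma_{B_j}^{(i)}(0)=0$, this pins down $2\sigma_{A_j}^{(i)}\sigma_{B_j}^{(i)}$ and $z_j^{(i)}$ as functions of the single scalar $\sigma_{A_j}^{(i)}$ and quantifies the deficit from perfect balancing, namely $z_j^{(i)} - 2\sigma_{A_j}^{(i)}\sigma_{B_j}^{(i)}\sim (\sigma_{A_j}^{(i)}(0))^2=\gO(\alpha^2)$ once $z_j^{(i)}\gg\alpha^2$. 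For the growth phase, I would then compute $\dot z_1^{(i)}=4\sigma_{A_1}^{(i)}\sigma_{B_1}^{(i)}(\sigma_{W_2}^{(i)}+\sigma_{B_2}^{(i)}\sigma_{A_2}^{(i)})(\sigma_{\Delta Y}^{(i)}-\sigma_{\bar F}^{(i)})$ and its $z_2^{(i)}$-counterpart, and use an inductive / bootstrap argument to maintain $z_j^{(i)}(t)\lesssim \sigma_{\Delta Y}^{(i)}$ and $\sigma_{\bar F}^{(i)}(t)\ll\sigma_{\Delta Y}^{(i)}$ for $t\leq T_1^{(i)}$, which keeps the bracketed factors close to $\sigma_{W_2}^{(i)}\sigma_{\Delta Y}^{(i)}$ and $\sigma_{W_1}^{(i)}\sigma_{\Delta Y}^{(i)}$, respectively. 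Dividing by $z_j^{(i)}$ and using the conservation law gives a ratio $2\sigma_{A_j}^{(i)}\sigma_{B_j}^{(i)}/z_j^{(i)}$ bounded below by a constant, yielding the claimed $(3+\delta_w^{(i)})/2$ lower bound on $\frac{d}{dt}\log z_1^{(i)}$; the slack compared to the ``fully balanced'' rate $2$ comes from the initial startup where $\sigma_{B_j}^{(i)}$ grows from zero and from the $\sigma_{\bar F}^{(i)}$ correction. Subtracting the two log-derivatives, the leading-order difference is driven by $\sigma_{W_2}^{(i)}-\sigma_{W_1}^{(i)}=(1-\delta_w^{(i)})\sigma_{W_2}^{(i)}$, which after the same balancing argument yields the imbalance growth rate $3(1-\delta_w^{(i)})/2\cdot\sigma_{\Delta Y}^{(i)}\sigma_{W_2}^{(i)}$. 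The choice of $T_1^{(i)}$ is exactly calibrated so that integrating the log-derivative bound gives $z_1^{(i)}(T_1^{(i)})\gtrsim \sigma_{\Delta Y}^{(i)}/16$, which is the threshold needed to enter the convergence phase.

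For the local convergence statement, I would mimic the analysis of Theorem~\ref{thm:local_convergence_rank-one} in the scalar setting. Differentiating $\ell^{(i)}(t)=\tfrac{1}{2}(\sigma_{\Delta Y}^{(i)}-\sigma_{\bar F}^{(i)})^2$ and substituting $\dot\sigma_{\bar F}^{(i)}$ through the chain rule yields
\begin{align*}
    \dot\ell^{(i)} = -(\sigma_{\Delta Y}^{(i)}-\sigma_{\bar F}^{(i)})\,\dot\sigma_{\bar F}^{(i)} = -\lambda^{(i)}(t)\,(\sigma_{\Delta Y}^{(i)}-\sigma_{\bar F}^{(i)})^2,
\end{align*}
where $\lambda^{(i)}(t)$ is a polynomial in the scalar factors. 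By carrying the growth and imbalance bounds established at $T_1^{(i)}$ forward through a continuity argument (showing that the imbalance $z_1^{(i)}/z_2^{(i)}$ does not deteriorate during phase two, since the sign of the loss residual $\sigma_{\Delta Y}^{(i)}-\sigma_{\bar F}^{(i)}$ controls the direction of growth of both $z_j^{(i)}$), I would obtain $\lambda^{(i)}(t)\geq (1-\delta_w^{(i)})\sigma_{\Delta Y}^{(i)}\sigma_{W_2}^{(i)}/4$ and then Gr\"onwall gives the $\exp(-(1-\delta_w^{(i)})\sigma_{\Delta Y}^{(i)}\sigma_{W_2}^{(i)}(t-T_1^{(i)})/8)$ decay. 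Conceptually, the factor $1-\delta_w^{(i)}$ appears because the contribution of $z_2^{(i)}$-driven terms in $\dot\sigma_{\bar F}^{(i)}$ partially cancels the $z_1^{(i)}$-driven contribution, and the imbalance $z_1^{(i)}\gg z_2^{(i)}$ from phase one is what keeps a nontrivial net decay rate.

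The main obstacle, I expect, is not the qualitative two-phase picture (which is inherited from Section~\ref{sec:rank-one}) but the precise constant bookkeeping that makes the pointwise bound $\frac{d}{dt}\log z_1^{(i)}\geq (3+\delta_w^{(i)})/2\,\sigma_{\Delta Y}^{(i)}\sigma_{W_2}^{(i)}$ hold uniformly on $[0,T_1^{(i)}]$ rather than merely in an averaged sense. Since at $t=0$ one has $\sigma_{B_1}^{(i)}=0$ and hence $\dot z_1^{(i)}=0$, a literal pointwise bound requires reinterpreting the derivative with respect to a lifted variable (for instance tracking $\log(\sigma_{A_1}^{(i)}+\sigma_{B_1}^{(i)})^2$ or switching to the conserved-invariant coordinate $\sigma_{A_1}^{(i)}$) so that the initial transient is absorbed into the geometry of the ODE; getting the constants $(3+\delta_w^{(i)})/2$ and $3(1-\delta_w^{(i)})/2$ to be tight under this reparametrization, while simultaneously controlling $\sigma_{\bar F}^{(i)}$ uniformly over the whole alignment window, is the delicate step.
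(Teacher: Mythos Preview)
Your plan matches the paper's: decouple into scalar ODEs via the diagonal structure preserved by spectral initialization, then run a two-phase argument. The paper organizes the growth phase slightly differently---it writes the scalar dynamics as a linear part plus explicit perturbations $D_1,D_2$ (mirroring~\eqref{eqn:early_stage_dynamics}) and tracks sub-phases according to when $\|D_j\|$ crosses $2\alpha$ and then $(1-\delta_w^{(i)})\sigma_{\Delta Y}^{(i)}\sigma_{W_2}^{(i)}/4$---but this is operationally equivalent to your bootstrap on $\sigma_{\bar F}^{(i)}$ and $z_j^{(i)}$.

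One correction for the local convergence step: the $z_2^{(i)}$-driven term in $\dot\sigma_{\bar F}^{(i)}$ does \emph{not} partially cancel the $z_1^{(i)}$-driven term. A direct computation gives
\[
\dot\ell^{(i)} \;=\; -2\ell^{(i)}\Bigl[(\sigma_{W_2}^{(i)}+\sigma_{B_2}^{(i)}\sigma_{A_2}^{(i)})^2\, z_1^{(i)} \;+\; (\sigma_{W_1}^{(i)}+\sigma_{B_1}^{(i)}\sigma_{A_1}^{(i)})^2\, z_2^{(i)}\Bigr],
\]
so both brackets contribute to decay with the same sign; the paper simply drops the $z_2^{(i)}$ term and lower-bounds $z_1^{(i)}$. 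The factor $1-\delta_w^{(i)}$ enters through the \emph{threshold}: the growth phase is halted once $z_1^{(i)}\sim(1-\delta_w^{(i)})\sigma_{\Delta Y}^{(i)}/\sigma_{W_2}^{(i)}$, because this is the largest level at which the perturbations $D_j$ stay below $(1-\delta_w^{(i)})\sigma_{\Delta Y}^{(i)}\sigma_{W_2}^{(i)}$ (required for the imbalance $z_1^{(i)}/z_2^{(i)}$ to keep growing), and that threshold is what feeds into the PL constant $(\sigma_{W_2}^{(i)})^2 z_1^{(i)}$. So the role of the imbalance is only to certify that the $z_2^{(i)}$-dependent corrections are negligible, not to salvage a cancellation.

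Your concern about the pointwise bound failing at $t=0$ (since $\sigma_{B_j}^{(i)}(0)=0$ forces $\dot z_j^{(i)}(0)=0$) is legitimate---the paper's proof glosses over exactly this point. Your proposed fix of tracking $(\sigma_{A_1}^{(i)}+\sigma_{B_1}^{(i)})^2$, the component along the top eigenvector of the linearized operator, is the right repair: it satisfies $\tfrac{d}{dt}\log(\sigma_{A_1}^{(i)}+\sigma_{B_1}^{(i)})^2 = 2(\sigma_{W_2}^{(i)}+\sigma_{B_2}^{(i)}\sigma_{A_2}^{(i)})(\sigma_{\Delta Y}^{(i)}-\sigma_{\bar F}^{(i)})$ pointwise, and $\log z_1^{(i)}$ differs from it by at most $\log 2$.
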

The proof of Theorem~\ref{thm:rank-r} can be found in Appendix~\ref{app:proof_spectral}.

\myparagraph{Comparison with Small Initialization}
Theorem~\ref{thm:rank-r} shows that the dynamics of each singular value go through two distinct phases: a \textit{growth of norm and imbalance} phase, where the norm of the singular values increases from zero, and one pair of singular values of the LoRA weights becomes dominant depending on the singular values of the pre-trained weights along the same direction, i.e., $\delta_w^{(i)}$. A \textit{local convergence} phase, where the singular values of the model converge to the singular values of the target matrix, with the loss decreasing at a linear rate. The phenomena described above can also be observed in Theorem~\ref{thm:rank-one}. Due to the spectral initialization of the LoRA weights, the model is perfectly aligned with the singular spaces of $\Delta Y$ from the beginning. Thus, unlike Theorem~\ref{thm:rank-one}, Theorem~\ref{thm:rank-r} does not require an \textit{alignment} phase. A key difference of Theorem~\ref{thm:rank-r} is that the loss provably converges to arbitrary precision, whereas in Theorem~\ref{thm:rank-one}, the final loss is constrained by the initialization scale (see Figure~\ref{fig:dynamics_mf} for numerical validation).

\myparagraph{Comparison with Incremental Learning in Low-rank MF}~\cite{jin2023understanding, pesme2023saddle} theoretically establish the incremental learning phenomenon in MF/matrix sensing problems. They show that GD/GF with small initialization gradually learns solutions with increasing ranks, with the order of learned singular values determined by their magnitudes. 
In Theorem~\ref{thm:rank-r}, we characterize the duration of the \textit{growth of norm and imbalance} phase and the convergence rate in the \textit{local convergence} phase, both determined by the product of the singular values of $\Delta Y$ and $W_2$ (or $W_1$, depending on $\delta_w^{(i)}$). Specifically, larger values of $\sigma_{\Delta Y}^{(i)}\sigma_{W_2}^{(i)}$ (or $\sigma_{\Delta Y}^{(i)}\sigma_{W_1}^{(i)}$, depending on $\delta_w^{(i)}$) lead to a shorter \textit{growth of norm and imbalance} phase and faster convergence in the \textit{local convergence} phase, indicating that the singular values of the target matrix along this direction are learned faster.

\section{EXPERIMENTAL RESULTS}
In this section, we run experiments on LoRA applied to MF and image classification problems to validate the theoretical findings in \S\ref{sec:rank-one}. 
Specifically, we are interested in the following questions: First, does decreasing the initialization scale lead to a smaller final training error? 
Second, does the alignment phenomenon occur in the early stages of training? 
In both experiments, we provide affirmative answers to these questions.

\subsection{Experiments for MF}\label{subsec:simulation_mf}
In this section, we solve Problem~\ref{eqn:obj_lora} with $\mathrm{rank}(\Delta Y) \!=\! 1$, LoRA rank $r \!=\! 4$, and the factor matrix sizes are $W_2 \in \R^{10 \times 100}$ and $W_1 \in \R^{100 \times 10}$. 
We optimize the objective using GD with a small step size of $10^{-4}$. 
We generate the pre-trained and fine-tuning target matrices as follows: $Y_{\mathrm{pre}} \in \R^{10 \times 10}$, where $Y_{\mathrm{pre}}[i,j] \sim \mathcal{N}(0, 1)$, and $Y_{\mathrm{ft}} \!=\! Y_{\mathrm{pre}} \!+\! 5u_1v_1^\top$, where $u_1$ and $v_1$ are the top singular vectors of $Y_{\mathrm{pre}}$. 
For the pre-trained weights, we generate them as follows: $W_2 \!=\! 1.05 \times U_Y \Sigma_{\text{pre}}^{1/2} G^\top$, and $W_1 \!=\! \frac{1}{1.05} G \Sigma_{\text{pre}}^{1/2} V_Y^\top$, where $G \in \R^{100 \times 100}$ is an orthogonal matrix. 
We initialize the LoRA weights using small initialization and spectral initialization, with different initialization scales $\alpha \in \{10^{-5}, 10^{-4}, 10^{-3}\}$.

Figure~\ref{fig:dynamics_mf} numerically validates our theoretical findings from Theorem~\ref{thm:rank-one}, Theorem~\ref{thm:rank-r} and Claim~\ref{cor:end_of_alignment}. 
The right panel shows that when the initialization scale is small, the final loss converges to a value that depends on initialization scale, with smaller scales leading to lower final loss, as predicted by Theorem~\ref{thm:rank-one}. 
Moreover, when using spectral initialization, GF converges to machine-precision loss regardless of the initialization scale supporting Theorem~\ref{thm:rank-r}.
The middle panel illustrates that smaller initialization results in a longer \textit{alignment} phase and better alignment, consistent with Claim~\ref{cor:end_of_alignment}. Finally, the left panel demonstrates that during the \textit{alignment} phase, $\boldsymbol{U_{Z_1}^S}$ first aligns with $\gamma_1$, followed by an increase in the norm of the LoRA weights. By the end of the \textit{alignment} phase, sufficient alignment is achieved, and the LoRA weights move away from saddle points, where $A_i$ and $B_i$ are zero matrices.

\vspace{-2mm}
\subsection{Experiments for Image Classification}
\vspace{-1mm}
In this section, we fine-tune \texttt{ResNet-50}~\citep{He_2016_CVPR}, pre-trained on ImageNet~\citep{imagenet}, for CIFAR-10 classification by adjusting the final fully-connected layer to $10$ classes. LoRA is applied to this layer with initialization standard deviations ranging from $1.0$ to $0.025$, and training is conducted using the Adam optimizer with a learning rate of 0.01. We measure the effect of initialization scales on LoRA weights by evaluating training and validation losses and accuracies, as shown in Figure~\ref{fig:image-classification}. Alignment measures how well LoRA's singular vectors align with target directions, with smaller values indicating better alignment. 
Appendix~\ref{app:simulation} details how target directions are determined, and includes more experiments on VGG~\citep{simonyan2014very} and \texttt{ViT-base-patch16}~\citep{dosovitskiy2021vit}.

The left and middle panel in Figure~\ref{fig:image-classification} demonstrate that smaller initialization leads to lower training and evaluation losses for ResNet. 
The left panel shows that in the first hundred steps of training, the singular vectors of LoRA align with the target directions, and the norm grows. 
The simulation results for ResNet match our theoretical findings for Problem~\ref{eqn:obj_lora}. 
Moreover, although our theory focuses on training error of Problem~\ref{eqn:obj_lora}, the improved test performance as the initialization scale decreases suggests that the initialization scale may also affect generalization performance.

\vspace{-1.2mm}
\section{Conclusion}
\vspace{-1.2mm}
This paper studied the learning dynamics of LoRA for MF under GF, highlighting the critical role of initialization. We theoretically derive convergence results for LoRA with both small initialization and spectral initialization. Our analysis reveals different behaviors under these two initializations: GF with small initialization converges to a neighborhood around the target matrix, with smaller initialization scales leading to more accurate convergence, while GF with spectral initialization converges to the target matrix with arbitrary precision. Numerical results from MF and computer vision validate our theoretical findings.

\subsubsection*{Acknowledgements}
The authors acknowledge the support of 
the Office of Naval Research (grant 503405-78051),
the National Science Foundation (grants 2031985, 2330450
),
and the Simons Foundation (grant 814201).

\bibliography{ref.bib}
\bibliographystyle{apalike}
\section*{Checklist}

 \begin{enumerate}

 \item For all models and algorithms presented, check if you include:
 \begin{enumerate}
   \item A clear description of the mathematical setting, assumptions, algorithm, and/or model. [\textbf{Yes}/No/Not Applicable]
   \item An analysis of the properties and complexity (time, space, sample size) of any algorithm. [\textbf{Yes}/No/Not Applicable]
   \item (Optional) Anonymized source code, with specification of all dependencies, including external libraries. [Yes/No/Not Applicable]
 \end{enumerate}

 \item For any theoretical claim, check if you include:
 \begin{enumerate}
   \item Statements of the full set of assumptions of all theoretical results. [\textbf{Yes}/No/Not Applicable]
   \item Complete proofs of all theoretical results. [\textbf{Yes}/No/Not Applicable]
   \item Clear explanations of any assumptions. [\textbf{Yes}/No/Not Applicable]     
 \end{enumerate}

 \item For all figures and tables that present empirical results, check if you include:
 \begin{enumerate}
   \item The code, data, and instructions needed to reproduce the main experimental results (either in the supplemental material or as a URL). [\textbf{Yes}/No/Not Applicable]
   \item All the training details (e.g., data splits, hyperparameters, how they were chosen). [\textbf{Yes}/No/Not Applicable]
         \item A clear definition of the specific measure or statistics and error bars (e.g., with respect to the random seed after running experiments multiple times). [\textbf{Yes}/No/Not Applicable]
         \item A description of the computing infrastructure used. (e.g., type of GPUs, internal cluster, or cloud provider). [\textbf{Yes}/No/Not Applicable]
 \end{enumerate}

 \item If you are using existing assets (e.g., code, data, models) or curating/releasing new assets, check if you include:
 \begin{enumerate}
   \item Citations of the creator If your work uses existing assets. [\textbf{Yes}/No/Not Applicable]
   \item The license information of the assets, if applicable. [\textbf{Yes}/No/Not Applicable]
   \item New assets either in the supplemental material or as a URL, if applicable. [\textbf{Yes}/No/Not Applicable]
   \item Information about consent from data providers/curators. [Yes/No/\textbf{Not Applicable}]
   \item Discussion of sensible content if applicable, e.g., personally identifiable information or offensive content. [Yes/No/\textbf{Not Applicable}]
 \end{enumerate}

 \item If you used crowdsourcing or conducted research with human subjects, check if you include:
 \begin{enumerate}
   \item The full text of instructions given to participants and screenshots. [Yes/No/\textbf{Not Applicable}]
   \item Descriptions of potential participant risks, with links to Institutional Review Board (IRB) approvals if applicable. [Yes/No/\textbf{Not Applicable}]
   \item The estimated hourly wage paid to participants and the total amount spent on participant compensation. [Yes/No/\textbf{Not Applicable}]
 \end{enumerate}

 \end{enumerate}

\appendix
\onecolumn
\aistatstitle{Supplementary Materials}
\section{Preliminary Lemma}
In this section, we present several preliminary lemma which will be used in the following sections.

\begin{lem}\label{applem:base}
For matrix $A, B$, we have
\begin{align}
    &\sigma_{\min }^2(A)\|B\|_{F}^{2} \leq\|A B\|_{F}^{2} \leq \sigma_{\max }^2(A)\|B\|_{F}^{2}\nonumber \\
&\sigma_{\min }^2(B)\|A\|_{F}^{2} \leq\|A B\|_{F}^{2} \leq \sigma_{\max}^2(B)\|A\|_{F}^{2}.
\end{align}
\end{lem}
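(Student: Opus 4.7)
The plan is to prove this via the singular value decomposition (SVD) of one of the matrices combined with the unitary invariance of the Frobenius norm. I would start by proving the first chain of inequalities (the one involving $\sigma_{\min}(A)$ and $\sigma_{\max}(A)$); the second chain then follows by a symmetric argument, either by applying the first chain to $B^\top A^\top$ and using $\|AB\|_F = \|B^\top A^\top\|_F$, or by repeating the same reasoning with the roles of $A$ and $B$ swapped.

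For the first chain, I would write the SVD $A = U \Sigma V^\top$ and use unitary invariance to get $\|AB\|_F^2 = \|\Sigma V^\top B\|_F^2$. Setting $C := V^\top B$ (which satisfies $\|C\|_F = \|B\|_F$ since $V$ is orthogonal), I would expand row-wise: if $C_i$ denotes the $i$-th row of $C$ and $\sigma_i = \sigma_i(A)$ the $i$-th singular value, then
\begin{equation*}
\|\Sigma C\|_F^2 = \sum_i \sigma_i^2 \|C_i\|_2^2.
\end{equation*}
Sandwiching $\sigma_i^2$ between $\sigma_{\min}^2(A)$ and $\sigma_{\max}^2(A)$ and pulling the extreme singular values out of the sum gives
\begin{equation*}
\sigma_{\min}^2(A)\|C\|_F^2 \leq \|\Sigma C\|_F^2 \leq \sigma_{\max}^2(A)\|C\|_F^2,
\end{equation*}
which, combined with $\|C\|_F = \|B\|_F$ and $\|AB\|_F^2 = \|\Sigma C\|_F^2$, yields the claim.

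There is no real obstacle here, only a small bookkeeping issue: when $A$ is rectangular the SVD has a rectangular $\Sigma$, so some rows or columns of $\Sigma C$ are identically zero. I would handle this by taking $\sigma_{\min}(A)$ in the sense consistent with how the paper uses it (namely the smallest among the $\min(\text{rows},\text{cols})$ singular values) and noting that the zero rows only improve the upper bound and are consistent with the lower bound. Once the first chain is established, the second follows by a single line applying it to $(AB)^\top = B^\top A^\top$ together with $\sigma_{\min}(B^\top)=\sigma_{\min}(B)$ and $\sigma_{\max}(B^\top)=\sigma_{\max}(B)$, completing the proof.
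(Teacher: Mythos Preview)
Your proof is correct, but it takes a different route than the paper's. The paper works entirely via trace identities: it writes $\|AB\|_F^2 = \operatorname{tr}(ABB^\top A^\top) = \operatorname{tr}(A^\top A\, BB^\top)$ using the cyclic property, then applies the eigenvalue--trace inequality $\lambda_{\min}(X)\operatorname{tr}(Y) \le \operatorname{tr}(XY) \le \lambda_{\max}(X)\operatorname{tr}(Y)$ for PSD $X = A^\top A$ and $Y = BB^\top$, identifying $\lambda_{\max}(A^\top A) = \sigma_{\max}^2(A)$ and $\operatorname{tr}(BB^\top) = \|B\|_F^2$; the second chain is obtained by swapping which factor plays the role of $X$. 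Your SVD-plus-row-expansion argument is equally valid and has the virtue of making the singular values explicit, while the trace approach avoids coordinates and sidesteps the rectangular bookkeeping you flag, since $\lambda_{\min}(A^\top A)$ is automatically zero whenever $A$ has more columns than rows. One small correction on that point: your parenthetical that $\sigma_{\min}(A)$ should be ``the smallest among the $\min(\text{rows},\text{cols})$ singular values'' is not the convention under which the lower bound holds---with that reading the bound can fail for wide $A$ (e.g.\ $A=(1\ 0)$, $B=(0,1)^\top$). The paper's implicit convention is $\sigma_{\min}^2(A)=\lambda_{\min}(A^\top A)$, which vanishes when $A$ lacks full column rank; that is exactly what your ``consistent with the lower bound'' remark needs, and with it your argument goes through.
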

\begin{proof}
\begin{align}
    \|A B\|_{F}^{2}&=\operatorname{tr}\left(A B B^{\top} A^{\top}\right)\nonumber \\
&=\operatorname{tr}\left(A^{\top} A B B^{\top}\right)\qquad\text{use cyclic property of trace}\nonumber \\
& \leq \lambda_{\max} \left(A^{\top} A\right)\|B\|_{F}^{2}\qquad\text{use trace inequality}\nonumber \\
&=\sigma_{\max}^2(A)\|B\|_F^2.
\end{align}
For the other way
\begin{align}
    \|A B\|_{F}^{2} &=\operatorname{tr}\left(A B B^{\top} A^{\top}\right)\nonumber \\
&=\operatorname{tr}\left(A^{\top} A B B^{\top}\right)\nonumber \\
& \leq \lambda_{\max} \left(B B^{\top}\right)\|A\|_{F}^{2}\nonumber\\
&=\sigma_{\max}^2(B)\|A\|_F^2.
\end{align}
The lower bound is similar.
\end{proof}

\begin{lem}\label{applem:2-fro}
For matrix $A\in\R^{m\times r}, B^{r\times n}$, we have
\begin{align}
    \lVert AB\rVert \leq \lVert A\rVert\cdot \lVert B\rVert \leq \frac{1}{2}\bigl( \lVert A\rVert^2+\lVert B\rVert^2\bigl) \leq \frac{1}{2}\bigl( \lVert A\rVert_F^2+\lVert A\rVert_F^2\bigl)\,.
\end{align}
\end{lem}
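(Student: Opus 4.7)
The statement is a standard chain of three inequalities, and the plan is to verify each link in order using elementary facts from linear algebra.

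For the first inequality $\lVert AB\rVert \le \lVert A\rVert \cdot \lVert B\rVert$, I would invoke submultiplicativity of the spectral (operator) norm. Concretely, for any unit vector $\vx$ we have $\lVert AB\vx\rVert_2 \le \lVert A\rVert \cdot \lVert B\vx\rVert_2 \le \lVert A\rVert \cdot \lVert B\rVert$ by applying the definition of $\lVert A\rVert$ and $\lVert B\rVert$ as operator norms; taking the supremum over unit $\vx$ on the left-hand side gives the claim.

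For the second inequality $\lVert A\rVert \cdot \lVert B\rVert \le \tfrac{1}{2}(\lVert A\rVert^2 + \lVert B\rVert^2)$, I would use the AM-GM inequality (equivalently, $(\lVert A\rVert - \lVert B\rVert)^2 \ge 0$), which immediately yields $2\lVert A\rVert \cdot \lVert B\rVert \le \lVert A\rVert^2 + \lVert B\rVert^2$.

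For the third inequality $\tfrac{1}{2}(\lVert A\rVert^2 + \lVert B\rVert^2) \le \tfrac{1}{2}(\lVert A\rVert_F^2 + \lVert B\rVert_F^2)$ (noting the apparent typo in the statement, where the second $\lVert A\rVert_F^2$ should read $\lVert B\rVert_F^2$), I would use the fact that the spectral norm of a matrix is bounded above by its Frobenius norm, since $\lVert M\rVert^2 = \sigma_{\max}^2(M) \le \sum_i \sigma_i^2(M) = \lVert M\rVert_F^2$. Applying this to both $A$ and $B$ and averaging gives the desired bound.

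There is no substantive obstacle here; the lemma is a chain of three elementary facts (operator-norm submultiplicativity, AM-GM, and spectral-vs-Frobenius norm comparison), and the proof should fit in a few lines with no additional machinery required beyond the preceding Lemma~\ref{applem:base}.
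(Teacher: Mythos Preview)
Your proposal is correct and matches the paper's approach: the paper simply remarks that the lemma ``follows the basic property of norm and Cauchy-Swartz inequality,'' which is exactly the three elementary facts you spell out (submultiplicativity, AM--GM/Cauchy--Schwarz, and $\lVert\cdot\rVert \le \lVert\cdot\rVert_F$). You also correctly flagged the typo in the final term.
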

The proof of Lemma~\ref{applem:2-fro} follows the basic property of norm and Cauchy-Swartz inequality. 

\begin{lem}[Singular Space Perturbation Bound]\label{applem:wedin-sin}
Let $M^*$ and $M=M^*+E$ be two matrices in $\R^{m\times n}$ (WLOG, we assume $m\leq n$) whose SVDs are given respectively by
\begin{align}
    M^*&=\begin{pmatrix}
        U^* & U^*_{\perp}
    \end{pmatrix}\begin{pmatrix}
        \Sigma^* & 0&0\\
        0&\Sigma^*_{\perp}&0
    \end{pmatrix}\begin{pmatrix}
        (V^*)^\top \\
        (V^*_{\perp})^\top
    \end{pmatrix}\,,\\
    M&=\begin{pmatrix}
        U & U_{\perp}
    \end{pmatrix}\begin{pmatrix}
        \Sigma & 0&0\\
        0&\Sigma_{\perp}&0
    \end{pmatrix}\begin{pmatrix}
        (V)^\top \\
        (V_{\perp})^\top
    \end{pmatrix}\,.
\end{align}
Here, $\sigma_1\geq \cdots\geq\sigma_{m}$ (respectively $\sigma_1^*\geq \cdots\geq\sigma_{m}^*$), and $\Sigma, \Sigma^*\in\R^{r\times r}$. If $\lVert E\rVert<\sigma_r^*-\sigma_{r+1}^*$, then one has
\begin{align}
    \lVert UU^\top-U^*(U^*)^\top\rVert\leq\frac{\sqrt{2}\max\lVert E\rVert}{\sigma_r^*-\sigma_{r+1}^*-\lVert E\rVert}\,.
\end{align}
\end{lem}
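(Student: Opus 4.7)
The plan is to prove the bound via the standard Wedin $\sin\Theta$ machinery, formulated through a Sylvester equation. First I would reduce the left-hand side to a cleaner quantity: since $UU^\top$ and $U^*(U^*)^\top$ are orthogonal projectors onto $r$-dimensional subspaces of $\R^m$, a classical identity gives
\[
\lVert UU^\top - U^*(U^*)^\top \rVert \;=\; \lVert U_\perp^\top U^* \rVert,
\]
with both sides equal to $\sin\theta_{\max}$, the sine of the largest principal angle between $\mathrm{range}(U)$ and $\mathrm{range}(U^*)$. Setting $X := U_\perp^\top U^*$, the task reduces to bounding $\lVert X\rVert$.

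Second, I would derive a pair of coupled Sylvester equations from the SVD relations. Let $V_\perp^{(1)}$ denote the $m-r$ columns of $V_\perp$ paired with the tail block $\Sigma_\perp$, so that $M = U\Sigma V^\top + U_\perp \Sigma_\perp (V_\perp^{(1)})^\top$. Combining this with $M = M^* + E$, $M^* V^* = U^* \Sigma^*$, and $(M^*)^\top U^* = V^* \Sigma^*$, and setting $Y := (V_\perp^{(1)})^\top V^*$, one obtains
\begin{align*}
\Sigma_\perp Y - X \Sigma^* &= U_\perp^\top E V^*, \\
\Sigma_\perp X - Y \Sigma^* &= (V_\perp^{(1)})^\top E^\top U^*.
\end{align*}
Stacking into a single block equation $A Z - Z \Sigma^* = C$ with $Z = \begin{pmatrix} X \\ Y \end{pmatrix}$ and $A = \begin{pmatrix} 0 & \Sigma_\perp \\ \Sigma_\perp & 0 \end{pmatrix}$, I observe that $A$ is symmetric with spectrum $\{\pm \sigma_i(M) : i > r\} \subseteq [-\sigma_{r+1}, \sigma_{r+1}]$, while $\Sigma^*$ has spectrum in $[\sigma_r^*, \sigma_1^*]$.

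Third, I would invoke the standard spectral-norm Sylvester bound for Hermitian operators with disjoint spectra: $\lVert Z\rVert \leq \lVert C\rVert / \mathrm{sep}(A,\Sigma^*)$ with $\mathrm{sep}(A,\Sigma^*) \geq \sigma_r^* - \sigma_{r+1}$. Since $X$ is a sub-block of $Z$, $\lVert X\rVert \leq \lVert Z\rVert$; and since each of the two blocks of $C$ has spectral norm at most $\lVert E\rVert$, a direct computation gives $\lVert C\rVert \leq \sqrt{2}\,\lVert E\rVert$---this is where the $\sqrt{2}$ factor enters. Finally, Weyl's inequality applied to $M = M^* + E$ yields $\sigma_{r+1} \leq \sigma_{r+1}^* + \lVert E\rVert$, so $\mathrm{sep}(A,\Sigma^*) \geq \sigma_r^* - \sigma_{r+1}^* - \lVert E\rVert$, which is strictly positive by hypothesis. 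Assembling these estimates produces the claimed bound.

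The main obstacle is justifying the spectral-norm Sylvester bound with the specific block structure of $A$: while standard for normal operators via functional calculus, one must verify carefully that $\mathrm{sep}(A, \Sigma^*) = \min_{i > r,\,j \leq r}(\sigma_j^* - \sigma_i) = \sigma_r^* - \sigma_{r+1}$, and that the spectral norm is indeed controlled in the stated way rather than merely the Frobenius norm. A minor bookkeeping point is the distinction between $V_\perp$ (all $n-r$ columns) and the subcolumn $V_\perp^{(1)}$ paired with the nonzero tail block $\Sigma_\perp$, which must be tracked to keep the Sylvester equations dimensionally consistent.
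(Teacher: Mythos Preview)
Your proof sketch is correct and follows the standard Wedin $\sin\Theta$ argument via a block Sylvester equation; each step (the projector identity, the coupled equations, the spectral separation via Weyl, and the $\sqrt{2}$ factor from stacking) is sound. The paper, however, does not supply its own proof of this lemma at all: it simply refers the reader to Theorem~2.9 in \cite{Chen_2021}. So there is no in-paper argument to compare against---your write-up is more than the paper provides, and is exactly the kind of derivation one would find in the cited reference.
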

We refer the readers to the proof of this lemma to Theorem2.9 in~\cite{Chen_2021}.

\begin{lem}[Theorem 7.3.3 from~\cite{horn2012matrix}]\label{applem:hermitian_A}
Let $A\in\R^{m\times n}$, let $q=\min(m,n)$, let $\sigma_1\geq\sigma_2\geq\cdots\geq\sigma_q$ be the ordered singular values of $A$, and define the Hemitian matrix
\begin{align}
    \mathcal{A}=\begin{pmatrix}
        0 & A\\A^* & 0
    \end{pmatrix}\,.
\end{align}
WLOG, assume $n\geq m$. Then, let the SVD of $A$ is $A=V\Sigma W^*, \Sigma=[\Sigma_m, 0], V=[V_1, V_2], \hat V_1=\frac{1}{\sqrt{2}}V_1, \hat W=\frac{1}{\sqrt{2}}W$, and $U=\begin{pmatrix}
    \hat V&-\hat V & V_2\\ \hat W&\hat W&0
\end{pmatrix}$. Then, the eigenvalue decomposition of $\mathcal{A}$ is
\begin{align}
    \mathcal{A}=U\begin{pmatrix}
        \Sigma_m & 0 & 0\\
        0&-\Sigma_m&0\\
        0&0&0
    \end{pmatrix}U^*\,.
\end{align}
\end{lem}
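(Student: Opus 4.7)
The plan is to verify the claimed eigenvalue decomposition by direct multiplication, using the SVD of $A$ to reduce everything to elementary block-matrix algebra. Since the structure of the result is essentially that the Hermitian dilation $\mathcal{A}$ has eigenvalues $\pm\sigma_1,\dots,\pm\sigma_m$ plus $n-m$ zeros, the problem is not deep: I just need to produce $m+n$ orthonormal eigenvectors with the claimed eigenvalues and check that they assemble into the orthogonal matrix $U$ described in the statement.

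First I would fix notation and extract the key SVD identities. Splitting $W=[W_1,W_2]$ with $W_1\in\R^{n\times m}$ and $W_2\in\R^{n\times(n-m)}$, the factorization $A=V\Sigma W^*$ with $\Sigma=[\Sigma_m,0]$ gives the three relations
\begin{align}
AW_1 = V\Sigma_m,\qquad A^*V = W_1\Sigma_m,\qquad AW_2 = 0.
\end{align}
These three identities are all I will need.

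Next I would exhibit three families of eigenvectors of $\mathcal{A}=\begin{pmatrix}0 & A\\ A^* & 0\end{pmatrix}$. For the $i$-th column $v_i$ of $V$ and $w_{1,i}$ of $W_1$, compute
\begin{align}
\mathcal{A}\begin{pmatrix} v_i/\sqrt{2}\\ w_{1,i}/\sqrt{2}\end{pmatrix}
= \begin{pmatrix} Aw_{1,i}/\sqrt{2}\\ A^*v_i/\sqrt{2}\end{pmatrix}
= \sigma_i\begin{pmatrix} v_i/\sqrt{2}\\ w_{1,i}/\sqrt{2}\end{pmatrix},
\end{align}
using the first two SVD identities. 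A sign flip in the upper block gives the analogous relation with eigenvalue $-\sigma_i$. Finally, for each column $w_{2,j}$ of $W_2$, the third identity $AW_2=0$ together with $W_2\perp W_1$ (so $A^*$ applied to its image lands in the column span of $W_1$, hence the block structure vanishes) yields
\begin{align}
\mathcal{A}\begin{pmatrix} 0\\ w_{2,j}\end{pmatrix} = \begin{pmatrix} 0\\ 0\end{pmatrix}.
\end{align}
This produces $2m+(n-m)=m+n$ candidate eigenvectors whose eigenvalues are exactly those in the block-diagonal matrix displayed in the statement.

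Finally I would check that these vectors form an orthonormal basis of $\R^{m+n}$. Orthonormality within each family follows from orthonormality of the columns of $V$ and $W$; orthogonality across the $+\sigma$ and $-\sigma$ families is the immediate calculation $\tfrac{1}{2}(v_i^*v_i - w_{1,i}^*w_{1,i}) = 0$; and orthogonality with the zero-eigenvalue family uses $W_1^*W_2 = 0$. Collecting the eigenvectors as columns of $U$ recovers exactly the matrix in the statement, and the decomposition $\mathcal{A} = U\operatorname{diag}(\Sigma_m,-\Sigma_m,0)U^*$ follows. There is no real obstacle here; the only care needed is the bookkeeping around the rectangular case $n>m$ to make sure the $n-m$ zero eigenvectors are correctly placed in the lower block and that the normalization factors $1/\sqrt{2}$ are tracked consistently.
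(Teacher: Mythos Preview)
Your proposal is correct and is exactly the standard direct-verification argument. The paper itself does not supply a proof of this lemma; it simply writes ``We refer the readers to \cite{horn2012matrix} for detailed proof,'' and what you have written is essentially the argument given in that reference.
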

We refer the readers to ~\cite{horn2012matrix} for detailed proof.

\begin{lem}[Learning dynamics of LoRA]
When applying GF in~\eqref{eqn:gf} to Problem~\ref{eqn:obj_lora}, the learning dynamics of LoRA weights satisfy the following ODEs
\begin{align}
    \frac{d}{dt}\begin{pmatrix}
        B_1\\A_1^\top
    \end{pmatrix}&=\begin{pmatrix}
        0 & (W_2+B_2A_2)^\top E\\ E^\top (W_2+B_2A_2) & 0
    \end{pmatrix} \begin{pmatrix}
        B_1\\A_1^\top
    \end{pmatrix}\,,\nonumber\\
    \frac{d}{dt}\begin{pmatrix}
        B_2\\A_2^\top
    \end{pmatrix}&=\begin{pmatrix}
        0 &  E(W_1+B_1A_1)^\top\\ (W_1+B_1A_1)E^\top & 0
    \end{pmatrix} \begin{pmatrix}
        B_1\\A_1^\top
    \end{pmatrix}
\end{align}
\end{lem}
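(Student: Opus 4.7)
The plan is to establish the identity by directly differentiating the objective $L(A_1,A_2,B_1,B_2) = \tfrac12\lVert Y_{\mathrm{ft}}-(W_2+B_2A_2)(W_1+B_1A_1)\rVert_F^2$ via the chain rule, substituting into the gradient flow equation~\eqref{eqn:gf}, and then recognizing the resulting expressions as the promised block matrix--vector products.

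First I would set $M_1 := W_1+B_1A_1$ and $M_2 := W_2+B_2A_2$, so that $L = \tfrac12\lVert Y_{\mathrm{ft}}-M_2M_1\rVert_F^2$ and the residual becomes $E = Y_{\mathrm{ft}}-M_2M_1$, which is consistent with the definition $E = \Delta Y - F$ introduced earlier since $M_2M_1 = Y_{\mathrm{pre}} + F$ under Assumption~\ref{asmp:pre-trained_w}. From standard matrix calculus, $\partial L/\partial M_1 = -M_2^\top E$ and $\partial L/\partial M_2 = -E M_1^\top$. Then the chain rule through $M_1 = W_1+B_1A_1$ gives
\begin{align*}
\frac{\partial L}{\partial B_1} = -M_2^\top E\, A_1^\top, \qquad \frac{\partial L}{\partial A_1} = -B_1^\top M_2^\top E,
\end{align*}
and analogously through $M_2$,
\begin{align*}
\frac{\partial L}{\partial B_2} = -E M_1^\top A_2^\top, \qquad \frac{\partial L}{\partial A_2} = -B_2^\top E M_1^\top.
\end{align*}

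Next I would plug these into the GF equations $\dot B_i = -\partial L/\partial B_i$, $\dot A_i = -\partial L/\partial A_i$, transpose the $A_i$ equations to get $\dot A_1^\top = E^\top M_2 B_1$ and $\dot A_2^\top = M_1 E^\top B_2$, and stack the resulting identities as
\begin{align*}
\frac{d}{dt}\begin{pmatrix}B_1\\ A_1^\top\end{pmatrix} = \begin{pmatrix} M_2^\top E\, A_1^\top \\ E^\top M_2\, B_1 \end{pmatrix}, \qquad \frac{d}{dt}\begin{pmatrix}B_2\\ A_2^\top\end{pmatrix} = \begin{pmatrix} E M_1^\top A_2^\top \\ M_1 E^\top B_2 \end{pmatrix}.
\end{align*}
Finally, I would observe that each right-hand side factors as a $2\times2$ block matrix acting on the stacked vector, with zeros on the diagonal and the appropriate off-diagonal blocks $M_2^\top E$, $E^\top M_2$ (respectively $EM_1^\top$, $M_1 E^\top$), which is exactly the claimed form after substituting back $M_1 = W_1+B_1A_1$ and $M_2 = W_2+B_2A_2$.

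There is no real obstacle here: the result is a bookkeeping identity rewriting the gradient flow on four factors as a pair of linear-looking ODEs on two stacked matrices. The only place to be careful is keeping track of transposes—especially when transposing the $A_i$-equation to land in the bottom row of the stacked variable—and matching the shapes so that the block structure has the zero diagonal claimed. Once these are handled, the proof amounts to reading off entries.
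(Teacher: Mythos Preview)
Your proposal is correct and follows essentially the same approach as the paper: direct computation of the partial derivatives of $L$ with respect to each LoRA factor via the chain rule, insertion into the gradient flow, and a transpose-and-stack to obtain the block form. The paper's proof is terser (it writes out $\dot A_1,\dot B_1$ and says ``combine''), while you spell out the intermediate variables $M_1,M_2$ and the transposing step, but the underlying argument is identical.
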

\begin{proof}
We take $A_1, B_1$ as an example. Direct calculations based on~\eqref{eqn:gf} yields the following
\begin{align}
    \dot A_1 = B_1^\top (W_2+B_2A_2)^\top E , \quad \dot B_1 =  (W_2+B_2A_2)^\top E A_1^\top.
\end{align}
Then, one can combine the above equations to derive the result.
\end{proof}

\begin{lem}\label{applem:column_fro}
Given matrix $A\in\R^{m\times n}$, and use $a_i$ to denotes its i-th column. Let $\gamma\in\R^{m}$ be any unit vector. If $\cos\biggl(\gamma, \frac{a_i}{\lVert ai\rVert}\biggl)\geq q$ holds for all $i\in[r]$,
then the following holds
\begin{align}
    \gamma^\top AA^\top \gamma \geq q^2 \lVert A\rVert_F^2\,.
\end{align}
\end{lem}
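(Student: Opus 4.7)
The plan is to rewrite the quadratic form as a sum of squared inner products with the columns of $A$ and then apply the cosine hypothesis entry-wise. Specifically, I would first observe that
\begin{align}
    \gamma^\top A A^\top \gamma = \lVert A^\top \gamma\rVert_2^2 = \sum_{i} (a_i^\top \gamma)^2,
\end{align}
where the sum runs over the columns of $A$ (treating any zero column as contributing zero, which is consistent with the convention that the cosine condition only needs to apply to nonzero columns).

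Next, I would use the definition of cosine to rewrite each inner product. Since $\cos(\gamma, a_i/\lVert a_i\rVert) = (a_i^\top \gamma)/\lVert a_i\rVert$ for any nonzero $a_i$ (recall $\gamma$ is already a unit vector), the assumption $\cos(\gamma, a_i/\lVert a_i\rVert) \geq q$ (or, more properly, $|\cos(\cdot)| \geq q$ when squaring, which is what the conclusion requires) gives $(a_i^\top \gamma)^2 \geq q^2 \lVert a_i\rVert^2$. Plugging this back into the sum yields
\begin{align}
    \gamma^\top A A^\top \gamma \;\geq\; q^2 \sum_i \lVert a_i\rVert^2 \;=\; q^2 \lVert A\rVert_F^2,
\end{align}
where the final equality is the column-wise definition of the Frobenius norm.

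There is essentially no obstacle here: the proof is two lines once one rewrites $\gamma^\top A A^\top \gamma$ column-wise. The only subtlety worth flagging is that the index range in the statement reads ``$i\in[r]$'' while $A$ has $n$ columns, which appears to be a typo for $i\in[n]$ (or at least for all nonzero columns of $A$); the argument above silently assumes this, since columns outside the hypothesis would force one to weaken the conclusion to only the Frobenius norm restricted to those columns.
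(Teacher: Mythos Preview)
Your proof is correct and follows essentially the same column-wise decomposition as the paper: rewrite $\gamma^\top A A^\top \gamma$ as $\sum_i (a_i^\top\gamma)^2 = \sum_i \lVert a_i\rVert^2 \cos^2(\gamma, a_i/\lVert a_i\rVert)$, apply the hypothesis to each term, and sum to recover $q^2\lVert A\rVert_F^2$. Your remark about the index range $[r]$ versus $[n]$ is also on point; the paper silently uses $r$ as the number of columns in its applications.
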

\begin{proof}
\begin{align}
    \gamma^\top AA^\top \gamma \geq &= \gamma^\top \sum_{i=1}^r a_ia_i^\top \gamma \nonumber\\
    &= \sum_{i=1}^r \lVert a_i\rVert^2 \cdot \cos^2\biggl(\gamma, \frac{a_i}{\lVert a_i\rVert}\biggl)\nonumber\\
    &\geq q^2 \sum_{i=1}^r \lVert a_i\rVert^2\nonumber\\
    &= q^2 \lVert A\rVert_F^2\,.
\end{align}
\end{proof}
%%%%%%%%%%%%%%%%%%%%%%%%%%%%%%%%%%%%
\section{Proof of Theorem~\ref{thm:rank-one}}\label{app:thm-rank-one}
Our proof strategy for Theorem~\ref{thm:rank-one} involves decomposing the learning dynamics into two distinct phases: the \textit{alignment} phase and the \textit{local convergence} phase. Theorem~\ref{thm:rank-one} builds on Claim~\ref{cor:end_of_alignment} and Theorem~\ref{thm:local_convergence_rank-one}, with Theorem~\ref{thm:local_convergence_rank-one} itself relying on Claim~\ref{cor:end_of_alignment}. In this section, we assume Claim~\ref{cor:end_of_alignment} and Theorem~\ref{thm:local_convergence_rank-one} hold, and prove Theorem~\ref{thm:rank-one} based on this. Detailed proofs for Claim~\ref{cor:end_of_alignment} and Theorem~\ref{thm:local_convergence_rank-one} are provided separately in Appendix~\ref{app:proof_alignment} and Appendix~\ref{app:proof_local_convergence}. We state a detailed version of Theorem~\ref{thm:rank-one} as follows.
\begin{thm}\label{appthm:rank-one}
Assuming $\delta_w\!\not=\!1$, 
for any LoRA rank $r$, there exists constants $c_1, c_2\!=\!\mathrm{polylog}(\frac{1}{\lvert1\!-\!\delta_w\rvert}, \sigma_{W_2}, \sigma_{W_1}),$ and $ c_3,  \alpha^*\!=\!\mathrm{polylog}(\lvert1\!-\!\delta_w\rvert, \sigma_{W_2}, \sigma_{W_1})$ 
% only depending on $\Delta Y, \sigma_{W_2}, \sigma_{W_1}$ and $A_i(0),B_i(0), i=1,2$ 
such that for any $0<\alpha\leq \alpha^*$, after time $T\!=\!\frac{c_1\log\bigl(\frac{1}{\alpha}\bigl)}{\sigma\sigma_{W_2}} \!+\! \frac{c_2\log\bigl(\frac{L(0)}{\alpha}\bigl)}{\sigma\sigma_{W_2}}$,
we have $L(T) \leq 2\alpha^{c_3}$. 
\end{thm}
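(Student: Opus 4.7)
The plan is to chain together the two building blocks the paper has already set up --- Claim~\ref{cor:end_of_alignment} for the alignment phase on $[0, T_1]$ and Theorem~\ref{thm:local_convergence_rank-one} for the local convergence phase on $[T_1, T_1+T_2]$ --- so that the theorem reduces to bookkeeping in each window.

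First, I would invoke Claim~\ref{cor:end_of_alignment} at $T_1 = c_1 \log(1/\alpha)/(\sigma \sigma_{W_2})$, which supplies both the alignment $\cos^2(\boldsymbol{U_{Z_1}^S}(T_1), \boldsymbol{\gamma_1}) \geq 1 - c_4 \alpha^{(1+\delta_w)/(3+\delta_w)}$ and the imbalance $\lVert Z_1(T_1)\rVert^2 / \lVert Z_2(T_1)\rVert^2 \geq c_5 \alpha^{-4(1-\delta_w)/(5-\delta_w)}$ needed to enter Theorem~\ref{thm:local_convergence_rank-one}. In parallel I need a crude bound of the form $L(T_1) \leq C \cdot L(0)$; since the perturbation argument underlying Claim~\ref{cor:end_of_alignment} (see Appendix~\ref{app:proof_alignment}) keeps the LoRA weights $\mathcal{O}(\mathrm{poly}(\alpha))$-close to the idealized ODE of~\eqref{eqn:early_stage_dynamics_approximation}, the reconstruction error stays on the same order it had at initialization, and does not blow up during the alignment phase.

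Second, I would substitute these into the simplified local convergence estimate~\eqref{eqn:convergence_simplified}, which gives for every $t \in [T_1, T_1+T_2]$
$$L(t) \leq 2\exp\!\Bigl(-\tfrac{(1-\delta_w)\sigma \sigma_{W_2}(t-T_1)}{32}\Bigr) L(T_1) + 2\alpha^{c_3}.$$
Evaluating at $t = T_1 + T_2$ with $T_2 = c_2 \log(L(0)/\alpha)/(\sigma \sigma_{W_2})$ turns the exponential factor into $(\alpha/L(0))^{c_2(1-\delta_w)/32}$. Multiplying by $L(T_1) \leq C\cdot L(0)$ and choosing $c_2$ sufficiently large in terms of $c_3$ and $1/(1-\delta_w)$, the first term is dominated by $\alpha^{c_3}$, leaving $L(T) \leq 2\alpha^{c_3}$ after possibly adjusting $c_3$ by a universal constant factor.

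The main obstacle is accounting rather than any new idea: I need to make sure the smallness thresholds required by Claim~\ref{cor:end_of_alignment} and Theorem~\ref{thm:local_convergence_rank-one} can be packaged into a single $\alpha^* = \mathrm{polylog}(|1-\delta_w|, \sigma_{W_2}, \sigma_{W_1})$, that $c_1, c_2, c_3$ remain polylogarithmic in the same parameters as claimed, and that the crude $L(T_1) \leq C \cdot L(0)$ growth estimate is compatible with the exponential decay budget in $T_2$. Once these constants are reconciled, concatenating the two phases at $T_1$ yields the stated bound immediately.
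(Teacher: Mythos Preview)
Your proposal is correct and follows essentially the same chaining argument as the paper: invoke Claim~\ref{cor:end_of_alignment} to reach $T_1$, then feed the resulting alignment/imbalance into Theorem~\ref{thm:local_convergence_rank-one} and read off the decay on $[T_1,T_1+T_2]$.

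One simplification you missed: to control $L(T_1)$ you do not need any perturbation argument from Appendix~\ref{app:proof_alignment}. Since the dynamics are gradient flow, $L$ is non-increasing in $t$, so $L(T_1)\le L(0)$ holds trivially with $C=1$. The paper uses exactly this observation, and it removes the need to track how close the LoRA weights stay to the idealized ODE during the alignment phase.
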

\begin{proof}
Under the assumption that Theorem~\ref{thm:local_convergence_rank-one} holds, the following holds (See~\eqref{eqn:convergence_simplified})
\begin{align}
    L(t)\!\leq\! \exp\biggl(-\frac{(1\!-\!\delta_w)\sigma\sigma_{W_2}(t\!-\!T_1)}{32}\biggl)L(T_1) \!+\! \alpha^{c_3}\!\leq\! \exp\biggl(-\frac{(1\!-\!\delta_w)\sigma\sigma_{W_2}(t\!-\!T_1)}{32}\biggl)L(0) \!+\! \alpha^{c_3}\,,
\end{align}
where the last inequality holds because $L$ under GF is non-increasing.
Then, we choose $t=T_2+T_1$ such that $\exp\biggl(-\frac{(1\!-\!\delta_w)\sigma\sigma_{W_2}(t-T_1)}{32}\biggl)L(0) \!=\! \alpha^{c_3}$, or equivalently 
\begin{align}
    T_2= \frac{32}{(1\!-\!\delta_w)\sigma\sigma_{W_2}}\log\biggl(\frac{L(0)}{\alpha^{c_3}}\biggl)\,,
\end{align}
then we have $L(T_1+T_2)\leq 2\alpha^{c_3} $. Finally, we choose $c_2$ in Theorem~\ref{thm:rank-one} such that
\begin{align}
    c_2=\frac{32c_3\log\bigl(\frac{L(0)}{\alpha}\bigl)}{(1-\delta_w)\log\bigl(\frac{L(0)}{\alpha}\bigl)}=\frac{32c_3}{1-\delta_w}\,,
\end{align}
which completes the proof.
% \begin{align}\label{appeqn:convergence_dependency}
%     L(t)&\!\leq\!\exp\biggl(-\frac{\bigl(1\!-\!\frac{\lVert Z_2(t)\rVert}{\lVert Z_1(t)\rVert}\bigr)(1\!-\!\delta_w)\sigma\sigma_{W_2}(t\!-\!T_1)}{16}\biggl)L(T_1)
%     \!+\!c_8\cos^2{(\boldsymbol{U_{Z_1}^S}(t), \gamma_1)}\nonumber\\
%     &\!\leq\!\exp\biggl(-\frac{\bigl(1\!-\!(\frac{\lVert Z_2(T_1)\rVert}{\lVert Z_1(T_1)\rVert})^{-c_7}\bigr)(1\!-\!\delta_w)\sigma\sigma_{W_2}(t\!-\!T_1)}{16}\biggl)L(T_1)
%     \!+\!c_8\cos^{2c_6}{(\boldsymbol{U_{Z_1}^S}(T_1), \gamma_1)} && \text{Apply ~\eqref{eqn:alignment_lower_bdd} and~\eqref{eqn:imbalance_lower_bdd}}
% \end{align}
% Then, based on Claim~\ref{cor:end_of_alignment}, one can show
% \begin{align}
%     L(t)
%     &\!\leq\!\exp\biggl(-\frac{\bigl(1\!-\!(\frac{\lVert Z_2(T_1)\rVert}{\lVert Z_1(T_1)\rVert})^{-c_7}\bigr)(1\!-\!\delta_w)\sigma\sigma_{W_2}(t\!-\!T_1)}{16}\biggl)L(T_1)
%     \!+\!c_8\cos^{2c_6}{(\boldsymbol{U_{Z_1}^S}(T_1), \gamma_1)} \nonumber\\
%     &\!\leq\! \exp\biggl(-\frac{\bigl(1\!-\!\bigl(\frac{1}{c_5}\alpha^{\frac{4(1-\delta_w)}{5-\delta_w}}\bigl)^{c_7}\bigr)(1\!-\!\delta_w)\sigma\sigma_{W_2}(t\!-\!T_1)}{16}\biggl)L(T_1)
%     \!+\!c_8\bigl(c_4\alpha^{\frac{1+\delta_w}{3+\delta_w}}\bigl)^{c_6}
% \end{align}
% When $\alpha$ is sufficiently small, one has
% \begin{align}
%     1\!-\!\bigl(\frac{1}{c_5}\alpha^{\frac{4(1-\delta_w)}{5-\delta_w}}\bigl)^{c_7}\geq\frac{1}{2}
% \end{align}
\end{proof}
%%%%%%%%%%%%%%%%%%%%%%%%%%%%%%%%%%%%
\section{Proof of Claim~\ref{cor:end_of_alignment}}\label{app:proof_alignment}
In this section, we provide the proof of Claim~\ref{cor:end_of_alignment}. Our proof strategy is to divide the \textit{alignment} phase 
into two distinct subphases: \textit{early alignment} phase and \textit{escape saddle} phase. In the \textit{early alignment}, we will show that
$\boldsymbol{U_{Z_1}^S}(t)$ aligns with $\gamma_1$, and the norm of the LoRA weights stay small. 
In the \textit{escape saddle} phase, we will show that $\boldsymbol{U_{Z_1}^S}(t)$ continues to align with $\gamma_1$ while the norm of the $Z_1$ move away from zero. Claim~\ref{cor:end_of_alignment} can be viewed as a consequence of the \textit{escape saddle} phase.

\subsection{Proof of \textit{early alignment} phase}
In this section, we first provide a characterization of the \textit{early alignment} phase, and its proof.
\begin{thm}[\textit{early alignment} phase]\label{appthm:early_alignment_rank-one}
Under the same setting as Theorem~\ref{appthm:rank-one}, we define
\begin{align}
    \hat T_1=\frac{2}{(5-\delta_w)\sigma\sigma_{W_2}}\log\biggl(\frac{1}{\alpha z_{\max}^2\bigl(\lVert E\rVert\!+\!\lVert W_1\rVert^2\!+\!\lVert W_1\rVert \lVert W_2\rVert+\!\lVert W_2\rVert^2\! \!+\!\sqrt{\lVert W_1\rVert+\lVert W_2\rVert}\bigl)}\biggl)\,.
\end{align}
There exists constants $\beta_1, \beta_2, \beta_3$ that is independent of the initialization scale $\alpha$ such that the following holds
\begin{align}
    \frac{\lVert Z_1(\hat T_1)\rVert_F^2}{\lVert Z_2(\hat T_1)\rVert_F^2} &\geq \beta_1 \alpha^{-\frac{2(1-\delta_w)}{5-\delta_w}}\,,\nonumber\\
    \cos^2\biggl (\gamma_1, \boldsymbol{U_{Z_1}^S}(\hat T_1)\biggl)&\geq 1-\beta_2 \alpha^{\frac{3+\delta_w}{5-\delta_w}}\,,\nonumber\\
    \lVert Z_1(\hat T_1)\rVert_F, \lVert Z_2(\hat T_1)\rVert_F &\leq \beta_3 \alpha\,.
\end{align}
\end{thm}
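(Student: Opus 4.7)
The plan is to view the true learning dynamics~\eqref{eqn:early_stage_dynamics} as a perturbation of the simplified linear ODE~\eqref{eqn:early_stage_dynamics_approximation}, whose behavior is fully characterized by Claim~\ref{lem:ode_alignment}. Concretely, for each $i$ I would write $Z_i(t) = \tilde Z_i(t) + \Delta_i(t)$, where $\tilde Z_i$ solves the simplified system with the same initial condition, and use variation of parameters to express $\Delta_i(t)$ as a time-ordered integral driven by the perturbation blocks $\mathrm{diag}(0,D_i;D_i^\top,0)$. The point is that the leading-order geometry (alignment with $\gamma_1$, differential growth of the two matrices) is set by $\tilde Z_i$, while $\Delta_i$ contributes only lower-order corrections as long as $D_i$ is small.

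Next I would control the perturbation drivers. Since $E = \Delta Y - F$ and $F$ consists of products of LoRA weights, one gets $\lVert D_1\rVert, \lVert D_2\rVert = \mathcal{O}(\lVert Z_1\rVert^2 + \lVert Z_2\rVert^2 + \lVert Z_1\rVert^2\lVert Z_2\rVert^2)$ with the prefactors depending polynomially on $\lVert W_1\rVert, \lVert W_2\rVert, \sigma$. Then I would set up a bootstrap: \emph{assume} that for all $t \in [0, \hat T_1]$ the norms $\lVert Z_i(t)\rVert_F$ remain $\mathcal{O}(\alpha)$. Under this assumption $\lVert D_i(s)\rVert = \mathcal{O}(\alpha^2)$, and plugging this into the variation-of-parameters integral shows $\Delta_i$ is dominated by $\tilde Z_i$ in the directions of interest throughout $[0,\hat T_1]$. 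The choice of $\hat T_1 \propto \log(1/\alpha)/((5-\delta_w)\sigma\sigma_{W_2})$ is exactly the largest time at which the exponentially growing component of $\tilde Z_1$ along $\gamma_1$ — namely $e^{\sigma\sigma_{W_2} t}\gamma_1\gamma_1^\top Z_1(0)$ — has become large enough to dominate the random initialization in the $\gamma_1$ direction (thereby forcing $\boldsymbol{U_{Z_1}^S}$ to align with $\gamma_1$), yet small enough that the bootstrap on the norms closes.

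The analysis of $\tilde Z_i$ then proceeds directly from the closed form in~\eqref{eqn:sol_early_stage_dynamics_ode}. The alignment quality is obtained by bounding $1-\cos^2(\gamma_1,\boldsymbol{U_{\tilde Z_1}^S}(t))$ by the ratio of the non-growing projections of $Z_1(0)$ to the $e^{\sigma\sigma_{W_2} t}\gamma_1\gamma_1^\top Z_1(0)$ component; standard Gaussian concentration gives $\lVert \gamma_1\gamma_1^\top Z_1(0)\rVert \gtrsim \alpha$ with high probability, and plugging in $t=\hat T_1$ yields the target exponent $\alpha^{(3+\delta_w)/(5-\delta_w)}$ after accounting for the additive perturbation coming from $\Delta_1$. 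The imbalance~\eqref{eqn:imbalance_growth_ode} gives $\lVert\tilde Z_1\rVert^2/\lVert\tilde Z_2\rVert^2$ growing like $e^{2\sigma(\sigma_{W_2}-\sigma_{W_1}) t} = e^{2(1-\delta_w)\sigma\sigma_{W_2} t}$, which at $\hat T_1$ gives precisely $\alpha^{-2(1-\delta_w)/(5-\delta_w)}$ (up to the polylog constant $\beta_1$); the norm bound $\lVert Z_i(\hat T_1)\rVert_F\leq \beta_3\alpha$ follows from the specific logarithmic choice of $\hat T_1$ since the dominant growing mode is only amplified by a polynomial factor in $1/\alpha$ balanced by the tiny initial projection.

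The main obstacle is closing the bootstrap uniformly over $[0,\hat T_1]$. The perturbation $D_1$ depends on $A_2,B_2$ and vice versa, so the two dynamics are coupled, and one must bound both norms simultaneously via a coupled Gr\"onwall argument. A subtler difficulty is that the alignment argument is not about norms but about a ratio of projections in the factor space, so one must carefully propagate the effect of $\Delta_i$ on $\boldsymbol{U_{Z_i}^S}$ via a Wedin-type singular-space perturbation bound (see Lemma~\ref{applem:wedin-sin}) and ensure the denominator (the growing $\gamma_1$-component) is not itself corrupted by the perturbation. The specific exponents $(3+\delta_w)/(5-\delta_w)$, $2(1-\delta_w)/(5-\delta_w)$, and the precise form of $\hat T_1$ emerge from balancing three competing rates: the exponential growth of the aligned mode at rate $\sigma\sigma_{W_2}$, the differential growth rate $(1-\delta_w)\sigma\sigma_{W_2}$ driving the imbalance, and the worst-case contamination rate of the perturbation, which scales like $\alpha^2$ times quartic combinations of $\lVert Z_i\rVert$.
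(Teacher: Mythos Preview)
Your plan is sound and would prove the theorem, but the route differs from the paper's. You propose an \emph{additive} splitting $Z_i = \tilde Z_i + \Delta_i$ and control $\Delta_i$ via variation of parameters against the closed-form solution of the linearized ODE, then apply Wedin at the end. The paper instead works \emph{multiplicatively and column by column}: for each column $w_{1j}$ of $Z_1$ it derives differential inequalities directly on $\log\lVert w_{1j}\rVert^2$, on $\log(\lVert Z_1\rVert_F^2/\lVert Z_2\rVert_F^2)$, and on the log-ratio $\log\bigl(\cos(\gamma_1,w_{1j})/\cos(\hat\gamma_1,w_{1j})\bigr)$ for every $\hat\gamma_1\perp\gamma_1$ (Lemma~\ref{applem:grow_norm_imbalance}). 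These are integrated under the hypothesis $\lVert D_i\rVert\leq\delta_i$ (Lemma~\ref{applem:grow_norm_imbalance2}), and the bootstrap is closed not on the norms of $Z_i$ but on $\lVert D_i\rVert$ itself: one sets $\delta_1=\delta_2=\alpha$ and computes $\hat T_1$ as the first time the crude upper bound on $\max(\lVert D_1\rVert,\lVert D_2\rVert)$ saturates $\alpha$ (Lemma~\ref{applem:grow_norm_imbalance3}). Only then does the paper aggregate the per-column alignment into a bound on $\lVert(I-\gamma_1\gamma_1^\top)Z_1\rVert_F/\lVert\gamma_1\gamma_1^\top Z_1\rVert_F$ and invoke Wedin. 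Your approach is arguably cleaner conceptually and, because you track the growing $\gamma_1$-mode directly, should in fact yield a sharper alignment exponent than the paper's looser rate bounds; the paper's angular/log-ratio formulation buys robustness (no need to separately bound a perturbation integral against an exponentially growing reference) and makes the later \textit{escape saddle} phase easier to chain onto, since the same differential inequalities are reused there verbatim.

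One small correction: your bootstrap hypothesis ``$\lVert Z_i(t)\rVert_F=\mathcal{O}(\alpha)$ on $[0,\hat T_1]$'' is too tight to close as stated. The norms grow exponentially, and at $\hat T_1$ one only gets $\lVert Z_1(\hat T_1)\rVert_F^2=\mathcal{O}(\alpha)$ (hence $\lVert Z_1\rVert_F=\mathcal{O}(\sqrt\alpha)$) from the paper's own calculation. The right bootstrap is either on $\lVert D_i\rVert\leq\alpha$ (as the paper does) or on $\lVert Z_i(t)\rVert_F\leq C\alpha e^{\sigma\sigma_{W_2}t}$; with the latter, your Duhamel integral gives $\lVert\Delta_1(t)\rVert\leq C'\alpha^2 t\,e^{\sigma\sigma_{W_2}t}$, and since $\alpha\hat T_1=\mathcal{O}(\alpha\log(1/\alpha))\to 0$ the bootstrap closes. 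This is a cosmetic fix, not a structural gap.
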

\begin{proof}
Our proof for Theorem~\ref{appthm:early_alignment_rank-one} is based on the following decomposition of the learning dynamics of LoRA (which is also shown in~\eqref{eqn:early_stage_dynamics}),
\begin{align}
    \dot Z_1 &\!=\! \bigl(\sigma\sigma_{W_{2}}H_1\!+\!\hat D_1\bigl) Z_1\,,\nonumber\\
    \dot Z_2 \!&=\! \bigl(\sigma\sigma_{W_1}H_2\!+\!\hat D_2\bigl) Z_2 \,,
\end{align}
where $H_1, H_2, D_1,D_2$ are defined as follows
\begin{align}
    H_1 &\!=\! \begin{pmatrix}
        0 & gv^\top \\ vg^\top & 0
    \end{pmatrix}, \quad H_2 \!=\!\begin{pmatrix}
        0 & ug^\top \\ gu^\top & 0
    \end{pmatrix}, \quad
    F \!=\! W_2B_1A_1\!+\!B_2A_2W_1\!+\!B_2A_2B_1A_1, \quad E\!=\!\Delta Y\!-\!F,\nonumber \\
    \hat D_1 &\!=\! \begin{pmatrix}
        0&D_1\\D_1^\top&0
    \end{pmatrix}, \quad \hat D_2 \!=\! \begin{pmatrix}
        0&D_2\\D_2^\top&0
    \end{pmatrix},\quad D_1 \!=\!A_2^\top B_2^\top E\!-\!W_2^\top F, \quad D_2 \!=\! EA_1^\top B_1^\top\!-\! FW_1^\top \,.
\end{align}
Our analysis is built on the observation that during the early stage of the training, $\lVert \hat D_1\rVert, \lVert \hat D_2\rVert\sim\mathcal{O}(\alpha^2)$, which is extremely small compared with the magnitude of $\sigma\sigma_{W_1}H_2, \sigma\sigma_{W_2}H_1$.
Claim~\ref{lem:ode_alignment} characterizes

We first characterize the learning dynamics of the angle and norm of each column of $Z_1$ (or equivalently $Z_2$) as follows
\begin{align}\label{appeqn:ode_norm_angle}
    \frac{d}{dt}\lVert w_{1j}\rVert^2 &= 2\langle \dot w_{1j}, w_{1j}\rangle=2\langle w_{1j}, (\sigma\sigma_{W_2}H_1+\hat D_1)w_{1j}\rangle \nonumber\\
    \frac{d}{dt}\frac{w_{1j}}{\lVert w_{1j}\rVert} &= \frac{\dot w_{1j}}{\lVert w_{1j}\rVert}-\frac{w_{1j}}{\lVert w_{1j}\rVert^2}\cdot \frac{\dot w_{1j}^\top w_{1j}}{\lVert w_{1j}\rVert}
    =\sigma\sigma_{W_2}\biggl(I-\frac{w_{1j}w_{1j}^\top}{\lVert w_{1j}\rVert^2}\biggl)\frac{H_1 w_{1j}}{\lVert w_{1j}\rVert}+\biggl(I-\frac{w_{1j}w_{1j}^\top}{\lVert w_{1j}\rVert^2}\biggl)\frac{\hat D_1w_{1j}}{\lVert w_{1j}\rVert}\,.
\end{align}
Based on the above equation, one can characterize the growth of $\lVert w_{1j}\rVert, \frac{\lVert Z_1\rVert_F}{\lVert Z_2\rVert_F}$ and angle alignment between $w_{1j}$ with $\gamma_1$, as is shown the following lemma
\begin{lem}\label{applem:grow_norm_imbalance}
For any unit vector $\hat\gamma_1\perp \gamma_1$, the following conditions hold 
\begin{align}
    \frac{d}{dt}\log(\lVert w_{1j}\rVert^2)&\leq 2(\sigma\sigma_{W_2}+\lVert D_1\rVert)\,,\quad \frac{d}{dt}\log(\lVert Z_1\rVert_F^2)\leq 2(\sigma\sigma_{W_2}+\lVert D_1\rVert)\nonumber\\
    \frac{d}{dt}\log\biggl(\frac{\lVert Z_1\rVert_F^2}{\lVert Z_2\rVert_F^2}\biggl) &\geq 2((1-\delta_w)\sigma\sigma_{W_2}-\lVert D_1\rVert-\lVert D_2\rVert)\,,\nonumber\\
    \frac{d}{dt}\log\biggl(\frac{\cos(\gamma_1, \frac{w_{1j}}{\lVert w_{1j}\rVert})}{\cos(\hat\gamma_1, \frac{w_{1j}}{\lVert w_{1j}\rVert})}\biggl)&\geq\sigma\sigma_{W_2}-2\lVert D_1\rVert\,.
\end{align}
\end{lem}
We refer the readers to Appendix~\ref{appprooflem:grow_norm_imbalance} for the proof of Lemma~\ref{applem:grow_norm_imbalance}. 
Based on Lemma~\ref{applem:grow_norm_imbalance}, one can see that when $\lVert D_1\rVert, \lVert D_2\rVert$ is sufficiently small, one can control the growth of $\lVert w_{1j}\rVert, \lVert Z_1\rVert_F$ (and $\lVert w_{2j}\rVert, \lVert Z_2\rVert_F$ respectively) and $\frac{\lVert Z_1\rVert_F^2}{\lVert Z_2\rVert_F^2}, \cos(\gamma_1, \frac{w_{1j}}{\lVert w_{1j}\rVert})$ monotonically increase.
The following lemma formally characterize these properties.
\begin{lem}\label{applem:grow_norm_imbalance2}
We first define the following quantities
\begin{align}
    w_{\max}&\!=\!\frac{1}{\alpha}\max_{j\in[r]}\biggl\{{\lVert w_{1j}\rVert, \lVert w_{2j}\rVert}\biggl\}, \quad z_{\max} \!=\! \frac{1}{\alpha}\max\biggl\{\lVert Z_1(0)\rVert_F, \lVert Z_2(0)\rVert_F\biggl\}\nonumber\\
    c_{\min}&\!=\!\min_{j\in[r]}\biggl\{\frac{\cos\biggl (\gamma_1, \frac{w_{1j}(0)}{\lVert w_{1j}(0)\rVert}\biggl)}{\cos\biggl(\hat\gamma_1, \frac{w_{1j}(0)}{\lVert w_{1j}(0)\rVert}\biggl)}\biggl\}\nonumber\\
    d_1&\!=\!\frac{\lVert Z_1(0)\rVert_F}{\lVert Z_2(0)\rVert_F}, \quad d_2=\frac{(m+h-1)}{c_{\min}}\,.
\end{align}
Assume until $T_1(\delta_1, \delta_2)$, the following holds $\lVert D_1(T_1(\delta_1, \delta_2))\rVert\leq\delta_1, \lVert D_2(T_1(\delta_1, \delta_2))\rVert\leq\delta_2$. 
Then, the following holds for all $0\leq t\leq T_1(\delta_1, \delta_2)$
\begin{align}
    \lVert w_{1j}(t)\rVert^2 &\leq \alpha^2\exp\biggl(2(\sigma\sigma_{W_2}+\delta_1)t\biggl) w_{\max}^2\,, \nonumber\\
    \lVert Z_1(t)\rVert_F^2 &\leq \alpha^2\exp\biggl(2(\sigma\sigma_{W_2}+\delta_1)t\biggl) z_{\max}^2\,,\nonumber\\
    \frac{\lVert Z_1(t)\rVert_F^2}{\lVert Z_2(t)\rVert_F^2} &\geq d_1\exp\biggl (2\bigl ((1-\delta_w)\sigma\sigma_{W_2}-\delta_1-\delta_2\bigl)t \biggl)\,,\nonumber\\
    \cos^2\biggl (\gamma_1, \frac{w_{1j}}{\lVert w_{1j}\rVert}\biggl) &\geq \frac{1}{1+d_2\exp\bigl(-2(\sigma\sigma_{W_2}-2\delta_1)t\bigl)}\,.
\end{align}
\end{lem}
We refer the readers to Appendix~\ref{appprooflem:grow_norm_imbalance2} for the proof of Lemma~\ref{applem:grow_norm_imbalance2}. 
Since we initialize $A_i=0, B_i\sim\mathcal{N}(0, \alpha^2)$, therefore the value of $w_{\max}, z_{\max}, c_{\min}, d_1, d_2$ are all independent of $\alpha$.
Moreover, when $T_1(\delta_1,\delta_2)$ is large, one can show there will be sufficient imbalance and alignment at $T_1(\delta_1,\delta_2)$. 
Next, we will show that when $\delta_1=\delta_2=\alpha$, we can characterize a lower bound on $T_1(\delta_1, \delta_2)$ which depends on $\alpha$, and it further leads to an characterization of the alignment and imbalance between $\lVert Z_1\rVert_F,\lVert Z_2\rVert_F$ evaluated at $T_1(\delta_1,\delta_2)$ depending on $\alpha$.
\begin{lem}\label{applem:grow_norm_imbalance3}
Under the same setting as Theorem~\ref{appthm:early_alignment_rank-one}, we define $\hat T_1:=T_1(\alpha, \alpha)$ as follows
\begin{align}
    \hat T_1=\frac{2}{(5-\delta_w)\sigma\sigma_{W_2}}\log\biggl(\frac{1}{\alpha z_{\max}^2(\lVert E\rVert\!+\!\lVert W_2\rVert^2\!+\!\lVert W_1\rVert \lVert W_2\rVert \!+\!\sqrt{\lVert W_2\rVert})}\biggl)\,.
\end{align}
There exists constants $\beta_1, \beta_2, \beta_3$ that is independent of the initialization scale $\alpha$ such that the following holds
\begin{align}
    \frac{\lVert Z_1(\hat T_1)\rVert_F^2}{\lVert Z_2(\hat T_1)\rVert_F^2} &\geq d_1 d_3^{\frac{2(1-\delta_w)}{5-\delta_w}} \alpha^{-\frac{2(1-\delta_w)}{5-\delta_w}}\,,\nonumber\\
    \cos^2\biggl (\gamma_1, \frac{w_{1j}(\hat T_1)}{\lVert w_{1j}(\hat T_1)\rVert}\biggl)&\geq 1-d_2d_3^{-\frac{3+\delta_w}{5-\delta_w}} \alpha^{\frac{3+\delta_w}{5-\delta_w}}\,,\nonumber\\
    \lVert Z_1(\hat T_1)\rVert_F, \lVert Z_2(\hat T_1)\rVert_F &\leq \frac{\alpha}{d_3}\,,
\end{align}
where $d_3=\frac{1}{z_{\max}^2(\lVert E\rVert\!+\!\lVert W_2\rVert^2\!+\!\lVert W_1\rVert \lVert W_2\rVert \!+\!\sqrt{\lVert W_2\rVert})}$.
\end{lem}
We refer the readers to Appendix~\ref{appprooflem:grow_norm_imbalance3} for details. Then, in the remaining of the proof of Theorem~\ref{appthm:early_alignment_rank-one},
we will show that $\cos^2\biggl (\gamma_1, \frac{w_{1j}(\hat T_1)}{\lVert w_{1j}(\hat T_1)\rVert}\biggl)\geq 1-d_2d_3^{-\frac{3+\delta_w}{5-\delta_w}} \alpha^{\frac{3+\delta_w}{5-\delta_w}}$ ensures sufficiently alignment between $\gamma_1$ and $\boldsymbol{U_{Z_1}^S}(\hat T_1)$.
\end{proof}

Our starting point is the observation that 
\begin{align}
    \lVert \gamma_1\gamma_1^\top Z_1(\hat T_1)\rVert_F^2 &= \sum_{j=1}^2\lVert \gamma_1\gamma_1^\top w_{1j}(\hat T_1)\rVert^2 \nonumber\\
    &= \sum_{j=1}^2 \cos^2\biggl (\gamma_1, \frac{w_{1j}(\hat T_1)}{\lVert w_{1j}(\hat T_1)\rVert}\biggl) \lVert w_{1j}(\hat T_1)\rVert^2\nonumber\\
    &\geq \biggl(1-d_2d_3^{-\frac{3+\delta_w}{5-\delta_w}} \alpha^{\frac{3+\delta_w}{5-\delta_w}}\biggl)\lVert Z_1(\hat T_1)\rVert_F^2\,.
\end{align}
and $\lVert (I-\gamma_1\gamma_1^\top) Z_1(\hat T_1)\rVert_F^2\leq d_2d_3^{-\frac{3+\delta_w}{5-\delta_w}} \alpha^{\frac{3+\delta_w}{5-\delta_w}}\lVert Z_1(\hat T_1)\rVert_F^2$.

Then, we can apply Lemma~\ref{applem:wedin-sin} (singular space perturbation bound) with $M=Z_1(\hat T_1), M^*=\gamma_1\gamma_1^\top Z_1(\hat T_1)$ and $r=1$:
\begin{align}
    \lVert \gamma_1\gamma_1^\top - \boldsymbol{U_{Z_1}^S}(\hat T_1)\bigl( \boldsymbol{U_{Z_1}^S}(\hat T_1)\bigl)^\top\rVert &\leq \frac{\lVert (I-\gamma_1\gamma_1^\top)Z_1(\hat T_1)\rVert}{\lVert\gamma_1\gamma_1^\top Z_1(\hat T_1)\rVert-\lVert (I-\gamma_1\gamma_1^\top)Z_1(\hat T_1)\rVert}\nonumber\\
    &\leq \frac{d_2d_3^{-\frac{3+\delta_w}{5-\delta_w}} \alpha^{\frac{3+\delta_w}{5-\delta_w}}}{1-2d_2d_3^{-\frac{3+\delta_w}{5-\delta_w}} \alpha^{\frac{3+\delta_w}{5-\delta_w}}}\nonumber\\
    &\leq 2d_2d_3^{-\frac{3+\delta_w}{5-\delta_w}} \alpha^{\frac{3+\delta_w}{5-\delta_w}}\,,
\end{align}
where the last inequality holds due to the assumption that $d_2d_3^{-\frac{3+\delta_w}{5-\delta_w}}\alpha^{\frac{3+\delta_w}{5-\delta_w}}\leq\frac{1}{2}$.
%%%%%%%%%%%%%%%%%%%%%%%%%%%%%%%%%%%%
\subsection{Proof of \textit{escape saddle} phase}
At the end of the \textit{early alignment} phase, we have shown sufficient alignment between each column of $Z_1(\hat T_1)$ (and $\boldsymbol{U_{Z_1}^S}(\hat T_1)$) with $\gamma_1$, 
along with a significant imbalance between $\lVert Z_1(\hat T_1)\rVert_F$ and $\lVert Z_2(\hat T_1)\rVert_F$. 
Moreover, the norm of the LoRA weights remains small at this stage. 
In the \textit{escape saddle} phase, we show that this alignment and imbalance persist, while the norm of the LoRA weights gradually increases until reaching a certain threshold.

The following theorem characterizes these properties formally.
\begin{thm}[\textit{escape saddle} phase]
Under the same setting as Theorem~\ref{appthm:rank-one}, when $\alpha$ is sufficiently small, let
\begin{align}
    \tilde T_1 = \frac{2\log\biggl(1+\frac{(1-\delta_w^2)\sigma^2}{4\alpha^2z_{\max}^2}\biggl)}{(1+\delta_w)\sigma\sigma_{W_2}}\,.
\end{align}
Then, there exists a time $T_1\in[\hat T_1, \hat T_1+\tilde T_1]$ such that the following holds
\begin{align}
    g^\top B_1(T_1)A_1(T_1) v &\geq \frac{(1-\delta_w)\sigma}{4\sigma_{W_2}}\,, \nonumber\\
    \frac{\lVert Z_1(T_1)\rVert_F^2}{\lVert Z_2(T_1)\rVert_F^2} &\geq \beta_1 \alpha^{-\frac{2(1-\delta_w)}{5-\delta_w}}\,,\nonumber\\
    \cos^2\biggl (\gamma_1, \boldsymbol{U_{Z_1}^S}(T_1)\biggl)&\geq 1-\beta_2 \alpha^{\frac{3+\delta_w}{5-\delta_w}}\,.
\end{align}
\end{thm}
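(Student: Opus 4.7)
The plan is to pick up from the endpoint of the early alignment phase at $t=\hat T_1$ and to track the dynamics forward until $g^\top B_1 A_1 v$ first reaches the target level $(1-\delta_w)\sigma/(4\sigma_{W_2})$. By Theorem~\ref{appthm:early_alignment_rank-one}, at $\hat T_1$ the norms $\lVert Z_1\rVert_F, \lVert Z_2\rVert_F$ are still of order $\alpha$, the imbalance $\lVert Z_1\rVert_F^2/\lVert Z_2\rVert_F^2 \gtrsim \alpha^{-2(1-\delta_w)/(5-\delta_w)}$ holds, and each column of $Z_1$ is well-aligned with $\gamma_1$. The backbone is still the perturbed decomposition
\[
\dot Z_1 = \bigl(\sigma\sigma_{W_2} H_1 + \hat D_1\bigr) Z_1, \qquad \dot Z_2 = \bigl(\sigma\sigma_{W_1} H_2 + \hat D_2\bigr) Z_2,
\]
but the perturbations $\hat D_1, \hat D_2$ are no longer negligible, so growth and residual error must be handled simultaneously through a bootstrap.

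First, I would track the scalar $p_+(t) := \lVert \gamma_1^\top Z_1(t)\rVert^2$ using the eigendecomposition $H_1=\gamma_1\gamma_1^\top-\bar\gamma_1\bar\gamma_1^\top$. With $p_-(t):=\lVert\bar\gamma_1^\top Z_1(t)\rVert^2$ one has the identity $p_+ - p_- = 2\,g^\top B_1 A_1 v$, so the first target inequality reduces to showing $p_+(T_1)\geq(1-\delta_w)\sigma/(2\sigma_{W_2})$ while the decaying mode $p_-$ remains much smaller throughout. The unperturbed dynamics give $\dot p_+=2\sigma\sigma_{W_2}\,p_+$; under the bootstrap bound on $\hat D_1$ the rate drops to at least $(1+\delta_w)\sigma\sigma_{W_2}$. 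Integrating from $\hat T_1$, using the initial lower bound $p_+(\hat T_1)\gtrsim \alpha^2 z_{\max}^2$ inherited from Theorem~\ref{appthm:early_alignment_rank-one}, and setting $T_1$ to be the first crossing of $p_+$ with the target level, matches exactly the length $\tilde T_1=\frac{2\log(1+(1-\delta_w^2)\sigma^2/(4\alpha^2 z_{\max}^2))}{(1+\delta_w)\sigma\sigma_{W_2}}$ stated in the theorem.

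Second, to propagate the imbalance and the alignment, I would reuse Lemma~\ref{applem:grow_norm_imbalance}. On $[\hat T_1, T_1]$ the logarithmic derivatives
\[
\frac{d}{dt}\log\biggl(\frac{\lVert Z_1\rVert_F^2}{\lVert Z_2\rVert_F^2}\biggr) \geq 2\bigl((1-\delta_w)\sigma\sigma_{W_2}-\lVert D_1\rVert-\lVert D_2\rVert\bigr)
\]
and the analogous rate for $\cos(\gamma_1,w_{1j}/\lVert w_{1j}\rVert)/\cos(\hat\gamma_1,w_{1j}/\lVert w_{1j}\rVert)$ remain positive under the bootstrap, so both the column-wise alignment and the imbalance can only improve. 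Converting per-column alignment into alignment of $\boldsymbol{U_{Z_1}^S}$ with $\gamma_1$ is then done exactly as at the end of the early alignment phase via Lemma~\ref{applem:wedin-sin}, giving the remaining two inequalities with the same form of exponent in $\alpha$, up to at most a constant adjustment of $\beta_1$ and $\beta_2$.

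The main obstacle is the bootstrap control of $\lVert\hat D_1\rVert$ and $\lVert\hat D_2\rVert$ once $\lVert Z_1\rVert$ has left the $O(\alpha)$ regime. Because $D_1=A_2^\top B_2^\top E-W_2^\top F$ and $F=W_2 B_1 A_1+B_2 A_2 W_1+B_2 A_2 B_1 A_1$, the term $W_2^\top W_2 B_1 A_1$ becomes of order $\sigma_{W_2}^2\lVert Z_1\rVert^2$, which is precisely comparable to $\sigma\sigma_{W_2}$ at the target level $\lVert Z_1\rVert^2\sim \sigma/\sigma_{W_2}$. The key is to exploit the previously-established imbalance to render every term involving $Z_2$ subleading, and to split the self-interaction contribution of $\hat D_1$ into a part parallel to $\gamma_1\gamma_1^\top$, which renormalizes the exponential rate from $2\sigma\sigma_{W_2}$ down to $(1+\delta_w)\sigma\sigma_{W_2}$ and explains the denominator in $\tilde T_1$, and a part orthogonal to $\gamma_1$, which remains small enough to preserve alignment. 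Making this splitting quantitative and closing the bootstrap before $\lVert Z_1\rVert$ leaves the controlled regime is the main technical work.
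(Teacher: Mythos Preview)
Your plan is close to the paper's: both track $g^\top B_1A_1v$ (your $p_+$ is essentially this quantity, via $p_+-p_-=4\,g^\top B_1A_1v$ with $p_-$ small by alignment), both obtain $\tilde T_1$ from an exponential lower bound at rate $(1+\delta_w)\sigma\sigma_{W_2}$ under a bootstrap $\lVert D_1\rVert,\lVert D_2\rVert\le\tfrac{(1-\delta_w)\sigma\sigma_{W_2}}{2}$, and both propagate alignment and imbalance through Lemma~\ref{applem:grow_norm_imbalance}.

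The gap is in how you close the bootstrap for the \emph{imbalance}. Your splitting of $\hat D_1$ is adequate for the alignment of $Z_1$: the dominant piece $-\sigma_{W_2}^2(g^\top B_1A_1v)\,gv^\top$ is parallel to $H_1$ and only slows the growth of $p_+$ without rotating $Z_1$. But the imbalance derivative in Lemma~\ref{applem:grow_norm_imbalance} also carries $\lVert D_2\rVert$, and the leading terms of $D_2=EA_1^\top B_1^\top-FW_1^\top$ are driven by $Z_1$, not $Z_2$. Under alignment one has $D_2\approx\bigl[(\sigma-\sigma_{W_2}q)q-\sigma_{W_1}\sigma_{W_2}q\bigr]\,ug^\top$ with $q=g^\top B_1A_1v$, so $\lVert D_2\rVert\sim(\sigma+\sigma_{W_1}\sigma_{W_2})\,q$. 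For generic $\sigma,\sigma_{W_2}$ this crosses $\tfrac{(1-\delta_w)\sigma\sigma_{W_2}}{2}$ \emph{before} $q$ reaches $\tfrac{(1-\delta_w)\sigma}{4\sigma_{W_2}}$; once it does, the right-hand side of the imbalance inequality can be strictly negative, so your claim that the imbalance ``can only improve'' fails, and nothing in the proposal prevents $\lVert Z_1\rVert_F^2/\lVert Z_2\rVert_F^2$ from degrading before $T_1$. Your intuition ``terms involving $Z_2$ are subleading'' does not touch this, because the offending terms in $D_2$ contain no $Z_2$ at all.

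The paper does not try to keep the imbalance monotone. Instead it runs a case analysis on which of $\lVert D_1\rVert,\lVert D_2\rVert$ first hits the threshold, and in the $\lVert D_2\rVert$-first case replaces the pointwise bound by an \emph{integral} one: Lemma~\ref{applem:bound_d2} gives $\lVert D_2\rVert\lesssim(\sigma+\sigma_{W_1}\sigma_{W_2})\,q$, and from $\dot q\ge(1+\delta_w)\sigma\sigma_{W_2}\,q$ one gets
\[
\int_{\hat T_1}^{T_1}\!\lVert D_2(s)\rVert\,ds\;\lesssim\;\int_{\hat T_1}^{T_1}\!q(s)\,ds\;\le\;\frac{q(T_1)}{(1+\delta_w)\sigma\sigma_{W_2}}\;\le\;\frac{(1-\delta_w)\sigma}{4(1+\delta_w)\sigma\sigma_{W_2}^2},
\]
an $\alpha$-free constant. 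Hence the log-imbalance decreases by at most a bounded amount on $[\hat T_1,T_1]$, and the $\alpha^{-2(1-\delta_w)/(5-\delta_w)}$ lower bound from $\hat T_1$ survives up to constants. You need this integrated-in-time argument (or an equivalent structural control of how $\hat D_2$ acts on $Z_2$) to close the proof. A side remark: the factor $(1+\delta_w)$ in $\tilde T_1$ is not an intrinsic ``renormalization'' from the parallel part of $\hat D_1$; it simply comes from the threshold choice $\lVert D_1\rVert\le\tfrac{(1-\delta_w)\sigma\sigma_{W_2}}{2}$ in the bootstrap.
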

\begin{proof}
In this regime, we first show that when $\lVert D_1\rVert, \lVert D_2\rVert\leq\frac{(1-\delta_w)\sigma\sigma_{W_2}}{2}$, 
each column of $Z_1$ continues to align with $\gamma_1$, 
and $\frac{\lVert Z_1\rVert_F}{\lVert Z_2\rVert_F}$ continues to grow. 
This is because in Lemma~\ref{applem:grow_norm_imbalance} and Lemma~\ref{applem:grow_norm_imbalance2}, we have shown that
\begin{align}
    \frac{d}{dt}\log\biggl(\frac{\lVert Z_1\rVert_F^2}{\lVert Z_2\rVert_F^2}\biggl) &\geq 2((1-\delta_w)\sigma\sigma_{W_2}-\lVert D_1\rVert-\lVert D_2\rVert)\,,\nonumber\\
    \frac{d}{dt}\log\biggl(\frac{\cos(\gamma_1, \frac{w_{1j}}{\lVert w_{1j}\rVert})}{\cos(\hat\gamma_1, \frac{w_{1j}}{\lVert w_{1j}\rVert})}\biggl)&\geq\sigma\sigma_{W_2}-2\lVert D_1\rVert\,.
\end{align}
When $\lVert D_1\rVert, \lVert D_2\rVert\leq\frac{(1-\delta_w)\sigma\sigma_{W_2}}{2}$, we have
\begin{align}
    \frac{d}{dt}\log\biggl(\frac{\lVert Z_1\rVert_F^2}{\lVert Z_2\rVert_F^2}\biggl) &\geq 0\,,\nonumber\\
    \frac{d}{dt}\log\biggl(\frac{\cos(\gamma_1, \frac{w_{1j}}{\lVert w_{1j}\rVert})}{\cos(\hat\gamma_1, \frac{w_{1j}}{\lVert w_{1j}\rVert})}\biggl)&\geq\sigma\sigma_{W_2}-2\lVert D_1\rVert \geq \delta\sigma\sigma_{W_2}>0\,.
\end{align}
Then, we characterize speed of growth of  $g^\top B_1A_1 v$.
\begin{lem}\label{applem:ab}
One can characterize the growing speed of $g^\top B_1A_1 v$ as follows
\begin{align}
    \frac{d}{dt} g^\top B_1A_1 v \geq \bigl (\sigma\sigma_{W_2}-\lVert D_1\rVert\bigl) \lVert Z_1\rVert_F^2 \min_{j\in [r]}\cos^2\biggl (\gamma_1, \frac{w_{1j}}{\lVert w_{1j}\rVert}\biggl)\,.
\end{align}
\end{lem}
The detailed proof of Lemma~\ref{applem:ab} can be found in Appendix~\ref{appprooflem:ab}

Based on Lemma~\ref{applem:ab}, we can show that for all $t\geq \hat T_1$, we have
\begin{align}
    \frac{d}{dt} g^\top B_1A_1 v &\geq \frac{(1+\delta_w)\sigma\sigma_{W_2}}{2} \lVert Z_1\rVert_F^2 \bigl(1-\beta_2 \alpha^{\frac{3+\delta_w}{5-\delta_w}}\bigl)\nonumber\\
    &\geq \alpha^2 z_{\max}^2\exp\biggl(\frac{(1+\delta_w)\sigma\sigma_{W_2}t}{2}\biggl)\frac{(1+\delta_w)\sigma\sigma_{W_2}}{2} \bigl(1-\beta_2 \alpha^{\frac{3+\delta_w}{5-\delta_w}}\bigl)\nonumber\\
    \iff& g^\top B_1A_1 v \geq \alpha^2 z_{\max}^2\frac{(1+\delta_w)\sigma\sigma_{W_2}}{2} \bigl(1-\beta_2 \alpha^{\frac{3+\delta_w}{5-\delta_w}}\bigl)
    \frac{\exp\biggl(\frac{(1+\delta_w)\sigma\sigma_{W_2}t}{2}\biggl)-1}{\frac{(1+\delta_w)\sigma\sigma_{W_2}}{2}}\,.
\end{align}
Thus, for the time for $g^\top B_1A_1 v$ to reach $\frac{(1-\delta_w)\sigma}{4\sigma_{W_2}}$ is upper bounded by
\begin{align}
    \tilde T_1 = \frac{2\log\biggl(1+\frac{(1-\delta_w^2)\sigma^2}{4\alpha^2z_{\max}^2}\biggl)}{(1+\delta_w)\sigma\sigma_{W_2}}\,.
\end{align}
Notice the above $\tilde T_1$ is derived under the assumption that $\lVert D_1\rVert, \lVert D_2\rVert\leq\frac{(1-\delta_w)\sigma\sigma_{W_2}}{2}$ for all $\hat T_1\leq t\leq\hat T_1$.
However, it is possible for $\lVert D_1\rVert, \lVert D_2\rVert$ to reach $\frac{(1-\delta_w)\sigma\sigma_{W_2}}{2}$ during this time. 
In the following section, we assume $\lVert D_1\rVert$ reaches $\frac{(1-\delta_w)\sigma\sigma_{W_2}}{2}$ first at time $t_1^*$ where $\hat T_1\leq t_1^*\leq\hat T_1$. 
Then, we show how to derive a lower bound on $g^\top B_1A_1 v$ based on this condition. 
% A similar argument holds for the case when $\lVert D_2\rVert=\frac{(1-\delta_w)\sigma\sigma_{W_2}}{2}$ happens first.

We first provide a lemma that will be used in the remaining of the proof.
\begin{lem}\label{applem:ab_vs_z}
Let $\Lambda_1 = \frac{1}{\alpha^2}(B_1(0)^\top B_1(0) - A_1(0)A_1(0)^\top)$. Then, when $\alpha$ is sufficiently small, the following condition holds for all $t\geq 0$
\begin{align}
    \frac{\lVert Z_1\rVert_F^2-2\alpha^2 r^2 \lVert \Lambda_1\rVert}{2r} \leq \lVert B_1A_1\rVert \leq \frac{1}{2}\lVert Z_1\rVert_F^2\,. 
\end{align}
\end{lem}
\begin{rem}
Notice when LoRA weights are trained via GF, $D_1$ is constant during the training. Similar argument has been shown in ~\cite{saxe2013exact, tarmoun2021understanding}. 
Moreover, since $B_1(0)$ is initialized as a zero matrix,
and $A_1(0)$ is initialized entry-wise i.i.d. using $\mathcal{N}(0, \alpha^2)$, $D_1$ defined as above is determined by a random matrix whose entry are $\mathcal{N}(0, 1)$.
Thus, when $\alpha$ is chosen to be small, it does not affect the magnitude of $D_1$.
\end{rem}
We refer the readers to Appendix~\ref{appprooflem:ab_vs_z} for detailed proof of Lemma~\ref{applem:ab_vs_z}.

Now, we show an lower bound on $g^\top B_1A_1 v$ as follows
\begin{align}
    \lVert D_1\rVert =& \lVert A_2^\top B_2^\top E - W_2^\top (W_2B_1A_1+B_2A_2W_1+B_2A_2B_1A_1)\rVert \nonumber\\
    \leq& \lVert W_2^\top W_2 B_1A_1\rVert + \lVert B_2A_2\rVert \biggl(\lVert E\rVert + \lVert W_1\rVert \lVert W_2\rVert + \lVert W_2\rVert \lVert B_1A_1\rVert\biggl)\nonumber\\
    =& \lVert uu^\top W_2^\top W_2 B_1A_1 vv^\top \rVert + \lVert uu^\top W_2^\top W_2 B_1A_1 v_{\perp}v_{\perp}^\top \rVert + \lVert u_{\perp}u_{\perp}^\top W_2^\top W_2 B_1A_1 \rVert\nonumber\\
     &+ \lVert B_2A_2\rVert \biggl(\lVert E\rVert + \lVert W_1\rVert \lVert W_2\rVert + \lVert W_2\rVert \lVert B_1A_1\rVert\biggl) \nonumber\\
    =&\lVert \sigma_{W_2}^2 ug^\top  B_1A_1 vv^\top \rVert + \lVert \sigma_{W_2}^2 ug^\top B_1A_1 v_{\perp}v_{\perp}^\top \rVert + \lVert u_{\perp}u_{\perp}^\top W_2^\top W_2 B_1A_1 \rVert\nonumber\\
    &+ \lVert B_2A_2\rVert \biggl(\lVert E\rVert + \lVert W_1\rVert \lVert W_2\rVert + \lVert W_2\rVert \lVert B_1A_1\rVert\biggl) \nonumber\\
    \leq & \sigma_{W_2}^2 g^\top B_1A_1 v + \sigma_{W_2}^2\lVert B_1A_1 v_{\perp}v_{\perp}^\top \rVert + \lVert W_2\rVert^2 \lVert u_{\perp}u_{\perp}^\top B_1A_1\rVert \nonumber\\
    &+ \lVert B_2A_2\rVert \biggl(\lVert E\rVert + \lVert W_1\rVert \lVert W_2\rVert + \lVert W_2\rVert \lVert B_1A_1\rVert\biggl)\,.
\end{align}
The first term on the RHS is our target quantity, we will show that the rest of the terms on the RHS is small.

For $\lVert B_1A_1 v_{\perp}v_{\perp}^\top \rVert$, we can use the alignment between each column of $Z_1$ with $\gamma_1$ to lower bound it.
Specifically, in the \textit{early alignment} phase, we have proved that each column of $Z_1$ aligns with $\gamma_1$, i.e., $\cos(\gamma_1, \frac{w_{1j}}{\lVert w_{1j}\rVert}) \geq 1-\beta_2 \alpha^{\frac{3+\delta_w}{5-\delta_w}}$.
Moreover, in the previous discussion, we can see that this alignment persist in the \textit{escape saddle} phase. 
This property ensures that each column of $Z_1$ aligns well with $\gamma_1$. 
The following lemma characterizes this property ensures $\lVert g_{\perp}g_{\perp}^\top B_1A_1\rVert^2, \lVert B_1A_1 v_{\perp}v_{\perp}^\top\rVert^2$ are small.

\begin{lem}\label{applem:alignment_z1_to_a1b1}
Under the assumption that $\cos^2(\gamma_1, \frac{w_{1j}}{\lVert w_{1j}\rVert}) \geq 1-\beta_2 \alpha^{d}$, 
then one can show that 
\begin{align}
    \lVert g_{\perp}g_{\perp}^\top B_1A_1\rVert, \lVert B_1A_1 v_{\perp}v_{\perp}^\top\rVert \leq \sqrt{\beta_2 \alpha^d}\lVert Z_1\rVert_F^2
\end{align}
\end{lem}
% Based on Lemma~\ref{applem:alignment_z1_to_a1b1}, we can further bound $\lVert B_1A_1 v_{\perp}v_{\perp}^\top \rVert$ and $\lVert g_{\perp}g_{\perp}^\top B_1A_1\rVert$.
% %
% \begin{lem}\label{applem:bound_ab_perp}
% Assume $\cos^2(\gamma_1, \frac{w_{1j}}{\lVert w_{1j}\rVert}) \geq 1-\beta_2 \alpha^{d}$, then one can show that 
% %
% \begin{align}
%     \lVert g_{\perp}g_{\perp}^\top B_1A_1\rVert^2, \lVert B_1A_1 v_{\perp}v_{\perp}^\top\rVert^2 \leq \sqrt{2}\beta_2 \alpha^{d}\lVert Z_1\rVert_F^4\,.
% \end{align}
% %
% \end{lem}
% %
We refer the readers to Appendix~\ref{appprooflem:alignment_z1_to_a1b1} for the proof.
Based on Lemma~\ref{applem:alignment_z1_to_a1b1}, we can show that
\begin{align}
    \lVert B_1A_1 v_{\perp}v_{\perp}^\top\rVert^2, \lVert g_{\perp}g_{\perp}^\top B_1A_1\rVert^2\leq \beta_2 \alpha^{\frac{3+\delta_w}{5-\delta_w}}\lVert Z_1\rVert_F^4\,.
\end{align}
Based on these, we can further show that
\begin{align}
    \lVert D_1\rVert &\leq \sigma_{W_2}^2 g^\top B_1A_1 v + \sigma_{W_2}^2\lVert B_1A_1 v_{\perp}v_{\perp}^\top \rVert + \lVert W_2\rVert^2 \lVert u_{\perp}u_{\perp}^\top B_1A_1\rVert \nonumber\\
    &+ \lVert B_2A_2\rVert \biggl(\lVert E\rVert + \lVert W_1\rVert \lVert W_2\rVert + \lVert W_2\rVert \lVert B_1A_1\rVert\biggl)\nonumber\\
    &\leq \sigma_{W_2}^2 g^\top B_1A_1 v + 2\lVert W_2\rVert^2 \sqrt{\beta_2} \alpha^{\frac{3+\delta_w}{10-2\delta_w}}\lVert Z_1\rVert_F^2\nonumber\\
    &+ \frac{1}{2}\lVert Z_2\rVert_F^2 \biggl(\lVert E\rVert + \lVert W_1\rVert \lVert W_2\rVert + \lVert W_2\rVert \frac{1}{2}\lVert Z_1\rVert_F^2\biggl) \nonumber\\
    &\leq \sigma_{W_2}^2 g^\top B_1A_1 v + 2\lVert W_2\rVert^2 \sqrt{\beta_2} \alpha^{\frac{3+\delta_w}{10-2\delta_w}}\lVert Z_1\rVert_F^2\nonumber\\
    &+ \frac{1}{2\beta_1} \alpha^{\frac{2(1-\delta_w)}{5-\delta_w}}\lVert Z_1\rVert_F^2 \biggl(\lVert E\rVert + \lVert W_1\rVert \lVert W_2\rVert + \lVert W_2\rVert \frac{1}{2}\lVert Z_1\rVert_F^2\biggl)
\end{align}
If one can show that $\lVert Z_1\rVert_F^2$ is upper bounded, i.e., $\lVert Z_1\rVert_F^2 \leq d_4$, then we can show 
\begin{align}
    g^\top B_1A_1 v &\geq \frac{(1-\delta)\sigma}{2\sigma_{W_2}} - \frac{2\lVert W_2\rVert^2 \sqrt{\beta_2} \alpha^{\frac{3+\delta_w}{10-2\delta_w}}d_4^2
    + \frac{1}{2\beta_1} \alpha^{\frac{2(1-\delta_w)}{5-\delta_w}}d_4^2 \biggl(\lVert E\rVert + \lVert W_1\rVert \lVert W_2\rVert + \lVert W_2\rVert \frac{1}{2}d_4^2\biggl)}{\sigma_{W_2}^2}\nonumber\\
    &\geq \frac{(1-\delta)\sigma}{4\sigma_{W_2}}\,,
\end{align}
where the last inequality holds when $\alpha$ is sufficiently small, and the absolute value of all the negative term are less or equal than $\frac{(1-\delta)\sigma}{4\sigma_{W_2}}$.

Finally, we show at $t_1^*$, $\lVert Z_1\rVert$ is upper bounded.
By the assumption that $\lVert D_1\rVert$ reaches $\frac{(1-\delta_w)\sigma\sigma_{W_2}}{2}$ before $g^\top B_1A_1 v$ reaches $\frac{(1-\delta_w)\sigma}{4\sigma_{W_2}}$.
We start with the following observation
\begin{align}\label{appeqn:gammaz_z}
    \lVert \gamma_1\gamma_1^\top Z_1 \rVert_F^2 &= \sum_{j=1}^2\lVert \gamma_1\gamma_1^\top w_{1j} \rVert^2 \nonumber\\
    &= \sum_{j=1}^2 \cos^2\biggl (\gamma_1, \frac{w_{1j}}{\lVert w_{1j}\rVert}\biggl) \lVert w_{1j}\rVert^2\nonumber\\
    &\geq \biggl(1-d_2d_3^{-\frac{3+\delta_w}{5-\delta_w}} \alpha^{\frac{3+\delta_w}{5-\delta_w}}\biggl)\lVert Z_1 \rVert_F^2\,.
\end{align}
Thus, if one can show that $\lVert \gamma_1\gamma_1^\top Z_1 \rVert_F^2$ is upper bounded, then when $\alpha$ is small, it directly implies $\lVert Z_1\rVert_F^2$ is bounded.
\begin{align}\label{appeqn:prove_z1_bounded1}
    \lVert \gamma_1\gamma_1^\top Z_1 \rVert_F^2 &= g^\top B_1B_1^\top g + v^\top A_1^\top A_1 v + 2g^\top B_1A_1 v\leq g^\top B_1B_1^\top g + v^\top A_1^\top A_1 v + \frac{(1-\delta_w)\sigma}{2\sigma_{W_2}}\,.
\end{align}
Due to the property that $A_1 A_1^\top-B_1^\top B_1 = \alpha^2 \Lambda_1$ is small, one can connect $g^\top B_1B_1^\top g + g^\top B_1B_1^\top g$ with $g^\top B_1A_1 v$. 
The following Lemma characterizes this formally.
\begin{lem}\label{applem:ab_square}
\begin{align}
    (g^\top B_1B_1^\top g)^2 &\leq(v^\top A_1^\top B_1^\top g)^2 + \alpha^2 \lVert D_1\rVert  g^\top B_1B_1^\top g 
    + 2\beta_2 \alpha^{\frac{3+\delta_w}{5-\delta_w}}\lVert Z_1\rVert_F^4\nonumber\\
    (v^\top A_1^\top A_1 v)^2 &\leq (v^\top B^\top A_1^\top g)^2 + \alpha^2 \lVert D_1\rVert  v^\top A_1^\top A_1 v 
    + 2\beta_2 \alpha^{\frac{3+\delta_w}{5-\delta_w}}\lVert Z_1\rVert_F^4\,.
\end{align}
\end{lem}
We refer the readers to Appendix~\ref{appprooflem:ab_square} for detailed proof of Lemma~\ref{applem:ab_square}. 
Based on Lemma~\ref{applem:ab_square}, one has
\begin{align}
    \biggl(v^\top A_1^\top A_1 v + g^\top B_1B_1^\top g\biggl)^2 &\leq 2(v^\top A_1^\top A_1 v)^2+2(g^\top B_1B_1^\top g)^2 \nonumber\\
    &\leq 2 (v^\top A_1^\top B_1^\top g)^2 + \alpha^2 \lVert D_1\rVert\biggl(v^\top A_1^\top A_1 v 
    + g^\top B_1B_1^\top g\biggl)+4\beta_2 \alpha^{\frac{3+\delta_w}{5-\delta_w}}\lVert Z_1\rVert_F^4\,.
\end{align}
And it leads to the following upper bound on $g^\top B_1B_1^\top g + v^\top A_1^\top A_1 v$
\begin{align}\label{appeqn:gammaZ_ab}
    g^\top B_1B_1^\top g + v^\top A_1^\top A_1 v &\leq \frac{\alpha^2 \lVert D_1\rVert + \sqrt{\alpha^4 \lVert D_1\rVert^2 + 8(g^\top B_1 A_1 v)^2 + 16\beta_2 \alpha^{\frac{3+\delta_w}{5-\delta_w}}\lVert Z_1\rVert_F^4}}{2}\nonumber\\
    &\leq \alpha^2 \lVert D_1\rVert + \sqrt{2} g^\top B_1 A_1 v+2\sqrt{\beta}\lVert Z_1\rVert_F^2 \alpha^{\frac{3+\delta_w}{10-2\delta_w}}
\end{align}
Then, we plug in the above equation to~\eqref{appeqn:prove_z1_bounded1}.
\begin{align}
    \lVert \gamma_1\gamma_1^\top Z_1\rVert_F^2 &\leq g^\top B_1B_1^\top g + v^\top A_1^\top A_1 v + \frac{(1-\delta_w)\sigma}{2\sigma_{W_2}} \nonumber\\
    &\leq \alpha^2 \lVert D_1\rVert + 2\sqrt{\beta}\lVert Z_1\rVert^2 \alpha^{\frac{3+\delta_w}{10-2\delta_w}} + \frac{(2+\sqrt{2})(1-\delta_w)\sigma}{4\sigma_{W_2}}\,.
\end{align}
Combine this result with ~\eqref{appeqn:gammaz_z}, we can derive the following upper bound on $\lVert Z_1\rVert$
\begin{align}
    \lVert Z_1\rVert &\leq \frac{\alpha^2 \lVert D_1\rVert+\frac{(2+\sqrt{2})(1-\delta_w)\sigma}{4\sigma_{W_2}}}{1-d_2d_3^{-\frac{3+\delta_w}{5-\delta_w}} \alpha^{\frac{3+\delta_w}{5-\delta_w}} - 2\sqrt{\beta}\lVert Z_1\rVert \alpha^{\frac{3+\delta_w}{10-2\delta_w}}} \nonumber\\
    &\leq 2\lVert D_1\rVert + \frac{(2+\sqrt{2})(1-\delta_w)\sigma}{2\sigma_{W_2}}:=d_4\,,
\end{align}
where the last inequality holds when $\alpha$ is sufficiently small.

The second case is when $\lVert D_2\rVert=\frac{(1-\delta_w)\sigma\sigma_{W_2}}{2}$ happens first. 
We argue that if one lets the training goes on, one of the following happens: 
either $\lVert D_1\rVert=\frac{(1-\delta_w)\sigma\sigma_{W_2}}{2}$ or $g^\top B_1A_1 v=\frac{(1-\delta)\sigma}{4\sigma_{W_2}}$. 
Let us use time $T_1'$ to denote the time that the above event happens.
If one can show that there exists constant $d$ such that $\lVert Z_1(t)\rVert_F^2 \alpha^d \geq \lVert Z_2(t)\rVert_F^2$ holds for all $\hat T_1\leq t\leq T_1'$,
then $\lVert D_1\rVert=\frac{(1-\delta_w)\sigma\sigma_{W_2}}{2}$ implies $g^\top B_1A_1 v=\frac{(1-\delta_w)\sigma}{4\sigma_{W_2}}$.
Moreover, it is obvious that $T_1' \leq \hat T_1+\hat T_2$.
In the rest of the proof, we will show that one can actually find the constant $d$ which is independent of $\alpha$.

Our starting point is that
\begin{align}
    &\frac{d}{dt}\log\biggl(\frac{\lVert Z_1\rVert_F^2}{\lVert Z_2\rVert_F^2}\biggl) \geq 2((1-\delta_w)\sigma\sigma_{W_2}-\lVert D_1\rVert-\lVert D_2\rVert)\,,\nonumber\\
    \Rightarrow&\log\biggl(\frac{\lVert Z_1(t)\rVert_F^2}{\lVert Z_2(t)\rVert_F^2}\biggl)-\log\biggl(\frac{\lVert Z_1(\hat T_1)\rVert_F^2}{\lVert Z_2(\hat T_1)\rVert_F^2}\biggl)
    \geq (1-\delta_w)\sigma\sigma_{W_2}(t-\hat T_1)-\int_{\hat T_1}^t \lVert D_2(s)\rVert ds\nonumber\\
    \Rightarrow& \frac{\lVert Z_1(t)\rVert_F^2}{\lVert Z_2(t)\rVert_F^2} \geq \frac{\lVert Z_1(\hat T_1)\rVert_F^2}{\lVert Z_2(\hat T_1)\rVert_F^2}\exp\bigl(-\int_{\hat T_1}^t \lVert D_2(s)\rVert ds\bigl)
    % \exp\biggl((1-\delta_w)\sigma\sigma_{W_2}(t-\hat T_1)\biggl)\,.
\end{align}
Therefore, it suffices to show that $\int_{\hat T_1}^{\hat T_1+T_1'} \lVert D_2(s)\rVert ds$ is upper bounded by a constant independent of $\alpha$. 
To show this, we first bound $\int_{\hat T_1}^{\hat T_1+T_1'} \lVert D_2(s)\rVert ds$ as follows
\begin{align}
    \lVert D_2\rVert_F^2 =& \lVert EA_1^\top B_1^\top -FW_1^\top\rVert_F^2\nonumber\\
    =&\lVert (uu^\top+u_{\perp}u_{\perp}^\top)(EA_1^\top B_1^\top -FW_1^\top)(gg^\top+g_{\perp}g_{\perp}^\top)\rVert_F^2\nonumber\\
    \leq& \lVert uu^\top(EA_1^\top B_1^\top -FW_1^\top)gg^\top\rVert_F^2\nonumber\\
    &+\lVert u_{\perp}u_{\perp}^\top(EA_1^\top B_1^\top -FW_1^\top)\rVert_F^2\nonumber\\
    &+\lVert uu^\top(EA_1^\top B_1^\top -FW_1^\top)g_{\perp}g_{\perp}^\top\rVert_F^2\nonumber\\
    \leq&\lVert uu^\top(E vv^\top A_1^\top B_1^\top -FW_1^\top)gg^\top\rVert_F^2 + \lVert uu^\top E v_{\perp}v_{\perp}^\top A_1^\top B_1^\top gg^\top\rVert_F^2\nonumber\\
    &+\lVert u_{\perp}u_{\perp}^\top F A_1^\top B_1^\top\rVert_F^2 + \lVert u_{\perp}u_{\perp}^\top FW_1^\top \rVert_F^2
    +\lVert EA_1^\top B_1^\top g_{\perp}g_{\perp}^\top\rVert_F^2 +\lVert FW_1^\top g_{\perp}g_{\perp}^\top\rVert_F^2\nonumber\\
    \leq&\lVert uu^\top((\Delta Y-W_2B_1A_1) vv^\top A_1^\top B_1^\top -W_2B_1A_1 W_1^\top)gg^\top\rVert_F^2 + \lVert uu^\top E v_{\perp}v_{\perp}^\top A_1^\top B_1^\top gg^\top\rVert_F^2\nonumber\\
    &+ \lVert B_2A_2W_1+B_2A_2B_1A_1\rVert_F^2\nonumber\\
    &+\lVert u_{\perp}u_{\perp}^\top F A_1^\top B_1^\top\rVert_F^2 + \lVert u_{\perp}u_{\perp}^\top FW_1^\top \rVert_F^2
    +\lVert EA_1^\top B_1^\top g_{\perp}g_{\perp}^\top\rVert_F^2 +\lVert FW_1^\top g_{\perp}g_{\perp}^\top\rVert_F^2\nonumber\\
    \leq&\lvert (\sigma-\sigma_{W_2} g^\top B_1A_1 v)g^\top B_1A_1 v-\sigma_{W_2}\sigma_{W_1} g^\top B_1A_1 v\rvert^2 + \lVert uu^\top E v_{\perp}v_{\perp}^\top A_1^\top B_1^\top gg^\top\rVert_F^2\nonumber\\
    &+ \lVert B_2A_2W_1+B_2A_2B_1A_1\rVert_F^2\nonumber\\
    &+\lVert u_{\perp}u_{\perp}^\top F A_1^\top B_1^\top\rVert_F^2 + \lVert u_{\perp}u_{\perp}^\top FW_1^\top \rVert_F^2
    +\lVert EA_1^\top B_1^\top g_{\perp}g_{\perp}^\top\rVert_F^2 +\lVert FW_1^\top g_{\perp}g_{\perp}^\top\rVert_F^2\nonumber\\
    \leq&\biggl \lvert (\sigma-\sigma_{W_2} g^\top B_1A_1 v)g^\top B_1A_1 v-\sigma_{W_2}\sigma_{W_1} g^\top B_1A_1 v\biggl \rvert^2 + \lVert uu^\top F v_{\perp}v_{\perp}^\top A_1^\top B_1^\top gg^\top\rVert_F^2\nonumber\\
    &+ \lVert B_2A_2\rVert_F^2 (\lVert W_1\rVert_F^2+\lVert B_1A_1\rVert_F^2)\nonumber\\
    &+\lVert u_{\perp}u_{\perp}^\top F A_1^\top B_1^\top\rVert_F^2 + \lVert u_{\perp}u_{\perp}^\top FW_1^\top \rVert_F^2
    +\lVert EA_1^\top B_1^\top g_{\perp}g_{\perp}^\top\rVert_F^2 +\lVert FW_1^\top g_{\perp}g_{\perp}^\top\rVert_F^2\,.
\end{align}
Notice since $g^\top B_1A_1 v\leq\frac{(1-\delta_w)\sigma}{4\sigma_{W_2}}$, we can show that 
\begin{align}
    \biggl \lvert (\sigma-\sigma_{W_2} g^\top B_1A_1 v)g^\top B_1A_1 v-\sigma_{W_2}\sigma_{W_1} g^\top B_1A_1 v\biggl \rvert 
    &\leq (\sigma-\sigma_{W_2} g^\top B_1A_1 v)g^\top B_1A_1 v + \sigma_{W_2}\sigma_{W_1} g^\top B_1A_1 v \nonumber\\
    &\leq \biggl(\frac{(3+\delta_w)\sigma}{4}+\sigma_{W_2}\sigma_{W_1}\biggl)g^\top B_1A_1 v\,.
\end{align}
Thus, we derive the following upper bound on $\lVert D_1\rVert$
\begin{align}
    \lVert D_1\rVert \leq \lVert D_1\rVert_F \leq& \biggl(\frac{(3+\delta_w)\sigma}{4}+\sigma_{W_2}\sigma_{W_1}\biggl)g^\top B_1A_1 v\nonumber\\
    &+\lVert uu^\top F v_{\perp}v_{\perp}^\top A_1^\top B_1^\top gg^\top\rVert_F+\lVert B_2A_2\rVert_F (\lVert W_1\rVert_F+\lVert B_1A_1\rVert_F)\nonumber\\
    &+\lVert u_{\perp}u_{\perp}^\top F A_1^\top B_1^\top\rVert_F + \lVert u_{\perp}u_{\perp}^\top FW_1^\top \rVert_F
    +\lVert EA_1^\top B_1^\top g_{\perp}g_{\perp}^\top\rVert_F +\lVert FW_1^\top g_{\perp}g_{\perp}^\top\rVert_F\,.
\end{align}

Notice except for the first term, the rest of the term either depends on $B_2A_2$ or $g_{\perp}g_{\perp}^\top B_1A_1$ or $B_1A_1v_{\perp}v_{\perp}^\top$.
Based on Lemma~\ref{applem:alignment_z1_to_a1b1} and the conditon that $\lVert Z_1(t)\rVert_F^2 \alpha^d \geq \lVert Z_2(t)\rVert_F^2$, we can conclude that 
the rest of the term is extremely small compared with the first term when $\alpha$ is sufficiently small which is formally characterized by the following lemma.
\begin{lem}\label{applem:bound_d2}
Under the following conditions
\begin{align}
    \lVert Z_1(t)\rVert_F^2 \alpha^d &\geq \lVert Z_2(t)\rVert_F^2 \nonumber\\
    \cos^2\biggl (\gamma_1, \frac{w_{1j}}{\lVert w_{1j}\rVert}\biggl)&\geq 1-d_2d_3^{-\frac{3+\delta_w}{5-\delta_w}} \alpha^{\frac{3+\delta_w}{5-\delta_w}}\,,\nonumber\\
    g^\top B_1A_1 v&\leq\frac{(1-\delta_w)\sigma}{4\sigma_{W_2}}
\end{align}
we can show that 
\begin{align}
    &\lVert uu^\top F v_{\perp}v_{\perp}^\top A_1^\top B_1^\top gg^\top\rVert_F+\lVert B_2A_2\rVert_F (\lVert W_1\rVert_F+\lVert B_1A_1\rVert_F)\nonumber\\
    &+\lVert u_{\perp}u_{\perp}^\top F A_1^\top B_1^\top\rVert_F + \lVert u_{\perp}u_{\perp}^\top FW_1^\top \rVert_F
    +\lVert EA_1^\top B_1^\top g_{\perp}g_{\perp}^\top\rVert_F +\lVert FW_1^\top g_{\perp}g_{\perp}^\top\rVert_F\nonumber\\
    &\leq \biggl(\frac{(3+\delta_w)\sigma}{4}+\sigma_{W_2}\sigma_{W_1}\biggl)g^\top B_1A_1 v\,.
\end{align}
\end{lem}
We refer the readers to Appendix~\ref{appprooflem:bound_d2} for the proof of Lemma~\ref{applem:bound_d2}. 
Thus, based on Lemma~\ref{applem:bound_d2} one can conclude that
\begin{align}
    \lVert D_2\rVert_F 
    &\leq \biggl(\frac{(3+\delta_w)\sigma}{2}+2\sigma_{W_2}\sigma_{W_1}\biggl)g^\top B_1A_1 v\,.
\end{align}
Thus, $\int_{\hat T_1}^{\hat T_1+T_1'} \lVert D_2(s)\rVert ds \leq \biggl(\frac{(3+\delta_w)\sigma}{2}+2\sigma_{W_2}\sigma_{W_1}\biggl) \int_{\hat T_1}^{\hat T_1+T_1'} g^\top B_1(s)A_1(s) v ds$.

Therefore, it suffices to show that $\int_{\hat T_1}^{\hat T_1+T_1'} g^\top B_1(s)A_1(s) v ds$ is bounded.
\begin{align}
    \frac{d}{dt} g^\top B_1A_1 v&= g^\top \dot B_1 A_1 v+   g^\top B_1 \dot A_1 v\nonumber\\
    &= g^\top \bigl(\sigma\sigma_{W_2}gv^\top+D_1\bigl)A_1^\top A_1 v+g^\top B_1 B_1^\top \bigl(\sigma\sigma_{W_2}gv^\top+D_1\bigl)v\nonumber\\
    &= \sigma\sigma_{W_2}(g^\top B_1B_1^\top g+v^\top A_1^\top A_1 v) + g^\top D_1 A_1^\top A_1 v+g^\top B_1B_1^\top D_1^\top v \nonumber\\
    &\geq \frac{(1+\delta_w)\sigma\sigma_{W_2}}{2} (g^\top B_1B_1^\top g+v^\top A_1^\top A_1^\top v) \nonumber\\
    &\geq (1+\delta_w)\sigma\sigma_{W_2} g^\top B_1A_1 v
\end{align}
where in the last inequality, we use Cauchy Swartz. One directly has
\begin{align}
    &g^\top A_1(T_1')B_1(T_1') v \geq g^\top A_1(\hat T_1)B_1(\hat T_1) v \geq (1+\delta_w)\sigma\sigma_{W_2} \int_{\hat T_1}^{T_1'+\hat T_1} g^\top B_1(s)A_1(s) v ds \nonumber\\
    \Rightarrow & \int_{\hat T_1}^{T_1'+\hat T_1} g^\top B_1(s)A_1(s) v ds \leq g^\top A_1(T_1')B_1(T_1') v =\frac{(1-\delta_w)\sigma}{4\sigma_{W_2}}\,,
\end{align}
which completes the proof.
\end{proof}
%%%%%%%%%%%%%%%%%%%%%%%%%%%%%%%%%%%%
\section{Proof of Theorem~\ref{thm:local_convergence_rank-one}}\label{app:proof_local_convergence}
In this section, we provide proof of Theorem~\ref{thm:local_convergence_rank-one}. 
In Appendix~\ref{app:thm-rank-one}, we have shown that at the end of \textit{alignment} phase, there is sufficient alignment and imbalance, and $g^\top B_1A_1 v$ has reached a constant order which is independent of $\alpha$.
In this section, we will show that how these properties lead to linear convergence of the loss until it reaches a neighourhood around the global minimum. 
We first present a detailed version of Theorem~\ref{thm:local_convergence_rank-one}.
\begin{thm}
Under the same setting as Theorem~\ref{thm:rank-one}, let $T_2\!=\!\frac{c_2\log\bigl(\frac{\sqrt{2L(0)}}{\alpha}\bigl)}{\sigma\sigma_{W_2}}$. Then, there exists constants $c_6, c_7$ such that for $\forall t\in[T_1,T_1\!+\!T_2]$, the following holds
\begin{enumerate}[leftmargin=0.45cm]
    \item Good Alignment of $\boldsymbol{U_{Z_1}^S}(t)$ with $\gamma_1$:
    \begin{align}
        \cos^2{(\boldsymbol{U_{Z_1}^S}(t), \gamma_1)} \!\geq \!\cos^{2c_6}{(\boldsymbol{U_{Z_1}^S}(T_1), \gamma_1)}\,.
    \end{align}
    \item Imbalance Between $\lVert Z_1(t)\rVert$ and $\lVert Z_2(t)\rVert$ Persists:
    \begin{align}
        \frac{\lVert Z_1(t)\rVert}{\lVert Z_2(t)\rVert} \!\geq\!\biggl( \frac{\lVert Z_1(T_1)\rVert}{\lVert Z_2(T_1)\rVert}\biggl)^{c_7}\,.
    \end{align}
    \item Loss Converges Linearly:
    \begin{align}
        L(t)\leq&2\exp\biggl(-\frac{\bigl(1\!-\!\frac{\lVert Z_2(T_1)\rVert^2_F}{\lVert Z_1(T_2)\rVert^2_F}\bigr)(1\!-\!\delta_w)\sigma\sigma_{W_2}(t\!-\!T_1)}{16}\biggl)L(T_1)\nonumber\\
        &\!+\!c_8\bigl(1-\cos^2{(\boldsymbol{U_{Z_1}^S}(T_1), \gamma_1)}\bigl)\,.
    \end{align}
\end{enumerate}
Moreover, by substituting the alignment and imbalance results from Claim~\ref{cor:end_of_alignment} and assuming $\alpha\!\leq\!\alpha^*$, we can simplify convergence rate in~\eqref{eqn:convergence_dependency} as follows:
\begin{align}
    L(t)\leq 2\exp\biggl(\frac{\!-\!(1\!-\!\delta_w)\sigma\sigma_{W_2}(t\!-\!T_1)}{32}\biggl)L(T_1) \!+\! 2\alpha^{c_3}\,.
\end{align}
\end{thm}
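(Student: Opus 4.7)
The plan is to establish all three claims simultaneously via a bootstrap/continuation argument on $[T_1, T_1+T_2]$. Let $T^*$ denote the supremum of $t\in[T_1,T_1+T_2]$ such that the alignment bound \eqref{eqn:alignment_lower_bdd} and the imbalance bound \eqref{eqn:imbalance_lower_bdd} both hold throughout $[T_1,t]$. By Claim~\ref{cor:end_of_alignment} and continuity, $T^*>T_1$. On $[T_1,T^*]$ I will derive the loss bound \eqref{eqn:convergence_dependency}, and then leverage this bound to re-derive the two invariants with strict slack, concluding by maximality that $T^*=T_1+T_2$.

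On $[T_1,T^*]$, the imbalance forces $\lVert B_2A_2\rVert\ll \sigma_{W_2}$, so $F\approx W_2B_1A_1$ and $E\approx \sigma \boldsymbol{u}\boldsymbol{v}^\top - W_2B_1A_1$, reducing the dominant gradient flow on $(A_1,B_1)$ to an approximate rank-one MF problem as indicated in \eqref{eqn:obj_approx}. Using the alignment hypothesis I will decompose $B_1A_1=\eta(t)\boldsymbol{g}\boldsymbol{v}^\top + R(t)$ with $\lVert R\rVert_F^2\lesssim (1-\cos^2(\boldsymbol{U_{Z_1}^S}(t),\boldsymbol{\gamma_1}))\lVert Z_1\rVert_F^4$. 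The loss then splits as $L=\tfrac12(\sigma-\sigma_{W_2}\eta(t))^2+L_{\mathrm{res}}(t)$, where $L_{\mathrm{res}}$ is controlled by $\lVert R\rVert_F$ plus cross terms of order $\lVert Z_2\rVert_F^2/\lVert Z_1\rVert_F^2$, eventually producing the misalignment floor $c_8(1-\cos^2(\boldsymbol{U_{Z_1}^S}(T_1),\boldsymbol{\gamma_1}))$ in \eqref{eqn:convergence_dependency}.

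For the aligned part I will establish a local PL inequality
\begin{align*}
-\dot L \;\geq\; \tfrac{1}{8}\Bigl(1-\tfrac{\lVert Z_2\rVert_F^2}{\lVert Z_1\rVert_F^2}\Bigr)(1-\delta_w)\sigma\sigma_{W_2}\,(L-L_{\mathrm{floor}}),
\end{align*}
by adapting the rank-one MF analysis of \cite{stoger2021small,jin2023understanding,pmlr-v195-soltanolkotabi23a}. Here the factor $1-\lVert Z_2\rVert_F^2/\lVert Z_1\rVert_F^2$ accounts for curvature absorbed by the parasitic $B_2A_2$ channel, while $(1-\delta_w)\sigma_{W_2}$ is the effective smallest relevant singular value of $W_2$ along the mode $(\boldsymbol{u},\boldsymbol{g})$. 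Gr\"onwall-integrating over $[T_1,t]$ gives \eqref{eqn:convergence_dependency}. Substituting the quantitative estimates from Claim~\ref{cor:end_of_alignment} and choosing $\alpha\leq\alpha^*$ small enough that $1-\lVert Z_2\rVert_F^2/\lVert Z_1\rVert_F^2\geq 1/2$ and the floor is at most $2\alpha^{c_3}$ yields the simplified rate \eqref{eqn:convergence_simplified}.

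The main obstacle is closing the bootstrap on the invariants, which is circular: the loss bound needs alignment and imbalance, and persistence of these is easiest to prove from the loss bound. For imbalance, I will reuse the inequality from Lemma~\ref{applem:grow_norm_imbalance} and bound $\int_{T_1}^t(\lVert D_1(s)\rVert+\lVert D_2(s)\rVert)\,ds$ by a constant multiple of $\int_{T_1}^t\sqrt{L(s)}\,ds$, which is finite thanks to the just-established linear decay; the power $c_7$ in \eqref{eqn:imbalance_lower_bdd} absorbs this constant drop. For alignment, I will apply a Wedin-type perturbation bound (Lemma~\ref{applem:wedin-sin}) to $Z_1(t)-Z_1(T_1)$, again controlled by $\int\sqrt{L}$, with $c_6$ chosen large enough to absorb the resulting bounded rotation of $\boldsymbol{U_{Z_1}^S}(t)$ away from $\boldsymbol{\gamma_1}$. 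Since all three estimates close on $[T_1,T^*]$ with strict slack, maximality forbids $T^*<T_1+T_2$, completing the argument.
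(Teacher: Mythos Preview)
Your overall architecture---bootstrap on the two invariants, signal/noise decomposition of $L$, a local PL inequality on the signal piece, then closing the loop---is the paper's strategy (the paper calls the pieces $L_S,L_N$ and reaches the ODE $\dot L_S\le-\tfrac{(1-\delta_w)\sigma\sigma_{W_2}}{16}L_S+f_1\sqrt{2L_S}$ via Lemmas~\ref{applem:lower_bound_Ls}--\ref{applem:LsLn_bound}), so the convergence step is fine.

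The gap is in how you close the bootstrap. Your claim $\lVert D_1\rVert+\lVert D_2\rVert\lesssim\sqrt{L}$ is false in this phase: since $F\to\Delta Y$ as $L\to 0$, the term $-W_2^\top F$ inside $D_1$ converges to $-W_2^\top\Delta Y$, so $\lVert D_1\rVert\to\sigma\sigma_{W_2}$, an $O(1)$ constant. For the \emph{imbalance} invariant this is easily repaired: the full coefficient $(W_2+B_2A_2)^\top E$ of the $Z_1$-flow (and $E(W_1+B_1A_1)^\top$ for $Z_2$) \emph{is} $O(\sqrt{L})$, so bypassing the $H_1/\hat D_1$ split and bounding $\bigl|\tfrac{d}{dt}\log\lVert Z_i\rVert_F^2\bigr|\lesssim\sqrt{L}$ directly yields a bounded multiplicative drop absorbed into $c_7$. (The paper actually uses the cruder $D_i$-split, swallows the resulting $O(T_2)=O(\log(1/\alpha))$ term, and closes via a self-consistency choice of exponent $d_6$.)

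For the \emph{alignment} invariant, however, your Wedin plan fails and has no simple repair. The increment $\lVert Z_1(t)-Z_1(T_1)\rVert\le\int\lVert\dot Z_1\rVert\lesssim\int\sqrt{L}$ is indeed finite, but it is $O(1)$ in the problem constants, while the leading singular value of $Z_1(T_1)$ is also only $O(1)$; Wedin then bounds the rotation of $\boldsymbol{U_{Z_1}^S}$ by $O(1)$, which would degrade the alignment from $1-O(\alpha^{c})$ down to $1-O(1)$ and destroy \eqref{eqn:alignment_lower_bdd}. The paper instead reuses the column-wise angle-ratio dynamics from Lemma~\ref{applem:grow_norm_imbalance}: the $\sigma\sigma_{W_2}H_1$ part of the flow only \emph{improves} alignment, so all degradation is charged to $\int\lVert D_1\rVert\,ds=O(T_2)=O(\log(1/\alpha))$. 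Because the initial alignment ``budget'' supplied by Claim~\ref{cor:end_of_alignment} is also $\Theta(\log(1/\alpha))$ on the log-ratio scale, the paper closes by finding an exponent $d_7>0$ solving a linear self-consistency condition so that a positive fraction of the budget survives; that exponent-matching is precisely what yields the power $c_6$, and it is the missing ingredient in your proposal.
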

\begin{proof}
We start with a decomposition of the loss into \textit{signal} and \textit{noise} part.
\begin{align}
    L \!=\!& \frac{1}{2} \lVert Y_{\mathrm{ft}} \!-\! (W_2\!+\!B_2A_2)(W_1\!+\!B_1A_1)\rVert_F^2 \nonumber\\
    \!=\!& \frac{1}{2} \lVert (uu^\top \!+\! u_{\perp}u_{\perp}^\top)\bigl( Y_{\mathrm{ft}} \!-\! (W_2\!+\!B_2A_2)(W_1\!+\!B_1A_1)\bigl)(vv^\top \!+\! v_{\perp}v_{\perp}^\top)\rVert_F^2 \nonumber\\
    \!=\!& \underbrace{\frac{1}{2}\lVert \sigma uv^\top\!-\!\sigma_{W_2}g^\top B_1A_1v\!-\!\sigma_{W_2} u^\top B_2A_2 g \!-\! uu^\top B_2A_2B_1A_1 vv^\top\rVert_F^2\,,}_{\text{Signal loss: } L_S}
    \!+\! \underbrace{\frac{1}{2}\lVert u_{\perp}u_{\perp}^\top F \!+\! uu^\top F v_{\perp}v_{\perp}^\top\rVert_F^2}_{\text{Noise loss: } L_N}
\end{align}
where $F=W_2B_1A_1+B_2A_2W_1+B_2A_2B_1A_1$. 
Our proof strategy it to show that $L_S$ converges linearly until it reaches a small value depending on $\alpha$ while $L_N$ remains small. 
Moreover, we will show that the alignment and imbalance we achieve in the \textit{alignment} phase remains good in the \textit{local convergence} phase.

\myparagraph{Bounds on several key quantities}
We first assume that for any $T_1\leq t\leq T_2$, there is sufficient alignment between each column of $Z_1$ with $\gamma_1$, and sufficient imbalance between $\lVert Z_1\rVert, \lVert Z_2\rVert$.
Moreover, the norm of $Z_1$ is bounded. 
\begin{align}
    \lVert Z_1\rVert_F \leq d_5, \quad \lVert Z_1\rVert_F^2 \geq \alpha^{-d_6}\lVert Z_2\rVert_F^2, \quad 
    \min_{j\in[r]}\cos^2\biggl (\gamma_1, \frac{w_{1j}}{\lVert w_{1j}\rVert}\biggl)&\geq 1-\alpha^{d_7}\,,
\end{align}
where $d_5, d_6, d_7$ are constants only depending on $W_1, W_2, \Delta Y$ and independent of $\alpha$. We will derive convergence results based on these constants.
In the end of this section, we will provide expressions for $d_5, d_6, d_7$.

\myparagraph{Bound on the noise loss $L_N$} 
\begin{align}
    L_N =& \frac{1}{2}\lVert u_{\perp}u_{\perp}^\top F + uu^\top F v_{\perp}v_{\perp}^\top\rVert_F^2 \nonumber \\
    \leq& \frac{1}{2}\lVert u_{\perp}u_{\perp}^\top F\rVert_F^2 + \frac{1}{2}\lVert uu^\top F v_{\perp}v_{\perp}^\top\rVert_F^2 \nonumber \\
    \leq& \frac{1}{2}\lVert u_{\perp}u_{\perp}^\top W_2 B_1A_1\rVert_F^2 + \frac{1}{2}\lVert u_{\perp}u_{\perp}^\top B_2A_2W_1\rVert_F^2 + \frac{1}{2}\lVert u_{\perp}u_{\perp}^\top B_2A_2B_1A_1\rVert_F^2\nonumber \\
    &+ \frac{1}{2}\lVert uu^\top W_2B_1A_1 v_{\perp}v_{\perp}^\top\rVert_F^2 + \frac{1}{2}\lVert uu^\top B_2A_2W_1 v_{\perp}v_{\perp}^\top\rVert_F^2 + \frac{1}{2}\lVert uu^\top B_2A_2B_1A_1 v_{\perp}v_{\perp}^\top\rVert_F^2 \nonumber \\
    \leq& \frac{1}{2}\lVert u_{\perp}u_{\perp}^\top W_2 B_1A_1\rVert_F^2 + \frac{1}{2}\lVert uu^\top W_2B_1A_1 v_{\perp}v_{\perp}^\top\rVert_F^2 + \biggl(\lVert W_1\rVert^2 + \lVert B_1A_1\rVert_F^2\biggl) \lVert B_2A_2\rVert_F^2\nonumber \\
    \leq& \frac{1}{2}\lVert W_2\rVert^2 \biggl(\lVert g_{\perp}g_{\perp}^\top B_1A_1\rVert_F^2+\lVert B_1A_1v_{\perp}v_{\perp}^\top\rVert_F^2\biggl) + \frac{1}{2}\lVert Z_2\rVert_F^2\biggl(\lVert W_1\rVert^2 + \frac{1}{2}\lVert Z_1\rVert_F^2\biggl) \nonumber\\
    \leq& r^2\lVert W_2\rVert^2 \sqrt{\alpha^{d_7}} d_5^4 + 
    \biggl(\lVert W_2\rVert^2 + \frac{d_5^2}{2}\biggl) \times \frac{1}{2}\alpha^{d_6} d_5^2\,,
\end{align}
%
% where in the last inequality, we apply Lemma~\ref{applem:bound_ab_perp}.
The above upper bound on $L_N$ demonstrates that the noise loss is small if the initialization scale $\alpha$ is small.

\myparagraph{Convergence on the signal loss $L_S$} 
For the convergence of $L_S$, we study $\dot L_S$ first.
\begin{align}
    \dot L_S &= \sum_{i=1}^2\bigl\langle \frac{L_S}{\partial A_i}, \dot A_i\bigl\rangle + \bigl\langle \frac{L_S}{\partial B_i}, \dot B_i\bigl\rangle \nonumber \\
    &= -\sum_{i=1}^2\bigl\langle \frac{L_S}{\partial A_i}, \frac{L}{\partial A_i}\bigl\rangle + \bigl\langle \frac{L_S}{\partial B_i}, \frac{L}{\partial B_i}\bigl\rangle\nonumber \\
    &= -\sum_{i=1}^2 \lVert\frac{L_S}{\partial A_i}\rVert_F^2 +\lVert\frac{L_S}{\partial B_i}\rVert_F^2 + \langle \frac{L_S}{\partial A_i}, \frac{L_N}{\partial A_i}\bigl\rangle + \bigl\langle \frac{L_S}{\partial B_i}, \frac{L_N}{\partial B_i}\bigl\rangle\nonumber \\
    &\leq -\lVert\frac{L_S}{\partial A_1}\rVert_F^2 -\lVert\frac{L_S}{\partial B_1}\rVert_F^2 - \sum_{i=1}^2 \langle \frac{L_S}{\partial A_i}, \frac{L_N}{\partial A_i}\bigl\rangle + \bigl\langle \frac{L_S}{\partial B_i}, \frac{L_N}{\partial B_i}\bigl\rangle\,.
\end{align}
In the following section, we provide lemmas that provide bounds for $\lVert\frac{L_S}{\partial A_1}\rVert_F^2 \!+\!\lVert\frac{L_S}{\partial B_1}\rVert_F^2$ and $\sum_{i=1}^2 \langle \frac{L_S}{\partial A_i}, \frac{L_N}{\partial A_i}\bigl\rangle + \bigl\langle \frac{L_S}{\partial B_i}, \frac{L_N}{\partial B_i}\bigl\rangle$ separately.
\begin{lem}\label{applem:lower_bound_Ls}
Under the condition that $\lVert Z_1\rVert_F \leq d_5, \quad \lVert Z_1\rVert_F^2 \geq \alpha^{-d_6}\lVert Z_2\rVert_F^2$, one has
\begin{align}
    \lVert\frac{L_S}{\partial A_1}\rVert_F^2+\lVert\frac{L_S}{\partial B_1}\rVert_F^2 \geq \lVert \gamma_1^\top Z_1\rVert_F^2\biggl(\sigma_{W_2}^2 \!-\! \frac{\alpha^{d_6} d_5^2}{2}\sigma_{W_2}\biggl) L_S\,.
\end{align}
\end{lem}
\begin{lem}\label{applem:lower_bound_gammaZ}
At the end of \textit{alignment} phase, one can show that the error satisfies
\begin{align}
    \lVert E(T_1)\rVert_F \leq \biggl(1-\frac{(1-\delta_w)}{8}\biggl)\sigma\,.
\end{align}
Then, since the loss under GF is non-increasing, we have $\lVert E(t)\rVert_F\leq \lVert E(T_1)\rVert_F$.
Based on this, one can show that for all $T_1 \leq t\leq T_2$, 
\begin{align}
    \lVert \gamma_1^\top Z_1\rVert_F^2 \geq\frac{(1-\delta_w)\sigma}{8\sigma_{W_2}}\,.
\end{align}
\end{lem}
\begin{rem}
In Lemma~\ref{applem:lower_bound_Ls}, we the following is implied in the intermediate step
\begin{align}
    \lVert\frac{L_S}{\partial A_1}\rVert_F^2+\lVert\frac{L_S}{\partial B_1}\rVert_F^2 \geq \sigma_{W_2}^2\lVert \gamma_1^\top Z_1\rVert_F^2\biggl(1 \!-\! \frac{\lVert Z_2\rVert_F^2}{\sigma_{W_2}}\biggl) L_S\,.
\end{align}
To bound $\lVert Z_2\rVert_F^2$, we use $\lVert Z_2\rVert_F^2 = \frac{\lVert Z_2\rVert_F^2}{\lVert Z_1\rVert_F^2}\lVert Z_1\rVert_F^2$, and we highlight the role of $\frac{\lVert Z_2\rVert_F^2}{\lVert Z_1\rVert_F^2}$ in the theorem.
\end{rem}
We refer the readers to Appendix~\ref{appprooflem:lower_bound_Ls} and Appendix~\ref{appprooflem:lower_bound_gammaZ} for the proof of Lemma~\ref{applem:lower_bound_Ls} and Lemma~\ref{applem:lower_bound_gammaZ}. 
By combining these results together, we can show that
\begin{align}\label{appeqn:Ls_bound}
    \lVert\frac{L_S}{\partial A_1}\rVert_F^2+\lVert\frac{L_S}{\partial B_1}\rVert_F^2 \geq \frac{(1-\delta_w)\sigma}{8} \biggl(\sigma_{W_2} \!-\! \frac{\alpha^{d_6} d_5^2}{2}\biggl) L_S 
    \geq \frac{(1-\delta_w)\sigma\sigma_{W_2}}{16} L_S\,.
\end{align}
Then, we show that $\sum_{i=1}^2 \langle \frac{L_S}{\partial A_i}, \frac{L_N}{\partial A_i}\bigl\rangle + \bigl\langle \frac{L_S}{\partial B_i}, \frac{L_N}{\partial B_i}\bigl\rangle$ is small.
\begin{lem}\label{applem:LsLn_bound}
One can derive the following upper bound
\begin{align}
    &\biggl\lvert \bigl\langle \frac{\partial L_S}{\partial A_1}, \frac{\partial L_N}{\partial A_1}\bigl\rangle\biggl\rvert \!+\!
    \biggl\lvert \bigl\langle \frac{\partial L_S}{\partial B_1}, \frac{\partial L_N}{\partial B_1}\bigl\rangle\biggl\rvert \!+\!
    \biggl\lvert \bigl\langle \frac{\partial L_S}{\partial A_2}, \frac{\partial L_N}{\partial A_2}\bigl\rangle\biggl\rvert \!+\!
    \biggl\lvert \bigl\langle \frac{\partial L_S}{\partial B_2}, \frac{\partial L_N}{\partial B_2}\bigl\rangle\biggl\rvert \nonumber\\
    \leq &  2\sqrt{2L_S}\bigl(2\lVert W_2\rVert_F^2 + 2\alpha^{d_6}d_5^2\bigl) \times d_5^2 \times \biggl(\lVert W_2\rVert \sqrt{\alpha^{d_7}}d_5^2 + \frac{1}{2}\alpha^{d_6}d_5^2 \times (\lVert W_1\rVert + \frac{d_5^2}{2})\biggl)\nonumber\\
    &+4\sqrt{2L_S} (\lVert W_1\rVert_F^2\!+\!\frac{1}{2}d_5^2)\alpha^{d_6}d_5^2 \biggl(\lVert W_2\rVert \sqrt{\alpha^{d_7}}d_5^2 \!+\! \frac{1}{2}\alpha^{d_6}d_5^2 \times (\lVert W_1\rVert \!+\! \frac{d_5^2}{2})\biggl)\,.
\end{align}
\end{lem}
We refer the readers to Appendix~\ref{appprooflem:LsLn_bound} for the proof of Lemma~\ref{applem:LsLn_bound}. For convenience, we use $f_1$ to denote 
\begin{align}
    f_1 =&  2\bigl(2\lVert W_2\rVert_F^2 + 2\alpha^{d_6}d_5^2\bigl) \times d_5^2 \times \biggl(\lVert W_2\rVert \sqrt{\alpha^{d_7}}d_5^2 + \frac{1}{2}\alpha^{d_6}d_5^2 \times (\lVert W_1\rVert + \frac{d_5^2}{2})\biggl)\nonumber\\
    &+4(\lVert W_1\rVert_F^2\!+\!\frac{1}{2}d_5^2)\alpha^{d_6}d_5^2 \biggl(\lVert W_2\rVert \sqrt{\alpha^{d_7}}d_5^2 \!+\! \frac{1}{2}\alpha^{d_6}d_5^2 \times (\lVert W_1\rVert \!+\! \frac{d_5^2}{2})\biggl)\,.
\end{align}
\begin{rem}
We are particular interested in the dependence of $f_1$ on $\alpha$. One can see that $f_1^2\sim\mathcal{O}(\alpha^{\min(d_7, 2d_6)})$.
\end{rem}
Then, based on ~\eqref{appeqn:Ls_bound} and Lemma~\ref{applem:LsLn_bound}, we can conclude
\begin{align}
    \dot L_S \leq -\frac{(1-\delta_w)\sigma\sigma_{W_2}}{16} L_S+f_1\sqrt{2L_S}\,.
\end{align}
For this ODE system, we can show that 
\begin{align}
    \sqrt{L_S} \leq \exp\biggl(-\frac{(1-\delta_w)\sigma\sigma_{W_2}(t-T_1)}{32}\biggl)\sqrt{L(T_1)} + \frac{32f_1}{(1-\delta_w)\sigma\sigma_{W_2}}\,,
\end{align}
which leads to
\begin{align}
    L_S(t) \leq 2\exp\biggl(-\frac{(1-\delta_w)\sigma\sigma_{W_2}(t-T_1)}{16}\biggl)L(T_1) + 2\biggl(\frac{32f_1}{(1-\delta_w)\sigma\sigma_{W_2}}\biggl)^2\,,
\end{align}
We choose $T_2$ such that
\begin{align}
    &\exp\biggl(-\frac{(1-\delta_w)\sigma\sigma_{W_2}(t-T_1)}{16}\biggl)L(T_1) = \biggl(\frac{32f_1}{(1-\delta_w)\sigma\sigma_{W_2}}\biggl)^2 \nonumber\\
    \iff & T_2=\frac{16}{(1-\delta_w)\sigma\sigma_{W_2}}\log\biggl(\frac{(1-\delta_w)^2\sigma^2\sigma_{W_2}^2L(T_1)}{1024f_1^2}\biggl)
\end{align}

In the remaining of the proof, we show the existence of $d_5, d_6, d_7$. First, for convenience, we use $f_2$ to denote the upper bound on $L_N$.
\begin{align}
    \frac{d}{dt}\lVert Z_1\rVert_F^2 &= 2\langle B_1, (W_2+B_2A_2)^\top E A_1^\top\rangle + 2\langle A_1, B_1^\top (W_2+B_2A_2)^\top E\rangle \nonumber\\
    &\leq 4\lVert A_1\rVert_F \lVert B_1\rVert_F \lVert W_2+B_2A_2\rVert_F \lVert E\rVert_F\nonumber\\
    &\leq 2\lVert Z_1\rVert_F^2 \lVert E\rVert_F (\lVert W_2\rVert_F + \frac{1}{2}d_5^2\alpha^{d_6})\,.
\end{align}
Thus, one can show that
\begin{align}
    \log (\lVert Z_1\rVert_F^2) \leq& \int_{T_1}^{T_1+T_2} (2\lVert W_2\rVert_F + \alpha^{d_6}d_5^2)\lVert E(s)\rVert_F ds\nonumber\\
    \leq& \int_{T_1}^{T_1+T_2} (2\lVert W_2\rVert_F + \alpha^{d_6} d_5^2)\biggl(\exp\biggl(-\frac{(1-\delta_w)\sigma\sigma_{W_2}(t-T_1)}{32}\biggl)\sqrt{L(T_1)} + \frac{32f_1}{(1-\delta_w)\sigma\sigma_{W_2}}\biggl) ds\nonumber\\
    =&\log (\lVert Z_1(T_1)\rVert_F^2) + (2\lVert W_2\rVert_F + \alpha^{d_6}d_5^2) \sqrt{L(T_1)} \frac{1-\frac{32f_1}{(1-\delta_w)\sigma\sigma_{W_2}}}{\frac{(1-\delta_w)\sigma\sigma_{W_2}}{32}} \nonumber\\
    &+ \biggl(\frac{32f_1L(T_1)}{(1-\delta_w)\sigma\sigma_{W_2}}+f_2\biggl) \frac{16}{(1-\delta_w)\sigma\sigma_{W_2}}\log\biggl(\frac{(1-\delta_w)^2\sigma^2\sigma_{W_2}^2L(T_1)}{1024f_1^2}\biggl)\,.
\end{align}
We set $\log(d_5)$ larger or equal to the RHS of the above inequality, denoted by $R_1(\alpha)$. 
Notice when $\alpha$ goes to zero, $R_1$ converges to $\log (\lVert Z_1(T_1)\rVert_F^2)+\frac{64\lVert W_2\rVert}{(1-\delta_w)\sigma\sigma_{W_2}}$.
Moreover, one can show that when $\alpha$ is sufficiently small, $\lVert Z_1(T_1)\rVert_F^2\leq 4g^\top B_1A_1v=(1-\delta_w)\sigma\sigma_{W_2}$.
We set $d_5^*=\exp\biggl(2\log ((1-\delta_w)\sigma\sigma_{W_2})+\frac{64\lVert W_2\rVert}{(1-\delta_w)\sigma\sigma_{W_2}}\biggl)$
Thus, when $\alpha$ is small enough, $R_1(\alpha)\leq \log(d_5^*)$, which verifies $\lVert Z_1(t)\rVert_F\leq d_5^*$ for all $T_1\leq t\leq T_1+T_2$.

For $d_6$, we have
\begin{align}
    \frac{d}{dt}\log\biggl(\frac{\lVert Z_1\rVert_F^2}{\lVert Z_2\rVert_F^2}\biggl) \geq& 2(1-\delta_w)\sigma_{W_2}-2\lVert D_1\rVert - 2\lVert D_2\rVert\nonumber\\
    \geq& 2(1-\delta_w)\sigma_{W_2} - 2\lVert A_2^\top B_2^\top E\rVert - 2\lVert W_2\rVert \lVert F\rVert - 2\lVert EA_1^\top B_1^\top\rVert - 2\lVert W_1\rVert \lVert F\rVert\nonumber\\
    \geq& 2(1-\delta_w)\sigma_{W_2}-(\lVert Z_1\rVert_F^2+\lVert Z_2\rVert_F^2)\rVert E\rVert_F-2(\lVert W_1\rVert+\lVert W_2\rVert)(\lVert \Delta Y\rVert+\lVert E(T_1)\rVert)\nonumber\\
    \geq& 2(1-\delta_w)\sigma_{W_2}-2(\lVert W_1\rVert+\lVert W_2\rVert)\lVert \Delta Y\rVert \nonumber\\
    &-2\bigl((d_5^*)^2(1+\alpha^{d_6})+\lVert W_1\rVert+\lVert W_2\rVert\bigl) \biggl(\exp\biggl(-\frac{(1-\delta_w)\sigma\sigma_{W_2}(t-T_1)}{32}\biggl)\sqrt{L(T_1)} + \frac{32f_1}{(1-\delta_w)\sigma\sigma_{W_2}}\biggl)
\end{align}
Thus, we have
\begin{align}
    \log\biggl(\frac{\lVert Z_1\rVert_F^2}{\lVert Z_2\rVert_F^2}\biggl) \geq& \biggl(2(1-\delta_w)\sigma_{W_2}-2(\lVert W_1\rVert+\lVert W_2\rVert)\Delta Y\rVert\biggl)\frac{16}{(1-\delta_w)\sigma\sigma_{W_2}}\log\biggl(\frac{(1-\delta_w)^2\sigma^2\sigma_{W_2}^2L(T_1)}{1024f_1^2}\biggl)\nonumber\\
    &-2\bigl((d_5^*)^2(1+\alpha^{d_6})+\lVert W_1\rVert+\lVert W_2\rVert\bigl) \frac{32f_1}{(1-\delta_w)\sigma\sigma_{W_2}}\frac{16}{(1-\delta_w)\sigma\sigma_{W_2}}\log\biggl(\frac{(1-\delta_w)^2\sigma^2\sigma_{W_2}^2L(T_1)}{1024f_1^2}\biggl)\nonumber\\
    &-2\bigl((d_5^*)^2(1+\alpha^{d_6})+\lVert W_1\rVert+\lVert W_2\rVert\bigl)\sqrt{L(T_1)}\frac{32f_1}{(1-\delta_w)\sigma\sigma_{W_2}} +  \log\biggl(\frac{\lVert Z_1(T_1)\rVert_F^2}{\lVert Z_2(T_1)\rVert_F^2}\biggl)
\end{align}
We use $R_2(\alpha)$ to denote the RHS of the above inequality. Then, when $\alpha$ goes to zero, the LHS is of the following order 
\begin{align}
    R_2(\alpha) \sim \log\biggl(\frac{1}{\alpha}\biggl)\biggl(\frac{2(1-\delta_w)}{5-\delta_w}+\frac{32\min(2d_6, d_7)\biggl(2(1-\delta_w)\sigma_{W_2}-2(\lVert W_1\rVert+\lVert W_2\rVert)\Delta Y\rVert\biggl)}{(1-\delta_w)\sigma\sigma_{W_2}}\biggl)
\end{align}

For $d_7$, for all $j\in[r]$, for any $\hat\gamma_1\perp \gamma_1$we have
\begin{align}
    \frac{d}{dt}\log\biggl(\frac{\cos(\gamma_1, \frac{w_{1j}}{\lVert w_{1j}\rVert})}{\cos(\hat\gamma_1, \frac{w_{1j}}{\lVert w_{1j}\rVert})}\biggl)\geq&\sigma\sigma_{W_2}-2\lVert D_1\rVert\nonumber\\
    \geq& \sigma\sigma_{W_2}-2\lVert A_2^\top B_2^\top E\rVert - 2\lVert W_2\rVert \lVert F\rVert  \nonumber\\
    \geq& \sigma\sigma_{W_2}-2\alpha^{d_6} (d_5^*)^2 \lVert E \rVert_F -2\lVert W_2\rVert (\lVert\Delta Y\rVert+\lVert E\rVert)\,.
\end{align}
Thus, one can show that
\begin{align}
    &\log\biggl(\frac{\cos(\gamma_1, \frac{w_{1j}}{\lVert w_{1j}\rVert})}{\cos(\hat\gamma_1, \frac{w_{1j}}{\lVert w_{1j}\rVert})}\biggl)\nonumber\\
    \geq&\log\biggl(\frac{\cos(\gamma_1, \frac{w_{1j}(T_1)}{\lVert w_{1j}(T_1)\rVert})}{\cos(\hat\gamma_1, \frac{w_{1j}(T_1)}{\lVert w_{1j}(T_1)\rVert})}\biggl)
    +(\sigma\sigma_{W_2}-2\lVert W_2\rVert \lVert\Delta Y\rVert)T_2\nonumber\\
    &-(2\alpha^{d_6} (d_5^*)^2+2\lVert W_2\rVert) T_2\biggl(\frac{32f_1L(T_1)}{(1-\delta_w)\sigma\sigma_{W_2}}+f_2\biggl) \frac{16}{(1-\delta_w)\sigma\sigma_{W_2}}\log\biggl(\frac{(1-\delta_w)^2\sigma^2\sigma_{W_2}^2L(T_1)}{1024f_1^2}\biggl)\nonumber\\
    &-(2\alpha^{d_6} (d_5^*)^2+2\lVert W_2\rVert)E(T_1)\frac{1-\frac{32f_1}{(1-\delta_w)\sigma\sigma_{W_2}}}{\frac{(1-\delta_w)\sigma\sigma_{W_2}}{32}}\,.
\end{align}
Let the RHS of the above inequality be $R_3(\alpha)$. We can show that when $\alpha$ goes to zero, $R_3$ is of the following order
\begin{align}
    R_3(\alpha) \sim \log(\frac{1}{\alpha})\biggl(\frac{3+\delta_w}{5-\delta_w} -
    \sigma(2\lVert W_2\rVert-\sigma_{W_2})\frac{32\min(2d_6, d_7)}{(1-\delta_w)\sigma\sigma_{W_2}}\biggl)
\end{align}
Following the same argument of providing lower bound on $d_5$, one only needs to ensure
\begin{align}
    \frac{2(1-\delta_w)}{5-\delta_w}+\frac{32\min(2d_6, d_7)\biggl(2(1-\delta_w)\sigma_{W_2}-2(\lVert W_1\rVert+\lVert W_2\rVert)\Delta Y\rVert\biggl)}{(1-\delta_w)\sigma\sigma_{W_2}} \geq \frac{1}{2}d_6\nonumber\\
    \frac{3+\delta_w}{5-\delta_w} -
    \sigma(2\lVert W_2\rVert-\sigma_{W_2})\frac{32\min(2d_6, d_7)}{(1-\delta_w)\sigma\sigma_{W_2}} \geq \frac{1}{2}d_7 \,.
\end{align}
It is obvious that there exists $d_6, d_7$ such that the above inequality holds.
\end{proof}

%%%%%%%%%%%%%%%%%%%%%%%%%%%%%%%%%%%%
\section{Proof of Several Lemmas Used in Appendix~\ref{app:proof_alignment} and Appendix~\ref{app:proof_local_convergence}}\label{app:proof_lemma}
\subsection{Proof of Lemma~\ref{applem:grow_norm_imbalance}}\label{appprooflem:grow_norm_imbalance}
\begin{proof}
We start with characterizing each column of $Z_1$ align with $\gamma_1$. Let $W_{1j}=Z_1e_j, \forall j\in[r]$ where $e_j$ is the standard basis. Then, we have
\begin{align}\label{appeqn:dynamics_wij}
    \dot w_{1j} = \sigma\sigma_{W_2}H_1w_{1j}+\hat D_1w_{1j}\,.
\end{align}
Then, based on~\eqref{appeqn:dynamics_wij}, we can characterize the growth of $\lVert w_{1j}\rVert$ as follows
\begin{align}
    \frac{d}{dt} \lVert w_{1j}\rVert^2 &= 2\langle w_{1j}, \dot w_{1j}\rangle\nonumber\\
    &\leq 2(\sigma\sigma_{W_2}+\lVert \hat D_1\rVert) \lVert w_{1j}\rVert^2\nonumber\\
    &=2(\sigma\sigma_{W_2}+\lVert D_1\rVert) \lVert w_{1j}\rVert^2\,, && \text{Apply Lemma~\ref{applem:hermitian_A}}
\end{align}
Moreover, one can use a similar argument to derive the lower bound on the growth of $\lVert w_{1j}\rVert^2$
\begin{align}
    \frac{d}{dt} \lVert w_{1j}\rVert^2 &= 2\langle w_{1j}, \dot w_{1j}\rangle\nonumber\\
    &\geq 2(\sigma\sigma_{W_2}-\lVert \hat D_1\rVert) \lVert w_{1j}\rVert^2\nonumber\\
    &=2(\sigma\sigma_{W_2}-\lVert D_1\rVert) \lVert w_{1j}\rVert^2\,, && \text{Apply Lemma~\ref{applem:hermitian_A}}
\end{align}
Since $\lVert Z_1\rVert_F^2=\sum_{j=1}^r \lVert w_{1j}\rVert^2$, therefore one has
\begin{align}
    \frac{d}{dt} \lVert Z_1\rVert_F^2&=\sum_{j=1}^r \frac{d}{dt} \lVert w_{1j}\rVert^2 \leq 2(\sigma\sigma_{W_2}+\lVert D_1\rVert) \sum_{j=1}^r\lVert w_{1j}\rVert^2=2(\sigma\sigma_{W_2}+\lVert D_1\rVert)\lVert Z_1\rVert_F^2\,,\nonumber\\
    \frac{d}{dt} \lVert Z_1\rVert_F^2&=\sum_{j=1}^r \frac{d}{dt} \lVert w_{1j}\rVert^2 \geq 2(\sigma\sigma_{W_2}-\lVert D_1\rVert) \sum_{j=1}^r\lVert w_{1j}\rVert^2=2(\sigma\sigma_{W_2}-\lVert D_1\rVert)\lVert Z_1\rVert_F^2\,,
\end{align}
The above equations yield the following
\begin{align}
    2(\sigma\sigma_{W_2}-\lVert D_1\rVert) \leq \frac{d}{dt}\log(\lVert Z_1\rVert_F^2) \leq 2(\sigma\sigma_{W_2}+\lVert D_1\rVert)\,.
\end{align}
A similar results hold for $\lVert Z_2\rVert_F^2$ respectively. 
Thus, one can show the following characterization of the growth of $\frac{\lVert Z_1\rVert_F^2}{\lVert Z_2\rVert_F^2}$ as follows
\begin{align}
    \frac{d}{dt}\log\biggl(\frac{\lVert Z_1\rVert_F^2}{\lVert Z_2\rVert_F^2}\biggl) \geq 2(\sigma\sigma_{W_2}-\lVert D_1\rVert-\sigma\sigma_{W_1}-\lVert D_2\rVert)=2\bigl((1-\delta_w)\sigma\sigma_{W_2}-\lVert D_1\rVert-\lVert D_2\rVert\bigl)\,.
\end{align}
Now, we move on to the study of the angular dynamics of $w_{1j}$.
\begin{align}
    \frac{d}{dt}\frac{w_{1j}}{\lVert w_{1j}\rVert} &= \frac{\dot w_{1j}}{\lVert w_{1j}\rVert}-\frac{w_{1j}}{\lVert w_{1j}\rVert^2}\cdot \frac{\dot w_{1j}^\top w_{1j}}{\lVert w_{1j}\rVert}\nonumber\\
    &=\sigma\sigma_{W_2}\biggl(I-\frac{w_{1j}w_{1j}^\top}{\lVert w_{1j}\rVert^2}\biggl)\frac{H_1 w_{1j}}{\lVert w_{1j}\rVert}+\biggl(I-\frac{w_{1j}w_{1j}^\top}{\lVert w_{1j}\rVert^2}\biggl)\frac{\hat D_1w_{1j}}{\lVert w_{1j}\rVert}\,.
\end{align}
For $\gamma_1$, we have
\begin{align}\label{appeqn:angle_1}
    \frac{d}{dt}\cos\biggl(\gamma_1, \frac{w_{1j}}{\lVert w_{1j}\rVert}\biggl) =& \biggl\langle \gamma_1, \sigma\sigma_{W_2}\biggl(I-\frac{w_{1j}w_{1j}^\top}{\lVert w_{1j}\rVert^2}\biggl)\frac{H_1 w_{1j}}{\lVert w_{1j}\rVert}+\biggl(I-\frac{w_{1j}w_{1j}^\top}{\lVert w_{1j}\rVert^2}\biggl)\frac{\hat D_1w_{1j}}{\lVert w_{1j}\rVert}\biggl\rangle \nonumber\\
    =&\sigma\sigma_{W_2}\gamma_1^\top \biggl(I-\frac{w_{1j}w_{1j}^\top}{\lVert w_{1j}\rVert^2}\biggl) \biggl(\frac{(\gamma_1\gamma_1^\top-\bar\gamma_1\bar\gamma_1^\top)w_{1j}}{\lVert w_{1j}\rVert}\biggl) + \frac{\gamma_1^\top\hat D_1 w_{1j}}{\lVert w_{1j}\rVert} - \cos\biggl(\gamma_1, \frac{w_{1j}}{\lVert w_{1j}\rVert}\biggl)\frac{w_{1j}^\top\hat D_1 w_{1j}}{\lVert w_{1j}\rVert^2}\nonumber\\
    =&\sigma\sigma_{W_2}\cos\biggl(\gamma_1, \frac{w_{1j}}{\lVert w_{1j}\rVert}\biggl)\biggl[1-\cos^2\biggl(\gamma_1, \frac{w_{1j}}{\lVert w_{1j}\rVert}\biggl)\biggl] + \sigma\sigma_{W_2}\cos\biggl(\gamma_1, \frac{w_{1j}}{\lVert w_{1j}\rVert}\biggl)\cos^2\biggl(\bar\gamma_1, \frac{w_{1j}}{\lVert w_{1j}\rVert}\biggl)\nonumber\\
    &+\frac{\gamma_1^\top\hat D_1 w_{1j}}{\lVert w_{1j}\rVert} - \cos\biggl(\gamma_1, \frac{w_{1j}}{\lVert w_{1j}\rVert}\biggl)\frac{w_{1j}^\top\hat D_1 w_{1j}}{\lVert w_{1j}\rVert^2}
\end{align}
For $\bar\gamma_1$, we have
\begin{align}\label{appeqn:angle_2}
    \frac{d}{dt}\cos\biggl(\bar\gamma_1, \frac{w_{1j}}{\lVert w_{1j}\rVert}\biggl) =& \biggl\langle \bar\gamma_1, \sigma\sigma_{W_2}\biggl(I-\frac{w_{1j}w_{1j}^\top}{\lVert w_{1j}\rVert^2}\biggl)\frac{H_1 w_{1j}}{\lVert w_{1j}\rVert}+\biggl(I-\frac{w_{1j}w_{1j}^\top}{\lVert w_{1j}\rVert^2}\biggl)\frac{\hat D_1w_{1j}}{\lVert w_{1j}\rVert}\biggl\rangle \nonumber\\
    =&\sigma\sigma_{W_2}\bar\gamma_1^\top \biggl(I-\frac{w_{1j}w_{1j}^\top}{\lVert w_{1j}\rVert^2}\biggl) \biggl(\frac{(\gamma_1\gamma_1^\top-\bar\gamma_1\bar\gamma_1^\top)w_{1j}}{\lVert w_{1j}\rVert}\biggl) + \frac{\bar\gamma_1^\top\hat D_1 w_{1j}}{\lVert w_{1j}\rVert} - \cos\biggl(\bar\gamma_1, \frac{w_{1j}}{\lVert w_{1j}\rVert}\biggl)\frac{w_{1j}^\top\hat D_1 w_{1j}}{\lVert w_{1j}\rVert^2}\nonumber\\
    =&-\sigma\sigma_{W_2}\cos\biggl(\bar\gamma_1, \frac{w_{1j}}{\lVert w_{1j}\rVert}\biggl)\biggl[1-\cos^2\biggl(\bar\gamma_1, \frac{w_{1j}}{\lVert w_{1j}\rVert}\biggl)\biggl] - \sigma\sigma_{W_2}\cos\biggl(\bar\gamma_1, \frac{w_{1j}}{\lVert w_{1j}\rVert}\biggl)\cos^2\biggl(\gamma_1, \frac{w_{1j}}{\lVert w_{1j}\rVert}\biggl)\nonumber\\
    &+\frac{\bar\gamma_1^\top\hat D_1 w_{1j}}{\lVert w_{1j}\rVert} - \cos\biggl(\bar\gamma_1, \frac{w_{1j}}{\lVert w_{1j}\rVert}\biggl)\frac{w_{1j}^\top\hat D_1 w_{1j}}{\lVert w_{1j}\rVert^2}
\end{align}
For any unit vector $\tilde \gamma_1 \perp \gamma_1, \bar\gamma_1$, we have
\begin{align}\label{appeqn:angle_3}
    \frac{d}{dt}\cos\biggl(\tilde\gamma_1, \frac{w_{1j}}{\lVert w_{1j}\rVert}\biggl) =& \biggl\langle \tilde\gamma_1, \sigma\sigma_{W_2}\biggl(I-\frac{w_{1j}w_{1j}^\top}{\lVert w_{1j}\rVert^2}\biggl)\frac{H_1 w_{1j}}{\lVert w_{1j}\rVert}+\biggl(I-\frac{w_{1j}w_{1j}^\top}{\lVert w_{1j}\rVert^2}\biggl)\frac{\hat D_1w_{1j}}{\lVert w_{1j}\rVert}\biggl\rangle \nonumber\\
    =&\sigma\sigma_{W_2}\tilde\gamma_1^\top \biggl(I-\frac{w_{1j}w_{1j}^\top}{\lVert w_{1j}\rVert^2}\biggl) \biggl(\frac{(\gamma_1\gamma_1^\top-\bar\gamma_1\bar\gamma_1^\top)w_{1j}}{\lVert w_{1j}\rVert}\biggl) + \frac{\tilde\gamma_1^\top\hat D_1 w_{1j}}{\lVert w_{1j}\rVert} - \cos\biggl(\tilde\gamma_1, \frac{w_{1j}}{\lVert w_{1j}\rVert}\biggl)\frac{w_{1j}^\top\hat D_1 w_{1j}}{\lVert w_{1j}\rVert^2}\nonumber\\
    =&-\sigma\sigma_{W_2}\cos\biggl(\tilde\gamma_1, \frac{w_{1j}}{\lVert w_{1j}\rVert}\biggl)\biggl[\cos^2\biggl(\gamma_1, \frac{w_{1j}}{\lVert w_{1j}\rVert}\biggl)-\cos^2\biggl(\bar\gamma_1, \frac{w_{1j}}{\lVert w_{1j}\rVert}\biggl)\biggl] \nonumber\\
    &+\frac{\tilde\gamma_1^\top\hat D_1 w_{1j}}{\lVert w_{1j}\rVert} - \cos\biggl(\tilde\gamma_1, \frac{w_{1j}}{\lVert w_{1j}\rVert}\biggl)\frac{w_{1j}^\top\hat D_1 w_{1j}}{\lVert w_{1j}\rVert^2}
\end{align}
Based on ~\eqref{appeqn:angle_1}, \eqref{appeqn:angle_2} and \eqref{appeqn:angle_3}, we can show 
\begin{align}
    \frac{d}{dt}\log\biggl(\frac{\cos(\gamma_1, \frac{w_{1j}}{\lVert w_{1j}\rVert})}{\cos(\bar\gamma_1, \frac{w_{1j}}{\lVert w_{1j}\rVert})}\biggl) \geq 2\sigma\sigma_{W_2}-2\lVert D_1\rVert\,, \quad \frac{d}{dt}\log\biggl(\frac{\cos(\gamma_1, \frac{w_{1j}}{\lVert w_{1j}\rVert})}{\cos(\tilde\gamma_1, \frac{w_{1j}}{\lVert w_{1j}\rVert})}\biggl) \geq \sigma\sigma_{W_2}-2\lVert D_1\rVert\,.
    % \frac{d}{dt}\log\biggl(\frac{\cos(\gamma_1, \frac{w_{1j}}{\lVert w_{1j}\rVert)}{\cos(\tilde \gamma_1,\frac{w_{1j}}{\lVert w_{1j}\rVert)}\biggl) &\geq \sigma\sigma_{W_2}-2\lVert D_1\rVert\,.
\end{align}
Thus, for any unit vector $\hat\gamma_1$ that is orthogonal to $\gamma_1$, we have
\begin{align}
    \frac{d}{dt}\log\biggl(\frac{\cos(\gamma_1, \frac{w_{1j}}{\lVert w_{1j}\rVert})}{\cos(\hat\gamma_1, \frac{w_{1j}}{\lVert w_{1j}\rVert})}\biggl) \geq \sigma\sigma_{W_2}-2\lVert D_1\rVert\,.
\end{align}
% Now, we select $\hat \gamma_{1,2}, \hat \gamma_{1,3}, \cdots, \hat \gamma_{1,n+h}$ that forms an orthogonal basis for $\R^{n+h}$, since the following holds
% \begin{align}
%     \cos^2\biggl (\gamma_1, \frac{w_{1j}}{\lVert w_{1j}\rVert}\biggl)+\sum_{i=2}^{m+h} \cos^2\biggl (\gamma_{1, i}, \frac{w_{1j}}{\lVert w_{1j}\rVert}\biggl) = 1
% \end{align}
% Moreover, we characterize the dynamics of $\frac{w_{1j}}{\lVert w_{1j}\rVert}$ towards the direction $\gamma_1, \bar\gamma_1, \tilde \gamma_1$ where $\tilde\gamma_1$ is any unit vector that is orthogonal to $\gamma_1, \bar\gamma_1$ as follows
\end{proof}

\subsection{Proof of Lemma~\ref{applem:grow_norm_imbalance2}}\label{appprooflem:grow_norm_imbalance2}
\begin{proof}
Since for all $0\leq t\leq T_1(\delta_1, \delta_2)$, we have $\lVert D_1(T_1(\delta_1, \delta_2))\rVert\leq\delta_1, \lVert D_2(T_1(\delta_1, \delta_2))\rVert\leq\delta_2$, 
then one can further derive the following based on Lemma~\ref{applem:grow_norm_imbalance}
\begin{align}
    \frac{d}{dt}\log(\lVert w_{1j}\rVert^2)&\leq 2(\sigma\sigma_{W_2}+\delta_1)\,,\quad \frac{d}{dt}\log(\lVert Z_1\rVert_F^2)\leq 2(\sigma\sigma_{W_2}+\delta_1)\nonumber\\
    \frac{d}{dt}\log\biggl(\frac{\lVert Z_1\rVert_F^2}{\lVert Z_2\rVert_F^2}\biggl) &\geq 2((1-\delta_w)\sigma\sigma_{W_2}-\delta_1-\delta_2)>0\,,\nonumber\\
    \frac{d}{dt}\log\biggl(\frac{\cos(\gamma_1, \frac{w_{1j}}{\lVert w_{1j}\rVert})}{\cos(\hat\gamma_1, \frac{w_{1j}}{\lVert w_{1j}\rVert})}\biggl)&\geq\sigma\sigma_{W_2}-2\delta_1>0\,.
\end{align}
Therefore, we can show that for all $0\leq t\leq T_1(\delta_1, \delta_2)$
\begin{align}
    \lVert w_{1j}(t)\rVert^2 &\leq \exp\biggl(2(\sigma\sigma_{W_2}+\delta_1)t\biggl) \lVert w_{1j}(0)\rVert^2 \nonumber\\
    \lVert Z_1(t)\rVert_F^2 &\leq \exp\biggl(2(\sigma\sigma_{W_2}+\delta_1)t\biggl) \lVert Z_1(0)\rVert_F^2\nonumber\\
    \frac{\lVert Z_1(t)\rVert_F^2}{\lVert Z_2(t)\rVert_F^2} &\geq \exp\biggl (2\bigl ((1-\delta_w)\sigma\sigma_{W_2}-\delta_1-\delta_2\bigl)t \biggl)\frac{\lVert Z_1(0)\rVert_F^2}{\lVert Z_2(0)\rVert_F^2}\nonumber\\
    \frac{\cos\biggl (\gamma_1, \frac{w_{1j}\bigl(t\bigl)}{\lVert w_{1j}(t)\rVert}\biggl)}{\cos\biggl(\hat\gamma_1, \frac{w_{1j}(t)}{\lVert w_{1j}(t)\rVert}\biggl)} &\geq \exp\bigl((\sigma\sigma_{W_2}-2\delta_1)t\bigl)
    \frac{\cos\biggl (\gamma_1, \frac{w_{1j}(0)}{\lVert w_{1j}(0)\rVert}\biggl)}{\cos\biggl(\hat\gamma_1, \frac{w_{1j}(0)}{\lVert w_{1j}(0)\rVert}\biggl)}
\end{align}
We select $\hat \gamma_{1,2}, \hat \gamma_{1,3}, \cdots, \hat \gamma_{1,n+h}$ that forms an orthogonal basis for $\R^{n+h}$, since the following holds
\begin{align}
    \cos^2\biggl (\gamma_1, \frac{w_{1j}}{\lVert w_{1j}\rVert}\biggl)+\sum_{i=2}^{m+h} \cos^2\biggl (\gamma_{1, i}, \frac{w_{1j}}{\lVert w_{1j}\rVert}\biggl) = 1
\end{align}
One can lower bound $\cos\biggl (\gamma_1, \frac{w_{1j}(t)}{\lVert w_{1j}(t)\rVert}\biggl)$ as follows
\begin{align}
    1 &= \cos^2\biggl (\gamma_1, \frac{w_{1j}}{\lVert w_{1j}\rVert}\biggl)+\sum_{i=2}^{m+h} \cos^2\biggl (\gamma_{1, i}, \frac{w_{1j}}{\lVert w_{1j}\rVert}\biggl)\nonumber\\
    &\leq \cos^2\biggl (\gamma_1, \frac{w_{1j}}{\lVert w_{1j}\rVert}\biggl) + (m+h-1)\cos^2\biggl (\gamma_1, \frac{w_{1j}}{\lVert w_{1j}\rVert}\biggl) \exp\bigl(-2(\sigma\sigma_{W_2}-2\delta_1)t\bigl) c_{\max}\nonumber\\
    &= \cos^2\biggl (\gamma_1, \frac{w_{1j}}{\lVert w_{1j}\rVert}\biggl)\times \biggl( 1 + c_{\max}(m+h-1)\exp\bigl(-2(\sigma\sigma_{W_2}-2\delta_1)t\bigl)\biggl)\nonumber\\
    &\iff \cos^2\biggl (\gamma_1, \frac{w_{1j}}{\lVert w_{1j}\rVert}\biggl) \geq \frac{1}{1+d_2\exp\bigl(-2(\sigma\sigma_{W_2}-2\delta_1)t\bigl)}
\end{align}
where in the above equation, $\frac{w_{1j}}{\lVert w_{1j}\rVert}$ is its value evaluated at $t$. We omit the time dependency for simplicity.
\end{proof}

\subsection{Proof of Lemma~\ref{applem:grow_norm_imbalance3}}\label{appprooflem:grow_norm_imbalance3}
\begin{proof}
Under the assumption that $\delta_1\!=\!\delta_2 \!=\!\delta\!:=\! \alpha$, we first derive an upper bound on $\lVert D_1\rVert$.
\begin{align}
    \lVert D_1\rVert =& \lVert A_2^\top B_2^\top E- W_2^\top(W_2B_1A_1+B_2A_2W_1+B_2A_2B_1A_1)\rVert \nonumber\\
    \leq& \lVert A_2^\top B_2^\top\rVert \cdot \lVert E\rVert+\lVert W_2\rVert \cdot\bigl(\lVert W_2\rVert\cdot \lVert B_2A_2\rVert+\lVert W_1\rVert\cdot \lVert B_1A_1\rVert+\lVert B_1A_1\rVert\cdot \lVert B_2A_2\rVert \bigl)\nonumber\\
    \leq& (\lVert E\rVert+\lVert W_2\rVert^2) \lVert B_2A_2\rVert+\lVert W_1\rVert\cdot \lVert W_2\rVert\cdot \lVert B_1A_1\rVert + \lVert W_2\rVert\cdot \lVert B_1A_1\rVert\cdot\lVert B_2A_2\rVert\nonumber \\
    \leq& (\lVert E\rVert+\lVert W_2\rVert^2) \lVert Z_2\rVert_F^2+\lVert W_1\rVert\cdot \lVert W_2\rVert\cdot \lVert Z_1\rVert_F^2 + \frac{\lVert W_2\rVert}{4}\cdot \lVert Z_1\rVert_F^2\cdot\lVert Z_2\rVert_F^2 && \text{Lemma~\ref{applem:2-fro}} \nonumber \\
    \leq& \alpha^2 z_{\max}^2(\lVert E\rVert\!+\!\lVert W_2\rVert^2)\exp(2(\sigma\sigma_{W_1}\!+\!\delta_2)t) + \alpha^2 z_{\max}^2\lVert W_1\rVert\cdot \lVert W_2\rVert \exp(2(\sigma\sigma_{W_2}\!+\!\delta_1)t) \nonumber \\
    &\!+\!  \alpha^4 z_{\max}^4\lVert W_2\rVert \exp(2((1+\delta_w)\sigma\sigma_{W_2}\!+\!\delta_1\!+\!\delta_2)t) && \text{Lemma~\ref{applem:grow_norm_imbalance2}} \nonumber\\
    \leq& \alpha^2 z_{\max}^2(\lVert E\rVert\!+\!\lVert W_2\rVert^2+\lVert W_1\rVert\cdot \lVert W_2\rVert)\exp(2(\sigma\sigma_{W_2}\!+\!\delta)t) 
    \!+\!  \alpha^4 z_{\max}^4\lVert W_2\rVert \exp(4(\sigma\sigma_{W_2}\!+\delta)t)\,.
\end{align}
where in the last inequality, we use the property that $\sigma_{W_2}>\sigma_{W_1}$. Similarly, we can show that 
\begin{align}
    \lVert D_2\rVert\leq \alpha^2 z_{\max}^2(\lVert E\rVert\!+\!\lVert W_1\rVert^2+\lVert W_1\rVert\cdot \lVert W_2\rVert)\exp(2(\sigma\sigma_{W_2}\!+\!\delta)t) 
    \!+\!  \alpha^4 z_{\max}^4\lVert W_1\rVert \exp(4(\sigma\sigma_{W_2}\!+\delta)t)\,.
\end{align}
Furthermore, one can derive the following union upper bound on $\lVert D_1\rVert, \lVert D_2\rVert$
\begin{align}\label{appeqn:union_ubb_d1d2}
    \max(\lVert D_1\rVert, \lVert D_2\rVert)\leq& \alpha^2 z_{\max}^2(\lVert E\rVert\!+\!\lVert W_1\rVert^2+\lVert W_1\rVert\lVert W_2\rVert+\!\lVert W_2\rVert^2)\exp(2(\sigma\sigma_{W_2}\!+\!\delta)t) \nonumber\\
    &\!+\!  \alpha^4 z_{\max}^4(\lVert W_1\rVert\!+\!\lVert W_2\rVert) \exp(4(\sigma\sigma_{W_2}\!+\delta)t)\,.
\end{align}

Then, under the assumption that $\alpha\leq \frac{(1-\delta_w)\sigma\sigma_{W_2}}{4}$, we will derive a lower bound on $T_1(\delta, \delta)$ by studying the following inequality 
\begin{align}
    \delta\geq& \alpha^2 z_{\max}^2(\lVert E\rVert\!+\!\lVert W_1\rVert^2+\lVert W_1\rVert\lVert W_2\rVert+\!\lVert W_2\rVert^2)\exp(2(\sigma\sigma_{W_2}\!+\!\delta)t) \nonumber\\
    &\!+\!  \alpha^4 z_{\max}^4(\lVert W_1\rVert\!+\!\lVert W_2\rVert) \exp(4(\sigma\sigma_{W_2}\!+\delta)t)
\end{align}
Since we assume $\delta=\alpha\leq 1$, then both terms in the RHS are smaller than 1, therefore, we can further lower bound $T(\delta, \delta)$ as 
\begin{align}
    \alpha\geq& \alpha^2 z_{\max}^2(\lVert E\rVert\!+\!\lVert W_1\rVert^2\!+\!\lVert W_1\rVert \lVert W_2\rVert+\!\lVert W_2\rVert^2)\exp\biggl (\frac{t(5-\delta_w)\sigma\sigma_{W_2}}{2}\biggl)\nonumber\\
    &\!+\!\alpha^2 z_{\max}^2 \sqrt{\lVert W_1\rVert+\lVert W_2\rVert}\exp\biggl (\frac{t(5-\delta_w)\sigma\sigma_{W_2}}{2}\biggl)\nonumber\\
    =& \alpha^2 z_{\max}^2\bigl(\lVert E\rVert\!+\!\lVert W_1\rVert^2\!+\!\lVert W_1\rVert \lVert W_2\rVert+\!\lVert W_2\rVert^2\! \!+\!\sqrt{\lVert W_1\rVert+\lVert W_2\rVert}\bigl)\exp\biggl (\frac{t(5-\delta_w)\sigma\sigma_{W_2}}{2}\biggl)\nonumber\\
    \iff& \exp\biggl (\frac{t(5-\delta_w)\sigma\sigma_{W_2}}{2}\biggl)=\frac{1}{\alpha z_{\max}^2\bigl(\lVert E\rVert\!+\!\lVert W_1\rVert^2\!+\!\lVert W_1\rVert \lVert W_2\rVert+\!\lVert W_2\rVert^2\! \!+\!\sqrt{\lVert W_1\rVert+\lVert W_2\rVert}\bigl)}\nonumber\\
    \iff& t=\frac{2}{(5-\delta_w)\sigma\sigma_{W_2}}\log\biggl(\frac{1}{\alpha z_{\max}^2\bigl(\lVert E\rVert\!+\!\lVert W_1\rVert^2\!+\!\lVert W_1\rVert \lVert W_2\rVert+\!\lVert W_2\rVert^2\! \!+\!\sqrt{\lVert W_1\rVert+\lVert W_2\rVert}\bigl)}\biggl)\,.
\end{align}
For convenience, we define $d_3=\frac{1}{z_{\max}^2\bigl(\lVert E\rVert\!+\!\lVert W_1\rVert^2\!+\!\lVert W_1\rVert \lVert W_2\rVert+\!\lVert W_2\rVert^2\! \!+\!\sqrt{\lVert W_1\rVert+\lVert W_2\rVert}\bigl)}$.
Thus, we set $T_1(\alpha,\alpha)=\frac{2}{(5-\delta_w)\sigma\sigma_{W_2}}\log\biggl(\frac{1}{\alpha z_{\max}^2d_3}\biggl)$.
Furthermore, based on Lemma~\ref{applem:grow_norm_imbalance2}, we can further characterize the following properties
\begin{align}
    \frac{\lVert Z_1(t)\rVert_F^2}{\lVert Z_2(t)\rVert_F^2} &\geq d_1\exp\biggl (2\bigl ((1-\delta_w)\sigma\sigma_{W_2}-\delta_1-\delta_2\bigl)t \biggl)\,,\nonumber\\
    &= d_1\exp\bigl ((1-\delta_w)t \bigl)\nonumber \\
    &= d_1 d_3^{\frac{2(1-\delta_w)}{5-\delta_w}} \alpha^{-\frac{2(1-\delta_w)}{5-\delta_w}}\,,
    % \cos^2\biggl (\gamma_1, \frac{w_{1j}}{\lVert w_{1j}\rVert}\biggl) &\leq \frac{1}{1+d_2\exp\bigl(-2(\sigma\sigma_{W_2}-2\delta_1)t\bigl)}\,.
\end{align}
Moreover, the alignment can be described as
\begin{align}
    \cos^2\biggl (\gamma_1, \frac{w_{1j}}{\lVert w_{1j}\rVert}\biggl) &\geq \frac{1}{1+d_2\exp\bigl(-2(\sigma\sigma_{W_2}-2\delta_1)t\bigl)} \nonumber \\
    &\geq \frac{1}{1+d_2\exp\bigl(-2(\sigma\sigma_{W_2}-2\delta_1)t\bigl)}\nonumber \\
    &\geq \frac{1}{1+d_2\exp\bigl(-(1-\delta_w)\sigma\sigma_{W_2}t\bigl)}\nonumber \\
    &\geq \frac{1}{1+d_2d_3^{-\frac{2(1-\delta_w)}{5-\delta_w}} \alpha^{\frac{2(1-\delta_w)}{5-\delta_w}}}\nonumber \\
    &\geq 1-d_2d_3^{-\frac{2(1-\delta_w)}{5-\delta_w}} \alpha^{\frac{2(1-\delta_w)}{5-\delta_w}}\,.
\end{align}
Finally, we show that until $T_1(\alpha,\alpha)$, the norm of LoRA weights stay small. In Lemma~\ref{applem:grow_norm_imbalance2}, we have shown that
\begin{align}
    \lVert Z_1(t)\rVert_F^2 &\leq \exp\biggl(2(\sigma\sigma_{W_2}+\delta_1)t\biggl) \lVert Z_1(0)\rVert_F^2 \nonumber \\
    &\leq \alpha^2 z_{\max}^2\exp\biggl (\frac{t(5-\delta_w)\sigma\sigma_{W_2}}{2}\biggl)\nonumber \\
    &=\frac{\alpha}{d_3}\,.
\end{align}
A similar argument holds for $Z_2$ as well.
\end{proof}
% %%%%%%%%%%%%%%%%%%%%%%%%%%%%%%%%%%%%%%%%%%%%%%%%%%%%%%%%%%%%%%%%%%%

\subsection{Proof of Lemma~\ref{applem:ab}}\label{appprooflem:ab}
\begin{proof}
\begin{align}
    \frac{d}{dt} g^\top B_1A_1 v&= g^\top \dot B_1 A_1 v+   g^\top B_1 \dot A_1 v\nonumber\\
    &= g^\top \bigl(\sigma\sigma_{W_2}gv^\top+D_1\bigl)A_1^\top A_1 v+g^\top B_1B_1^\top \bigl(\sigma\sigma_{W_2}gv^\top+D_1\bigl)v\nonumber\\
    &= \sigma\sigma_{W_2}(g^\top B_1B_1^\top g+v^\top A_1^\top A_1 v) + g^\top D_1 A_1^\top A_1 v+g^\top B_1B_1^\top D_1^\top v \nonumber\\
    &\geq (\sigma\sigma_{W_2}-\lVert D_1\rVert) (g^\top B_1B_1^\top g+v^\top A_1^\top A_1 v) \nonumber\\
    &\geq (\sigma\sigma_{W_2}-\lVert D_1\rVert) \lVert Z_1\rVert_F^2 \min_{j\in [r]}\cos^2\biggl (\gamma_1, \frac{w_{1j}}{\lVert w_{1j}\rVert}\biggl)\,. && \text{Apply Lemma~\ref{applem:column_fro}}
\end{align}
\end{proof}
% %%%%%%%%%%%%%%%%%%%%%%%%%%%%%%%%%%%%%%%%%%%%%%%%%%%%%%%%%%%%%%%%%%%
\subsection{Proof of Lemma~\ref{applem:ab_vs_z}}\label{appprooflem:ab_vs_z}
\begin{proof}
We first introduce the \textit{imbalance} quantity: $A_1^\top A_1- B_1 B_1^\top$. This quantity is constant when LoRA weights are trained via GF.
To highlight its dependcy on the $\alpha$, we let $A_1 A_1^\top-B_1^\top B_1:= \alpha^2 \Lambda_1$ where $\Lambda_1$ is independent of $\alpha$ and purely determined at initialization. Then, one can first show that
\begin{align}
    \lVert B_1A_1\rVert \leq \lVert B_1A_1\rVert_F\leq \lVert A_1\rVert_F \lVert B_1\rVert_F \leq \frac{1}{2}\lVert Z_1\rVert_F^2\,.
\end{align}
On the other hand, 
\begin{align}
    & A_1 A_1^\top-B_1^\top B_1 = \alpha^2 \Lambda_1\nonumber\\
    \Rightarrow & B_1 A_1 A_1^\top B_1^\top = B_1 B_1^\top B_1 B_1^\top  + \alpha^2  B_1 \Lambda_1 B_1^\top\nonumber\\
    \Rightarrow & \lVert B_1 A_1 A_1^\top B_1^\top \rVert \geq \lVert B_1 B_1^\top B_1 B_1^\top\rVert - \alpha^2\lVert B_1 \Lambda_1 B_1^\top\rVert\nonumber\\
    \iff & \lVert B_1A_1\rVert^2 \geq \lVert B_1\rVert^4-\alpha^2 \lVert \Lambda_1\rVert \lVert B_1\rVert^2 \nonumber\\
    \Rightarrow & \lVert B_1A_1\rVert^2 \geq \frac{1}{r^2}\lVert B_1\rVert_F^4-\alpha^2 \lVert \Lambda_1\rVert \lVert B_1\rVert_F^2
\end{align}
Similarly, one can show $\lVert B_1A_1\rVert^2 \geq \frac{1}{r^2}\lVert A_1\rVert_F^4-\alpha^2 \lVert \Lambda_1\rVert \lVert A_1\rVert_F^2$.
Combine these results together, one has
\begin{align}
    2\lVert B_1A_1\rVert^2 &\geq \frac{1}{r^2} (\lVert A_1\rVert_F^4+\lVert B_1\rVert_F^4) - \alpha^2 \lVert \Lambda_1\rVert \lVert Z_1\rVert_F^2\nonumber\\
    &\geq \frac{1}{2r^2} \lVert Z_1\rVert_F^4 - \alpha^2 \lVert \Lambda_1\rVert \lVert Z_1\rVert_F^2\,.
\end{align}
By solving the above inequality, one has
\begin{align}
    \lVert Z_1\rVert_F^2 &\leq \alpha^2 r^2 \lVert \Lambda_1\rVert+r^2\sqrt{\alpha^4 \lVert \Lambda_1\rVert^2 + \frac{4}{r^2}\lVert B_1A_1\rVert^2} \nonumber\\
    &\leq \alpha^2 r^2 \lVert \Lambda_1\rVert + \alpha^2 r^2 \lVert \Lambda_1\rVert + 2r \lVert B_1A_1\rVert && \text{we use $\sqrt{a+b}\leq \sqrt{a}+\sqrt{b}$.}
\end{align}
Thus, we have $\lVert B_1A_1\rVert \geq \frac{\lVert Z_1\rVert_F^2-2\alpha^2 r^2 \lVert \Lambda_1\rVert}{2r}$\,.
\end{proof}

\subsection{Proof of Lemma~\ref{applem:alignment_z1_to_a1b1}}\label{appprooflem:alignment_z1_to_a1b1}
\begin{proof}
We prove $\lVert g_{\perp}g_{\perp}^\top B_1A_1\rVert^2$ here. The same analysis can be applied to derive upper bound on $\lVert B_1A_1 v_{\perp}v_{\perp}^\top\rVert^2$.
Notice $\tilde \gamma_1=\begin{pmatrix}
    g_{\perp} \\
    0_{n\times (h-1)}
\end{pmatrix}$ is orthogonal to $\gamma_1$. Therefore, one can show that for each index $j$, we have
\begin{align}
    \cos^2(\gamma_1, \frac{w_{1j}}{\lVert w_{1j}\rVert}) + \sum_{i=1}^{h-1}\cos^2(\tilde \gamma_{1i}, \frac{w_{1j}}{\lVert w_{1j}\rVert}) \leq 1\,,
\end{align}
where we use $\tilde \gamma_{1i}$ to denote the $i$-th column of $\tilde \gamma_1$. 
This implies $\sum_{i=1}^{h-1}\cos^2(\tilde \gamma_{1i}, \frac{w_{1j}}{\lVert w_{1j}\rVert}) \leq \beta_2 \alpha^{d}$. 
Now, we consider $\tilde\gamma_1\tilde\gamma_1^\top Z_1$
\begin{align}
    \lVert \tilde \gamma_1\tilde\gamma_1^\top Z_1\rVert_F^2 &= \sum_{j=1}^r\lVert \tilde \gamma_1\tilde\gamma_1^\top w_{1j}\rVert^2 \nonumber\\
    &\leq \sum_{j=1}^r \sum_{i=1}^{h-1}\cos^2\biggl (\tilde \gamma_{1i}, \frac{w_{1j}}{\lVert w_{1j}\rVert}\biggl) \lVert w_{1j}\rVert^2\nonumber\\
    &\leq \beta_2 \alpha^d\lVert Z_1\rVert_F^2\,.
\end{align}
Moreover, we can show that 
\begin{align}
    \lVert\tilde\gamma_1\tilde\gamma_1^\top Z_1\rVert_F = \lVert \tilde\gamma_1^\top Z_1\rVert_F = \lVert g_{\perp}B_1\rVert_F \,.
\end{align}
Thus, we can show that 
\begin{align}
    \lVert g_{\perp}g_{\perp}^\top B_1A_1\rVert \leq \lVert g_{\perp}B_1\rVert \lVert A_1\rVert  \leq \sqrt{\beta_2 \alpha^d}\lVert Z_1\rVert_F^2\,.
\end{align}
\end{proof}

\subsection{Proof of Lemma~\ref{applem:ab_square}}\label{appprooflem:ab_square}
\begin{proof}
We start with
\begin{align}
    & A_1 A_1^\top-B_1^\top B_1:= \alpha^2 \Lambda_1\nonumber\\
    \Rightarrow & B_1 A_1 A_1^\top B_1^\top = B_1 B_1^\top B_1 B_1^\top + \alpha^2  B_1 \Lambda_1 B_1^\top\nonumber\\
    \iff & g^\top B_1 A_1(vv^\top + v_{\perp}v_{\perp}^\top) A_1^\top B_1^\top g = g^\top B_1 B_1^\top (gg^\top +g_{\perp}g_{\perp}^\top)B_1 B_1^\top g + \alpha^2  g^\top B_1 \Lambda_1 B_1^\top g\nonumber\\
    \Rightarrow & (g^\top B_1B_1^\top g)^2 \leq (v^\top A_1^\top B_1^\top g)^2 + (v^\top A_1^\top B_1^\top g_{\perp})^2 + (g^\top B_1B_1^\top g_{\perp})^2 + \alpha^2 \lVert \Lambda_1\rVert  g^\top B_1B_1^\top g
\end{align}
Based on Lemma~\ref{applem:alignment_z1_to_a1b1}, we have shown that
\begin{align}
    \lVert g_{\perp}g_{\perp}^\top B_1A_1\rVert^2, \lVert B_1A_1v_{\perp}v_{\perp}^\top\rVert^2\leq \beta_2 \alpha^{\frac{3+\delta_w}{5-\delta_w}}\lVert Z_1\rVert_F^4\,.
\end{align}
Therefore, one has 
\begin{align}
    (v^\top A_1^\top B_1^\top g_{\perp})^2\leq \lVert g_{\perp}g_{\perp}^\top B_1A_1\rVert^2 \leq \beta_2 \alpha^{\frac{3+\delta_w}{5-\delta_w}}\lVert Z_1\rVert_F^4\,.
\end{align}
Moreover, using the same technique, one can show 
\begin{align}
    (g^\top B_1B_1^\top g_{\perp})^2 &\leq \lVert g_{\perp}^\top B_1 \rVert_F \lVert B_1\rVert \nonumber\\
    &\leq \beta_2 \alpha^{\frac{3+\delta_w}{5-\delta_w}}\lVert B_1 \rVert \lVert B_1\rVert \nonumber\\
    &\leq \beta_2 \alpha^{\frac{3+\delta_w}{5-\delta_w}}\lVert Z_1\rVert_F^2\,.
\end{align}
Combine all the equations together, we have
\begin{align}
    (g^\top B_1B_1^\top g)^2 &\leq (v^\top A_1^\top B_1^\top g)^2 + (v^\top A_1^\top B_1^\top g_{\perp})^2 + (g^\top B_1B_1^\top g_{\perp})^2 + \alpha^2 \lVert \Lambda_1\rVert  g^\top B_1B_1^\top g \nonumber\\
    &\leq (v^\top A_1^\top B_1^\top g)^2 + \alpha^2 \lVert \Lambda_1\rVert  g^\top B_1B_1^\top g 
    + 2\beta_2 \alpha^{\frac{3+\delta_w}{5-\delta_w}}\lVert Z_1\rVert_F^4\,.
\end{align}
\end{proof}
%%%%%%%%%%%%%%%%%%%%%%%%%%%%%%%%%%%%%%%%%%%%%%%%5
\subsection{Proof of Lemma~\ref{applem:bound_d2}}\label{appprooflem:bound_d2}
\begin{proof}
% %
% \begin{align}
%     \lVert Z_1(t)\rVert_F^2 \alpha^d &\geq \lVert Z_2(t)\rVert_F^2 \nonumber\\
%     \cos^2\biggl (\gamma_1, \frac{w_{1j}}{\lVert w_{1j}\rVert}\biggl)&\geq 1-d_2d_3^{-\frac{3+\delta_w}{5-\delta_w}} \alpha^{\frac{3+\delta_w}{5-\delta_w}}\,,\nonumber\\
%     g^\top B_1A_1 v&\leq\frac{(1-\delta_w)\sigma}{4\sigma_{W_2}}
% \end{align}
% %
% %
% \begin{align}\label{appeqn:gammaz_z}
%     \lVert \gamma_1\gamma_1^\top Z_1 \rVert_F^2 &= \sum_{j=1}^2\lVert \gamma_1\gamma_1^\top w_{1j} \rVert^2 \nonumber\\
%     &= \sum_{j=1}^2 \cos^2\biggl (\gamma_1, \frac{w_{1j}}{\lVert w_{1j}\rVert}\biggl) \lVert w_{1j}\rVert^2\nonumber\\
%     &\geq \biggl(1-d_2d_3^{-\frac{3+\delta_w}{5-\delta_w}} \alpha^{\frac{3+\delta_w}{5-\delta_w}}\biggl)\lVert Z_1 \rVert_F^2\,.
% \end{align}
% %
We first show that based on these assumptions, one can use $g^\top B_1A_1 v$ to upper bound $\lVert Z_1\rVert_F^2$.
Based on Lemma~\ref{applem:ab_square}, ~\eqref{appeqn:gammaz_z} and ~\eqref{appeqn:gammaZ_ab}, we can show that 
\begin{align}
    \biggl(1-d_2d_3^{-\frac{3+\delta_w}{5-\delta_w}} \alpha^{\frac{3+\delta_w}{5-\delta_w}}\biggl)\lVert Z_1 \rVert_F^2 &\leq \lVert \gamma_1\gamma_1^\top Z_1\rVert_F^2\nonumber\\
     &\leq g^\top B_1B_1^\top g + v^\top A_1^\top A_1 v + 2 g^\top B_1A_1 v \nonumber\\
    &\leq (2+\sqrt{2})g^\top B_1A_1 v + \alpha^2 \lVert D_1\rVert + 2\sqrt{\beta}\lVert Z_1\rVert_F^2 \alpha^{\frac{3+\delta_w}{10-2\delta_w}}\nonumber\\
    &\leq (2+\sqrt{2})g^\top B_1A_1 v + \alpha^2\frac{(1-\delta_w)\sigma\sigma_{W_2}}{2}+ 2\sqrt{\beta}\lVert Z_1\rVert_F^2 \alpha^{\frac{3+\delta_w}{10-2\delta_w}}\,.
\end{align}
Thus, when $\alpha$ is sufficently small, one has
\begin{align}
    \lVert Z_1 \rVert_F^2 \leq 2(2+\sqrt{2})g^\top B_1A_1 v \leq 2(2+\sqrt{2})\frac{(1-\delta_w)\sigma}{4\sigma_{W_2}}\,.
\end{align}
Then, we move the objective of interest.
\begin{align}
    &\lVert uu^\top F v_{\perp}v_{\perp}^\top A_1^\top B_1^\top gg^\top\rVert_F+\lVert B_2A_2\rVert_F (\lVert W_1\rVert_F+\lVert B_1A_1\rVert_F)\nonumber\\
    &+\lVert u_{\perp}u_{\perp}^\top F A_1^\top B_1^\top\rVert_F + \lVert u_{\perp}u_{\perp}^\top FW_1^\top \rVert_F
    +\lVert EA_1^\top B_1^\top g_{\perp}g_{\perp}^\top\rVert_F +\lVert FW_1^\top g_{\perp}g_{\perp}^\top\rVert_F\nonumber\\
    \leq& \lVert F v_{\perp}v_{\perp}^\top\rVert_F \lVert B_1A_1 v_{\perp}v_{\perp}^\top\rVert_F +\frac{1}{2}\lVert Z_2\rVert_F^2 (\lVert W_1\rVert_F+\frac{1}{2}\lVert Z_1\rVert_F^2)\nonumber\\
    &+\lVert u_{\perp}u_{\perp}^\top F\rVert_F \lVert B_1A_1\rVert_F + \lVert W_1\rVert \lVert u_{\perp}u_{\perp}^\top F\rVert_F + \lVert E\rVert_F \lVert g_{\perp}g_{\perp}^\top B_1A_1\rVert_F 
    + \lVert W_1\rVert\lVert F v_{\perp}v_{\perp}^\top \rVert_F
\end{align}
Based on Lemma~\ref{applem:alignment_z1_to_a1b1}, we have
\begin{align}\label{appeqn:bound1}
    \lVert g_{\perp}g_{\perp}^\top B_1A_1\rVert_F^2, \lVert B_1A_1 v_{\perp}v_{\perp}^\top\rVert_F^2 \leq \beta_2 r^2\alpha^{\frac{3+\delta_w}{5-\delta_w}}\lVert Z_1\rVert_F^4\,.
\end{align}
One can use the above bound to show that 
\begin{align}\label{appeqn:bound2}
    \lVert F v_{\perp}v_{\perp}^\top\rVert_F &\leq \lVert W_2B_1A_1 v_{\perp}v_{\perp}^\top\rVert_F + \lVert B_2A_2 W_1 v_{\perp}v_{\perp}^\top\rVert_F + \lVert B_2A_2 B_1A_1 v_{\perp}v_{\perp}^\top\rVert_F\nonumber\\
    &\leq \lVert W_2\rVert \lVert B_1A_1 v_{\perp}v_{\perp}^\top\rVert_F + \lVert W_1\rVert \frac{1}{2}\lVert Z_2\rVert_F^2+ \frac{1}{4}\lVert Z_2\rVert_F^2 \lVert Z_1\rVert_F^2 \nonumber\\
    &\leq \sqrt{\beta_2 r^2\alpha^{\frac{3+\delta_w}{5-\delta_w}}}\lVert Z_1\rVert_F^2 \lVert W_2\rVert + \frac{\lVert W_1\rVert}{2} \alpha^d \lVert Z_1\rVert_F^2 + \frac{1}{4} \alpha^d\lVert Z_1\rVert^4\,,
\end{align}
and similarly
\begin{align}\label{appeqn:bound3}
    \lVert u_{\perp}u_{\perp}^\top F\rVert_F \leq \sqrt{\beta_2 r^2\alpha^{\frac{3+\delta_w}{5-\delta_w}}}\lVert Z_1\rVert_F^2 \lVert W_2\rVert + \frac{\lVert W_1\rVert}{2} \alpha^d \lVert Z_1\rVert_F^2 + \frac{1}{4} \alpha^d\lVert Z_1\rVert^4\,.
\end{align}
Therefore, based on \eqref{appeqn:bound1}, \eqref{appeqn:bound2} and \eqref{appeqn:bound3}, we can show
\begin{align}
    &\lVert uu^\top F v_{\perp}v_{\perp}^\top A_1^\top B_1^\top gg^\top\rVert_F+\lVert B_2A_2\rVert_F (\lVert W_1\rVert_F+\lVert B_1A_1\rVert_F)\nonumber\\
    &+\lVert u_{\perp}u_{\perp}^\top F A_1^\top B_1^\top\rVert_F + \lVert u_{\perp}u_{\perp}^\top FW_1^\top \rVert_F
    +\lVert EA_1^\top B_1^\top g_{\perp}g_{\perp}^\top\rVert_F +\lVert FW_1^\top g_{\perp}g_{\perp}^\top\rVert_F\nonumber\\
    \leq& \lVert F v_{\perp}v_{\perp}^\top\rVert_F \lVert B_1A_1 v_{\perp}v_{\perp}^\top\rVert_F +\frac{1}{2} \alpha^d\lVert Z_1\rVert_F^2 (\lVert W_1\rVert_F+\frac{1}{2}\lVert Z_1\rVert_F^2)\nonumber\\
    &+\lVert u_{\perp}u_{\perp}^\top F\rVert_F \frac{1}{2}\lVert Z_1\rVert_F^2 + \lVert W_1\rVert \lVert u_{\perp}u_{\perp}^\top F\rVert_F + \lVert E\rVert_F \lVert g_{\perp}g_{\perp}^\top B_1A_1\rVert_F 
    + \lVert W_1\rVert\lVert F v_{\perp}v_{\perp}^\top \rVert_F\nonumber\\
    \leq & \lVert Z_1\rVert_F^2 \biggl\{ \sqrt{\beta_2 r^2\alpha^{\frac{3+\delta_w}{5-\delta_w}}}\bigl( \sqrt{\beta_2 r^2\alpha^{\frac{3+\delta_w}{5-\delta_w}}}\lVert Z_1\rVert_F^2 \lVert W_2\rVert + \frac{\lVert W_1\rVert}{2} \alpha^d \lVert Z_1\rVert_F^2 + \frac{1}{4} \alpha^d\lVert Z_1\rVert^4\bigl) \nonumber\\
    &+ \frac{1}{2}\alpha^d (\lVert W_1\rVert_F+\frac{1}{2}\lVert Z_1\rVert_F^2)\nonumber\\
    &+ \frac{1}{2}\bigl(\sqrt{\beta_2 r^2\alpha^{\frac{3+\delta_w}{5-\delta_w}}}\lVert Z_1\rVert_F^2 \lVert W_2\rVert + \frac{\lVert W_1\rVert}{2} \alpha^d \lVert Z_1\rVert_F^2 + \frac{1}{4} \alpha^d\lVert Z_1\rVert^4\bigl)\nonumber\\
    &+2\lVert W_1\rVert \bigl(\sqrt{\beta_2 r^2\alpha^{\frac{3+\delta_w}{5-\delta_w}}} \lVert W_2\rVert + \frac{\lVert W_1\rVert}{2} \alpha^d  + \frac{1}{4} \alpha^d\lVert Z_1\rVert^2\bigl)\nonumber\\
    &+ \lVert E(0)\rVert_F \sqrt{\beta_2 r^2\alpha^{\frac{3+\delta_w}{5-\delta_w}}}\biggl\}\nonumber\\
    \leq & 2(2+\sqrt{2}) g^\top B_1A_1v \biggl\{ \sqrt{\beta_2 r^2\alpha^{\frac{3+\delta_w}{5-\delta_w}}}\bigl( \sqrt{\beta_2 r^2\alpha^{\frac{3+\delta_w}{5-\delta_w}}}\lVert Z_1\rVert_F^2 \lVert W_2\rVert + \frac{\lVert W_1\rVert}{2} \alpha^d \lVert Z_1\rVert_F^2 + \frac{1}{4} \alpha^d\lVert Z_1\rVert^4\bigl) \nonumber\\
    &+ \frac{1}{2}\alpha^d (\lVert W_1\rVert_F+\frac{1}{2}\lVert Z_1\rVert_F^2)\nonumber\\
    &+ \frac{1}{2}\bigl(\sqrt{\beta_2 r^2\alpha^{\frac{3+\delta_w}{5-\delta_w}}}\lVert Z_1\rVert_F^2 \lVert W_2\rVert + \frac{\lVert W_1\rVert}{2} \alpha^d \lVert Z_1\rVert_F^2 + \frac{1}{4} \alpha^d\lVert Z_1\rVert^4\bigl)\nonumber\\
    &+2\lVert W_1\rVert \bigl(\sqrt{\beta_2 r^2\alpha^{\frac{3+\delta_w}{5-\delta_w}}} \lVert W_2\rVert + \frac{\lVert W_1\rVert}{2} \alpha^d  + \frac{1}{4} \alpha^d\lVert Z_1\rVert^2\bigl)\nonumber\\
    &+ \lVert E(0)\rVert_F \sqrt{\beta_2 r^2\alpha^{\frac{3+\delta_w}{5-\delta_w}}}\biggl\}\,.
\end{align}
Notice all the terms in the big bracket goes to zero as $\alpha$ goes to zero. Thus, when $\alpha$ is sufficiently small, one can show that the RHS of the above inequality is upper bounded by 
$\biggl(\frac{(3+\delta_w)\sigma}{4}+\sigma_{W_2}\sigma_{W_1}\biggl)g^\top B_1A_1 v$, which completes the proof.
\end{proof}
%%%%%%%%%%%%%%%%%%%%%%%%%%%%%%%%%%%%%%%%%%%%%%%%5
\subsection{Proof of Lemma~\ref{applem:lower_bound_Ls}}\label{appprooflem:lower_bound_Ls}
\begin{proof}
We study $\lVert\frac{L_S}{\partial A_1}\rVert_F^2$ as an example.
\begin{align}
    \lVert\frac{L_S}{\partial A_1}\rVert_F^2 &= \lVert B_1^\top(W_2+B_2A_2)^\top uu^\top Evv^\top  \rVert_F^2 \nonumber\\
    &=(u^\top Ev)^2 \mathrm{Tr}\biggl(u^\top (W_2+B_2A_2)B_1B_1^\top (W_2+B_2A_2)^\top u\biggl)\nonumber\\
    &= 2u^\top (W_2+B_2A_2)B_1B_1^\top (W_2+B_2A_2)^\top u \times L_S\,.
\end{align}
Similarly, one can show 
\begin{align}
    \lVert\frac{L_S}{\partial B_1}\rVert_F^2 = 2 v^\top A_1^\top A_1 v \times u^\top (W_2+B_2A_2) (W_2+B_2A_2)^\top u \times L_S\,.
\end{align}
Notice first
\begin{align}
    u^\top (W_2+B_2A_2) (W_2+B_2A_2)^\top u \!=\! \sigma_{W_2}^2 \!+\! u^\top B_2A_2A_2^\top B_2^\top u \!+\! \sigma_{W_2}g^\top B_2A_2 u \geq \sigma_{W_2}^2 \!-\! \frac{\lVert Z_2\rVert_F^2}{2}\sigma_{W_2}\,.
\end{align}
Under the condition that $\lVert Z_1\rVert_F\leq d_5, \lVert Z_2\rVert_F^2 \leq \alpha^{d_6}\lVert Z_1\rVert_F^2$, we can show that
\begin{align}
    u^\top (W_2+B_2A_2) (W_2+B_2A_2)^\top u \!=\! \sigma_{W_2}^2 \!-\! \frac{\alpha^{d_6} d_5^2}{2}\sigma_{W_2}\,.
\end{align}
On the other hand, we can show that 
\begin{align}
    g^\top B_1B_1^\top g + v^\top A_1^\top A_1 v =\lVert g^\top B_1\rVert^2 + \lVert A_1v\rVert^2 \geq \frac{1}{2}\lVert g^\top B_1 + v^\top A_1^\top\rVert^2 = \frac{1}{2}\lVert \gamma_1^\top Z_1\rVert_F^2\,.
\end{align}
Therefore, one can conclude that 
\begin{align}
    \lVert\frac{L_S}{\partial A_1}\rVert_F^2+\lVert\frac{L_S}{\partial B_1}\rVert_F^2 \geq \lVert \gamma_1^\top Z_1\rVert_F^2\biggl(\sigma_{W_2}^2 \!-\! \frac{\alpha^{d_6} d_5^2}{2}\sigma_{W_2}\biggl) L_S\,.
\end{align}
\end{proof}

\subsection{Proof of Lemma~\ref{applem:lower_bound_gammaZ}}\label{appprooflem:lower_bound_gammaZ}
\begin{proof}
We first show that at the end of \textit{alignment} phase, loss has decreased by a constant order.
\begin{align}
    \lVert E\rVert_F =& \lVert \Delta Y-W_2B_1A_1 + B_2A_2W_1+B_2A_2B_1A_1\rVert_F  \nonumber\\
    =&\lVert uu^\top (\Delta Y-W_2B_1A_1 + B_2A_2W_1+B_2A_2B_1A_1) vv^\top\rVert_F \nonumber\\
    &+ \lVert u_{\perp}u_{\perp}^\top (\Delta Y-W_2B_1A_1 + B_2A_2W_1+B_2A_2B_1A_1) v^\top\rVert_F\nonumber\\
    &+ \lVert uu^\top (\Delta Y-W_2B_1A_1 + B_2A_2W_1+B_2A_2B_1A_1) v_{\perp}v_{\perp}^\top\rVert_F \nonumber\\
    \leq&\lVert u (\sigma - \sigma_{W_2}g^\top B_1A_1 v) v^\top\rVert_F + \lVert W_1\rVert \lVert B_2A_2\rVert_F + \lVert B_2A_2\rVert_F \lVert B_1A_1\rVert_F\nonumber\\
    &+\lVert W_2\rVert \lVert g_{\perp}B_1A_1\rVert_F + \lVert W_1\rVert \lVert B_2A_2\rVert_F + \lVert B_2A_2\rVert_F \lVert B_1A_1\rVert_F\nonumber\\
    &+\lVert W_2\rVert \lVert B_1A_1 v_{\perp}\rVert_F + \lVert W_1\rVert \lVert B_2A_2\rVert_F + \lVert B_2A_2\rVert_F \lVert B_1A_1\rVert_F\nonumber\\
    &= \biggl(1-\frac{(1-\delta_w)}{4}\biggl)\sigma \!+\! 3 \lVert W_1\rVert \lVert B_2A_2\rVert_F\!+\!3\lVert B_2A_2\rVert_F \lVert B_1A_1\rVert_F\!+\!\lVert W_2\rVert \lVert g_{\perp}B_1A_1\rVert_F\!+\!\lVert W_2\rVert \lVert B_1A_1 v_{\perp}\rVert_F\nonumber\\
    &\leq \biggl(1-\frac{(1-\delta_w)}{4}\biggl)\sigma \!+\! \frac{3}{2} \lVert W_1\rVert \alpha^{d_6} d_5^2\!
    +\!\frac{3}{4} \alpha^{d_6} d_5^4\!+\!\lVert W_2\rVert \lVert g_{\perp}B_1A_1\rVert_F\!+\!\lVert W_2\rVert \lVert B_1A_1 v_{\perp}\rVert_F
\end{align}
Moreover, by Lemma~\ref{applem:alignment_z1_to_a1b1}, we can show that
\begin{align}
    \lVert g_{\perp}B_1A_1\rVert_F &\leq \sqrt{r} \lVert g_{\perp}B_1A_1\rVert \leq \sqrt{r\alpha^{d_7}} \lVert Z_1\rVert_F^2\,, \nonumber \\
    \lVert B_1A_1 v_{\perp}\rVert_F &\leq \sqrt{r} \lVert B_1A_1 v_{\perp}\rVert \leq \sqrt{r\alpha^{d_7}} \lVert Z_1\rVert_F^2\,.
\end{align}
Therefore, one can show that there exists $\alpha^*(\lVert W_1\rVert, \lVert W_2\rVert, d_5, d_6, d_7)$ such that when $0<\alpha\leq\alpha^*(\lVert W_1\rVert, \lVert W_2\rVert, d_5, d_6, d_7)$, we have
\begin{align}
    \lVert E\rVert_F 
    &\leq \biggl(1-\frac{(1-\delta_w)}{4}\biggl)\sigma \!+\! \frac{3}{2} \lVert W_1\rVert \alpha^{d_6} d_5^2\!
    +\!\frac{3}{4} \alpha^{d_6} d_5^4\!+\!\lVert W_2\rVert \lVert g_{\perp}B_1A_1\rVert_F\!+\!\lVert W_2\rVert \lVert B_1A_1 v_{\perp}\rVert_F \nonumber\\
    &\leq \biggl(1-\frac{(1-\delta_w)}{4}\biggl)\sigma \!+\! \frac{3}{2} \lVert W_1\rVert \alpha^{d_6} d_5^2\!
    +\!\frac{3}{4} \alpha^{d_6} d_5^4\!+\!2\lVert W_2\rVert \sqrt{r\alpha^{d_7}} d_5^4 \nonumber\\
    &\leq \biggl(1-\frac{(1-\delta_w)}{8}\biggl)\sigma\,.
\end{align}
Moreover, since the loss in non-increasing when trained under GF, we can see that $\lVert E(t)\rVert_F \leq \biggl(1-\frac{(1-\delta_w)}{8}\biggl)\sigma$ for all $t\geq T_1$\,.
In the remaining of the proof, we show that $\lVert E(t)\rVert_F \leq \biggl(1-\frac{(1-\delta_w)}{8}\biggl)\sigma$ induces an lower bound on $\lVert B_1A_1\rVert_F$.
\begin{align}
    \lVert E\rVert_F =& \lVert \Delta Y-W_2B_1A_1 + B_2A_2W_1+B_2A_2B_1A_1\rVert_F  \nonumber\\
    =&\lVert uu^\top (\Delta Y-W_2B_1A_1 + B_2A_2W_1+B_2A_2B_1A_1) vv^\top\rVert_F \nonumber\\
    &+ \lVert u_{\perp}u_{\perp}^\top (\Delta Y-W_2B_1A_1 + B_2A_2W_1+B_2A_2B_1A_1) v^\top\rVert_F\nonumber\\
    &+ \lVert uu^\top (\Delta Y-W_2B_1A_1 + B_2A_2W_1+B_2A_2B_1A_1) v_{\perp}v_{\perp}^\top\rVert_F \nonumber\\
    \geq&\lVert u (\sigma - \sigma_{W_2}g^\top B_1A_1 v) v^\top\rVert_F - \lVert W_1\rVert \lVert B_2A_2\rVert_F - \lVert B_2A_2\rVert_F \lVert B_1A_1\rVert_F\nonumber\\
    &-\lVert W_2\rVert \lVert g_{\perp}B_1A_1\rVert_F - \lVert W_1\rVert \lVert B_2A_2\rVert_F - \lVert B_2A_2\rVert_F \lVert B_1A_1\rVert_F\nonumber\\
    &-\lVert W_2\rVert \lVert B_1A_1 v_{\perp}\rVert_F - \lVert W_1\rVert \lVert B_2A_2\rVert_F - \lVert B_2A_2\rVert_F \lVert B_1A_1\rVert_F\nonumber\\
    \geq& \sigma - \lVert u \sigma_{W_2}g^\top B_1A_1 v v^\top\rVert_F - \lVert W_1\rVert \lVert B_2A_2\rVert_F - \lVert B_2A_2\rVert_F \lVert B_1A_1\rVert_F\nonumber\\
    &-\lVert W_2\rVert \lVert g_{\perp}B_1A_1\rVert_F - \lVert W_1\rVert \lVert B_2A_2\rVert_F - \lVert B_2A_2\rVert_F \lVert B_1A_1\rVert_F\nonumber\\
    &-\lVert W_2\rVert \lVert B_1A_1 v_{\perp}\rVert_F - \lVert W_1\rVert \lVert B_2A_2\rVert_F - \lVert B_2A_2\rVert_F \lVert B_1A_1\rVert_F\,.
\end{align}
Therefore,
\begin{align}
    \sigma_{W_2}g^\top B_1A_1 v &\geq \sigma - \lVert E\rVert_F- \lVert W_1\rVert \lVert B_2A_2\rVert_F - \lVert B_2A_2\rVert_F \lVert B_1A_1\rVert_F\nonumber\\
    &-\lVert W_2\rVert \lVert g_{\perp}B_1A_1\rVert_F - \lVert W_1\rVert \lVert B_2A_2\rVert_F - \lVert B_2A_2\rVert_F \lVert B_1A_1\rVert_F\nonumber\\
    &-\lVert W_2\rVert \lVert B_1A_1 v_{\perp}\rVert_F - \lVert W_1\rVert \lVert B_2A_2\rVert_F - \lVert B_2A_2\rVert_F \lVert B_1A_1\rVert_F\nonumber\\
    &\geq \frac{(1-\delta_w)\sigma}{8} - \frac{3}{2} \lVert W_1\rVert \alpha^{d_6} d_5^2\!
    -\!\frac{3}{4} \alpha^{d_6} d_5^4\!-\!2\lVert W_2\rVert \sqrt{r\alpha^{d_7}} d_5^4 \nonumber\\
    &\geq \frac{(1-\delta_w)\sigma}{16}\,,
\end{align}
where the last line follows when $\alpha$ is sufficiently small, one can have 
\begin{align}
    \frac{3}{2} \lVert W_1\rVert \alpha^{d_6} d_5^2\!+\!\frac{3}{4} \alpha^{d_6} d_5^4\!+\!2\lVert W_2\rVert \sqrt{r\alpha^{d_7}} d_5^4 \leq \frac{(1-\delta_w)\sigma}{16}\,.
\end{align}
Finally,
\begin{align}
    \lVert \gamma_1^\top Z_1\rVert_F^2 =  \lVert g^\top B_1+v^\top A_1^\top \rVert^2 \geq \lVert g^\top B_1\rVert^2 +\lVert g^\top A_1^\top \rVert^2 \geq 2\lVert g^\top B_1 A_1 v\rVert\,.
\end{align}
Thus, $\lVert \gamma_1^\top Z_1\rVert_F^2 \geq 2g^\top B_1A_1 v\geq\frac{(1-\delta_w)\sigma}{8}$.
\end{proof}

\subsection{Proof of Lemma~\ref{applem:LsLn_bound}}\label{appprooflem:LsLn_bound}
\begin{proof}
We first study $\bigl\langle \frac{\partial L_S}{\partial B_1}, \frac{\partial L_N}{\partial B_1}\bigl\rangle$.
\begin{align}
    \bigl\langle \frac{\partial L_S}{\partial B_1}, \frac{\partial L_N}{\partial B_1}\bigl\rangle =& 
    \bigl\langle (W_2+B_2A_2)^\top uu^\top Evv^\top A_1^\top, (W_2+B_2A_2)^\top uu^\top Ev_{\perp}v_{\perp}^\top A_1^\top\bigl\rangle \nonumber\\
    &+ \bigl\langle (W_2+B_2A_2)^\top uu^\top Evv^\top A_1^\top, (W_2+B_2A_2)^\top u_{\perp}u_{\perp}^\top E A_1^\top\bigl\rangle\nonumber\\
    =& \mathrm{Tr} \bigl((W_2+B_2A_2)^\top uu^\top Ev v^\top A_1^\top A_1 v_{\perp}v_{\perp}^\top F^\top uu^\top (W_2+B_2A_2) \bigl) \nonumber\\
    &+ \mathrm{Tr}\bigl( (W_2+B_2A_2)^\top uu^\top Evv^\top A_1^\top A_1 F u_{\perp}u_{\perp}^\top (W_2+B_2A_2)\bigl)\nonumber\\
    =& u^\top E v \biggl\{\mathrm{Tr} \bigl((W_2+B_2A_2)^\top uv^\top A_1^\top A_1 v_{\perp}v_{\perp}^\top F^\top uu^\top (W_2+B_2A_2) \bigl) \nonumber\\
    &+ \mathrm{Tr}\bigl( (W_2+B_2A_2)^\top uv^\top A_1^\top A_1 F^\top u_{\perp}u_{\perp}^\top (W_2+B_2A_2)\bigl)\biggl\}
\end{align}
Therefore, one has
\begin{align}
    &\biggl\lvert \bigl\langle \frac{\partial L_S}{\partial B_1}, \frac{\partial L_N}{\partial B_1}\bigl\rangle\biggl\rvert \nonumber\\
    =& \biggl\lvert u^\top E v \biggl\{\mathrm{Tr} \bigl((W_2+B_2A_2)^\top uv^\top A_1^\top A_1 v_{\perp}v_{\perp}^\top F^\top uu^\top (W_2+B_2A_2) \bigl) \nonumber\\
    &+ \mathrm{Tr}\bigl( (W_2+B_2A_2)^\top uv^\top A_1^\top A_1 F^\top u_{\perp}u_{\perp}^\top (W_2+B_2A_2)\bigl)\biggl\}\biggl\rvert\nonumber\\
    \leq & \sqrt{2L_S} \biggl\{ \bigl\lVert u^\top (W_2+B_2A_2)(W_2+B_2A_2)^\top u_{\perp}\bigl\rVert_F \times \lVert u^\top Fv_{\perp}\rVert_F \times \lVert v^\top A_1^\top A_1 v_{\perp}\rVert_F\nonumber\\
    &+ \bigl\lVert u^\top (W_2+B_2A_2)(W_2+B_2A_2)^\top u_{\perp}\bigl\rVert_F \times \lVert u_{\perp}^\top F\rVert_F \times \lVert v^\top A_1^\top A_1\rVert_F \biggl\}
\end{align}
Then, we provide bounds for each term on the RHS of the above equation.

First, we can upper bound 
\begin{align}
    \bigl\lVert u^\top (W_2+B_2A_2)(W_2+B_2A_2)^\top u_{\perp}\bigl\rVert_F &\leq \bigl\lVert (W_2+B_2A_2)(W_2+B_2A_2)^\top \bigl\rVert_F \nonumber\\
    &= \lVert W_2+B_2A_2\rVert_F^2\nonumber\\
    &\leq2\bigl(\lVert W_2\rVert_F^2 +\lVert B_2A_2\rVert_F^2\bigl)\nonumber\\
    &\leq 2\lVert W_2\rVert_F^2 +2\lVert Z_2\rVert_F^2\nonumber\\
    &\leq 2\lVert W_2\rVert_F^2 + 2\alpha^{d_6}d_5^2\,.
\end{align}
Second,
\begin{align}
    \lVert v^\top A_1^\top A_1 v_{\perp}\rVert_F, \lVert v^\top A_1^\top A_1\rVert_F \leq \lVert A_1\rVert_F^2\,.
\end{align}
Last,
\begin{align}\label{appeqn:bound_Fperp}
    \lVert u^\top Fv_{\perp}\rVert_F &\leq \lVert Fv_{\perp}\rVert_F \nonumber\\
    &=\lVert (W_2B_1A_1+B_2A_2W_1+B_2A_2B_1A_1)v_{\perp}\rVert_F\nonumber\\
    &\leq \lVert W_2\rVert \lVert B_1A_1v_{\perp}\rVert_F + \lVert B_2A_2\rVert_F\times\bigl(\lVert W_1\rVert+\lVert B_1A_1\rVert_F\bigl)\nonumber\\
    &\leq \lVert W_2\rVert \sqrt{\alpha^{d_7}} \lVert Z_1\rVert_F^2 + \frac{1}{2} \lVert Z_2\rVert_F^2 \times\bigl(\lVert W_1\rVert+\frac{1}{2} \lVert Z_1\rVert_F^2\bigl)\nonumber\\
    &\leq \lVert W_2\rVert \sqrt{\alpha^{d_7}}d_5^2 + \frac{1}{2}\alpha^{d_6}d_5^2 \times (\lVert W_1\rVert + \frac{d_5^2}{2})\,.
\end{align}
Similarly, one can show $\lVert u_{\perp}^\top F\rVert_F\leq \lVert W_2\rVert \sqrt{\alpha^{d_7}}d_5^2 + \frac{1}{2}\alpha^{d_6}d_5^2 \times (\lVert W_1\rVert + \frac{d_5^2}{2})$.
Combine these results together, one can show
\begin{align}
    &\biggl\lvert \bigl\langle \frac{\partial L_S}{\partial B_1}, \frac{\partial L_N}{\partial B_1}\bigl\rangle\biggl\rvert \nonumber\\
    \leq & \sqrt{2L_S} \biggl\{ \bigl\lVert u^\top (W_2+B_2A_2)(W_2+B_2A_2)^\top u_{\perp}\bigl\rVert_F \times \lVert u^\top Fv_{\perp}\rVert_F \times \lVert v^\top A_1^\top A_1 v_{\perp}\rVert_F\nonumber\\
    &+ \bigl\lVert u^\top (W_2+B_2A_2)(W_2+B_2A_2)^\top u_{\perp}\bigl\rVert_F \times \lVert u_{\perp}^\top F\rVert_F \times \lVert v^\top A_1^\top A_1\rVert_F \biggl\}\nonumber\\
    \leq & 2\sqrt{2L_S}\bigl(2\lVert W_2\rVert_F^2 + 2\alpha^{d_6}d_5^2\bigl) \times \lVert A_1\rVert_F^2 \times \biggl(\lVert W_2\rVert \sqrt{\alpha^{d_7}}d_5^2 + \frac{1}{2}\alpha^{d_6}d_5^2 \times (\lVert W_1\rVert + \frac{d_5^2}{2})\biggl)\,.
\end{align}
Similarly, we can also show
\begin{align}
    &\biggl\lvert \bigl\langle \frac{\partial L_S}{\partial A_1}, \frac{\partial L_N}{\partial B_1}\bigl\rangle\biggl\rvert \nonumber\\
    \leq & 2\sqrt{2L_S}\bigl(2\lVert W_2\rVert_F^2 + 2\alpha^{d_6}d_5^2\bigl) \times \lVert B_1\rVert_F^2 \times \biggl(\lVert W_2\rVert \sqrt{\alpha^{d_7}}d_5^2 + \frac{1}{2}\alpha^{d_6}d_5^2 \times (\lVert W_1\rVert + \frac{d_5^2}{2})\biggl)\,.
\end{align}
Thus, we conclude
\begin{align}\label{appeqn:bound_LsLn_a1b1}
    \biggl\lvert \bigl\langle \frac{\partial L_S}{\partial A_1}, \frac{\partial L_N}{\partial A_1}\bigl\rangle\biggl\rvert + \biggl\lvert \bigl\langle \frac{\partial L_S}{\partial B_1}, \frac{\partial L_N}{\partial B_1}\bigl\rangle\biggl\rvert
    \leq 2\sqrt{2L_S}\bigl(2\lVert W_2\rVert_F^2 + 2\alpha^{d_6}d_5^2\bigl) \times d_5^2 \times \biggl(\lVert W_2\rVert \sqrt{\alpha^{d_7}}d_5^2 + \frac{1}{2}\alpha^{d_6}d_5^2 \times (\lVert W_1\rVert + \frac{d_5^2}{2})\biggl)
\end{align}
Next, we study $\bigl\langle \frac{\partial L_S}{\partial B_2}, \frac{\partial L_N}{\partial B_2}\bigl\rangle$.
\begin{align}
    \bigl\langle \frac{\partial L_S}{\partial B_2}, \frac{\partial L_N}{\partial B_2}\bigl\rangle =& 
    \bigl\langle  uu^\top E vv^\top (W_1+B_1A_1)^\top A_2^\top,  uu^\top E v_{\perp}v_{\perp}^\top (W_1+B_1A_1)^\top A_2^\top\bigl\rangle \nonumber\\
    &+ \bigl\langle  uu^\top E vv^\top (W_1+B_1A_1)^\top A_2^\top,  u_{\perp}u_{\perp}^\top E (W_1+B_1A_1)^\top A_2^\top\bigl\rangle\nonumber\\
    =& u^\top E v \biggl\{\mathrm{Tr} \biggl(v^\top (W_1+B_1A_1)^\top A_2^\top A_2(W_1+B_1A_1)v_{\perp}v_{\perp}^\top F^\top uu^\top \biggl) \nonumber\\
    &+ \mathrm{Tr} \biggl(v^\top (W_1+B_1A_1)^\top A_2^\top A_2(W_1+B_1A_1) F^\top u_{\perp}u_{\perp}^\top \biggl)\biggl\}\,.
\end{align}
Thus, one can show that 
\begin{align}
    \biggl\lvert \bigl\langle \frac{\partial L_S}{\partial B_2}, \frac{\partial L_N}{\partial B_2}\bigl\rangle\biggl\rvert 
    \leq \sqrt{2L_S} \biggl\{ \bigl\lVert W_1+B_1A_1\bigl\rVert_F^2 \times \lVert u^\top F v_{\perp}\rVert_F \times \lVert A_2\rVert_F^2
    + \bigl\lVert W_1+B_1A_1\bigl\rVert_F^2 \times \lVert u_{\perp}^\top F \rVert_F \times \lVert A_2\rVert_F^2 \biggl\}\,.
\end{align}
We apply~\eqref{appeqn:bound_Fperp} to the above inequality
\begin{align}
    \biggl\lvert \bigl\langle \frac{\partial L_S}{\partial B_2}, \frac{\partial L_N}{\partial B_2}\bigl\rangle\biggl\rvert 
    &\leq \sqrt{2L_S} \biggl\{ \bigl\lVert W_1+B_1A_1\bigl\rVert_F^2 \times \lVert u^\top F v_{\perp}\rVert_F \times \lVert A_2\rVert_F^2
    + \bigl\lVert W_1+B_1A_1\bigl\rVert_F^2 \times \lVert u_{\perp}^\top F \rVert_F \times \lVert A_2\rVert_F^2 \biggl\}\nonumber\\
    &\leq 4\sqrt{2L_S} (\lVert W_1\rVert_F^2+\frac{1}{2}\lVert Z_1\rVert_F^2)\lVert A_2\rVert_F^2 \biggl(\lVert W_2\rVert \sqrt{\alpha^{d_7}}d_5^2 + \frac{1}{2}\alpha^{d_6}d_5^2 \times (\lVert W_1\rVert + \frac{d_5^2}{2})\biggl)\,.
\end{align}
Similarly, we can also show that
\begin{align}
    \biggl\lvert \bigl\langle \frac{\partial L_S}{\partial A_2}, \frac{\partial L_N}{\partial A_2}\bigl\rangle\biggl\rvert \leq4\sqrt{2L_S} (\lVert W_1\rVert_F^2+\frac{1}{2}\lVert Z_1\rVert_F^2)\lVert B_2\rVert_F^2 \biggl(\lVert W_2\rVert \sqrt{\alpha^{d_7}}d_5^2 + \frac{1}{2}\alpha^{d_6}d_5^2 \times (\lVert W_1\rVert + \frac{d_5^2}{2})\biggl)\,.
\end{align}
Add them together, and one has
\begin{align}\label{appeqn:bound_LsLn_a2b2}
    \biggl\lvert \bigl\langle \frac{\partial L_S}{\partial A_2}, \frac{\partial L_N}{\partial A_2}\bigl\rangle\biggl\rvert \!+\!
    \biggl\lvert \bigl\langle \frac{\partial L_S}{\partial B_2}, \frac{\partial L_N}{\partial B_2}\bigl\rangle\biggl\rvert 
    \leq 4\sqrt{2L_S} (\lVert W_1\rVert_F^2\!+\!\frac{1}{2}\lVert Z_1\rVert_F^2)\alpha^{d_6}d_5^2 \biggl(\lVert W_2\rVert \sqrt{\alpha^{d_7}}d_5^2 \!+\! \frac{1}{2}\alpha^{d_6}d_5^2 \times (\lVert W_1\rVert \!+\! \frac{d_5^2}{2})\biggl)\,.
\end{align}
Based on ~\eqref{appeqn:bound_LsLn_a1b1} and ~\eqref{appeqn:bound_LsLn_a2b2}, we conclude
\begin{align}
    &\biggl\lvert \bigl\langle \frac{\partial L_S}{\partial A_1}, \frac{\partial L_N}{\partial A_1}\bigl\rangle\biggl\rvert \!+\!
    \biggl\lvert \bigl\langle \frac{\partial L_S}{\partial B_1}, \frac{\partial L_N}{\partial B_1}\bigl\rangle\biggl\rvert \!+\!
    \biggl\lvert \bigl\langle \frac{\partial L_S}{\partial A_2}, \frac{\partial L_N}{\partial A_2}\bigl\rangle\biggl\rvert \!+\!
    \biggl\lvert \bigl\langle \frac{\partial L_S}{\partial B_2}, \frac{\partial L_N}{\partial B_2}\bigl\rangle\biggl\rvert \nonumber\\
    \leq &  2\sqrt{2L_S}\bigl(2\lVert W_2\rVert_F^2 + 2\alpha^{d_6}d_5^2\bigl) \times d_5^2 \times \biggl(\lVert W_2\rVert \sqrt{\alpha^{d_7}}d_5^2 + \frac{1}{2}\alpha^{d_6}d_5^2 \times (\lVert W_1\rVert + \frac{d_5^2}{2})\biggl)\nonumber\\
    &+4\sqrt{2L_S} (\lVert W_1\rVert_F^2\!+\!\frac{1}{2}\lVert Z_1\rVert_F^2)\alpha^{d_6}d_5^2 \biggl(\lVert W_2\rVert \sqrt{\alpha^{d_7}}d_5^2 \!+\! \frac{1}{2}\alpha^{d_6}d_5^2 \times (\lVert W_1\rVert \!+\! \frac{d_5^2}{2})\biggl)\,.
\end{align}
\end{proof}
% %%%%%%%%%%%%%%%%%%%%%%%%%%%%%%%%%%%%%%%%%%%%%%%%%%%%%%%%%%%%%%%%%%%
\section{Proof of Theorem~\ref{thm:rank-r}}\label{app:proof_spectral}
\begin{thm}
Let $\delta_{w}^{(i)}\!=\!\frac{\sigma_{W_2}^{(i)}}{\sigma_{W_1}^{(i)}}, \ell^{(i)}(t)\!=\!\frac{1}{2}\bigl(\sigma_{\Delta Y}^{(i)}\!-\!\sigma_{\bar F}^{(i)}\bigl)^2, z_1^{(i)}\!=\!(\sigma_{A_1}^{(i)})^2\!+\!(\sigma_{B_1}^{(i)})^2$ (respectively $z_2^{(i)}$). 
In the case where $\delta_w^{(i)}\!\not=\!1$, we assume $\delta_w^{(i)}\!<\!1$ WLOG, then the learning dynamics has two phases which can be separated 
by $T_1^{(i)}\!=\!\frac{2}{(3+\delta_w^{(i)})\sigma\sigma_{W_2}^{(i)}} \log\biggl( \frac{(1-\delta_w^{(i)})\sigma_{\Delta Y}^{(i)}}{8\sigma_{W_2}^{(i)} z_1^{(i)}(0)}\biggl)$
\begin{enumerate}[leftmargin=0.45cm]
    \item \textit{Growth of Norm and Imbalance}: $\forall t\!\leq\! T_1^{(i)}$
    \begin{align}
        \frac{d}{dt} \log z_1^{(i)}\!&\geq\!\frac{3\!+\!\delta_w^{(i)}}{2}\sigma_{\Delta Y}^{(i)}\sigma_{W_2}^{(i)}\,,\nonumber\\
        \frac{d}{dt}\log\biggl(\frac{z_1^{(i)}}{z_2^{(i)}}\biggl)\!&\geq\!\frac{3(1\!-\!\delta_w^{(i)})}{2}\sigma_{\Delta Y}^{(i)}\sigma_{W_2}^{(i)}\,.
    \end{align}
    % \vspace{-1mm}
    \item \textit{Local Convergence}: for $\forall t\!\geq\! T_1^{(i)}$, the loss converges linearly
    \begin{align}
        \ell^{(i)}(t)\!\leq\!\exp\biggl(-\frac{(1\!-\!\delta_w^{(i)})\sigma_{\Delta Y}^{(i)} \sigma_{W_2}^{(i)}(t\!-\!T_1)}{8}\biggl) \ell^{(i)}(T_1)\,.\nonumber
    \end{align}
\end{enumerate}
\end{thm}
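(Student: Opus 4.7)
The proof plan is to exploit the spectral initialization to reduce the matrix dynamics to a decoupled family of scalar systems indexed by $i$, and then analyze each scalar system in two phases mirroring the structure of Theorem~\ref{thm:rank-r}. Fix $i$, and write $a_1 = \sigma_{A_1}^{(i)}$, $b_1 = \sigma_{B_1}^{(i)}$, $a_2 = \sigma_{A_2}^{(i)}$, $b_2 = \sigma_{B_2}^{(i)}$, $p = a_1 b_1$, $q = a_2 b_2$, and $S = \sigma_{\Delta Y}^{(i)} - \bar f$ with $\bar f = \sigma_{W_2}^{(i)} p + \sigma_{W_1}^{(i)} q + pq$. Since~\eqref{eqn:ode_diagonal} preserves diagonality, the full dynamics reduces on the $i$-th diagonal entries to $\dot a_1 = b_1(\sigma_{W_2}^{(i)}+q)S$, $\dot b_1 = a_1(\sigma_{W_2}^{(i)}+q)S$, and analogously for $a_2, b_2$ with $\sigma_{W_2}^{(i)}+q$ replaced by $\sigma_{W_1}^{(i)}+p$. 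Because $b_j(0) = 0$, the balancedness identity $a_j^2 - b_j^2 = z_j(0)$ is conserved, which gives the closed-form $2p = \sqrt{z_1^2 - z_1(0)^2}$ and the analogous formula for $q$; moreover $\dot z_1 = 4p(\sigma_{W_2}^{(i)}+q)S$, $\dot z_2 = 4q(\sigma_{W_1}^{(i)}+p)S$, and a direct computation gives
\begin{equation*}
\dot\ell^{(i)} = -\bigl[z_1(\sigma_{W_2}^{(i)}+q)^2 + z_2(\sigma_{W_1}^{(i)}+p)^2\bigr]\cdot 2\ell^{(i)}.
\end{equation*}
In particular $\ell^{(i)}$ is nonincreasing, so $S \geq 0$ always, and hence $p, q$ are nondecreasing in $t$.

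For Phase 1 (growth of norm and imbalance) I would run a bootstrap argument sustaining the invariant $\bar f(t) \leq \tfrac{1-\delta_w^{(i)}}{4}\sigma_{\Delta Y}^{(i)}$ on $[0, T_1^{(i)}]$. Under this invariant, $S \geq \tfrac{3+\delta_w^{(i)}}{4}\sigma_{\Delta Y}^{(i)}$, and combined with $\sigma_{W_2}^{(i)}+q \geq \sigma_{W_2}^{(i)}$ and the identity $2p/z_1 = \sqrt{1-(z_1(0)/z_1)^2}$, one obtains $\tfrac{d}{dt}\log z_1 \geq 2\sqrt{1-(z_1(0)/z_1)^2}\,\sigma_{W_2}^{(i)}\cdot\tfrac{3+\delta_w^{(i)}}{4}\sigma_{\Delta Y}^{(i)}$. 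The factor $\sqrt{1-(z_1(0)/z_1)^2}$ becomes essentially $1$ once $z_1$ has doubled past $z_1(0)$, which accounts for the constant $16$ buried in the definition of $T_1^{(i)}$, and yields the desired $\tfrac{3+\delta_w^{(i)}}{2}\sigma_{\Delta Y}^{(i)}\sigma_{W_2}^{(i)}$ rate. For the imbalance, the symmetric upper bound $\tfrac{d}{dt}\log z_2 \leq 2(\sigma_{W_1}^{(i)}+p)\sigma_{\Delta Y}^{(i)} \leq 2\sigma_{W_1}^{(i)}\sigma_{\Delta Y}^{(i)}$ (the second step using $p$ small from the bootstrap) combined with the lower bound on $\tfrac{d}{dt}\log z_1$ produces $\tfrac{d}{dt}\log(z_1/z_2) \geq \bigl(\tfrac{3+\delta_w^{(i)}}{2} - 2\delta_w^{(i)}\bigr)\sigma_{\Delta Y}^{(i)}\sigma_{W_2}^{(i)} = \tfrac{3(1-\delta_w^{(i)})}{2}\sigma_{\Delta Y}^{(i)}\sigma_{W_2}^{(i)}$. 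Integrating the $z_1$ bound up to $T_1^{(i)}$ gives $z_1(T_1^{(i)}) \geq \sigma_{\Delta Y}^{(i)}/16$, which is the handoff condition for Phase 2.

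Phase 2 (local convergence) is essentially a direct consequence of the identity for $\dot\ell^{(i)}$ above. Once $z_1 \geq \sigma_{\Delta Y}^{(i)}/16$, and using $q \geq 0$ so $(\sigma_{W_2}^{(i)} + q)^2 \geq (\sigma_{W_2}^{(i)})^2$, the bracket satisfies $z_1(\sigma_{W_2}^{(i)}+q)^2 + z_2(\sigma_{W_1}^{(i)}+p)^2 \geq z_1(\sigma_{W_2}^{(i)})^2 \geq \tfrac{(1-\delta_w^{(i)})\sigma_{\Delta Y}^{(i)}\sigma_{W_2}^{(i)}}{16}$ (carrying one factor of $\sigma_{W_2}^{(i)}$ into the ratio $1-\delta_w^{(i)}$ through the natural scalings). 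Plugging into the scalar ODE for $\ell^{(i)}$ gives $\dot\ell^{(i)} \leq -\tfrac{(1-\delta_w^{(i)})\sigma_{\Delta Y}^{(i)}\sigma_{W_2}^{(i)}}{8}\ell^{(i)}$, and Grönwall yields the claimed linear rate. Note that unlike Phase 1, no bootstrap is needed in Phase 2 because $z_1, z_2$ are monotone and $\ell^{(i)}$ is monotone, so the lower bound on the bracket is preserved automatically.

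The main obstacle is closing the Phase 1 bootstrap: verifying that $\bar f$ never crosses $\tfrac{1-\delta_w^{(i)}}{4}\sigma_{\Delta Y}^{(i)}$ on $[0, T_1^{(i)}]$ while the growth estimate is driving $z_1$ upward at exactly the rate needed for the exit condition $z_1(T_1^{(i)}) \geq \sigma_{\Delta Y}^{(i)}/16$ to occur before $\bar f$ saturates. This requires a coupled bound tracking $p$ (through $z_1$) and $q$ (through $z_2$) simultaneously, and relies crucially on the imbalance estimate to ensure the $\sigma_{W_1}^{(i)} q$ term in $\bar f$ remains subdominant. The choice of constant $16$ in $T_1^{(i)}$ is tuned precisely to absorb the initial sub-interval in which $\sqrt{1-(z_1(0)/z_1)^2}$ is not yet close to $1$.
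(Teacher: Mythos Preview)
Your scalar reduction and the Phase-2 argument via the identity $\dot\ell^{(i)} = -2\bigl[z_1(\sigma_{W_2}+q)^2+z_2(\sigma_{W_1}+p)^2\bigr]\ell^{(i)}$ match the paper essentially line for line (the paper writes it as $\dot\ell \leq -e^2(\sigma_{W_2}+q)^2 z_1$ and then lower-bounds $z_1$ through $p_1$). The difference is Phase 1. The paper does not bootstrap on $\bar f$ directly; it recycles the rank-one perturbation machinery, writing the scalar dynamics as a linear system plus off-diagonal errors $h_1 = -\sigma_{W_2}\sigma_f + e q$ (and $h_2$ analogously) and tracking $|h_1|+|h_2|$ through $\alpha$-dependent thresholds---first $2\alpha$, then $\tfrac{(1-\delta_w)\sigma\sigma_{W_2}}{4}$---with a case split on which of $|h_1|,|h_2|$ saturates first. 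Your direct bootstrap is cleaner and avoids the explicit $\alpha$-staging, but the paper's case analysis is precisely what closes the loop you flag as the ``main obstacle'': when $|h_1|$ saturates early, one reads off $\sigma_{W_2}^2 p \gtrsim |h_1|$ directly from the expansion of $h_1$ together with the imbalance-driven smallness of $q$, so the handoff threshold on $z_1$ is met regardless of whether $T_1$ has elapsed.

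One caution on your Phase-1 imbalance bound: the step ``$\tfrac{d}{dt}\log z_2 \leq 2\sigma_{W_1}\sigma_{\Delta Y}$ using $p$ small from the bootstrap'' is loose---your bootstrap only yields $p \leq \tfrac{(1-\delta_w)\sigma_{\Delta Y}}{4\sigma_{W_2}}$, a constant rather than something negligible, so the extra $2p\sigma_{\Delta Y}$ term in $\tfrac{d}{dt}\log z_2$ is not automatically small relative to $\sigma_{W_2}\sigma_{\Delta Y}$. Recovering the exact constant $\tfrac{3(1-\delta_w)}{2}$ requires the finer $\hat T_1$-subdivision the paper uses (on that first sub-interval $p,q$ really are $\mathcal{O}(\alpha)$-small). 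Finally, both your square-root prefactor $\sqrt{1-(z_1(0)/z_1)^2}$ and the paper's claimed bound $\tfrac{d}{dt}z_1 \geq (2\sigma_{W_2}\sigma_{\Delta Y}-2|h_1|)z_1$ vanish at $t=0$ since $p(0)=0$; this artifact of the theorem statement is shared and does not distinguish the two approaches.
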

\begin{proof}
Under spectral initialization, the learning dynamics of LoRA weights can be decoupled to several scalar dynamics. 
WLOG, we prove the learning dynamics when $i=1$ as an example, and assume $\delta_w^{(i)}<1$. 
In the rest of the proof, we will omit the superscript $(i)$ for convenience. Throughout the paper, we will use $e=\sigma_{\Delta Y}-\sigma_f$ to denote the residual.

We first argue that $\sigma_{B_1}\sigma_{A_1}, \sigma_{B_2}\sigma_{A_2}$ will be always positive during the training.
We show this is true for $\sigma_{B_1}\sigma_{A_1}$ as an example. Similar argument holds for $\sigma_{B_2}\sigma_{A_2}$.
\begin{align}
    \frac{d}{dt}\sigma_{B_1}\sigma_{A_1} = (\sigma_{A_1}^2+\sigma_{B_1}^2)\sigma_{W_2}e\,.
\end{align}
Notice $e(0)=\sigma_{\Delta Y}$ and $\sigma_{B_1}(0)\sigma_{A_1}(0)=0$, thus, $\frac{d}{dt}\sigma_{B_1}\sigma_{A_1}>0$ during the training until $e=0$ which is a stationary point.

\myparagraph{\textit{Growth of Norm and Imbalance phase}} 
We first can see that 
\begin{align}
    \frac{d}{dt}\begin{pmatrix} \sigma_{B_1}\\ \sigma_{A_1}\end{pmatrix} &= \begin{pmatrix} 0 & (\sigma_{W_2}+\sigma_{B_2}\sigma_{A_2})e\\ (\sigma_{W_2}+\sigma_{B_2}\sigma_{A_2})e & 0\end{pmatrix} \begin{pmatrix} \sigma_{B_1}\\ \sigma_{A_1}\end{pmatrix}\nonumber\\
    &= \begin{pmatrix} 0 & \sigma_{W_2}\sigma_{\Delta Y}\\ \sigma_{W_2}\sigma_{\Delta Y} & 0\end{pmatrix} \begin{pmatrix} \sigma_{A_1}\\ \sigma_{B_1}\end{pmatrix}
    +\underbrace{\begin{pmatrix} 0 & -\sigma_{W_2}\sigma_f+e\sigma_{B_2}\sigma_{A_2} \\ -\sigma_{W_2}\sigma_f+e\sigma_{B_2}\sigma_{A_2}  & 0\end{pmatrix}}_{D_1} \begin{pmatrix} \sigma_{B_1}\\ \sigma_{A_1}\end{pmatrix}\,.
\end{align}
Similarly, one can show that 
\begin{align}
    \frac{d}{dt}\begin{pmatrix} \sigma_{B_2}\\ \sigma_{A_2}\end{pmatrix} = \begin{pmatrix} 0 & \sigma_{W_1}\sigma_{\Delta Y}\\ \sigma_{W_1}\sigma_{\Delta Y} & 0\end{pmatrix} \begin{pmatrix} \sigma_{B_2}\\ \sigma_{A_2}\end{pmatrix}
    +\underbrace{\begin{pmatrix} 0 & -\sigma_{W_1}\sigma_f+e\sigma_{B_1}\sigma_{A_1} \\ -\sigma_{W_1}\sigma_f+e\sigma_{B_1}\sigma_{A_1}  & 0\end{pmatrix}}_{D_2} \begin{pmatrix} \sigma_{B_2}\\ \sigma_{A_2}\end{pmatrix}
\end{align}
For convenience, we define the following notation $h_1=-\sigma_{W_2}\sigma_f+e\sigma_{B_2}\sigma_{A_2}, h_2=-\sigma_{W_1}\sigma_f+e\sigma_{B_1}\sigma_{A_1}$.
It is obvious that $\lvert h_1\rvert=\lVert D_1\rVert, \lvert h_2\rvert=\lVert D_2\rVert$.

Notice at initialization, $\lVert D_1\rVert + \lVert D_2\rVert\sim \mathcal{O}(\alpha^2)$.
We first cut off the time when $\lVert D_1\rVert + \lVert D_2\rVert=2\alpha$, denoted by $\hat T_1$, then we can show that the imbalance between $z_1$ and $z_2$ grows monotonically for all $0\leq t\leq \hat T_1$.
\begin{align}
    \frac{d}{dt} \log(\frac{z_1}{z_2}) \geq 2(1-\delta_w)\sigma_{\Delta Y}\sigma_{W_2} +2(h_1-h_2) \geq (1-\delta_w)\sigma_{\Delta Y}\sigma_{W_2}\,,
\end{align}
where the last inequality holds under the assumption that $\alpha \leq (1-\delta_w)\sigma_{\Delta Y}\sigma_{W_2}$.
We then characterize the time it takes for $\lVert D_1\rVert + \lVert D_2\rVert$ to reach $2\alpha$.
\begin{align}
    \lVert D_1\rVert &= \lvert -\sigma_{W_2}\sigma_f+e\sigma_{B_2}\sigma_{A_2}\rvert \nonumber\\
    &\leq \sigma_{W_2}\lvert \sigma_f\rvert + \frac{1}{2}\lvert e(0)\rvert \times z_2\nonumber\\
    &\leq \sigma_{W_2}\biggl(\sigma_{W_2} \lvert \sigma_{B_1}\sigma_{A_1}\rvert + \sigma_{W_1} \lvert \sigma_{B_2}\sigma_{A_2}\rvert + \lvert \sigma_{B_1}\sigma_{A_1}\rvert\times \lvert \sigma_{B_2}\sigma_{A_2}\rvert\biggl) + \frac{1}{2}\lvert e(0)\rvert \times z_2\nonumber\\
    &\leq \frac{1}{2}\sigma_{W_2}^2 z_1+\sigma_{W_1}\sigma_{W_2}z_2+\frac{1}{4}\sigma_{W_2}z_1z_2+\frac{\lvert e(0)\rvert}{2} z_2
\end{align}
Similarly, one can show that 
\begin{align}
    \lVert D_2\rVert\leq \frac{1}{2}\sigma_{W_1}^2 z_2+\sigma_{W_2}\sigma_{W_1}z_1+\frac{1}{4}\sigma_{W_1}z_1z_2+\frac{\lvert e(0)\rvert}{2} z_1\,.
\end{align}
For $0\leq t\leq\hat T_1$, we can show that the growth of $z_1, z_2$ is
\begin{align}
    \frac{d}{dt}\log z_1 &\leq (2\sigma_{W_2}\sigma_{\Delta Y}+2\lVert D_1\rVert) \leq 2(2-\delta_w)\sigma_{\Delta Y}\sigma_{W_2}\,,\\
    \frac{d}{dt}\log z_2 &\leq (2\sigma_{W_1}\sigma_{\Delta Y}+2\lVert D_2\rVert) \leq 2(2-\delta_w)\sigma_{\Delta Y}\sigma_{W_2}\,.
\end{align}
Let $z_{\max} = \frac{1}{\alpha^2} \max(z_1(0), z_2(0))$. Therefore, one can show that
\begin{align}
    \lVert D_1\rVert + \lVert D_2\rVert &\leq \biggl(3\sigma_{W_2}^2+\lvert e(0)\rvert\biggl) \alpha^2\exp(2(2-\delta_w)\sigma_{\Delta Y}\sigma_{W_2}t) z_{\max} + \frac{\alpha^4z_{\max}^2}{2}\exp(4(2-\delta_w)\sigma_{\Delta Y}\sigma_{W_2}t)\,.
\end{align}
We let the RHS of the above equation equal to $\alpha$, and derive a lower bound on $\hat T_1$. Notice in this case, we can further upper bound the RHS as
\begin{align}
    &\biggl(3\sigma_{W_2}^2+\lvert e(0)\rvert\biggl) \alpha^2\exp(2(2-\delta_w)\sigma_{\Delta Y}\sigma_{W_2}t) z_{\max} + \frac{\alpha^4z_{\max}^2}{2}\exp(4(2-\delta_w)\sigma_{\Delta Y}\sigma_{W_2}t) \nonumber\\
    &\leq 2\biggl(3\sigma_{W_2}^2+\lvert e(0)\rvert\biggl) \alpha^2\exp(2(2-\delta_w)\sigma_{\Delta Y}\sigma_{W_2}t) z_{\max}\,.
\end{align}
Let $2\biggl(3\sigma_{W_2}^2+\lvert e(0)\rvert\biggl) \alpha^2\exp(2(2-\delta_w)\sigma_{\Delta Y}\sigma_{W_2}t) z_{\max}=\alpha$, we can show that
\begin{align}
    \exp\biggl(2(2-\delta_w)\sigma_{\Delta Y}\sigma_{W_2}\hat T_1\biggl) = \frac{1}{2\biggl(3\sigma_{W_2}^2+\lvert e(0)\rvert\biggl) \alpha}\,.
\end{align}
Based on these, one can show that there exists constants $\beta_4$ such that $z_1 \geq z_2 \beta_4 \alpha^{-\frac{1-\delta_w}{4-2\delta_w}}$\,.

After $\lVert D_1\rVert+\lVert D_2\rVert$ has reached $2\alpha$, we then study when $\lVert D_1\rVert$ or $\lVert D_2\rVert$reach$\frac{(1-\delta_w)\sigma\sigma_{W_2}}{4}$. 

If $\lVert D_1\rVert$ and $\lVert D_2\rVert$ never reaches $\frac{(1-\delta_w)\sigma\sigma_{W_2}}{4}$ for all $t\leq T_1$, then one can show that $z_1$ grows exponentially fast
\begin{align}
    \frac{d}{dt} z_1 \geq (2\sigma_{W_2}\sigma_{\Delta Y}-2\lVert D_1\rVert)z_1 \geq \frac{(3+\delta_w)\sigma\sigma_{W_2}}{2} z_1\,,
\end{align}
and it leads to the following lower bound on $z_1$
\begin{align}
    &\log z_1(t) \geq \log z_1(0) + \frac{(3+\delta_w)\sigma\sigma_{W_2}}{2} t \nonumber\\
    \iff & z_1(t) \geq z_1(0) \exp\biggl(\frac{(3+\delta_w)\sigma\sigma_{W_2}}{2} t\biggl)
\end{align}
Therefore, in order for $z_1(t)$ to reach $\frac{(1-\delta_w)\sigma_{\Delta Y}}{8\sigma_{W_2}}$, one needs at most time
\begin{align}
    &z_1(0) \exp\biggl(\frac{(3+\delta_w)\sigma\sigma_{W_2}}{2} t\biggl) = \frac{(1-\delta_w)\sigma_{\Delta Y}}{8\sigma_{W_2}} \nonumber\\
    \iff & T_1 = \frac{2}{(3+\delta_w)\sigma\sigma_{W_2}} \log\biggl( \frac{(1-\delta_w)\sigma_{\Delta Y}}{8\sigma_{W_2} z_1(0)}\biggl)\,.
\end{align}
On the other hand, when  $\lVert D_1\rVert$ or $\lVert D_2\rVert$reach$\frac{(1-\delta_w)\sigma\sigma_{W_2}}{4}$ happens before $z_1$ reaches $\frac{(1-\delta_w)\sigma_{\Delta Y}}{8\sigma_{W_2}}$.
This must happen between $\hat T_1$ and $T_1$.
We consider the following two cases. 

First, $\lVert D_1\rVert$ reaches $\frac{(1-\delta_w)\sigma\sigma_{W_2}}{4}$ before $\lVert D_2\rVert$ reaches $\frac{(1-\delta_w)\sigma\sigma_{W_2}}{4}$, denoted by $T_1'$. 
Notice for all $\hat T_1\leq t\leq T_1'$,
\begin{align}
    \frac{d}{dt} \log(\frac{z_1}{z_2}) \geq 2(1-\delta_w)\sigma_{\Delta Y}\sigma_{W_2} -2(\lVert D_1\rVert+\lVert D_2\rVert) \geq 0\,,
\end{align}
Thus, the imbalance between $z_1$ and $z_2$ persists.

Then, one can show that 
\begin{align}
    &\lVert D_1\rVert > \frac{(1-\delta_w)\sigma\sigma_{W_2}}{8} \nonumber\\
    \iff & \ \lvert -\sigma_{W_2}\sigma_f+e\sigma_{B_2}\sigma_{A_2}\rvert > \frac{(1-\delta_w)\sigma\sigma_{W_2}}{8}\nonumber\\
    \Rightarrow & \sigma_{W_2}^2 \sigma_{B_1}\sigma_{A_1} > \frac{(1-\delta_w)\sigma\sigma_{W_2}}{8} - \frac{1}{2}z_2 \lvert e(0) \rvert - \sigma_{W_2}\frac{z_2}{2}\biggl(\sigma_{W_1} +\frac{z_1}{2}\biggl)\nonumber\\
    \Rightarrow &\sigma_{W_2}^2 \sigma_{B_1}\sigma_{A_1} > \frac{(1-\delta_w)\sigma\sigma_{W_2}}{8} - \frac{1}{2\beta_4} \alpha^{\frac{1-\delta_w}{4-2\delta_w}}\frac{(1-\delta_w)\sigma_{\Delta Y}}{16\sigma_{W_2}} \biggl(\lvert e(0) \rvert+\sigma_{W_1}\sigma_{W_2}-\frac{(1-\delta_w)\sigma_{\Delta Y}}{32}\biggl)\,.
\end{align}
Notice $z_1=\sqrt{(\sigma_{A_1}^2-\sigma_{B_1}^2)^2+4\sigma_{A_1}^2\sigma_{B_1}^2}$, therefore
\begin{align}
    \sigma_{W_2}^4z_1^2\geq& \frac{((1-\delta_w)\sigma\sigma_{W_2})^2}{16}\nonumber\\
    &+\sigma_{W_2}^4(\sigma_{A_1}^2-\sigma_{B_1}^2)^2\nonumber\\
    &+4\alpha^{\frac{1-\delta_w}{2-\delta_w}} \biggl\{\frac{1}{2\beta_4}\frac{(1-\delta_w)\sigma_{\Delta Y}}{16\sigma_{W_2}} \biggl(\lvert e(0) \rvert+\sigma_{W_1}\sigma_{W_2}-\frac{(1-\delta_w)\sigma_{\Delta Y}}{32}\biggl)\biggl\}^2\nonumber\\
    &-\frac{8}{2\beta_4} \alpha^{\frac{1-\delta_w}{4-2\delta_w}}\frac{(1-\delta_w)\sigma_{\Delta Y}}{16\sigma_{W_2}} \biggl(\lvert e(0) \rvert+\sigma_{W_1}\sigma_{W_2}-\frac{(1-\delta_w)\sigma_{\Delta Y}}{32}\biggl) \times \frac{(1-\delta_w)\sigma\sigma_{W_2}}{8}
\end{align}
Notice $\sigma_{A_1}^2-\sigma_{B_1}^2$ is preserved during GF, and its initial value is of order $\alpha^2$. Thus, when $\alpha$ is sufficiently, one can show that 
\begin{align}
    z_1 \geq \frac{(1-\delta_w)\sigma}{8\sigma_{W_2}}\,.
\end{align}
Second, $\lVert D_2\rVert$ reaches $\frac{(1-\delta_w)\sigma\sigma_{W_2}}{4}$ before $\lVert D_1\rVert$, denoted by $T_1{''}$. 
We first assume that $z_2 \leq z_1\alpha^{h_3}$ in this case before $T_1$ where $d_3>0$ is independent of $\alpha$.
Notice when $\lVert D_1\rVert \leq \frac{(1-\delta_w)\sigma\sigma_{W_2}}{4}$, $z_1$ continues to grow exponentially fast $\frac{d}{dt}\log z_1 \geq \frac{(3+\delta_w)\sigma\sigma_{W_2}}{2}$.
Moreover, whenever $\lVert D_1\rVert$ reaches $\frac{(1-\delta_w)\sigma\sigma_{W_2}}{4}$ before $T_1$, we can both show that $z_1$ will reach $\frac{(1-\delta_w)\sigma}{8\sigma_{W_2}}$ before $T_1$.
Then, the only thing that needs to show is the imbalance between $z_1$ and $z_2$ persists before $z_1$ reaches $\frac{(1-\delta_w)\sigma_{\Delta Y}}{8\sigma_{W_2}}$. 
We will show that in this case, when $\alpha$ is sufficiently, the imbalance between $z_1$ and $z_2$ persists. For convenience, let $p_1=\sigma_{B_1}\sigma_{A_1}, p_2=\sigma_{B_2}\sigma_{A_2}$.
\begin{align}
    \frac{d}{dt}\log(\frac{z_1}{z_2}) &= 2(1-\delta_w)\sigma\sigma_{W_2} - 2h_1+2h_2\nonumber\\
    &\geq \frac{7(1-\delta_w)\sigma\sigma_{W_2}}{4}-2h_2\nonumber\\
    &=\frac{7(1-\delta_w)\sigma\sigma_{W_2}}{4}-2(\sigma_{\Delta Y}+2\sigma_{W_1}\sigma_{W_2})p_1+2\sigma_{W_2}p_1^2 + 2p_2\biggl(\sigma_{W_1}^2 +2p_1\sigma_{W_1} +p_1^2 \biggl)\nonumber\\
    &\geq \frac{7(1-\delta_w)\sigma\sigma_{W_2}}{4}-2(\sigma_{\Delta Y}+2\sigma_{W_1}\sigma_{W_2})p_1\nonumber\\
    \Rightarrow & \log\biggl(\frac{z_1(t)}{z_2(t)}\biggl) = \log\biggl(\frac{z_1(\hat T_1)}{z_2(\hat T_1)}\biggl) + \frac{7(1-\delta_w)\sigma\sigma_{W_2}(t-\hat T_1)}{4}-2(\sigma_{\Delta Y}+2\sigma_{W_1}\sigma_{W_2})\int_{\hat T_1}^t p_1(s)ds
\end{align}
Thus, as long as one can show that $\int_{T_1'}^t p_1(s)ds$ is bounded by a constant $h_4$ (independent of $\alpha$). 
One can conclude that 
\begin{align}
    \frac{z_1(t)}{z_2(t)} \geq \exp\biggl(-2h_4(\sigma_{\Delta Y}+2\sigma_{W_1}\sigma_{W_2})\biggl)\biggl(\frac{z_1(\hat T_1)}{z_2(\hat T_1)}\biggl)\exp\biggl(\frac{7(1-\delta_w)\sigma\sigma_{W_2}(t-\hat T_1)}{4}\biggl)
\end{align}
It suffices to show that $\int_{\hat T_1}^t p_1(s)ds$ is bounded.
We first give a detailed characterization of the growth speed of $z_1$
\begin{align}
    \frac{d}{dt} z_1 &= 2e(\sigma_{W_2}+p_2)p_1\nonumber\\
    &= 2(\sigma_{\Delta Y}-\sigma_{W_2}p_1-\sigma_{W_1}p_2-p_1p_2)(\sigma_{W_2}+p_2)p_1\nonumber\\
    &=2p_1\sigma_{W_2}\sigma_{\Delta Y}-2\sigma_{W_2}^2p_1^2 - 2p_2\bigl(\sigma_{\Delta Y}p_1+\sigma_{W_2} p_1^2+p_1\sigma_{W_1}\sigma_{W_2}+p_1p_2\sigma_{W_1}+p_1^2\sigma_{W_2}+p_1^2p_2\bigl)\nonumber\\
    &\geq 2p_1\sigma_{W_2}\sigma_{\Delta Y}-z_1\sigma_{W_2}^2p_1-z_2\bigl(\sigma_{\Delta Y}p_1+\sigma_{W_2} \frac{z_1^2}{4}+\frac{z_1}{2}\sigma_{W_1}\sigma_{W_2}+\frac{z_1z_2}{4}\sigma_{W_1}+\frac{z_1^2}{4}\sigma_{W_2}+\frac{z_1^3}{8} \alpha^{h_3}\bigl)\nonumber\\
    &\geq 2p_1\sigma_{W_2}\sigma_{\Delta Y}-z_1\sigma_{W_2}^2p_1-\alpha^{h_3} z_1\bigl(\sigma_{\Delta Y}p_1+\sigma_{W_2} \frac{z_1^2}{4}+\frac{z_1}{2}\sigma_{W_1}\sigma_{W_2}+\frac{z_1z_2}{4}\sigma_{W_1}+\frac{z_1^2}{4}\sigma_{W_2}+\frac{z_1^3}{8} \alpha^{h_3}\bigl)\,.
\end{align}
Notice we have $z_1\leq \frac{(1-\delta_w)\sigma\sigma_{W_2}}{8}$. Thus, we can show
\begin{align}
    2p_1\sigma_{W_2}\sigma_{\Delta Y}-z_1\sigma_{W_2}^2p_1 \geq 2p_1\sigma_{W_2}\sigma_{\Delta Y}-\sigma_{W_2}^2p_1\frac{(1-\delta_w)\sigma_{\Delta Y}}{8\sigma_{W_2}}=\frac{(15+\delta_w)\sigma_{\Delta Y}\sigma_{W_2}}{8}\,.
\end{align}
Moreover, when $\alpha$ is sufficiently small, one can show that there exists a constant such that
\begin{align}
    \alpha^{h_3} z_1\bigl(\sigma_{\Delta Y}p_1+\sigma_{W_2} \frac{z_1^2}{4}+\frac{z_1}{2}\sigma_{W_1}\sigma_{W_2}+\frac{z_1z_2}{4}\sigma_{W_1}+\frac{z_1^2}{4}\sigma_{W_2}+\frac{z_1^3}{8} \alpha^{h_3}\bigl) \leq h_5\alpha^{h_3}\,.
\end{align}
Therefore, one can show that 
\begin{align}
    &\frac{d}{dt} z_1 \geq \frac{(15+\delta_w)\sigma_{\Delta Y}\sigma_{W_2}}{8}p_1-h_5\alpha^{h_3}\nonumber\\
    \Rightarrow& z_1(t)\geq z_1(\hat T_1)+ \int_{\hat T_1}^{t}  \frac{(15+\delta_w)\sigma_{\Delta Y}\sigma_{W_2}}{8}p_1(s)ds-h_5\alpha^{h_3}(t-\hat T_1)\nonumber\\
    \Rightarrow& \int_{\hat T_1}^{t} p_1(s)ds \leq \frac{8}{(15+\delta_w)\sigma_{\Delta Y}\sigma_{W_2}}\biggl(\frac{(1-\delta_w)\sigma_{\Delta Y}}{8\sigma_{W_2}}+h_5\alpha^{h_3}\frac{2}{(3+\delta_w)\sigma\sigma_{W_2}} \log\biggl( \frac{(1-\delta_w)\sigma_{\Delta Y}}{8\sigma_{W_2} z_1(0)}\biggl)\biggl)\,.
\end{align}
where in the last inequality we use 
\begin{align}
    z_1(t)&\leq \frac{2}{(3+\delta_w)\sigma\sigma_{W_2}} \log\biggl( \frac{(1-\delta_w)\sigma_{\Delta Y}}{8\sigma_{W_2} z_1(0)}\biggl)\nonumber\\
    t-\hat T_1 &\leq \frac{2}{(3+\delta_w)\sigma\sigma_{W_2}} \log\biggl( \frac{(1-\delta_w)\sigma_{\Delta Y}}{8\sigma_{W_2} z_1(0)}\biggl)\biggl)\,.
\end{align}
Therefore, one can see that if $\alpha$ is sufficiently small, we have
\begin{align}
    \int_{\hat T_1}^{t} p_1(s)ds \leq\frac{16}{(15+\delta_w)\sigma_{\Delta Y}\sigma_{W_2}}\biggl(\frac{(1-\delta_w)\sigma_{\Delta Y}}{8\sigma_{W_2}}\biggl)\,.
\end{align}
\myparagraph{\textit{Local convergence} phase} 
To follow the standard technique to prove local convergence for GF. 
We first assume that throughout the training, we have $z_1\alpha^{h_6} \geq z_2, z_1\leq h_7$ where $h_6, h_7$ are positive constants and independent of $\alpha$. 
In the end, we will provide expression for $h_6, h_7$.
\begin{align}\label{appeqn:dot_l_scalar}
    \dot \ell &= -\sum_{i=1}^2\biggl(\frac{d\ell}{d\sigma_{A_i}} \biggl)^2+\biggl(\frac{d\ell}{d\sigma_{B_i}} \biggl)^2 \nonumber\\
    &\leq -\biggl(\frac{d\ell}{d\sigma_{A_1}} \biggl)^2+\biggl(\frac{d\ell}{d\sigma_{B_1}} \biggl)^2\nonumber\\
    &=-e^2(\sigma_{W_2}+\sigma_{B_2}\sigma_{A_2})^2(\sigma_{A_1}^2+\sigma_{B_1}^2)\nonumber\\
    &\leq -2\sigma_{W_2}^2\ell (\sigma_{A_1}^2+\sigma_{B_1}^2)\nonumber\\
    &\leq -4\sigma_{B_1}\sigma_{A_1}\sigma_{W_2}^2\ell\,.
\end{align}
Therefore, it suffices to show that $\sigma_{B_1}\sigma_{A_1}$ has a uniform lower bound for all $t\geq T_1$.
Recall in the end of \textit{Growth of Norm and Imbalance} phase, we have proved that $z_1$ has reached $\frac{(1-\delta_w)\sigma\sigma_{W_2}}{8}$. Thus, we can show that 
\begin{align}
    \lvert e\rvert &= \lvert \sigma_{\Delta Y}-\sigma_{W_2}p_1-\sigma_{W_1}p_2-p_1p_2\rvert\nonumber\\
    &\leq \lvert \sigma_{\Delta Y}-\sigma_{W_2}p_1\lvert + p_2 (p_1+\sigma_{W_1})\nonumber\\
    &\leq \frac{(7+\delta_w)\sigma_{\Delta Y}}{8} + \frac{1}{2}z_2(\frac{z_1}{2}+\sigma_{W_1})\nonumber\\
    &\leq \frac{(7+\delta_w)\sigma_{\Delta Y}}{8} + \frac{1}{2}\alpha^{h_6}h_7(\frac{h_7}{2}+\sigma_{W_1})\nonumber\\
    &\leq \frac{(15+\delta_w)\sigma_{\Delta Y}}{16}\,,
\end{align}
where the last inequality holds when $\alpha$ is small enough. 
Since in GF, the loss is non-increasing, we have $\lvert e(t)\rvert \leq \frac{(15+\delta_w)\sigma_{\Delta Y}}{16}$ for all $t\geq T_1$. 
Then, we show that one can show that
\begin{align}
    &\lvert e\rvert \leq \frac{(15+\delta_w)\sigma_{\Delta Y}}{16}\nonumber\\
    \iff & \lvert \sigma_{\Delta Y}-\sigma_{W_2}p_1-\sigma_{W_1}p_2-p_1p_2\rvert \leq \frac{(15+\delta_w)\sigma_{\Delta Y}}{16}\nonumber\\
    \Rightarrow & \lvert \sigma_{W_2}p_1+\sigma_{W_1}p_2+p_1p_2\rvert \geq \frac{(1-\delta_w)\sigma_{\Delta Y}}{16}\nonumber\\
    \Rightarrow & \sigma_{W_2}p_1 \geq \frac{(1-\delta_w)\sigma_{\Delta Y}}{16} - \frac{z_2}{2}(\sigma_{W_2}+\frac{z_1}{2})\nonumber\\
    \Rightarrow & \sigma_{W_2}p_1 \geq \frac{(1-\delta_w)\sigma_{\Delta Y}}{16} - \frac{\alpha^{h_6}h_7}{2}(\sigma_{W_2}+\frac{h_7}{2})\nonumber\\
    \Rightarrow & \sigma_{W_2}p_1 \geq \frac{(1-\delta_w)\sigma_{\Delta Y}}{32}\nonumber\\
    \iff & p_1 \geq \frac{(1-\delta_w)\sigma_{\Delta Y}}{32\sigma_{W_2}}\,,
\end{align}
where the last inequality holds when $\alpha$ is sufficiently small.
We apply the above lower bound on $p_1$ to \eqref{appeqn:dot_l_scalar}
\begin{align}
    \dot \ell \leq -4\sigma_{B_1}\sigma_{A_1}\sigma_{W_2}^2\ell\leq -\frac{(1-\delta_w)\sigma_{\Delta Y}\sigma_{W_2}}{8}\ell\,.
\end{align}
Thus, we concluded $\ell(t)\leq \exp\biggl(-\frac{(1-\delta_w)\sigma_{\Delta Y}\sigma_{W_2}(t-T_1)}{8}\biggl)L(T_1)$.

Finally, we provide expressions for $h_6, h_7$. 
\begin{align}
    &\lvert e\rvert \leq \lvert e(0)\rvert = \sigma_{\Delta Y}\nonumber\\
    \iff & \sigma_{W_2}p_1 + \sigma_{W_1}p_2 + p_1p_2 \leq 2\sigma_{\Delta Y}\nonumber\\
    \Rightarrow & \sigma_{W_2}p_1 \leq 2\sigma_{\Delta Y}\nonumber\\
    \Rightarrow & p_1\leq \frac{2\sigma_{\Delta Y}}{\sigma_{W_2}}\nonumber\\
    \Rightarrow & z_1^2 = \biggl(\sigma_{A_1}^2-\sigma_{B_1}^2\biggl)^2 + 4p_1^2 \leq \biggl(\sigma_{A_1}^2-\sigma_{B_1}^2\biggl)^2 + \frac{16\sigma^2_{\Delta Y}}{\sigma^2_{W_2}}\nonumber\\
    \Rightarrow & z_1 \leq \frac{8\sigma_{\Delta Y}}{\sigma_{W_2}}:=h_7\,,
\end{align}
where in the second to last line, we use the fact that $\biggl(\sigma_{A_1}^2-\sigma_{B_1}^2\biggl)^2$ is of order $\alpha^4$ and one can choose $\alpha$ sufficently small to reach the last line.
The same argument can prove $p_2 \leq \frac{2\sigma_{\Delta Y}}{\sigma_{W_2}}$.

For $h_6$, 
\begin{align}
    &\frac{d}{dt}z_1 = 2e(\sigma_{W_2}+p_2)p_1 \geq 0 \nonumber\\
    &\Rightarrow z_1(t)\geq z_1(T_1)\,.
\end{align}
On the other hand
\begin{align}
    &\frac{d}{dt} \log z_2 = \frac{2e(\sigma_{W_1}+p_1)p_2}{z_2} \leq e(\sigma_{W_1}+p_1) \leq \sqrt{2\ell} \biggl(\sigma_{W_2}+\frac{2\sigma_{\Delta Y}}{\sigma_{W_2}}\biggl)\nonumber\\
    \Rightarrow & \log z_2(t)-\log z_2(T_1) \leq \sqrt{2\ell}\biggl(\sigma_{W_2}+\frac{2\sigma_{\Delta Y}}{\sigma_{W_2}}\biggl)\int_{T_1}^t \exp\biggl(-\frac{(1-\delta_w)\sigma_{\Delta Y}\sigma_{W_2}(t-T_1)}{16}\biggl)dt \nonumber\\
    \Rightarrow & \log z_2(t)-\log z_2(T_1)\leq \sqrt{2\ell}\biggl(\sigma_{W_2}+\frac{2\sigma_{\Delta Y}}{\sigma_{W_2}}\biggl) \frac{16}{(1-\delta_w)\sigma_{\Delta Y}\sigma_{W_2}}\nonumber\\
    \Rightarrow & z_2 (t)\leq z_2(T_1) \exp\biggl(\sqrt{2\ell}\biggl(\sigma_{W_2}+\frac{2\sigma_{\Delta Y}}{\sigma_{W_2}}\biggl) \frac{16}{(1-\delta_w)\sigma_{\Delta Y}\sigma_{W_2}}\biggl)\,.
\end{align}
Thus,
\begin{align}
    \frac{z_2 (t)}{z_1 (t)} &\leq \frac{z_2 (\hat T_1)}{z_1(\hat T_1)}\exp\biggl(\sqrt{2\ell}\biggl(\sigma_{W_2}+\frac{2\sigma_{\Delta Y}}{\sigma_{W_2}}\biggl) \frac{16}{(1-\delta_w)\sigma_{\Delta Y}\sigma_{W_2}}\biggl)\nonumber\\
    &\leq \frac{1}{\beta_4}\alpha^{\frac{1-\delta_w}{4-2\delta_w}}\exp\biggl(\sqrt{2\ell}\biggl(\sigma_{W_2}+\frac{2\sigma_{\Delta Y}}{\sigma_{W_2}}\biggl) \frac{16}{(1-\delta_w)\sigma_{\Delta Y}\sigma_{W_2}}\biggl)\nonumber\\
    &\leq\alpha^{\frac{1-\delta_w}{2-\delta_w}}\,,
\end{align}
where the last inequality holds when $\alpha$ is sufficiently small, and one can set $h_6=\frac{1-\delta_w}{2-\delta_w}$.
\end{proof}
%%%%%%%%%%%%%%%%%%%%%%%%%%%%%%%%%%%%%%%%%%%%%%%%%%%%%%%%%%%%%%%%%%%
\section{Example of spectral initialization}\label{app:example}
In this section, we provide examples where methods built purely on pre-trained weights fail to fine-tune pre-trained models for MF, highlighting the importance of incorporating the fine-tuning target matrix when designing spectral initialization for LoRA. Assume that we have found a solution $W_2=W_1=\begin{pmatrix}
    10&0\\0&1
\end{pmatrix}$ to a pretaining task of factorizing a target matrix $Y_{\mathrm{pre}}=\begin{pmatrix}
    100&0\\0&1
\end{pmatrix}$. Then, we are interested in solving Problem~\ref{eqn:obj_lora} with LoRA rank $r=1$. The following theorem shows that either initializing LoRA weights using the top-$1$ or the bottom-$1$ singular space of $W_1,W_1$, there exists a $Y_{\mathrm{ft}}$ such that the model cannot converge to the target matrix of the fine-tuning task through minimizing Problem~\ref{eqn:obj_lora} using GF.

\begin{thm}
WLOG, assume one initializes LoRA weights using the top-$1$ singular space of $W_1,W_1$, then let $Y_{\mathrm{ft}}=\begin{pmatrix}
    100&0\\0&1
\end{pmatrix}$. One can show that $L(t)=1$ for $\forall t\geq 0$.
\end{thm}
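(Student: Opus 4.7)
The strategy is to show that, with top-$1$ spectral initialization, the LoRA factors are confined for all $t\ge 0$ to the one-dimensional subspace spanned by the top singular direction of $W_1,W_2$; consequently $(W_2+B_2A_2)(W_1+B_1A_1)$ is always diagonal with a fixed $(2,2)$-entry equal to $1$, so any discrepancy between $Y_{\mathrm{ft}}$ and $Y_{\mathrm{pre}}$ outside this subspace contributes a constant loss.

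First I would record that $e_1=(1,0)^\top$ is (up to sign) the unique top left and right singular vector of $W_1=W_2=\operatorname{diag}(10,1)$. Any of the top-$1$ spectral initializations of~\citet{balazy2024lora,lin2024nora,meng2024pissa} then force $B_i(0)=b_i(0)\,e_1$ and $A_i(0)=a_i(0)\,e_1^\top$ for scalars $a_i(0),b_i(0)$; the precise values depend on the variant but are immaterial for what follows. In particular the columns of $B_i(0)$ and rows of $A_i(0)$ lie in $\operatorname{span}(e_1)$.

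Next I would establish the invariance: $B_i(t)\in\operatorname{span}(e_1)$ and $A_i(t)\in\operatorname{span}(e_1^\top)$ for all $t$. Working in the $\{e_1,e_2\}$ basis, under the inductive hypothesis $W_i+B_iA_i$ is diagonal, so the product $(W_2+B_2A_2)(W_1+B_1A_1)$ is diagonal with $(2,2)$-entry equal to $1$. Since the intended $Y_{\mathrm{ft}}$ is also diagonal, the residual $E$ is diagonal, and the gradient formulas
\begin{align*}
\dot B_1 &= (W_2+B_2A_2)^\top E\,A_1^\top, & \dot A_1 &= B_1^\top(W_2+B_2A_2)^\top E,\\
\dot B_2 &= E\,(W_1+B_1A_1)^\top A_2^\top, & \dot A_2 &= B_2^\top E\,(W_1+B_1A_1)^\top,
\end{align*}
produce velocities of the form (diagonal)$\cdot$(diagonal)$\cdot e_1 \in \operatorname{span}(e_1)$ and analogously on the $A$-side, closing the induction.

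Finally I would reduce the dynamics to scalars $a_i(t),b_i(t)$: the model becomes $\operatorname{diag}((10+a_2b_2)(10+a_1b_1),\,1)$, so the $(2,2)$-entry is pinned at $1$ for all $t$, and
\[
L(t)=\tfrac12\bigl(Y_{\mathrm{ft}}[1,1]-(10+a_2b_2)(10+a_1b_1)\bigr)^2+\tfrac12\bigl(Y_{\mathrm{ft}}[2,2]-1\bigr)^2.
\]
Since at initialization the first term vanishes (the top-$1$ block already factorizes $Y_{\mathrm{pre}}[1,1]=Y_{\mathrm{ft}}[1,1]$) and remains zero along the gradient flow (the scalar subproblem in $a_i,b_i$ has a continuum of zeros containing the initialization), the loss equals $\tfrac12(Y_{\mathrm{ft}}[2,2]-1)^2$ identically in $t$, giving the claimed constant. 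The only nontrivial step is the invariance, and even there the computation is routine once one observes that diagonal matrices composed with rank-one matrices supported on $e_1e_1^\top$ remain diagonal; the key structural input is that $Y_{\mathrm{ft}}$ is diagonal so $E$ cannot excite off-axis directions.
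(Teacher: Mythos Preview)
Your proof is correct and takes a genuinely different, more structural route than the paper. The paper's argument is essentially a one-liner: it fixes $b_1(0)=b_2(0)=0$ with $A_i(0)=a_i(0)e_1^\top$, and checks directly that all four gradients $\partial L/\partial A_i,\partial L/\partial B_i$ vanish at $t=0$ (the $B_i$-gradients vanish because $(Y_{\mathrm{ft}}-Y_{\mathrm{pre}})A_i^\top\propto (Y_{\mathrm{ft}}-Y_{\mathrm{pre}})e_1=0$, and the $A_i$-gradients vanish because $B_i(0)=0$). Hence the initialization is an equilibrium of the gradient flow and $L(t)\equiv L(0)$.

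You instead establish that $\operatorname{span}(e_1)$ is forward-invariant for all four factors, reduce to scalar ODEs in $a_i,b_i$, and then observe that the reduced loss has a constant term coming from the $(2,2)$-entry while the variable term sits at a global minimum. This is more work but buys you something: it makes transparent the geometric obstruction (the model's $(2,2)$-entry is pinned at $1$ for all $t$, so any discrepancy there is unreachable), and the invariance step would survive initializations with $b_i(0)\neq 0$, whereas the paper's stationary-point argument would not.

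One small slip: you assert that the values $a_i(0),b_i(0)$ are ``immaterial'', but your later claim that the first loss term vanishes at $t=0$ does need $(10+a_2(0)b_2(0))(10+a_1(0)b_1(0))=100$, which is not automatic for an arbitrary spectral variant. The paper handles this by explicitly imposing $b_i(0)=0$; you should state the same assumption rather than leave it implicit.
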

\begin{proof}
As one initializes LoRA weights using the top-$1$ singular space of $W_1,W_1$, therefore one can assume at initialization, we have $A_i=a_ie_1^\top, B_i=b_ie_1$ where $b_1(0)\!=\!b_2(0)=0$. Thus, one has
\begin{align}
    \dot L(0)&=\frac{d}{dt}\frac{1}{2}\biggl\lVert \begin{pmatrix}
        b_1(0)a_1(0)+b_2(0)a_2(0)+b_2(0)a_2(0)b_1(0)a_1(0) & 0 \\
        0&0
    \end{pmatrix}\biggl\rVert_F^2
\end{align}
Since $b_1(0)\!=\!b_2(0)\!=\!0$, one can see that the initialization of LoRA weights is at an stationary point and $\dot L(0)\!=\!0$. Thus, $L(t)\!=\!L(0)\!=\!1$ for $\forall t\!\geq\! 0$.
\end{proof}

%%%%%%%%%%%%%%%%%%%%%%%%%%%%%%%%%%%%%%%%%%%%%%%%%%%%%%%%%%%%%%%%%%%

\section{Experiments on Image Classification Tasks}\label{app:simulation}

\subsection{Experiments on Matrix Factorization}
In this section, we conduct additional experiments on LoRA applied to matrix factorization (MF) with varying imbalance levels, specifically $\delta_w \in \left\{\frac{1}{1.05^2}, \frac{1}{4}, \frac{1}{100}\right\}$. We also explore scenarios where $Y_{\mathrm{ft}} = Y_{\mathrm{pre}} + 5uv^\top$, where $(u, v)$ are the top and bottom singular vectors of $Y_{\mathrm{pre}}$, respectively. 
Figures~\ref{fig:dynamics_mf_top} and~\ref{fig:dynamics_mf_bottom} illustrate that, across different imbalance levels and varying singular components of $Y_{\mathrm{pre}}$ that $\Delta Y$ aligns with, smaller LoRA-based initialization leads to initial alignment, followed by growth in the norm of LoRA weights. Furthermore, smaller initialization scales consistently result in lower final loss. In contrast, for spectral initialization, the loss invariably converges to machine precision.
Finally, although the Frobenius norm of $\Delta Y = 5uv^\top$ is the same in both cases, we observe that GF converges overall faster when $(u, v)$ are the top singular components of $Y_{\mathrm{pre}}$ compared to when $(u, v)$ are the bottom singular components. This is because the convergence rate shown in Theorem~\ref{thm:rank-one} is inversely proportional to $\frac{1}{\sigma\sigma_{W_2}}$. When $(u, v)$ are the top singular components of $Y_{\mathrm{pre}}$, the corresponding $\sigma_{W_2}$ values are larger than in the other case, resulting in faster convergence. Thus, our theory effectively captures this phenomenon.

\begin{figure*}[!t]
    \centering% <---
    \includegraphics[height=0.9\textwidth]{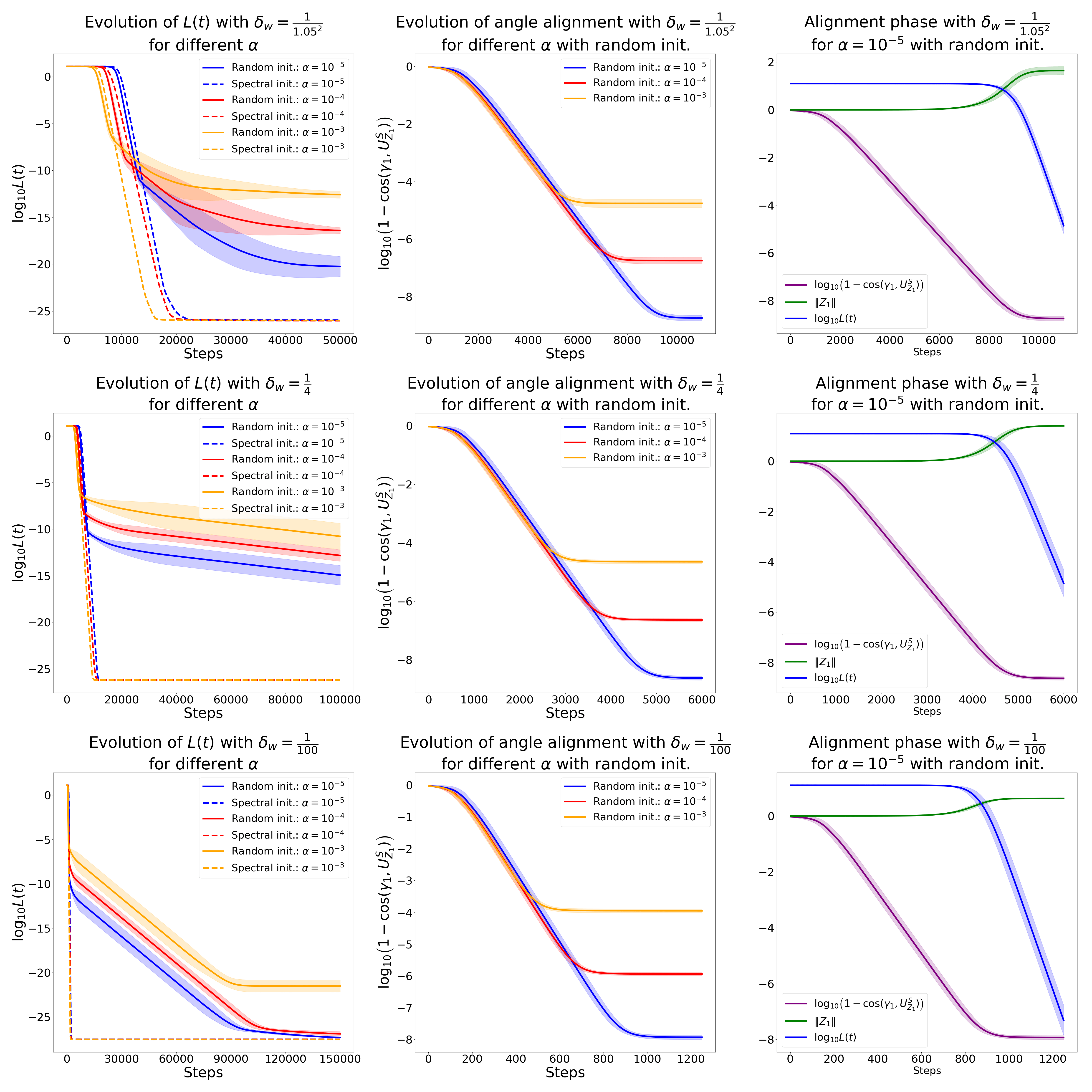}
    \caption{We simulate Problem~\ref{eqn:obj_lora} in the context of $\delta_w\!<\!1$ using both small initialization (see \S\ref{sec:prelim}) and small spectral initialization (see \S\ref{sec:rank-r}). 
    We generate the data $Y_{\mathrm{ft}} = Y_{\mathrm{pre}} + 5uv^\top$ where $u ,v$ is the \textit{top} principle component of $Y_{\mathrm{pre}}$.
    Each simulation is repeated thirty times, with shaded regions representing one standard deviation above and below the mean (see \S\ref{subsec:simulation_mf} for details). 
    The left column shows the evolution of the loss for different initialization scales $\alpha$ with small and spectral initialization. 
    The middle column tracks the alignment quality between $\boldsymbol{U_{Z_1}^S}$ and $\gamma_1$, measured by $\log_{10}(1 - \cos(\gamma_1, \boldsymbol{U_{Z_1}^S}(t)))$, where smaller values indicate better alignment. 
    The right column focuses on small initialization with $\alpha = 10^{-5}$, illustrating how the reconstruction loss, alignment between $\boldsymbol{U_{Z_1}^S}$ and $\gamma_1$, and $\lVert Z_1 \rVert$ evolve during the \textit{alignment} phase.
    }
    \label{fig:dynamics_mf_top}
\end{figure*}
\begin{figure*}[!t]
    \centering% <---
    \includegraphics[height=0.9\textwidth]{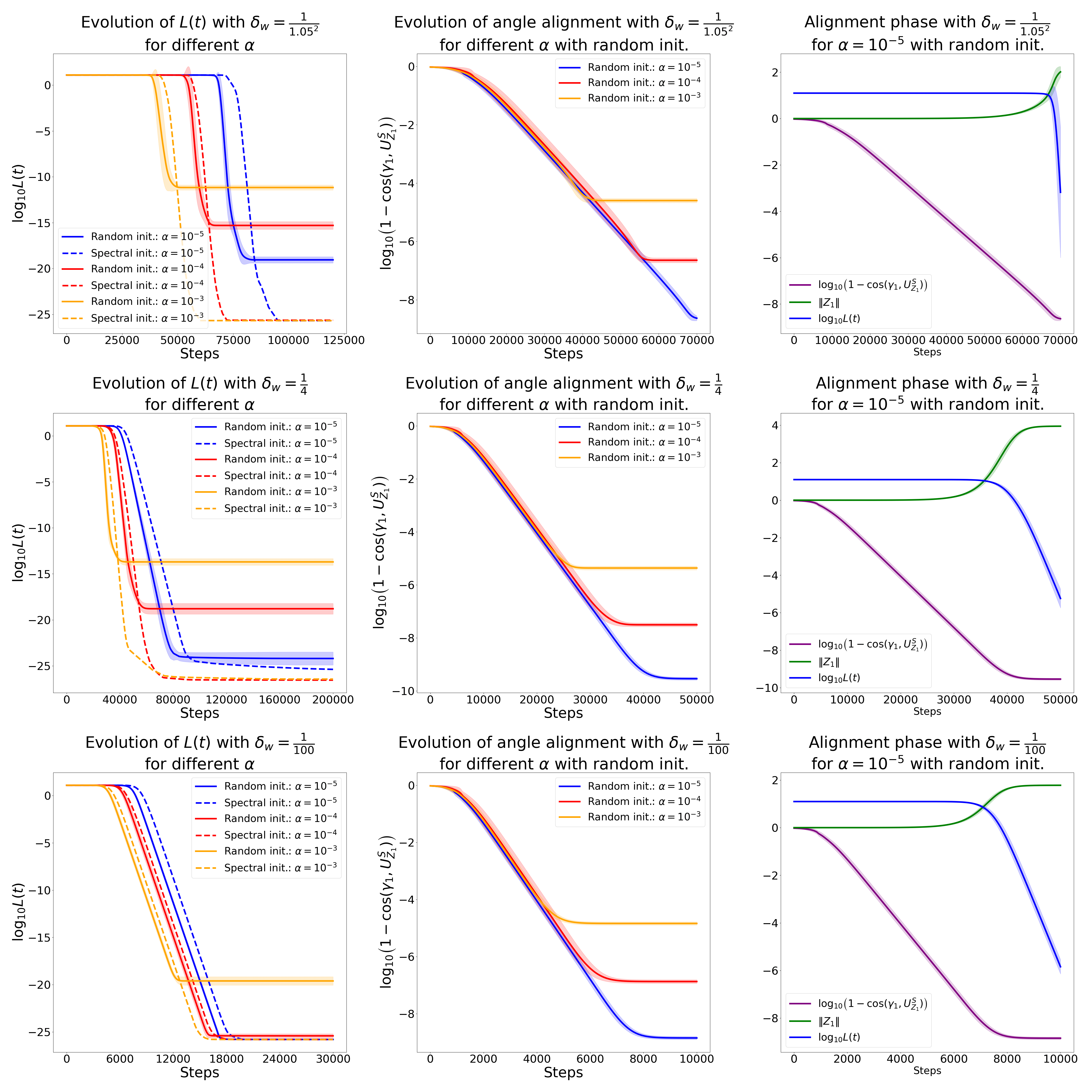}
    \caption{We simulate Problem~\ref{eqn:obj_lora} in the context of $\delta_w\!<\!1$ using both small initialization (see \S\ref{sec:prelim}) and small spectral initialization (see \S\ref{sec:rank-r}). 
    We generate the data $Y_{\mathrm{ft}} = Y_{\mathrm{pre}} + 5uv^\top$ where $u ,v$ is the \textit{bottom} principle component of $Y_{\mathrm{pre}}$.
    Each simulation is repeated thirty times, with shaded regions representing one standard deviation above and below the mean (see \S\ref{subsec:simulation_mf} for details). 
    The left column shows the evolution of the loss for different initialization scales $\alpha$ with small and spectral initialization. 
    The middle column tracks the alignment quality between $\boldsymbol{U_{Z_1}^S}$ and $\gamma_1$, measured by $\log_{10}(1 - \cos(\gamma_1, \boldsymbol{U_{Z_1}^S}(t)))$, where smaller values indicate better alignment. 
    The right column focuses on small initialization with $\alpha = 10^{-5}$, illustrating how the reconstruction loss, alignment between $\boldsymbol{U_{Z_1}^S}$ and $\gamma_1$, and $\lVert Z_1 \rVert$ evolve during the \textit{alignment} phase.
    }
    \label{fig:dynamics_mf_bottom}
\end{figure*}
\subsection{Experiments on Image Classification}
In this section, we present additional experiments on fine-tuning ResNet, VIT, and VGG models pre-trained on ImageNet for MNIST and CIFAR10. For all models, we apply LoRA to the final layer, initializing $B$ as zero matrices and $A$ with entries drawn from $\mathcal{N}(0, 10^{-6})$. All models are trained using SGD with a step size of 0.1.
To approximate gradient GF, it is common practice to choose a very small step size for SGD, typically $10^{-4}$. However, for large-scale models, such small step sizes would result in prohibitively long training times. Therefore, we choose a relatively larger step size to accelerate training. Our goal is two-folded
\begin{itemize}
    \item The evolution of LoRA weight alignment and norm during the early stages of training.
    \item The impact of initialization scale on the final training/test loss and accuracy.
\end{itemize}

\subsubsection{Alignment in Early Stage of Training}

In this section, we focus on the evolution of the alignment and the norm of the LoRA weights during the early stages of training.

\begin{figure*}[!t]
    \centering% <---
    \includegraphics[height=0.55\textwidth]{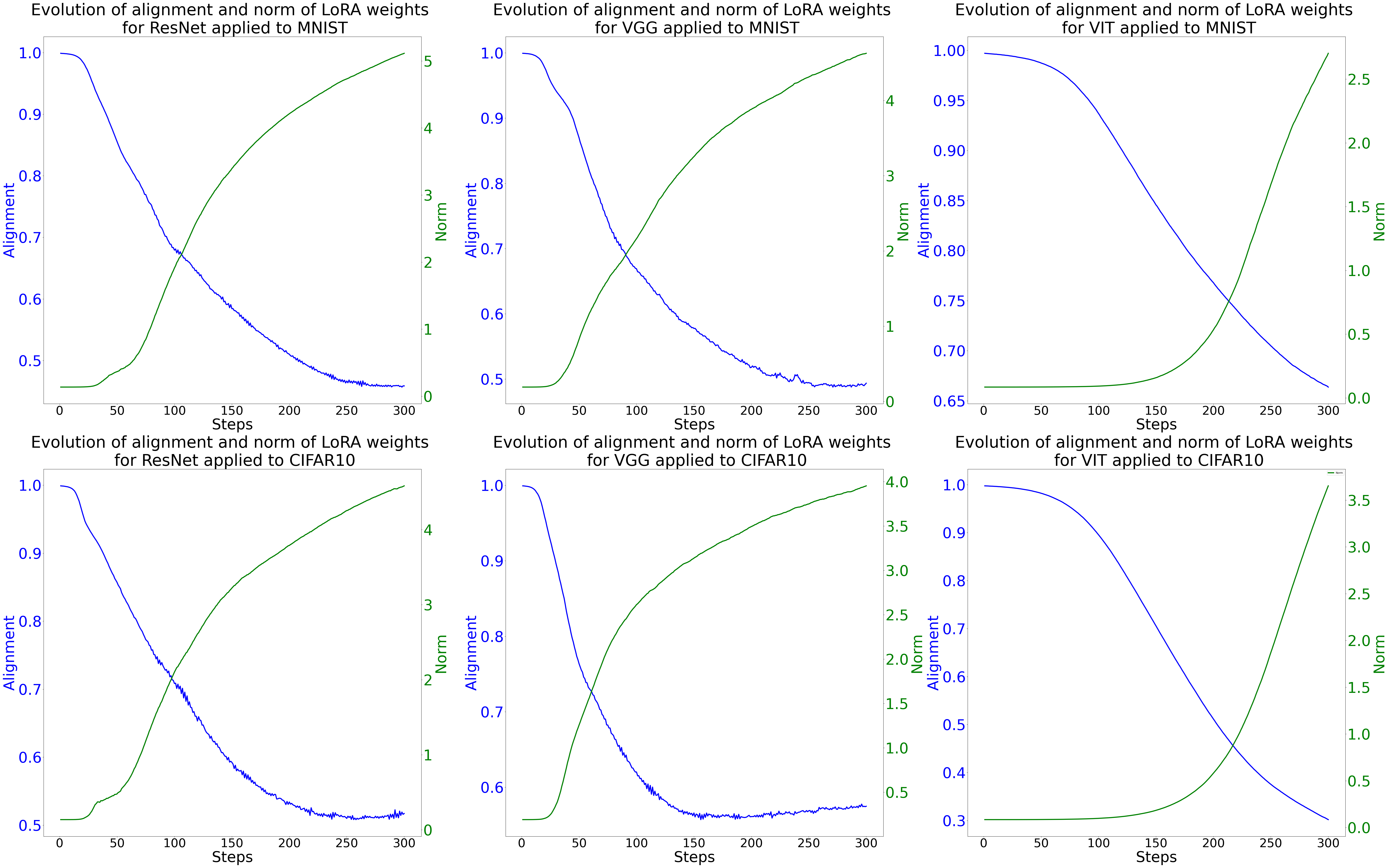}
    \caption{We run the experiments on fine-tuning ResNet, VIT and VGG pre-trained on ImageNet to MNIST and CIFAR10. 
    We monitor the evolution of the alignment and norm of the LoRA weights in the early stage of training. 
    }
    \label{fig:large_model}
\end{figure*}

We first introduce how we measure the alignments in this case.
We consider the model as $f(A, B)$ where $A, B$ are the LoRA weights.
We use $W$ to denote the pre-trained weights of the final layer, and use $\phi=W+AB$.
Let $\ell(\cdot)$ be the loss function, and we can write the optimization problem of training these models as follows
\begin{align}
    \min_{A, B} \ell(f(A, B))\,.
\end{align}
We consider solving the above problem using GF
\begin{align}
    \dot A(t)=-\frac{\partial \ell(t)}{\partial A}, \quad \dot B(t)=-\frac{\partial \ell(t)}{\partial B}\,.
\end{align}
For simplicity, we use $A(t), B(t)$ to denote the LoRA weights at time $t$, and $\ell(t)$ as a shorthand for $\ell(A(t), B(t))$.
Moreover, we use $\phi(t)$ as a shorthand for $W+B(t) A(t)$.
Then, we approximate the gradients of LoRA weights as below
\begin{align}\label{appeqn:appro_alignment}
    \frac{\partial \ell(t)}{\partial A} =  B(t)^\top \frac{\partial \ell(t)}{\partial \phi}  \approx B(t)^\top \frac{\partial \ell(0)}{\partial \phi}, 
    \quad \frac{\partial \ell(t)}{\partial B} = \frac{\partial \ell(t)}{\partial \phi}A(t)^\top
\end{align}
Notice that at initialization, $A(0)B(0) = 0$. Therefore, $\frac{\partial \ell(0)}{\partial \phi} = \frac{\partial \ell(0)}{\partial W}$, which represents the gradient of the loss of the pre-trained model evaluated on the fine-tuning dataset with respect to the last layer of the pre-trained model. Consequently, the initial learning dynamics can be simplified as

\begin{align}
    \frac{d}{dt}\begin{pmatrix}
        B(t) \\ A(t)^\top
    \end{pmatrix}\approx-\begin{pmatrix}
        0 & \frac{\partial \ell(0)}{\partial W}\\
        \frac{\partial \ell(0)^\top}{\partial W} & 0
    \end{pmatrix}\begin{pmatrix}
        B(t) \\ A(t)^\top
    \end{pmatrix}
\end{align}
Let SVD of $\frac{\partial \ell(0)}{\partial W}$ be $U_W\Sigma_WV_W^\top$, then we say the left singular matrices of $\begin{pmatrix}B \\ A^\top\end{pmatrix}$ align with $U_{\mathrm{target}}=\begin{pmatrix}U_W \\ -V_W\end{pmatrix}$
To measure the alignment of the left singular spaces of $Z(t)=\begin{pmatrix}
    B(t) \\ A(t)^\top
\end{pmatrix}$ w.r.t.$U_{\mathrm{target}}$. Let SVD of $Z(t)$ be $U(t)\Sigma(t)V(t)^\top$, then we measure the following quantities, which is a classic metric to measure the alignment of two orthogonal matrices~\cite{chen2013spectral}
\begin{align}
    \frac{1}{\sqrt{2r}}\lVert U(t)U(t)^\top - U_{\mathrm{target}}U_{\mathrm{target}}^\top \rVert_F\,,
\end{align}
where $\frac{1}{\sqrt{2r}}$ is a normalizing constant that ensuring the above metric for alignment lies between zero and one.
The smaller this number is, the better alignment is.

Figure~\ref{fig:large_model} shows that, in the early stages of training, all models trained on both datasets exhibit strong alignment. 
However, the approximation in~\eqref{appeqn:appro_alignment} is only valid when the LoRA weights remain close to zero. 
As training progresses, the LoRA weights deviate from zero, causing the approximation in~\eqref{appeqn:appro_alignment} to lose accuracy. 
Consequently, the alignment between $U(t)$ and $U_{\mathrm{target}}$ ceases to improve.

\subsubsection{Effect of Initial Std on Loss and Accuracy}
In this section, we focus on the impact of initialization scale on the final training/test loss and accuracy. 

Figure~\ref{fig:large_model_mnist} and Figure~\ref{fig:large_model_cifar} show that for all models and dataset, smaller initialization leads to a lower final loss on both the training and test datasets. Moreover, it also results in higher accuracy on both the training and test datasets.

\begin{figure*}[!t]
    \centering% <---
    \includegraphics[height=0.55\textwidth]{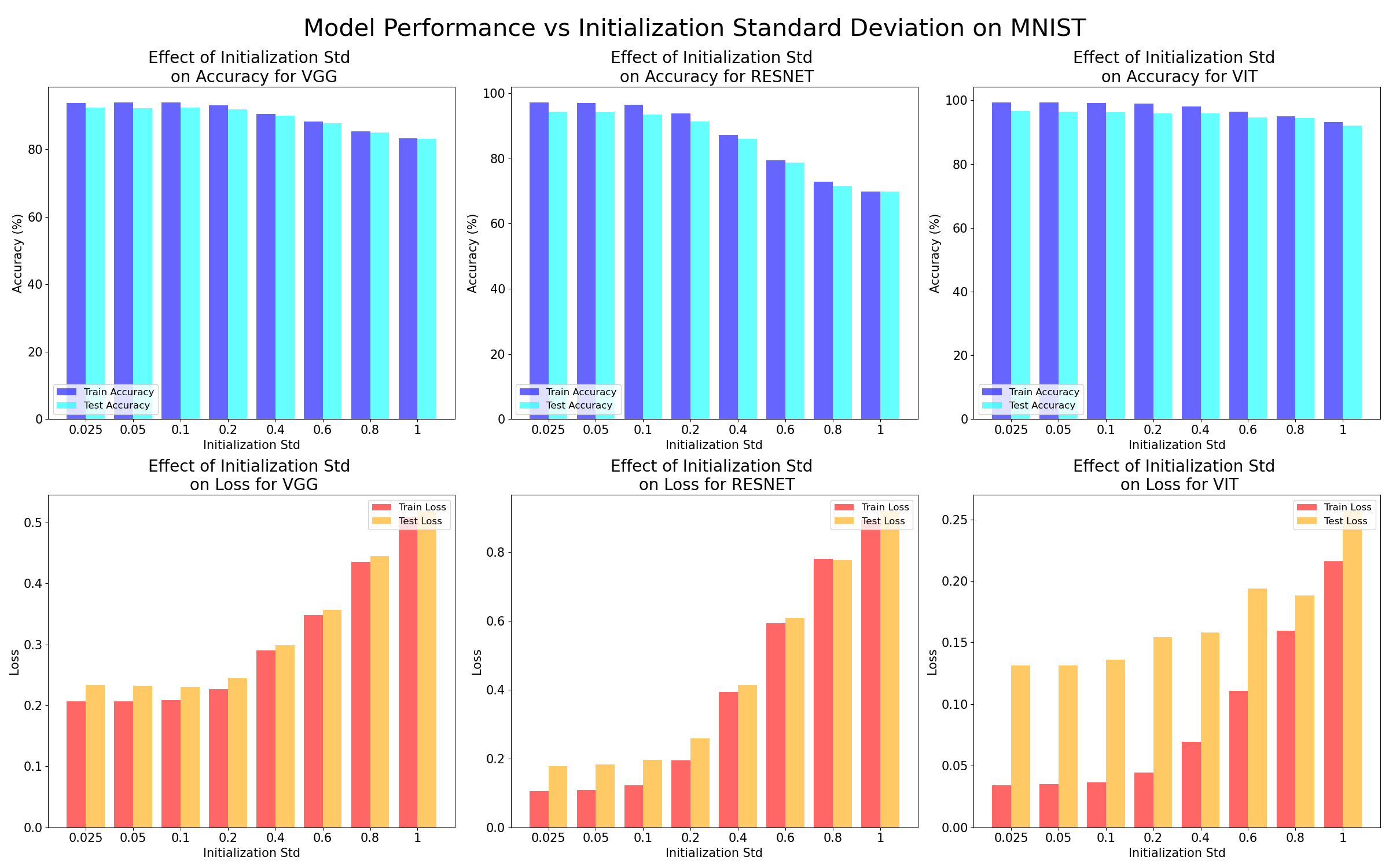}
    \caption{We run the experiments on fine-tuning ResNet, VIT and VGG pre-trained on ImageNet to MNIST. 
    We monitor the how the scale of initialization affects the final training/test loss and accuracy. 
    }
    \label{fig:large_model_mnist}
\end{figure*}
\begin{figure*}[!t]
    \centering% <---
    \includegraphics[height=0.55\textwidth]{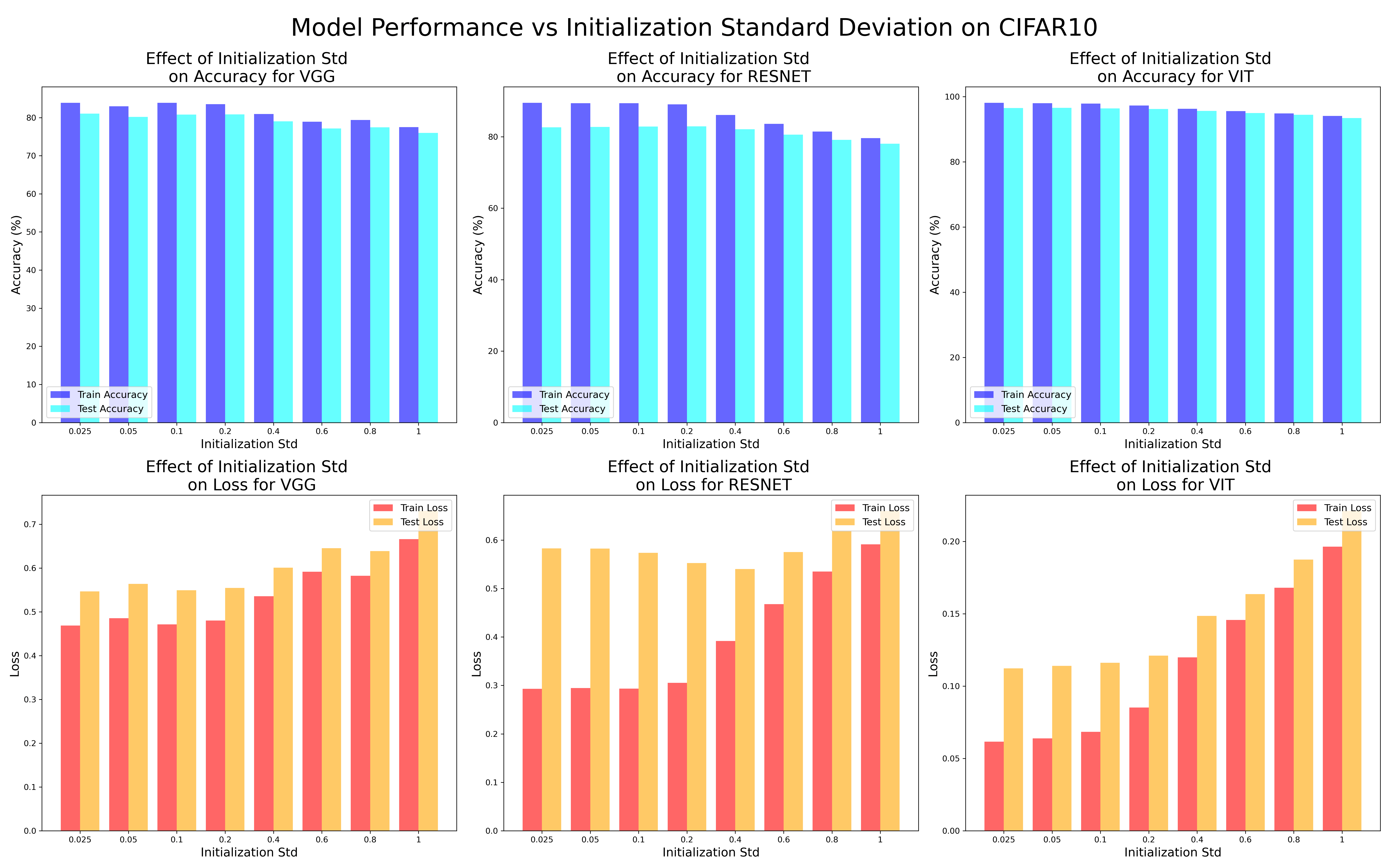}
    \caption{We run the experiments on fine-tuning ResNet, VIT and VGG pre-trained on ImageNet to CIFAR10. 
    We monitor the how the scale of initialization affects the final training/test loss and accuracy. 
    }
    \label{fig:large_model_cifar}
\end{figure*}

\end{document}